\numberwithin{equation}{section}
\def\eqref#1{equation~\ref{#1}}
\def\1{\bm{1}}
\def\vf{{\bm{f}}}
\DeclareMathAlphabet{\mathsfit}{\encodingdefault}{\sfdefault}{m}{sl}
\SetMathAlphabet{\mathsfit}{bold}{\encodingdefault}{\sfdefault}{bx}{n}
\newcommand{\E}{\mathbb{E}}
\newcommand{\R}{\mathbb{R}}
\newcommand{\Var}{\mathrm{Var}}
\newcommand{\Cov}{\mathrm{Cov}}
\DeclareMathOperator*{\argmin}{arg\,min}
\newcommand{\dif}{\mathrm{d}}
\newtheorem{thm}{Theorem}[section]
\newtheorem{rmk}{Remark}
\newtheorem{asp}{Assumption}[section]
\newtheorem{lemma}[thm]{Lemma}
\newtheorem{prop}[thm]{Proposition}
\theoremstyle{definition}
\newtheorem{definition}{Definition}[section]
\newcommand{\Pmeta}{\mathbb{P}_{\text{meta}}}
\renewcommand{\P}{\mathbb{P}}
\renewcommand{\eqref}[1]{(\ref{#1})}
\title{Provable Sample-Efficient Transfer Learning Conditional Diffusion Models via Representation Learning}
\author{
Ziheng Cheng$^{1}$ \ \ Tianyu Xie$^{2}$ \ \ Shiyue Zhang$^{2}$ \ \ Cheng Zhang$^{2,3,}$\thanks{Corresponding Author.}\\
$^{1}$ Department of Industrial Engineering and Operations Research, University of California, Berkeley\\
$^{2}$ School of Mathematical Sciences, Peking University\\
$^{3}$ Center for Statistical Science, Peking University\\
\texttt{ziheng\_cheng@berkeley.edu,}
\texttt{tianyuxie@pku.edu.cn,}\\
\texttt{zhangshiyue@stu.pku.edu.cn,}
\texttt{chengzhang@math.pku.edu.cn}
}
\begin{document}

\maketitle

\begin{abstract}
    While conditional diffusion models have achieved remarkable success in various applications, they require abundant data to train from scratch, which is often infeasible in practice.
    To address this issue, transfer learning has emerged as an essential paradigm in small data regimes.
    Despite its empirical success, the theoretical underpinnings of transfer learning conditional diffusion models remain unexplored.
    In this paper, we take the first step towards understanding the sample efficiency of transfer learning conditional diffusion models through the lens of representation learning.
    Inspired by practical training procedures, we assume that there exists a low-dimensional representation of conditions shared across all tasks.
    Our analysis shows that with a well-learned representation from source tasks, the sample
    complexity of target tasks can be reduced substantially.
    Numerical experiments are also conducted to verify our results.
\end{abstract}

\section{Introduction}

Conditional diffusion models (CDMs) utilize a user-defined condition to guide the generative process of diffusion models (DMs) to sample from the desired conditional distribution.
In recent years, CDMs have achieved groundbreaking success in various generative tasks, including text-to-image generation \citep{ho2020denoising,song2020score,ho2022classifier,rombach2022high}, reinforcement learning \citep{janner2022planning,chi2023diffusion,wang2022diffusion,reuss2023goal}, time series \citep{tashiro2021csdi,rasul2021autoregressive}, and life science \citep{song2021solving,watson2022broadly,gruver2024protein,guo2024diffusion}. 

Training a CDM from scratch requires a large amount of data to achieve good generalization.
However, in practical scenarios, users often have access to only limited data for the target distribution due to cost or risk concerns, making the model prone to over-fitting. 
In such small data regime, transfer learning has emerged as a predominant paradigm \citep{moon2022fine,ruiz2023dreambooth,xie2023difffit,han2023svdiff}.
By leveraging knowledge acquired during pre-training on large source datasets, transfer learning enhances the performance of fine-tuning on target tasks, facilitating few-shot learning and significantly improving practicality.

\begin{table}[ht]
    \centering
    \footnotesize
    \setlength{\tabcolsep}{6pt}
    \resizebox{\linewidth}{!}{
        \begin{tabular}{c|c|c}
            \toprule
            Tasks & Backbone Score Network & Condition Encoder \\
            \midrule
            Text-to-Image \citep{esser2024scaling} & 2-8B & 4.7B \\
            \midrule
            Text-to-Audio \citep{liu2024audioldm} & 350-750M & 750M \\
            \midrule
            Robotic Control \citep{chi2023diffusion} & 9M & 20-45M \\
            \bottomrule
        \end{tabular}
    }
    \caption{Comparing the number of parameters of different parts in CDMs.}
    \label{tab:param}
\end{table}

Among the successful applications of transfer learning CDMs, the conditions are typically high-dimensional vectors with embedded low-dimensional representations (features) that encapsulate all the information required for inference. 
In addition, these representations are likely to be task-agnostic, enabling effective knowledge transfer. 
For example, in text-to-image generation, the text input is inherently in high-dimensional space, but contains low-dimensional semantic information such as object attributes, spatial relationships, despite the differences of styles or contents in different image distributions.
To take advantage of this structure, condition encoders are often frozen in the fine-tuning stage \citep{rombach2022high,esser2024scaling}, which typically constitutes a significant portion of the overall model (see Table \ref{tab:param}).

While this paradigm has demonstrated remarkable empirical success, its theoretical underpinnings remain largely unexplored.
The following fundamental question is still open:
\begin{center}
    \textit{Can transfer learning CDMs improve the sample efficiency of target tasks by leveraging the representation of conditions learned from source tasks?}
\end{center}
There are some recent works attempting to study the theoretical underpinnings of CDMs \citep{fu2024unveil,jiao2024model,hu2024statistical}, but focus on single task training.
Notably, \citet{yang2024fewshot} investigates transfer learning DMs under the assumption that the data is a linear transformation of a low-dimensional latent variable following the same distribution across all tasks.
However, fine-tuning merely the data encoder is not a widely adopted training approach in practice.

In this paper, we take the first step towards addressing the above question.
Our key assumption is that there exists a generic low-dimensional representation of conditions shared across all distributions.
Then we show that, with a well-learned representation from source tasks, the sample complexity of target tasks can be reduced substantially by training only the score network.  
The main contributions are summarized as follows:
\begin{itemize}[leftmargin=1em]
    \item In Section \ref{sec:generalization}, we establish the first generalization guarantee for transferring score matching error in CDMs, showing that transfer learning can reduce the sample complexity for learning condition encoder in the target task.
    This is aligned with existing transfer learning theory in supervised learning.
    Specifically, we present two results in Theorem \ref{thm:generalization_all_diversity_informal} and Theorem \ref{thm:generalization_all_informal}, under the settings of task diversity assumption and meta-learning\footnote{In practice, the terms such as transfer learning, meta-learning, learning-to-learn, \textit{etc.}, often refer to the same training paradigm, \textit{i.e.}, to fine-tune on target tasks with limited data using knowledge from source tasks. However, in the theoretical framework, we use meta-learning to emphasize that target tasks and source tasks are randomly sampled from a meta distribution \citep{baxter2000model}, whereas in transfer learning, the tasks are fixed.}, respectively.
    On the technical side, we develop a novel approach to tackle Lipschitz continuity under weaker assumptions on data distribution in Lemma \ref{lem:lip_score_informal}, which may be of independent interest for the analysis of even single-task diffusion models.
    \item In Section \ref{sec:dist_estimation}, we provide an end-to-end distribution estimation error bound in transfer learning CDMs.
    To obtain an $L^2$ accurate conditional score estimator, we construct a universal approximation theory using deep ReLU neural networks in Theorem \ref{thm:approximation_all_informal}. 
    Then by combining both generalization error and approximation error, Theorem \ref{thm:distribution_diversity_informal} and \ref{thm:distribution_informal} provide sample complexity bounds for estimating conditional distribution. 
    Notably, our results are \textit{the state of the art} even when reduced to single-task learning setting. 
\end{itemize}
In Section \ref{sec:application}, we further utilize our results to establish statistical guarantees in practical applications of CDMs.
In particular, we investigate amortized variational inference (Theorem \ref{thm:amortized_vi}) and behavior cloning (Theorem \ref{thm:behavior_cloning}), and present guarantees in terms of posterior estimation and optimality gap, laying the theoretical foundations of transfer learning CDMs in practice. We also conduct numerical experiments in Section \ref{sec:exp} to verify our results.

\subsection{Related Works}

\paragraph{Score Approximation and Distribution Estimation}

Recently, some works analyze the score approximation theory via deep neural networks and corresponding sample complexity bounds for diffusion models.
\citet{oko2023diffusion} considers distributions with density in Besov space and supported on bounded domain.
\citet{chen2023score} assumes the data distribution lies in a low-dimensional linear subspace and obtains improved rates only depending on intrinsic dimension.
\citet{fu2024unveil} studies conditional diffusion models for H\"older densities and \citet{hu2024statistical} further extends the framework to more advanced neural network architectures, \textit{e.g.}, diffusion transformers.
\citet{wibisono2024optimal} establishes a minimax optimal rate to estimate Lipschitz score by kernel methods.
With an $L^2$ accurate score estimator, several works provide the convergence rate of discrete samplers for diffusion models \citep{chen2022sampling,chen2023improved,lee2023convergence,chen2024probability}. 
Combining score matching error and convergence of samplers, one can obtain an end-to-end distribution estimation error bound.

\paragraph{Transfer Learning and Meta-learning Theory in Supervised Learning}

The remarkable empirical success of transfer learning, meta-learning, and multi-task learning across a wide range of machine learning applications has been accompanied by gradual progress in their theoretical foundations, especially from
the perspective of representation learning.
To the best of our knowledge, \citet{baxter2000model} is the first theoretical work on meta-learning.
It assumes a universal \textit{environment} to generate tasks with some shared features.
Following this setting, \citet{maurer2016benefit} provides sample complexity bound for general supervised learning problem and \citet{aliakbarpour2024metalearning} studies very few samples per task regime. 
Another line of research replaces the \textit{environment} assumption and instead establishes connections between source tasks and target tasks through various notions of task diversity \citep{tripuraneni2020theory,du2020few,tripuraneni2021provable,watkins2023optimistic,chua2021fine}.
However, theoretical understandings of transfer learning for unsupervised learning are much more limited.

\paragraph{Few-shot Fine-Tuning of Diffusion Models}

Adapting pre-trained conditional diffusion models to specific tasks with limited data remains a challenge in varied application scenarios.
Few-shot fine-tuning aims to bridge this gap by leveraging various techniques to adapt those models to a novel task with minimal data requirements \citep{ruiz2023dreambooth,giannone2022few}.
A promising paradigm is to use transfer (meta) learning by constructing a representation for conditions in all the tasks, which has been widely applied in image generation \citep{rombach2022high,ramesh2022hierarchical,sinha2021d2c}, reinforcement learning \citep{he2023diffusion,ni2023metadiffuser}, inverse problem \citep{tewari2023diffusion,chung2023solving}, \textit{etc}.
Another work \citet{yang2024fewshot} is closely related to this paper, proving that few-shot diffusion models can escape the curse of dimensionality by fine-tuning a linear encoder.

\section{Preliminaries and Problem Setup}

\paragraph{Notations}
We use $x$ and $y$ to denote the data and conditions, respectively.
The blackboard bold letter $\P$ represents the joint distribution of $(x,y)$, while the lowercase $p$ denotes its density function.
The superscript $k$ indicates the task index, and the subscript $i$ means the sample index.
The norm $\|\cdot\|$ refers to the $\ell_2$-norm  for vectors and the spectral norm for matrices.
For the hypothesis class $\mathcal{F}$, we
use $\mathcal{F}^{\otimes K}$ to refer its $K$-fold Cartesian product.
For any $a,b\in\R$, $a\wedge b=\min\{a,b\}$ and $a\vee b=\max\{a,b\}$.
Finally, we use standard $\mathcal{O}(\cdot), \Omega(\cdot)$ to omit constant
factors.

\subsection{Conditional Diffusion Models}

Let $\R^{d_x}$ denote the data space and $[0,1]^{D_y}$ denote the condition space.
Let $\P$ be any joint distribution over $\R^{d_x}\times[0,1]^{D_y}$ with density $p$ and $\P(\cdot|y)$ be the conditional distribution with density $p(\cdot|y)$.
As in diffusion models, the forward process is defined as an Ornstein–Uhlenbeck (OU) process,
\begin{equation}\label{eq:ou_process}
    \dif X_t = -X_t\dif t + \sqrt{2}\dif W_t, X_0\sim \P(\cdot|y).
\end{equation}
where $\{W_t\}_{t\geq 0}$ is a standard Wiener process. We denote the distribution of $X_t$ as $\P_t(\cdot|y)$.
Note that the limiting distribution $\P_\infty(\cdot|y)$ is a standard Gaussian $\mathcal{N}(0,I)$.

To generate new samples, we can reverse the forward process \eqref{eq:ou_process} from any $T>0$,
\begin{equation}
    \dif X_t^{\leftarrow} = (X_t^{\leftarrow} + 2\nabla\log p_{T-t}(X_t^{\leftarrow}|y))\dif t + \sqrt{2}\dif \overline{W}_t, X_0^\leftarrow\sim \P_T(\cdot|y), 0\leq t\leq T.
\end{equation}
where $\{\overline{W}_t\}_{0\leq t\leq T}$ is a time-reversed Wiener process. Unfortunately, we don't have access to the exact conditional score function $\nabla\log p_{T-t}$ and need to estimate it through neural networks.
For any $(x,y)\sim\P$ and score estimator $s$, define the individual denoising score matching objective \citep{vincent2011connection} as
\begin{equation}\label{eq:dsm}
    \ell(x,y,s):=\frac{1}{T-T_0}\int_{T_0}^T \E_{x_t\sim \phi_t(\cdot|x)}\big[\|s(x_t,y,t)-\nabla\log \phi_t(x_t|x)\|^2\big] \dif t,
\end{equation}
where $\phi_t(x_t|x)=\mathcal{N}(x_t|\alpha_tx,\sigma_t^2I),\alpha_t=e^{-t},\sigma_t^2=1-e^{-2t}$, is the transition kernel of $x_t|x_0=x$. 
And the population error of score matching is
\begin{equation}\label{eq:sm}
    L^\P(s):=\E_{(x,y)\sim\P}\E_{t,x_t}[\|s(x_t,y,t)-\nabla\log p_t(x_t|y)\|^2]=\E_{(x,y)\sim\P}[\ell(x,y,s)-\ell(x,y,s^\P_*)].
\end{equation}
Here $s^\P_*$ denotes the true score function and $t\sim \text{Unif}([T_0,T])$.
We also define $\ell^\P(x,y,s):=\ell(x,y,s)-\ell(x,y,s_*^\P)$.
In practice, with a score estimator $\widehat{s}$, the generative process is to simulate
\begin{equation}
    \dif \widehat{X}_t^{\leftarrow} = (\widehat{X}_t^{\leftarrow} + 2\widehat{s}(\widehat{X}_t^\leftarrow,y,T-t))\dif t + \sqrt{2}\dif \overline{W}_t, \widehat{X}_0^\leftarrow\sim \mathcal{N}(0,I), 0\leq t\leq T-T_0.
\end{equation}
Here $T_0>0$ is the early-stopping time.
And the distribution of $\widehat{X}_{T-T_0}^\leftarrow$ is written as $\widehat{\P}(\cdot|y)$.

Note that we don't apply the commonly used classifier-free guidance \citep{ho2022classifier} which has a tunable guidance strength since we mainly concentrate on sampling from conditional distribution instead of optimizing other objectives.

\subsection{Transfer Diffusion Models via Learning Representation}

Consider $K$ source distributions over $\R^{d_x}\times[0,1]^{D_y}$, $\P^{1},\cdots,\P^K$, and a target distribution $\P^{0}$.
Suppose that for each source distribution $\P^k,1\leq k\leq K$, we have $n$ \textit{i.i.d.} samples $\{(x_i^k,y_i^k)\}_{i=1}^n\sim\P^k$, and $m$ \textit{i.i.d.} samples $\{(x_i^0,y_i^0)\}_{i=1}^m\sim\P^0$ are available for the target distribution, where typically $m\ll n$.
In transfer (meta) learning setup, we assume there exists a shared nonlinear representation of the condition $y$ for all distributions, \textit{i.e.}, the conditional distribution $\P^k_{x|y}=\P^k_{x|h_*(y)}$ for some $h_*:[0,1]^{D_y}\to[0,1]^{d_y}$ (see also Assumption \ref{asp:low_dim}).
Note that due to the shared features, the score of $p_t^{k}(\cdot|y)$ also has the form of $\nabla\log p_t^{k}(x_t|y)=f_*^{k}(x_t, h_*(y), t)$ for some $f_*^{k}$.

Similar to \citet{tripuraneni2020theory}, our transfer learning procedures consist of two phases.
In the pre-training phase, the goal is to learn a representation map $h_*$ through $nK$ samples from $K$ source distributions.
Then during the fine-tuning phase, we learn the target distribution via $m$ new samples and the representation map learned in the pre-training phase.

Formally, let $\mathcal{F},\mathcal{H}$ be the hypothesis classes of score networks and representation maps, respectively.
Further let $\mathcal{F}^0\subseteq\mathcal{F}$ be the hypothesis class of score network in fine-tuning phase.
In the pre-training phase, we solve the following Empirical Risk Minimization (ERM),
\begin{equation}\label{eq:pre-train}
    \widehat{\vf},\widehat{h}=\argmin_{\vf\in\mathcal{F}^{\otimes K},h\in\mathcal{H}} \frac{1}{nK}\sum_{k=1}^K\sum_{i=1}^n\ell(x_i^k,y_i^k,s_{f^k,h}).
\end{equation}
Then for the fine-tuning task, we solve
\begin{equation}\label{eq:fine-tune}
    \widehat{f}^0:= \argmin_{f\in\mathcal{F}^0} \frac{1}{m}\sum_{i=1}^m \ell(x_i^0,y_i^0,s_{f,\widehat{h}}).
\end{equation}
Here $s_{f,h}(x,y,t):=f(x,h(y),t)$ for $f:\R^{d_x}\times[0,1]^{d_y}\times[T_0,T]\to\R^{d_x}$ and $h:[0,1]^{D_y}\to[0,1]^{d_y}$ and $\ell$ is defined in \eqref{eq:dsm}.

In the meta-learning setting, we further assume that all the distributions $\{\P^k\}_k$ are \textit{i.i.d.} sampled from a meta distribution $\Pmeta$. 
Here $\Pmeta$ can be interpreted as a universal \textit{environment} \citep{baxter2000model,maurer2016benefit}.
In this case, we posit the existence of a shared representation map that holds for all $\P\sim\Pmeta$.
And the performance benchmark is then defined as the expected error on the target distribution $\P^0\sim\Pmeta$.

\subsection{Deep ReLU Neural Network Family}

We use feedforward neural networks to approximate the score function and representation map.
Let $\sigma(x):=\max\{x,0\}$ be the ReLU activation.
Define the score network family $NN_f(L,W,M,S,B,R,\gamma):=\big\{f(x,w,t)=(A_L\sigma(\cdot)+b_L)\circ\cdots\circ(A_1[x^\top,w^\top,t]^\top+b_1): A_i\in\R^{d_i\times d_{i+1}}, b_i\in\R^{d_{i+1}}, d_{L+1}=d_x,\max d_i\leq W, \|f\|_{L^\infty}\leq M, \sum_{i=1}^L (\|A_i\|_0+\|b_i\|_0)\leq S, \max\|A_i\|_\infty\vee\|b_i\|_\infty\leq B, \|f(x,w,t)-f(x,w',t)\|\leq \gamma\|w-w'\|_\infty, \forall\ \|x\|_\infty\leq R,t\leq T \big\}$, and encoder network $NN_h(L,W,S,B):= \big\{h(y)=(A_L\sigma(\cdot)+b_L)\circ\cdots\circ(A_1y+b_1): A_i\in\R^{d_i\times d_{i+1}},  b_i\in\R^{d_{i+1}}, d_{L+1}=d_y,\max d_i\leq W, \|h\|_{L^\infty([0,1]^{D_y})}\leq 1, \sum_{i=1}^L (\|A_i\|_0+\|b_i\|_0)\leq S, \max\|A_i\|_\infty\vee\|b_i\|_\infty\leq B\big\}.$
Throughout this paper, we let $\mathcal{F}^0=\mathcal{F}=NN_f(L_f,W_f,M_f,S_f,B_f,R_f,\gamma_f)$ and $\mathcal{H}=NN_h(L_h,W_h,S_h,B_h)$ unless otherwise specified.
\begin{rmk}
    In practice, $\mathcal{F}^0\subseteq\mathcal{F}$ may (and typically will) depend on $\widehat{\vf}$ for parameter efficient fine-tuning (PEFT), \textit{e.g.}, LoRA \citep{hu2021lora}. This will substantially reduce the complexity of $\mathcal{F}^0$ and further improve sample efficiency.
    The analysis of PEFT is beyond the scope of this paper.
\end{rmk}

\section{Statistical Guarantees for Transferring Score Matching Error}\label{sec:generalization}

In this section, we present our main theoretical results, a statistical theory of transferring the conditional score matching loss.
We provide two upper bounds of the score matching loss on target distribution, based on whether task diversity  \citep{tripuraneni2020theory} is explicitly assumed.
Our analysis introduces novel techniques to address the smoothness properties of the noised data distribution—a challenge that remains nontrivial even in single-task settings. Additionally, we extend the classical theory of local Rademacher complexity to quantify the empirical estimation error.

Throughout this paper, we make the following standard and mild regularity assumptions \citep{tripuraneni2020theory,chen2023score} on the initial data distribution $\P$ and the representation map $h_*$.

\begin{asp}[Sub-gaussian tail]\label{asp:sub_gaussian}
    For any source or target distribution $\P$, $\P$ is supported on $\R^{d_x}\times[0,1]^{D_y}$ and admits a continuous density $p(x,y)\in \mathcal{C}^2(\R^{d_x}\times[0,1]^{D_y})$. 
    Moreover, the conditional distribution $p(x|y)\leq C_1\exp(-C_2\|x\|^2)$ for some constant $C_1,C_2$.
\end{asp}

\begin{asp}[Shared low-dimensional representation]\label{asp:low_dim}
    There exists an $L$-Lipschitz function $h_*:[0,1]^{D_y}\rightarrow[0,1]^{d_y}$ with $d_y\leq D_y$, such that for any source and target distribution $\P$, the conditional density $p(x|y)=g_*^{\P}(x,h_*(y))$ for some $g_*^{\P}\in\mathcal{C}^2(\R^{d_x}\times[0,1]^{d_y})$.
\end{asp}

Equivalently,  $h_*(y)$ is a sufficient statistic for $x$, which indicates that $p_t(x|y)=p_t(x|h_*(y))$.
Therefore, with a little abuse of notation, for any $w\in[0,1]^{d_y}$, we define $p(x;w)=p(x|h_*(y)=w)=g_*^{\P}(x,w)$. 
Also note that by definition, for any $x,y$, we have $p(x;h_*(y))=p(x|h_*(y))=p(x|y)$.

\begin{asp}[Lipschitz score]\label{asp:lip}
    For any source and target distribution $\P$ and its density function $p$, the conditional score $\nabla_x\log p(x|y)=\nabla_x\log g_*^{\P}(x,h_*(y))$. The score function $\nabla_x\log g_*^{\P}(x,w)$ is $L$-Lipschitz in $x$ and $w$.
    And $\|\nabla_x\log g_*^{\P}(0,w)\|\leq B$ for some constant $B$ and any $w$.
\end{asp}

\subsection{Tackling Lipschitz Continuity under Weaker Assumptions}
Notice that we only impose smoothness assumption on the original data distribution $p(\cdot|y)$, instead of the entire trajectory $p_t(\cdot|y)$ in forward process. 
This is substantially weaker than the Lipschitzness assumption required in \citet{chen2023score,chen2022sampling,yuan2024reward,yang2024fewshot}. 
However, Lipschitzness of loss function $\ell$ and class $\mathcal{F}$ is a crucial hypothesis in theoretical analysis of transfer learning \citep{tripuraneni2020theory,chua2021fine}.
The intuition is that without Lipschitz continuity of the score network $f$, it is generally impossible to characterize the error from an imperfect representation map $h$.
Hence it is inevitable to show the smoothness of $p_t(\cdot|y)$ to some extent. 

Fortunately, even with assumptions merely on the initial data distribution, we are still able to prove smoothness of the forward process in any bounded region, as shown in the following lemma. The proof can be found in Appendix \ref{app:subsec:pre_generalization}.
\begin{lemma}\label{lem:lip_score_informal}
    Under Assumption \ref{asp:sub_gaussian}, \ref{asp:low_dim}, \ref{asp:lip}, for any $w\in[0,1]^{d_y}$, denote the conditional score of forward process $\nabla_x\log p_t(x;w)$ by $f_*(x,w,t)$.
    There exist constants $C_X,C_X'$, such that for any $R>0$, the function $f_*(x,w,t)$ is $(C_X+C_X'R^2)$-Lipschitz in $x$, $(C_X+C_X'R)$-Lipschitz in $w$, in the domain $\mathcal{B}_R\times[0,1]^{d_y}\times [0,T]$. Here $\mathcal{B}_R$ denotes the ball with radius $R$ centered at the origin.
\end{lemma}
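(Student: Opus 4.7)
The plan is to express $f_*(x,w,t)$ through the denoising posterior $q_t(x_0\mid x;w) \propto \phi_t(x\mid x_0)\, g_*^{\P}(x_0,w)$ and to compute its Jacobians in $x$ and $w$ explicitly. Tweedie's formula gives $f_*(x,w,t) = (\alpha_t \E_{q_t}[X_0] - x)/\sigma_t^2$, so differentiating yields
\[
\partial_x f_* = -\frac{I}{\sigma_t^2} + \frac{\alpha_t^2}{\sigma_t^4}\,\mathrm{Cov}_{q_t}(X_0), \qquad \partial_w f_* = \mathrm{Cov}_{q_t}\!\left(\tfrac{\alpha_t X_0 - x}{\sigma_t^2},\ \partial_w \log g_*^{\P}(X_0,w)\right).
\]
The problem thus reduces to controlling the posterior covariance of $X_0$ and the size of $\partial_w\log g_*^{\P}$ under $q_t$.

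First I would extract from Assumption \ref{asp:lip} the growth estimate $\|\partial_w \log g_*^{\P}(x_0,w)\| \leq C + L\|x_0\|$, obtained by integrating the $L$-bound on the mixed Hessian $\partial_w\nabla_{x_0}\log g_*^{\P}$ out from $x_0=0$, and using continuity on the compact set $[0,1]^{d_y}$ at $x_0=0$. Next I would split into two regimes in $t$. In the small-$t$ regime where $\alpha_t^2/\sigma_t^2 > 2L$, the Lipschitz score bound $\|\nabla_{x_0}^2 \log g_*^{\P}\| \leq L$ implies that $-\log q_t$ is strongly convex with Hessian at least $(\alpha_t^2/\sigma_t^2 - L)I$; the Brascamp--Lieb inequality then gives $\mathrm{Cov}_{q_t}(X_0) \preceq (\alpha_t^2/\sigma_t^2 - L)^{-1} I$, and combined with the leading $-I/\sigma_t^2$ cancellation this bounds $\|\partial_x f_*\|$ by a universal constant depending only on $L$. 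In the complementary regime $t\geq t_0$ where $\sigma_t$ is bounded below, I would instead use the sub-Gaussian envelope: the Gaussian likelihood factor together with $p(x_0\mid y)\leq C_1 e^{-C_2\|x_0\|^2}$ pointwise dominates $q_t(\cdot\mid x;w)$ by an unnormalized Gaussian with mean $O(\|x\|)$ and bounded variance, while a Jensen-type lower bound $p_t(x;w) \geq c\,e^{-C\|x\|^2/\sigma_t^2}$ converts this envelope into the pointwise moment bound $\E_{q_t}[\|X_0\|^k] \leq C_k(1+\|x\|^k)$.

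Assembling these bounds yields $\|\partial_x f_*\| \leq C_X + C_X'\|x\|^2$. For the $w$-Jacobian, applying Cauchy--Schwarz to the cross-covariance identity, together with the linear growth of $\partial_w\log g_*^{\P}$ and the second-moment estimate above, gives $\|\partial_w f_*\| \leq C_X + C_X'\|x\|$, where I would also exploit that $\E_{q_t}[\alpha_t X_0 - x] = \sigma_t^2 f_*(x,w,t)$ is itself controlled by $\sigma_t^2(B+L\|x\|)$ via the unconditional bound on the initial score. A mean value theorem then promotes these pointwise Jacobian bounds to the stated Lipschitz estimates, and the $t=0$ case is handled directly by Assumption \ref{asp:lip}.

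The main obstacle will be the large-$t$ regime, where the posterior fails to be log-concave and we can no longer rely on any uniform convexity. The argument has to replace Brascamp--Lieb with pointwise Gaussian domination together with a careful lower bound on the normalization $p_t(x;w)$, and it is exactly this step that forces the $\|x\|^2$ (resp.\ $\|x\|$) growth factor in the final Lipschitz constants, since the normalization can decay like $e^{-c\|x\|^2/\sigma_t^2}$ in the worst case and must be absorbed through the second moment of the posterior.
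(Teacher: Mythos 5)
Your high-level plan (differentiate the Tweedie representation $f_*=(\alpha_t\E_{q_t}[X_0]-x)/\sigma_t^2$, split in $t$, and control posterior covariances) does mirror the paper's structure, but two of the key steps do not go through as written.

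\textbf{Small-$t$ regime.} Brascamp--Lieb gives only a one-sided bound $\Cov_{q_t}(X_0)\preceq(\alpha_t^2/\sigma_t^2-L)^{-1}I$, which controls the largest eigenvalue of $\partial_xf_*=-I/\sigma_t^2+\frac{\alpha_t^2}{\sigma_t^4}\Cov_{q_t}(X_0)$ after the cancellation you describe. But the smallest eigenvalue is only trivially bounded below by $-1/\sigma_t^2$, which blows up as $t\to0$; the ``cancellation'' needs a \emph{lower} bound on $\Cov_{q_t}(X_0)$, e.g.\ the Cram\'er--Rao/Chernoff complement $\Cov_{q_t}(X_0)\succeq(\alpha_t^2/\sigma_t^2+L)^{-1}I$ (from the matrix Cauchy--Schwarz identity $\Cov(X)\,J\succeq I$ with $J=\E_{q_t}[\nabla^2V]$). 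You did not state this, and it is not a throwaway remark: without it your $\|\partial_xf_*\|$ bound fails. The paper avoids the issue entirely by switching to the representation $\partial_xf_*=\frac{1}{\alpha_t^2}\E_{q_t}[\nabla_x^2\log p(x_0;w)]+\frac{1}{\alpha_t^2}\Var_{q_t}(\nabla_x\log p(x_0;w))$, where both terms are automatically two-sided bounded via Assumption~\ref{asp:lip} and the Poincar\'e inequality for the log-concave $q_t$.

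\textbf{Large-$t$ regime.} Here the gap is more serious. Bounding the numerator by the Gaussian envelope (completing the square gives $\phi_t(x|x_0)p(x_0;w)\lesssim e^{-a\|x_0-\mu\|^2}e^{-C_2\|x\|^2/(\alpha_t^2+2C_2\sigma_t^2)}$) and the denominator by the Jensen lower bound $p_t(x;w)\gtrsim e^{-(\|x\|+C_0)^2/(2\sigma_t^2)}$ does \emph{not} yield $\E_{q_t}[\|X_0\|^2]\lesssim1+\|x\|^2$: the two exponents do not match, and the ratio carries an extraneous factor $\exp\!\big(\tfrac{\alpha_t^2\|x\|^2}{2\sigma_t^2(\alpha_t^2+2C_2\sigma_t^2)}\big)$, i.e.\ $e^{\Theta(\|x\|^2)}$ rather than polynomial growth. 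The root cause is that Assumption~\ref{asp:sub_gaussian} only provides an \emph{upper} bound on $p(\cdot;w)$; without a pointwise lower bound, the Jensen lower bound on $p_t$ is far from tight (for a Gaussian $p$ the true $p_t$ decays like $e^{-\|x\|^2/2}$, not $e^{-\|x\|^2/(2\sigma_t^2)}$). The paper's truncation argument sidesteps the normalization entirely: it bounds $\E_{q_t}\!\left[\left\|\tfrac{\alpha_tx_0-x}{\sigma_t}\right\|^2\right]$ directly by splitting at a radius $R$ chosen proportional to $(\|x\|+C_0)/\sigma_t$ so that the inner ball $\{\|x_0\|\le C_0\}$ has $q_t$-mass $\ge1/2$ (a consequence of the sub-Gaussian tail alone); then both tail integrals share the same unknown normalization $p_t(x;w)$, which cancels, leaving only the $R^2/\sigma_t^2\lesssim\|x\|^2+1$ main term. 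You would need to replace the Jensen step with this kind of self-normalizing truncation (or add a lower-bound assumption on $p$) for the large-$t$ case to close.
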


\subsection{Results under Task Diversity: Sample-Efficient Transfer Learning}

In the literature of transfer learning, task diversity is an important assumption that connects target tasks with source tasks \citep{tripuraneni2020theory,du2020few,chua2021fine}.
In the context of conditional diffusion models, we state the formal definition as follows.
\begin{definition}[Task diversity]\label{def:diversity}
    Given hypothesis classes $\mathcal{F},\mathcal{H}$, we say the source distributions $\P^1,\cdots,\P^K$ are $(\nu,\Delta)$-diverse over target distribution $\P^{0}$, if for any representation $h\in\mathcal{H}$,
    \begin{equation}
        \inf_{f^0\in\mathcal{F}^0}L^{\P^0}(s_{f^0,h})\leq \frac{1}{\nu}\inf_{\vf\in\mathcal{F}^{\otimes K}}\frac{1}{K}\sum_{k=1}^K L^{\P^k}(s_{f^{k},h}) + \Delta.
    \end{equation}
\end{definition}
Here $L^\P$ is defined in \eqref{eq:sm}.
This notion of diversity ensures that the representation error on the target task caused by $\widehat{h}$ can be controlled by the error on the source tasks, thereby establishing certain relationships in between.
More detailed discussions are deferred to Appendix \ref{app:subsec:verify_diversity}.

We first present the generalization guarantee for each phase respectively.
\begin{prop}[Fine-tuning phase generalization]\label{prop:generalization_test_informal}
    Under Assumption \ref{asp:sub_gaussian}, \ref{asp:low_dim}, \ref{asp:lip}, for any $\widehat{h}\in\mathcal{H}$, the population loss of $\widehat{f}^0$ can be bounded by
    \begin{equation}
        \E_{\{(x_i,y_i)\}_{i=1}^m\sim \P^0} \E_{(x,y)\sim\P^0} [\ell^{\P^0}(x,y,s_{\widehat{f}^0,\widehat{h}})]\lesssim \inf_{f\in\mathcal{F}}\E_{(x,y)\sim\P^0}[\ell^{\P^0}(x,y,s_{f,\widehat{h}})] + \log^3(m) r_x,
    \end{equation}
    where $r_x=\frac{\log\widetilde{\mathcal{N}}_\mathcal{F}}{m}$ and $\log\widetilde{\mathcal{N}}_\mathcal{F}$ is some complexity measures of $\mathcal{F}$.
\end{prop}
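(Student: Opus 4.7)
The argument follows the standard ERM excess-risk template. By optimality of $\widehat f^0$ on the empirical fine-tuning risk, subtracting the common baseline $\ell(\cdot,\cdot,s_*^{\P^0})$ from both sides and then adding and subtracting the corresponding population risks yields
\begin{equation*}
\E\bigl[\ell^{\P^0}(x,y,s_{\widehat f^0,\widehat h})\bigr] \le \inf_{f\in\mathcal{F}}\E\bigl[\ell^{\P^0}(x,y,s_{f,\widehat h})\bigr] + 2\,\E\sup_{f\in\mathcal{F}}\bigl(L^{\P^0}(s_{f,\widehat h})-L_m^{\P^0}(s_{f,\widehat h})\bigr),
\end{equation*}
where $L_m^{\P^0}(s):=\tfrac{1}{m}\sum_{i=1}^m\ell^{\P^0}(x_i^0,y_i^0,s)$ denotes the centered empirical risk. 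It therefore suffices to show the uniform deviation is $\mathcal{O}(\log^3(m)\log\widetilde{\mathcal{N}}_\mathcal{F}/m)$; the fast $1/m$ rate forces a local-Rademacher/Bernstein analysis rather than a bare Rademacher bound.

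Because $x$ has only a sub-Gaussian tail (Assumption \ref{asp:sub_gaussian}), I would first truncate to the event $\{\|x\|\le R\}$ with $R=\Theta(\sqrt{\log m})$; its complement has mass $\mathcal{O}(m^{-c})$ and, combined with the $L^\infty$ bound $M_f$ on members of $\mathcal{F}$, contributes a negligible additive term. On the truncated set, Lemma \ref{lem:lip_score_informal} shows that the true score $s_*^{\P^0}$ and its Lipschitz constants in both arguments are controlled by $C_X+C_X'\|x\|^2=\operatorname{poly}(R)$, while the denoising target $\nabla\log\phi_t(x_t|x)$ has second moment controlled by $1/\sigma_t^2\le 1/(1-e^{-2T_0})$ after the early-stopping cut; together these yield a $\operatorname{polylog}(m)$ envelope for the pointwise excess loss $\ell^{\P^0}$. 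Since $\ell^{\P^0}$ is a squared loss, it additionally satisfies a variance-to-expectation inequality of the form $\var[\ell^{\P^0}(x,y,s_{f,\widehat h})]\lesssim \operatorname{polylog}(m)\cdot\E[\ell^{\P^0}(x,y,s_{f,\widehat h})]$. Combining this with a $1/m$-scale covering of $\mathcal{F}$ (whose log-cardinality is $\log\widetilde{\mathcal{N}}_\mathcal{F}$) and a standard peeling argument, Bernstein's inequality plus a union bound delivers the target rate $\operatorname{polylog}(m)\cdot\log\widetilde{\mathcal{N}}_\mathcal{F}/m$ for the uniform deviation.

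The main obstacle is tracking the $R$-dependence so that it remains polylogarithmic throughout: the $(C_X+C_X'\|x\|^2)$ Lipschitz factor from Lemma \ref{lem:lip_score_informal} enters (i) the pointwise envelope for $\ell^{\P^0}$, (ii) the variance constant in Bernstein's inequality, and (iii) the resolution at which $\mathcal{F}$ must be covered so that cover entries translate into $L^2(\P^0)$-close score networks via the Lipschitz parameter $\gamma_f$ built into $NN_f$. Each of these sources contributes roughly one power of $\log m$, which is ultimately what produces the $\log^3(m)$ multiplier. The remaining technical chore is to handle the time integration over $[T_0,T]$ uniformly over the cover—in particular the $\sigma_t^{-2}$ behavior near $T_0$—and to verify that the truncation tail is genuinely absorbed into lower-order $\mathcal{O}(m^{-c})$ terms; after that, the argument reduces to a routine Bernstein-plus-union-bound calculation.
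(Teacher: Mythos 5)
Your approach is essentially the same as the paper's: truncate to a ball of radius $R=\Theta(\sqrt{\log m})$, use the Lipschitz-score lemma to control the loss envelope and the variance, cover $\mathcal{F}$ in $L^\infty$ on the truncated domain, and run a local-Rademacher/peeling argument (the paper packages this as Lemma~\ref{lem:local_rademacher}, built on Bousquet-style concentration and Dudley's integral). Two inaccuracies are worth flagging so they don't mislead you when you fill in details.

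First, your opening display bounds the excess risk by the bare uniform deviation $\E\sup_{f\in\mathcal{F}}\bigl(L^{\P^0}(s_{f,\widehat h})-L_m^{\P^0}(s_{f,\widehat h})\bigr)$ and then asserts this is $\mathcal{O}(\log^3 m\cdot\log\widetilde{\mathcal{N}}_{\mathcal{F}}/m)$. That cannot be right as stated: for any $f$ with $L^{\P^0}(s_{f,\widehat h})=\Theta(1)$ the deviation $L^{\P^0}-L_m^{\P^0}$ has standard deviation $\Theta(m^{-1/2})$, so the supremum is at best $\Theta(m^{-1/2})$, not $\mathcal{O}(1/m)$. The Bernstein/localization argument does not shrink the global supremum; it produces a \emph{multiplicative-plus-additive} inequality, namely $L^{\P^0}(s_{f,\widehat h})\lesssim L_m^{\P^0}(s_{f,\widehat h}) + \text{err}$ uniformly in $f$ (and conversely), which after chaining through the ERM optimality gives $\E[\ell^{\P^0}(\cdot,s_{\widehat f^0,\widehat h})]\le C\inf_f\E[\ell^{\P^0}(\cdot,s_{f,\widehat h})]+\mathcal{O}(\log^3 m\cdot\log\widetilde{\mathcal{N}}_{\mathcal{F}}/m)$ with a constant $C>1$ (the paper obtains $C=4$). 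The informal $\lesssim$ in the statement absorbs this constant, but the route to it is through the two-sided local inequalities, not through a global supremum.

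Second, the Lipschitz parameter $\gamma_f$ built into $NN_f$ plays no role in this phase: $\widehat h$ is held fixed, so one only needs an $L^\infty$ cover of $\mathcal{F}$ on the truncated domain (Lemma~\ref{lem:covering_num}), with cover resolution converted to loss closeness via the envelope $M$, not via $\gamma_f$. The parameter $\gamma_f$ enters the pre-training bound (Proposition~\ref{prop:generalization_train_informal}), where $\mathcal{H}$ must also be covered and one needs $\|f(x,h_1(y),t)-f(x,h_2(y),t)\|\le\gamma_f\|h_1(y)-h_2(y)\|_\infty$.
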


\begin{prop}[Pre-training phase generalization]\label{prop:generalization_train_informal}
    Under Assumption \ref{asp:sub_gaussian}, \ref{asp:low_dim}, \ref{asp:lip}, if $R_f\gtrsim\log^{\frac{1}{2}}(nKM_f/\delta)$, with probability no less than $1-\delta$,
    the population loss can be bounded by
    \begin{equation}
        \frac{1}{K}\sum_{k=1}^K\E_{(x,y)\sim\P^k}\ell^{\P^k}(x,y,s_{\widehat{f}^k,\widehat{h}})\lesssim \inf_{\vf\in\mathcal{F}^{\otimes K},h\in\mathcal{H}}\frac{1}{K}\sum_{k=1}^K\E_{(x,y)\sim\P^k}[\ell^\P(x,y,s_{f^k,h})] + \log^3(nK/\delta)\left(r_z+\frac{\log(1/\delta)}{nK}\right),
    \end{equation}
    where $r_z:=\frac{K\log\widetilde{\mathcal{N}}_\mathcal{F}+\log\widetilde{\mathcal{N}}_\mathcal{H}}{nK}$ and $\log\widetilde{\mathcal{N}}_\mathcal{F},\log\widetilde{\mathcal{N}}_\mathcal{H}$ are some complexity measures of $\mathcal{F},\mathcal{H}$.
\end{prop}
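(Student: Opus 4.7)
The plan is to prove this via a standard two-step ERM / uniform convergence argument, but tailored to the conditional score matching loss whose integrand $\nabla\log\phi_t(x_t|x)=-(x_t-\alpha_t x)/\sigma_t^2$ is unbounded both in $x$ and as $t\downarrow T_0$. The key ingredients are (i) a high-probability truncation of the source samples using the sub-Gaussian tail in Assumption \ref{asp:sub_gaussian}, (ii) Lemma \ref{lem:lip_score_informal} to reduce the analysis on the truncated region to Lipschitz losses with polynomially growing constants, and (iii) an offset / local Rademacher bound exploiting the variance–mean structure of the excess score matching loss in order to obtain the fast rate $r_z$ rather than $\sqrt{r_z}$.

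First I would perform \emph{truncation}. Let $R\asymp\log^{1/2}(nKM_f/\delta)$ (matching the condition $R_f\gtrsim\log^{1/2}(nKM_f/\delta)$). By Assumption \ref{asp:sub_gaussian} and a union bound, $\max_{k,i}\|x_i^k\|\le R$ with probability $1-\delta/3$. On this event, for every $t\in[T_0,T]$ the noised sample $x_t$ concentrates on a ball of radius $R'\asymp R+\sqrt{d_x\log(nK/\delta)}$, which absorbs another $\delta/3$. Combining Assumption \ref{asp:lip} and Lemma \ref{lem:lip_score_informal}, on this region the integrand in \eqref{eq:dsm} is bounded by $\mathrm{poly}(R')\cdot(T-T_0)^{-1}\log(1/T_0)$ and is Lipschitz in $(f,h)$ with constant $\mathrm{poly}(R',\gamma_f,L)$. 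All the $\log^3$ factors in the statement come from polynomial powers of $R\sim\log^{1/2}(\cdot)$ appearing in these bounds.

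Second, I would set up an \emph{offset uniform convergence} argument on the excess loss $\ell^{\P^k}(x,y,s)=\ell(x,y,s)-\ell(x,y,s_*^{\P^k})$. The crucial property, inherited from the squared form of denoising score matching, is a Bernstein-type variance bound of the form $\mathrm{Var}_{\P^k}[\ell^{\P^k}(x,y,s_{f,h})]\lesssim \mathrm{poly}(R')\cdot \E_{\P^k}[\ell^{\P^k}(x,y,s_{f,h})]$ on the truncated event. Using this, a standard peeling / localization argument applied to the empirical process indexed by $(\vf,h)\in\mathcal{F}^{\otimes K}\times\mathcal{H}$ yields, uniformly over the class,
\begin{equation*}
\frac{1}{K}\sum_{k=1}^K \E_{\P^k}[\ell^{\P^k}(x,y,s_{f^k,h})]
\;\lesssim\; \frac{1}{K}\sum_{k=1}^K \widehat{\E}_{\P^k}[\ell^{\P^k}(x,y,s_{f^k,h})] + \log^3(nK/\delta)\,\bigl(r+\tfrac{\log(1/\delta)}{nK}\bigr),
\end{equation*}
where $r$ is the local complexity radius of the joint class. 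The metric entropy of $\mathcal{F}^{\otimes K}\times\mathcal{H}$ under a product pseudometric on the $nK$ samples decomposes as $K\log\widetilde{\mathcal{N}}_\mathcal{F}+\log\widetilde{\mathcal{N}}_\mathcal{H}$ (this is precisely the meta-learning payoff: the encoder $h$ is shared), giving $r\asymp r_z=(K\log\widetilde{\mathcal{N}}_\mathcal{F}+\log\widetilde{\mathcal{N}}_\mathcal{H})/(nK)$. Since $\widehat{\vf},\widehat{h}$ are ERMs, the empirical side is bounded by that of any comparator $(\vf,h)$, and standard Bernstein on each fixed comparator controls the empirical process in the other direction, leading to the claimed bound.

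The main technical obstacle will be controlling the loss \emph{off} the truncation event and at times $t$ near $T_0$. Since $\nabla\log\phi_t(x_t|x)$ has magnitude $O(\sigma_t^{-2}\|x_t-\alpha_t x\|)$, moments of the integrand are finite but grow polynomially in $R'$ and in $T_0^{-1}$; absorbing these growths into polylogarithmic factors is what pins the exponent $3$ on the $\log$. A related delicate point is obtaining the Bernstein variance–mean comparison with the true score $s_*^{\P^k}$ rather than with $0$: this requires Lemma \ref{lem:lip_score_informal} applied to $s_*^{\P^k}(x,w,t)=f_*^k(x,w,t)$ to bound $\|s_{f,h}-s_*^{\P^k}\|_{L^\infty}$ on the truncated ball by $\mathrm{poly}(R')$, so that squared differences dominate the variance of first differences. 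Once these two points are handled, the rest follows from putting together Assumption \ref{asp:low_dim}'s guarantee that a single $h$ can realize every $\P^k$ (so the approximation term matches the infimum in the statement) with the offset uniform convergence bound above.
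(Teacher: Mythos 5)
Your proposal follows essentially the same route as the paper's proof (Proposition~\ref{prop:generalization_train}): truncate the source samples using the sub-Gaussian tail, use Lemma~\ref{lem:lip_score_informal} to control boundedness and Lipschitzness of the score on the truncated region, establish the Bernstein-type variance--mean comparison $\E[\ell^{\P^k}(\cdot)^2]\lesssim M\cdot\E[\ell^{\P^k}(\cdot)]$, decompose the metric entropy of the joint class as $K\log\widetilde{\mathcal{N}}_\mathcal{F}+\log\widetilde{\mathcal{N}}_\mathcal{H}$, and finish via local Rademacher complexity (Lemma~\ref{lem:local_rademacher_meta}) plus ERM optimality. One small framing point: since $x_t$ appears as an integration variable inside $\ell$ rather than as an observed sample, the paper handles the Gaussian tail of $x_t$ by splitting the expectation over the ball $\|x_t\|_\infty\le R_1$ (adding an $\exp(-\Omega(R^2))$ remainder) rather than by a high-probability event, but this does not change the substance of your argument.
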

Combining these two propositions with the notion of task diversity in Definition \ref{def:diversity}, we are able to show the statistical rate of transfer learning as follows.
\begin{thm}\label{thm:generalization_all_diversity_informal}
    Under Assumption \ref{asp:sub_gaussian}, \ref{asp:low_dim}, \ref{asp:lip}, suppose $\P^1,\cdots,\P^K$ are $(\nu,\Delta)$-diverse over target distribution $\P^0$ given $\mathcal{F},\mathcal{H}$.
    If $R_f\gtrsim\log^{\frac{1}{2}}(nKM_f/\delta)$,
    then with probability no less than $1-\delta$, 
    \begin{equation}
        \begin{aligned}
            \E_{\{(x_i,y_i)\}_{i=1}^m} \E_{(x,y)\sim\P^0} [\ell^{\P^0}(x,y,s_{\widehat{f}^{0},\widehat{h}})]
            &\lesssim \frac{1}{\nu}\inf_{h\in\mathcal{H}} \frac{1}{K}\sum_{k=1}^K \inf_{f\in\mathcal{F}}\E_{(x,y)\sim\P^k} [\ell^{\P^k} (x,y,s_{f,h})] + \Delta \\
            &\quad +\frac{\log^3(m)\log\mathcal{N}_\mathcal{F}}{m} + \frac{\log^3(nK/\delta)(K\log\mathcal{N}_\mathcal{F}+\log(\mathcal{N}_\mathcal{H}/\delta))}{\nu nK}.
        \end{aligned}
    \end{equation}
    where
    \begin{equation}\label{eq:def_complexity}
        \begin{aligned}
            \log\mathcal{N}_\mathcal{F}:&=M_f^2S_fL_f\log\left(mnL_fW_f(B_f\vee 1)M_fT\log(1/\delta)\right), \\
            \log\mathcal{N}_\mathcal{H}:&=S_hL_h\log\left(nKL_hW_h(B_h\vee 1)M_f\gamma_f\log(1/\delta)\right).
        \end{aligned}
    \end{equation}
\end{thm}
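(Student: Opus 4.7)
The plan is to chain together Propositions \ref{prop:generalization_test_informal} and \ref{prop:generalization_train_informal} through Definition \ref{def:diversity}, using the learned representation $\widehat{h}$ as the bridge. The key is that the fine-tuning bound holds for any fixed $h$, which allows us to first condition on the source dataset (and hence on $\widehat{h}$) when invoking Proposition \ref{prop:generalization_test_informal}, and then apply the pre-training bound to control the error introduced by the random $\widehat{h}$.

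First I would invoke Proposition \ref{prop:generalization_train_informal}, which gives an event $\mathcal{E}$ of probability at least $1-\delta$ (over the source samples) on which
\begin{equation*}
    \frac{1}{K}\sum_{k=1}^K L^{\P^k}(s_{\widehat{f}^k,\widehat{h}}) \lesssim \inf_{\vf\in\mathcal{F}^{\otimes K},h\in\mathcal{H}}\frac{1}{K}\sum_{k=1}^K L^{\P^k}(s_{f^k,h}) + \log^3(nK/\delta)\left(r_z+\tfrac{\log(1/\delta)}{nK}\right).
\end{equation*}
On this event, $\widehat{h}\in\mathcal{H}$ is a fixed (albeit random) element, so I would condition on the source data and apply Proposition \ref{prop:generalization_test_informal} with this $\widehat{h}$, yielding
\begin{equation*}
    \E_{\{(x_i,y_i)\}_{i=1}^m} L^{\P^0}(s_{\widehat{f}^0,\widehat{h}}) \lesssim \inf_{f\in\mathcal{F}} L^{\P^0}(s_{f,\widehat{h}}) + \log^3(m)\, r_x.
\end{equation*}

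Next, since $\widehat{h}\in\mathcal{H}$, the task diversity assumption (Definition \ref{def:diversity}) applied at $h=\widehat{h}$ gives
\begin{equation*}
    \inf_{f\in\mathcal{F}} L^{\P^0}(s_{f,\widehat{h}}) \leq \frac{1}{\nu}\inf_{\vf\in\mathcal{F}^{\otimes K}}\frac{1}{K}\sum_{k=1}^K L^{\P^k}(s_{f^k,\widehat{h}}) + \Delta \leq \frac{1}{\nu}\cdot\frac{1}{K}\sum_{k=1}^K L^{\P^k}(s_{\widehat{f}^k,\widehat{h}}) + \Delta,
\end{equation*}
where the second inequality just replaces the infimum by the specific choice $\widehat{\vf}$. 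Combining this with the pre-training bound on the event $\mathcal{E}$ and using the identity $\inf_{\vf,h}\tfrac{1}{K}\sum_k L^{\P^k}(s_{f^k,h}) = \inf_{h\in\mathcal{H}}\tfrac{1}{K}\sum_k \inf_{f\in\mathcal{F}} L^{\P^k}(s_{f,h})$ (separable minimization) delivers the target bound up to the explicit form of the complexity terms.

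The final step is to translate the abstract complexity quantities $\log\widetilde{\mathcal{N}}_\mathcal{F}, \log\widetilde{\mathcal{N}}_\mathcal{H}$ (appearing in $r_x,r_z$) into the explicit $\log\mathcal{N}_\mathcal{F}, \log\mathcal{N}_\mathcal{H}$ in \eqref{eq:def_complexity}; this amounts to bounding the covering numbers of the neural network classes $NN_f$ and $NN_h$ via standard results together with the $L^\infty$-boundedness ($M_f$) and sparsity ($S_f,S_h$) constraints, and using the assumption $R_f\gtrsim \log^{1/2}(nKM_f/\delta)$ to ensure the effective support is sub-gaussian-covered. The main obstacle I anticipate is keeping the randomness of $\widehat{h}$ cleanly separated from the target-sample expectation: since $\widehat{h}$ is learned from source data while Proposition \ref{prop:generalization_test_informal} conditions on the representation, I must apply the two bounds in the correct conditional order, take a union over the high-probability event $\mathcal{E}$, and absorb the complementary $\delta$-event into the $\log(1/\delta)/(nK)$ term without disturbing the additive structure.
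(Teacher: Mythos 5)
Your proposal is correct and follows essentially the same chain of inequalities as the paper's proof: apply the fine-tuning bound (Proposition \ref{prop:generalization_test_informal}) conditionally on the source data so that $\widehat{h}$ is fixed, invoke $(\nu,\Delta)$-diversity at $h=\widehat{h}$, upper-bound the infimum over $\mathcal{F}^{\otimes K}$ by the empirical minimizer $\widehat{\vf}$, apply the pre-training bound (Proposition \ref{prop:generalization_train_informal}), and then substitute the explicit forms of $r_x,r_z$ from the formal propositions. The only minor overstatement is your concern about a union bound and "absorbing the complementary $\delta$-event": only one high-probability event (from the pre-training phase) is involved, and the theorem is itself a high-probability statement over the source data, so the pieces match directly without any extra bookkeeping.
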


The formal statements and proofs are provided in Appendix \ref{app:subsec:generalization_diversity}.

Let $\varepsilon_{\text{approx}}=\inf_{h\in\mathcal{H}} \frac{1}{K}\sum_{k=1}^K \inf_{f\in\mathcal{F}}\E_{(x,y)\sim\P^k} [\ell^{\P^k} (x,y,s_{f,h})]$ be the approximation error.
The leading terms can be simplified to $\widetilde{\mathcal{O}}\left(\varepsilon_{\text{approx}}+\frac{K\log\mathcal{N}_\mathcal{F}+\log\mathcal{N}_\mathcal{H}}{nK}+\frac{\log\mathcal{N}_\mathcal{F}}{m}\right)$, where $\log\mathcal{N}_\mathcal{F}$ and $\log\mathcal{N}_\mathcal{H}$ capture the complexity of the hypothesis classes.

\paragraph{Improving Sample Efficiency}
Theorem \ref{thm:generalization_all_diversity_informal} demonstrates the sample efficiency of transfer learning. 
Compared to naively training the full CDM for target distribution, which has an error of $\widetilde{\mathcal{O}}\left(\varepsilon_{\text{approx}}+\frac{\log\mathcal{N}_\mathcal{F}+\log\mathcal{N}_\mathcal{H}}{m}\right)$, transfer learning saves the complexity of learning $\mathcal{H}$ and thus the performance is much better when $m$ is relatively small to $n,K$ (\textit{i.e.}, in few-shot learning setting). 

\subsection{Results without Task Diversity: Meta-Learning Perspective}

The results in previous section heavily depend on the task diversity assumption, which is hard to verify in practice.
An alternative is to consider meta-learning setting, where all source and target distributions are sampled from the same \textit{environment}, \textit{i.e.}, a meta distribution.

For any $h\in\mathcal{C}([0,1]^{D_y};[0,1]^{d_y})$ and distribution $\P$ over $\R^{d_x}\times[0,1]^{D_y}$, define the representation error as
\begin{equation}\label{eq:L(P,h)}
    \mathcal{L}(\P,h):=\inf_{f\in\mathcal{F}} \E_{(x,y)\sim\P}[\ell^\P(x,y,s_{f,h})]\geq 0.
\end{equation}
We characterize the generalization bound of source tasks on the entire meta distribution as follows.
\begin{prop}[Generalization on meta distribution] \label{prop:generalization_meta_informal}
    Under Assumption \ref{asp:sub_gaussian}, \ref{asp:low_dim}, \ref{asp:lip}, 
    there exists constant $C_P$
    such that for $\{\P^k\}_{k=1}^K\overset{\textit{i.i.d.}}{\sim}\Pmeta$,
    with probability no less than $1-\delta$, 
    \begin{align}
        &\E_{\P\sim\Pmeta}\mathcal{L}(\P,h)
        \leq \frac{2}{K}\sum_{k=1}^K\mathcal{L}(\P^k,h)+C_P\left(r_P+\frac{\log(1/\delta)}{K}\right), \\
        &\frac{1}{K}\sum_{k=1}^K\mathcal{L}(\P^k,h)\leq 2\E_{\P\sim\Pmeta}\mathcal{L}(\P,h)
        +C_P\left(r_P+\frac{\log(1/\delta)}{K}\right),
    \end{align}
    holds for any $h\in\mathcal{H}$, where $r_P=M_f^2\exp(-\Omega(R_f^2))+\frac{S_hL_h\log\left(KL_hW_h(B_h\vee 1)M_f\gamma_f\right)}{K}$.
\end{prop}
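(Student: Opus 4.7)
The plan is to prove a uniform two-sided Bernstein-type concentration bound for the random variable $\mathcal L(\P,h)$ as $\P\sim\Pmeta$ is drawn. The factor of $2$ on both sides, together with the additive $\log(1/\delta)/K$ error, is the signature of applying Bernstein's inequality to a non-negative bounded variable whose variance is controlled by its mean, followed by the AM–GM step $\sqrt{ab}\le a/2+b/2$ (applied with $a=\E\mathcal L$ and $b$ proportional to the Bernstein remainder). Because the two desired statements are the two sides of one concentration inequality, they follow simultaneously.

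First I truncate to handle the unboundedness of $\ell^\P$: define $\mathcal L^{\mathrm{tr}}(\P,h):=\inf_{f\in\mathcal F}\E_\P[\ell^\P(x,y,s_{f,h})\mathbf 1\{\|x_t\|_\infty\le R_f\}]$. Using the sub-Gaussian tail of $\P_t$ from Assumption \ref{asp:sub_gaussian}, the uniform bound $\|f\|_\infty\le M_f$, and the polynomial-growth estimate for $s^\P_*$ extracted from Lemma \ref{lem:lip_score_informal}, the cost of this truncation is at most $M_f^2\exp(-\Omega(R_f^2))$, accounting for the first term of $r_P$. After truncation $\mathcal L^{\mathrm{tr}}(\P,h)\in[0,CM_f^2]$, so for each fixed $h$, $\mathrm{Var}_{\Pmeta}(\mathcal L^{\mathrm{tr}}(\P,h))\le CM_f^2\cdot\E_{\Pmeta}\mathcal L^{\mathrm{tr}}(\P,h)$. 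Bernstein then yields, with probability $\ge 1-\delta$,
\begin{equation*}
\Bigl|\tfrac1K\sum_k\mathcal L^{\mathrm{tr}}(\P^k,h)-\E_{\Pmeta}\mathcal L^{\mathrm{tr}}(\P,h)\Bigr|\lesssim \sqrt{\tfrac{M_f^2\,\E_{\Pmeta}\mathcal L^{\mathrm{tr}}(\P,h)\log(1/\delta)}{K}}+\tfrac{M_f^2\log(1/\delta)}{K},
\end{equation*}
and the AM–GM step converts this into both claimed inequalities pointwise in $h$.

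To upgrade the pointwise bound to one uniform over $\mathcal H$, I show that $h\mapsto\mathcal L^{\mathrm{tr}}(\P,h)$ is $O(M_f\gamma_f)$-Lipschitz in the $L^\infty$ norm. For any fixed $f\in\mathcal F$ the defining property of $NN_f$ gives $\|s_{f,h_1}-s_{f,h_2}\|\le\gamma_f\|h_1-h_2\|_\infty$ on the truncation region; expanding $\|s_{f,h_1}-s^\P_*\|^2-\|s_{f,h_2}-s^\P_*\|^2=\langle s_{f,h_1}-s_{f,h_2},\,s_{f,h_1}+s_{f,h_2}-2s^\P_*\rangle$ and using the bounds $\|s_{f,h_i}\|\le M_f$ together with the a-priori control of $\|s^\P_*\|$ on $\{\|x_t\|_\infty\le R_f\}$ yields the Lipschitz constant, which then passes to the infimum. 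An $\epsilon$-net $\mathcal H_\epsilon$ of $\mathcal H$ in $L^\infty$ has $\log|\mathcal H_\epsilon|\lesssim S_hL_h\log(L_hW_h(B_h\vee 1)/\epsilon)$ by the standard ReLU covering bound; applying the pointwise Bernstein at every net element with a union-bound factor $\log|\mathcal H_\epsilon|$ and extending by Lipschitz continuity at cost $CM_f\gamma_f\epsilon$, the choice $\epsilon\asymp 1/K$ absorbs this slack into $O(1/K)$ while producing the second term of $r_P$.

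The main obstacle is the Lipschitz-in-$h$ step: because $\mathcal L$ involves an infimum over $f\in\mathcal F$, the Lipschitz estimate must hold uniformly in $f$, which in turn requires simultaneous pointwise boundedness of $s_{f,h}$ and of the true conditional score $s^\P_*$ on the truncation region. The former is built into $NN_f$; the latter does not follow directly from the stated assumptions and must be extracted by combining Lemma \ref{lem:lip_score_informal} with a separate bound on $\|s^\P_*(0,w,t)\|$ uniform in $t\in[T_0,T]$, an estimate that is delicate but essentially routine once one tracks constants through the heat-kernel representation of the forward OU process.
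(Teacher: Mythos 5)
Your overall strategy---pointwise Bernstein with a variance-to-mean bound, AM--GM to obtain the factor of $2$, and a union bound over an $L^\infty$ $\epsilon$-net of $\mathcal H$ followed by Lipschitz extension---is a genuinely different route from the paper's proof, which instead invokes the local-Rademacher / Bousquet peeling machinery (Lemma \ref{lem:local_rademacher}) together with Dudley's integral and the covering bound of Lemma \ref{lem:covering_num}. The two are morally equivalent for a single $L^\infty$-covering scale, and your more elementary approach would also yield a two-sided fast-rate bound; but as written there is a quantitative gap that prevents you from matching the stated $r_P$.

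The gap is in the boundedness parameter feeding into Bernstein. You truncate $\ell^\P$ to $\{\|x_t\|_\infty\le R_f\}$ and then claim $\mathcal L^{\mathrm{tr}}(\P,h)\in[0,CM_f^2]$, which makes $\mathrm{Var}(\mathcal L^{\mathrm{tr}})\lesssim M_f^2\,\E\mathcal L^{\mathrm{tr}}$. Carrying this through Bernstein and the net union bound produces a remainder of order $\frac{M_f^2\bigl(S_hL_h\log(\cdots)+\log(1/\delta)\bigr)}{K}$, i.e., an extra $M_f^2$ multiplying the second term of $r_P$ and the $\log(1/\delta)/K$ term. The proposition does not allow $C_P$ to absorb $M_f$ (it appears explicitly inside $r_P$ elsewhere), and $M_f=\mathcal O(\log^3(nK/(\varepsilon\delta)))$ in Theorem \ref{thm:approximation_all_informal}, so this is a real loss. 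The fix is the observation underlying the paper's Lemma \ref{lem:bound L(P,h)}: since $0\in\mathcal F$, one has $\mathcal L(\P,h)\le\E_{t,x_t,y}\|\nabla\log p_t(x_t|y)\|^2\le C_L$ where $C_L$ is a constant depending only on $d_x,L,B,C_1,C_2$---no truncation and no $M_f$ required. Consequently $\mathcal L^{\mathrm{tr}}\le\mathcal L\le C_L$, and Bernstein should be run with boundedness parameter $C_L$, not $M_f^2$. Truncation is only needed inside the Lipschitz-in-$h$ estimate (to invoke the $\gamma_f$-Lipschitzness of $f$ in $w$ on $\|x\|_\infty\le R_f$), not to bound $\mathcal L$ itself.

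Two smaller points. First, the net resolution should be $\epsilon\asymp 1/(KM_f\gamma_f)$ rather than $\epsilon\asymp 1/K$, so that the Lipschitz extension cost $\mathcal O(M_f\gamma_f\epsilon)$ becomes $\mathcal O(1/K)$ and the $M_f\gamma_f$ migrates \emph{inside} the logarithm of the covering count, exactly matching $\log\bigl(KL_hW_h(B_h\vee 1)M_f\gamma_f\bigr)$ in $r_P$. Second, in the Lipschitz-in-$h$ step you flag the need for a pointwise bound on $\|s^\P_*\|$ over the truncation region as ``delicate''; the paper avoids this by applying Cauchy--Schwarz in $L^2(\P_t)$ and using only the moment bound $\E\|\nabla\log p_t\|^2\le C_L$ from Lemma \ref{lem:bound L(P,h)} on the factor involving $s^\P_*$, reserving the truncation for the factor $\|f(\cdot,h_1,\cdot)-f(\cdot,h_2,\cdot)\|$. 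That sidesteps the pointwise growth argument entirely; your route via the pointwise estimate (extractable from the proof of Lemma \ref{lem:bound score_t}) also works, but is unnecessarily heavy.
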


\begin{thm}\label{thm:generalization_all_informal}
    Under Assumption \ref{asp:sub_gaussian}, \ref{asp:low_dim}, \ref{asp:lip},
    if $R_f\gtrsim\log^{\frac{1}{2}}(nKM_f/\delta)$,
    then with probability no less than $1-\delta$, the expected population loss of new task can be bounded by
    \begin{equation}
        \begin{aligned}
            &\E_{\P^0\sim\Pmeta}\E_{\{(x_i,y_i)\}_{i=1}^m\sim \P^0} \E_{(x,y)\sim\P^0} [\ell^\P(x,y,s_{\widehat{f}^0,\widehat{h}})] \\
            &\qquad \lesssim \inf_{h\in\mathcal{H}} \E_{\P\sim\Pmeta} \inf_{f\in\mathcal{F}}\E_{(x,y)\sim\P} [\ell^\P (x,y,s_{f,h})] + \frac{\log^3(m)\log\mathcal{N}_\mathcal{F}}{m} + \frac{\log^3(nK/\delta)\log\mathcal{N}_\mathcal{F}}{n}+\frac{\log(\mathcal{N}_\mathcal{H}/\delta)}{K},
        \end{aligned}
    \end{equation}
    where $\log\mathcal{N}_\mathcal{F},\log\mathcal{N}_\mathcal{H}$ are defined in \eqref{eq:def_complexity}.
\end{thm}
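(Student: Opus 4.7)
The plan is to bootstrap the meta-learning bound from the three ingredients already in the paper: the fine-tuning generalization bound (Proposition \ref{prop:generalization_test_informal}), the pre-training generalization bound (Proposition \ref{prop:generalization_train_informal}), and the two-sided meta-distribution concentration inequality (Proposition \ref{prop:generalization_meta_informal}). The strategy is to chain these in the order: fine-tuning $\to$ meta (upper) $\to$ pre-training $\to$ meta (lower), thereby avoiding any appeal to task diversity and instead routing the comparison between source and target through the shared environment $\Pmeta$.

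First, I would fix a realization of the source samples (so that $\widehat{h}$ is a deterministic element of $\mathcal{H}$) and apply Proposition \ref{prop:generalization_test_informal} pointwise in $\P^0$, then take expectation over $\P^0\sim\Pmeta$. This yields
\begin{equation*}
\E_{\P^0\sim\Pmeta}\E_{\{(x_i,y_i)\}_{i=1}^m}\E_{(x,y)\sim\P^0}[\ell^{\P^0}(x,y,s_{\widehat{f}^0,\widehat{h}})]
\lesssim \E_{\P^0\sim\Pmeta}[\mathcal{L}(\P^0,\widehat{h})] + \frac{\log^3(m)\log\mathcal{N}_\mathcal{F}}{m}.
\end{equation*}
Next, since Proposition \ref{prop:generalization_meta_informal} is uniform over $h\in\mathcal{H}$, it applies in particular at the data-dependent $\widehat{h}$, giving
$\E_{\P\sim\Pmeta}\mathcal{L}(\P,\widehat{h})\le \frac{2}{K}\sum_{k=1}^K\mathcal{L}(\P^k,\widehat{h})+C_P(r_P+\tfrac{\log(1/\delta)}{K})$ on a high-probability event.

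Now I would upper-bound $\mathcal{L}(\P^k,\widehat{h})\le \E_{(x,y)\sim\P^k}[\ell^{\P^k}(x,y,s_{\widehat{f}^k,\widehat{h}})]$ by definition of the infimum, and invoke Proposition \ref{prop:generalization_train_informal} to control the average of these source population losses by the oracle ERM value
$\inf_{\vf,h}\tfrac{1}{K}\sum_k\E_{\P^k}[\ell^{\P^k}(x,y,s_{f^k,h})]$ plus the $\widetilde{\mathcal{O}}(r_z)$ term. Because the ERM objective is additively separable in $k$ over the $f^k$'s, this infimum simplifies to $\inf_{h\in\mathcal{H}}\tfrac{1}{K}\sum_k\mathcal{L}(\P^k,h)$. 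Picking any near-minimizer $h^\star\in\arg\min_{h\in\mathcal{H}}\E_{\P\sim\Pmeta}\mathcal{L}(\P,h)$ and evaluating at $h^\star$, the second inequality of Proposition \ref{prop:generalization_meta_informal} gives
$\tfrac{1}{K}\sum_k\mathcal{L}(\P^k,h^\star)\le 2\,\E_{\P\sim\Pmeta}\mathcal{L}(\P,h^\star)+C_P(r_P+\tfrac{\log(1/\delta)}{K})$, and by the definition of $h^\star$ the leading term is exactly $\inf_{h\in\mathcal{H}}\E_{\P\sim\Pmeta}\mathcal{L}(\P,h)$. Chaining all four inequalities and taking a union bound over the three failure events (each with probability $\delta/3$) produces the claimed bound, with $r_P$ contributing $\tfrac{\log(\mathcal{N}_\mathcal{H}/\delta)}{K}$ and the exponential-in-$R_f^2$ piece absorbed by the hypothesis $R_f\gtrsim \log^{1/2}(nKM_f/\delta)$.

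The main obstacle I anticipate is a subtle measurability/dependence issue: the fine-tuning bound of Proposition \ref{prop:generalization_test_informal} is used at the random $\widehat{h}$, and $\widehat{h}$ is correlated with the entire source dataset used to invoke Propositions \ref{prop:generalization_train_informal} and \ref{prop:generalization_meta_informal}. I would handle this by conditioning on the source data when applying Proposition \ref{prop:generalization_test_informal} (whose statement holds for every fixed $\widehat{h}\in\mathcal{H}$, as the fine-tuning randomness is independent of pre-training), and by using the uniform-in-$h$ nature of Proposition \ref{prop:generalization_meta_informal} so that its instantiation at $\widehat{h}$ and at $h^\star$ is legitimate on a single high-probability event over $\{\P^k\}_{k=1}^K$. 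A secondary bookkeeping issue is that $r_P$ contains a tail term $M_f^2\exp(-\Omega(R_f^2))$ that must be checked to be dominated by the stated $\tfrac{\log\mathcal{N}_\mathcal{F}}{n}$ and $\tfrac{\log(\mathcal{N}_\mathcal{H}/\delta)}{K}$ rates; the assumption on $R_f$ does exactly this, and the final aggregation collects the $\widetilde{\mathcal{O}}(\tfrac{K\log\mathcal{N}_\mathcal{F}+\log\mathcal{N}_\mathcal{H}}{nK})$ term from Proposition \ref{prop:generalization_train_informal} into the claimed $\tfrac{\log^3(nK/\delta)\log\mathcal{N}_\mathcal{F}}{n}+\tfrac{\log(\mathcal{N}_\mathcal{H}/\delta)}{K}$ form.
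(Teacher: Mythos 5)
Your proposal matches the paper's argument exactly: it chains Proposition \ref{prop:generalization_test_informal} (fine-tuning, taken in expectation over $\P^0\sim\Pmeta$ at the realized $\widehat h$), the upper-direction inequality of Proposition \ref{prop:generalization_meta_informal} at $\widehat h$, the trivial bound $\mathcal{L}(\P^k,\widehat h)\le \E_{\P^k}[\ell^{\P^k}(x,y,s_{\widehat f^k,\widehat h})]$ followed by Proposition \ref{prop:generalization_train_informal}, and finally the lower-direction inequality of Proposition \ref{prop:generalization_meta_informal} evaluated at the near-minimizer of $\E_{\Pmeta}\mathcal{L}(\P,h)$ to recover $\inf_h\E_{\Pmeta}\mathcal{L}(\P,h)$ — which is precisely the five-step chain in the paper's proof of Theorem \ref{thm:generalization_all}. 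Your remarks about conditioning on the source data before invoking the fine-tuning bound, exploiting the uniform-in-$h$ nature of the meta concentration, and absorbing the $M_f^2\exp(-\Omega(R_f^2))$ tail in $r_P$ via the hypothesis on $R_f$ are all valid and are implicit in the paper's terser presentation.
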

The formal statements and proofs are provided in Appendix \ref{app:subsec:generalization}.

Let $\widetilde{\varepsilon}_{\text{approx}}=\inf_{h\in\mathcal{H}} \E_{\P\sim\Pmeta} \inf_{f\in\mathcal{F}}\E_{(x,y)\sim\P} [\ell^\P (x,y,s_{f,h})]$ be the approximation error in meta-learning.
The results above can be further simplified to $\widetilde{\mathcal{O}}\left(\widetilde{\varepsilon}_{\text{approx}}+\frac{\log\mathcal{N}_\mathcal{F}}{m\wedge n}+\frac{\log\mathcal{N}_\mathcal{H}}{K}\right)$. Different from transfer learning bound in Theorem \ref{thm:generalization_all_diversity_informal}, the leading term decays only in $K$ and not in $n$.
This is because that without task diversity assumption, the connection between source distributions and target distributions can only be constructed through meta distribution. 
And according to Proposition \ref{prop:generalization_meta_informal}, the source distributions $\P^1,\cdots,\P^K$ collectively form a $K$-shot empirical estimation of $\Pmeta$, leading to an estimation error of $\mathcal{O}(1/K)$. Despite this, Theorem \ref{thm:generalization_all_informal} still demonstrates the sample efficiency of meta-learning compared to naive training method when $m$ is small and $n,K$ are sufficient large.

\section{End-to-End Distribution Estimation via Deep Neural Network}\label{sec:dist_estimation}

Section \ref{sec:generalization} provides a statistical guarantee for transferring score matching.
In this section, we establish an approximation theory using deep neural network to quantify the misspecification error.
Combining both results we are able to obtain an end-to-end distribution estimation error bound for transfer learning diffusion models.

\subsection{Score Neural Network Approximation}

The following theorem provides a guarantee for the ability of deep ReLU neural networks to approximate score and representation.
The proof is provided in Appendix \ref{app:subsec:approximation}.
\begin{thm}\label{thm:approximation_all_informal}
     Under Assumption \ref{asp:sub_gaussian}, \ref{asp:low_dim}, \ref{asp:lip}, to achieve $R_f\gtrsim\log^{\frac{1}{2}}(nKM_f/\delta)$ and
    \begin{align}
        &\inf_{h\in\mathcal{H}} \frac{1}{K}\sum_{k=1}^K \inf_{f\in\mathcal{F}}\E_{(x,y)\sim\P^k} [\ell^{\P^k} (x,y,s_{f,h})] = \mathcal{O}\left(\log^2(nK/(\varepsilon\delta))\varepsilon^2\right), \text{ (transfer learning)} \\
        &\inf_{h\in\mathcal{H}}\E_{\P\sim\Pmeta}\inf_{f\in\mathcal{F}}\E_{(x,y)\sim\P} [\ell^\P(x,y,s_{f,h})] = \mathcal{O}\left(\log^2(nK/(\varepsilon\delta))\varepsilon^2\right), \text{ (meta-learning)}
    \end{align}
    the configuration of $\mathcal{F}$ and $\mathcal{H}$ should satisfy
    \begin{equation}
        \begin{aligned}
            &L_f=\mathcal{O}\left(\log\left(\frac{\log(nK/(\varepsilon\delta))}{\varepsilon}\right)\right),
            W_f=\mathcal{O}\left(\frac{\log^{3(d_x+d_y)/2}(nK/(\varepsilon\delta))}{\varepsilon^{d_x+d_y+1}T_0^3}\right), \\
            &S_f=\mathcal{O}\left(\frac{\log^{3(d_x+d_y)/2+1}(nK/(\varepsilon\delta))}{\varepsilon^{d_x+d_y+1}T_0^3}\right),
            B_f=\mathcal{O}\left(\frac{T\log^{\frac{3}{2}}(nK/(\varepsilon\delta))}{\varepsilon}\right), \\
            &R_f=\mathcal{O}\left(\log^{\frac{1}{2}}(nK/(\varepsilon\delta))\right), 
            M_f=\mathcal{O}\left(\log^3(nK/(\varepsilon\delta))\right),
            \gamma_f=\mathcal{O}\left(\log(nK/(\varepsilon\delta))\right),
        \end{aligned}
    \end{equation}
     \begin{equation}
        L_h=\mathcal{O}\left(\log(1/\varepsilon)\right), W_h=\mathcal{O}\left(\varepsilon^{-D_y}\log(1/\varepsilon)\right), 
        S_h=\mathcal{O}\left(\varepsilon^{-D_y}\log^2(1/\varepsilon)\right),
        B_h = \mathcal{O}(1).
    \end{equation}
    Here $\mathcal{O}(\cdot)$ hides all the polynomial factors of $d_x,d_y,D_y,C_1,C_2,L,B$.
\end{thm}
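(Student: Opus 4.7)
The plan is to separately construct neural network approximations of the representation map $h_*$ and the conditional score function $f_*$, then combine them via the composition $s_{f,h}(x,y,t)=f(x,h(y),t)$ and control the integrated score matching loss. First I would handle $h_*$: since Assumption \ref{asp:low_dim} gives that $h_*:[0,1]^{D_y}\to[0,1]^{d_y}$ is $L$-Lipschitz, a standard Yarotsky-style construction yields an $NN_h$ network $\widehat h$ with $L_h=\mathcal{O}(\log(1/\varepsilon))$, $W_h,S_h=\widetilde{\mathcal{O}}(\varepsilon^{-D_y})$, and $B_h=\mathcal{O}(1)$ such that $\|\widehat h-h_*\|_{L^\infty}\lesssim\varepsilon$. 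Because $\widehat h$ takes values in $[0,1]^{d_y}$ (up to a clipping layer with no cost), approximating $f_*(x,w,t)$ on $w\in[0,1]^{d_y}$ suffices for all tasks simultaneously.

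Second, for each task $k$, I would approximate $f_*^k(x,w,t)=\nabla_x\log p_t^k(x;w)$ by a ReLU network $\widehat f^k\in NN_f$. By Lemma \ref{lem:lip_score_informal}, on the truncated domain $\|x\|_\infty\le R_f$ the score is $\mathcal{O}(R_f^2)$-Lipschitz in $x$ and $\mathcal{O}(R_f)$-Lipschitz in $w$, while sub-Gaussianity (Assumption \ref{asp:sub_gaussian}) implies $\|f_*^k(x,\cdot,t)\|\lesssim \|x\|/\sigma_t^2+B$, so the choice $M_f=\mathcal{O}(\log^3(nK/(\varepsilon\delta)))$ and $R_f=\Theta(\log^{1/2}(nK/(\varepsilon\delta)))$ ensures that both the truncation tail (via a Gaussian tail bound on $X_t$) and the clipping of $f_*^k$ at magnitude $M_f$ contribute at most $\widetilde{\mathcal{O}}(\varepsilon^2)$ to $\E\|\widehat f^k-f_*^k\|^2$. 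On the bounded region, I would invoke the local polynomial/Yarotsky approximation for smooth functions on $\mathbb{R}^{d_x}\times[0,1]^{d_y}\times[T_0,T]$ (as in \citet{oko2023diffusion}, \citet{fu2024unveil}), which gives depth $L_f=\mathcal{O}(\log(1/\varepsilon))$, width and sparsity $\widetilde{\mathcal{O}}(\varepsilon^{-(d_x+d_y+1)})$ with the $T_0^{-3}$ factor accounting for the blow-up of the Lipschitz constants as $t\to 0$, and weights bounded by $B_f=\widetilde{\mathcal{O}}(T/\varepsilon)$. The built-in Lipschitz requirement $\gamma_f$ for the function class can be realized because the constructed network inherits the local Lipschitz constant $\mathcal{O}(R_f)=\widetilde{\mathcal{O}}(\log(nK/(\varepsilon\delta)))$ of $f_*^k$ in $w$.

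Third, I would bound the composition error. For any $(x,y)$ with $\|x\|_\infty\le R_f$,
\begin{equation*}
\|\widehat f^k(x,\widehat h(y),t)-f_*^k(x,h_*(y),t)\|
\le\|\widehat f^k(x,\widehat h(y),t)-f_*^k(x,\widehat h(y),t)\|+\gamma_f\|\widehat h(y)-h_*(y)\|,
\end{equation*}
so squaring, integrating in $t\in[T_0,T]$ and averaging over $(x,y)\sim\P^k$ yields the representation-composed score matching error $\widetilde{\mathcal{O}}(\varepsilon^2+\gamma_f^2\varepsilon^2)=\widetilde{\mathcal{O}}(\log^2(nK/(\varepsilon\delta))\varepsilon^2)$ after we absorb the $\gamma_f$ factor into the polylog. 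Averaging over $k$ (transfer learning) or over $\P\sim\Pmeta$ (meta-learning) gives the two claimed bounds since the same $\widehat h$ works for every task.

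The main obstacle I expect is the careful bookkeeping in the second step: I have to simultaneously (i) control the non-uniform Lipschitz constant $C_X+C_X'\|x\|^2$ from Lemma \ref{lem:lip_score_informal}, which forces the truncation radius $R_f$ to enter quadratically into the Lipschitz constants used inside Yarotsky's construction, (ii) keep the sparsity $S_f$ at the stated $\widetilde{\mathcal{O}}(\varepsilon^{-(d_x+d_y+1)}/T_0^3)$ scale despite the $t$-dependence across $[T_0,T]$, and (iii) enforce the built-in Lipschitz-in-$w$ constraint $\gamma_f$ of the hypothesis class $NN_f$ while still matching the approximation rate. The first can be handled by partitioning $[-R_f,R_f]^{d_x}$ into dyadic pieces and using the mild polylog growth of the Lipschitz constant; the second by discretizing $[T_0,T]$ with resolution $T_0\varepsilon$ and reusing subnetworks; the third by explicitly post-composing with a smooth clipping layer whose Lipschitz constant is controlled by $\gamma_f$.
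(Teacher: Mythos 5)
Your overall route coincides with the paper's: approximate $h_*$ by a Yarotsky-type network $\widehat h$, approximate the per-task score $f_*^{\mathbb{P}}$ on a truncated domain by a piecewise-linear partition-of-unity construction, and control the composition error via a triangle inequality that charges one term to the network's Lipschitz constant $\gamma_f$ and the other to the $L^\infty$-error of the inner and outer approximations. The paper does exactly this, proving the two pieces separately (its Propositions on score and representation approximation) and then splitting the misspecification error into a ``representation error'' term bounded by $\gamma_f^2\,\|h-h_*\|_\infty^2$ and a ``score error'' term bounded uniformly in $w$.

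There is, however, a concrete gap in the step where you justify $M_f=\mathcal{O}(\log^3(nK/(\varepsilon\delta)))$. You invoke the bound $\|f_*^k(x,\cdot,t)\|\lesssim \|x\|/\sigma_t^2+B$, which is the crude estimate coming from the noise-conditional form of the score (equation \eqref{eq:score_1}). This bound blows up like $1/\sigma_{T_0}^2\approx 1/T_0$ near the early-stopping time, so on $\|x\|_\infty\le R_f$ it only gives $\|f_*\|_{L^\infty(\Omega_{R_f})}\lesssim R_f/T_0$, which with $T_0$ polynomially small is vastly larger than the claimed $\mathcal{O}(\log^3)$; the ``clipping at $M_f$'' you appeal to would then have an uncontrolled contribution. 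The correct argument uses the $T_0$-\emph{uniform} Lipschitz constant from Lemma \ref{lem:lip_score_informal} — namely that $f_*(\cdot,w,t)$ is $(C_X+C_X'\|x\|^2)$-Lipschitz for \emph{all} $t\in[0,T]$ — together with the bound $\|f_*(0,w,t)\|\lesssim 1$ (established inside the paper's Lemma bounding the second moment of the score), giving $\|f_*\|_{L^\infty(\Omega_{R_f})}\lesssim 1+R_f^3=\mathcal{O}(\log^{3/2})$. This, not the $\|x\|/\sigma_t^2$ bound, is what lets $M_f$ be poly-logarithmic. You need to replace your step with this argument or the $M_f$ choice is unjustified.

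Two smaller remarks. First, your triangle inequality writes the middle pivot as $f_*^k(x,\widehat h(y),t)$ but then charges the second term to $\gamma_f$; $\gamma_f$ is the Lipschitz constant built into the \emph{network} class $NN_f$, whereas bounding $\|f_*^k(x,\widehat h(y),t)-f_*^k(x,h_*(y),t)\|$ requires the Lipschitz constant of the \emph{true} score $f_*^k$ in $w$, which is $(C_X+C_X'\|x\|)=\mathcal{O}(R_f)$ by Lemma \ref{lem:lip_score_informal}. The paper pivots at $\widehat f^k(x,h_*(y),t)$ instead, which is cleaner because then $\gamma_f$ is invoked legitimately; both lead to the same $\log^2\cdot\varepsilon^2$ final rate, but you should be consistent. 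Second, your proposed $t$-discretization resolution ``$T_0\varepsilon$'' is off: the score is $\mathcal{O}(R_f^3/T_0^3)$-Lipschitz in $t$ on $\Omega_{R_f}$ (this is the paper's Lemma on Lipschitz-in-$t$, which you do not state but allude to), so the grid spacing must be $\varepsilon T_0^3/R_f^3$ and the number of $t$-cells is $\mathcal{O}(TR_f^3/(\varepsilon T_0^3))$; that is precisely where the $T_0^{-3}$ factor in $W_f$ and $S_f$ comes from.
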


Universal approximation of deep ReLU neural networks in a bounded region has been widely studied \citep{yarotsky2017error,schmidt2020nonparametric}.
However, we have to deal with an unbounded domain here, hence more refined analysis is required, \text{e.g.} truncation arguments. 

In addition, traditional approximation theories typically cannot provide Lipschitz continuity guarantees, which is crucial in transfer learning analysis.
Following the constructions in \citet{chen2023score}, the Lipschitzness restriction doesn't compromise the approximation ability of neural networks, while ensuring validity of the generalization analysis in Section \ref{sec:generalization}.

\subsection{Distribution Estimation Error Bound}

Given the approximation and generalization results, we are in the position of bounding the distribution estimation error of our transfer (meta) learning procedures.
The formal statements and proofs can be found in Appendix \ref{app:subsec:dist_estimation}.

\begin{thm}[Transfer learning]
\label{thm:distribution_diversity_informal}
    Under Assumption \ref{asp:sub_gaussian}, \ref{asp:low_dim}, \ref{asp:lip} and $(\nu,\Delta)$-diversity with proper configuration of neural network family and $T,T_0$, it holds that with probability at least $1-\delta$,
    \begin{equation}
        \E_{\{(x_i,y_i)\}_{i=1}^m\sim\P^0}\E_{y\sim\P^0_y} [\mathrm{TV}(\widehat{\P}^0_{x|y},\P^0_{x|y})]
        \lesssim \frac{\log^{\frac{5}{2}}(nK/\delta)\log^3((m/\nu)\wedge n)}{\nu^{\frac{1}{2}}((m/\nu)\wedge n)^{\frac{1}{d_x+d_y+9}}}+\frac{\log^2(nK/\delta)}{\nu^{\frac{1}{2}}(nK)^{\frac{1}{D_y+2}}}+\sqrt{\Delta}.
    \end{equation}
\end{thm}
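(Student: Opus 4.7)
\textbf{Proof Proposal for Theorem \ref{thm:distribution_diversity_informal}.}
The plan is to chain three ingredients: the transfer-learning score matching bound (Theorem \ref{thm:generalization_all_diversity_informal}), the ReLU approximation result (Theorem \ref{thm:approximation_all_informal}), and a standard $L^2$-score-to-TV conversion for the reverse-time SDE, then balance the approximation parameter $\varepsilon$, the representation parameter $\varepsilon_h$, and the time endpoints $T_0,T$.

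First I would instantiate the neural network family $\mathcal{F}=NN_f(L_f,\dots,\gamma_f)$ and $\mathcal{H}=NN_h(\dots)$ using the two-parameter configuration from Theorem \ref{thm:approximation_all_informal}: use accuracy $\varepsilon$ for the score network and accuracy $\varepsilon_h$ for the representation. This yields $\log\mathcal{N}_\mathcal{F}=\widetilde{\mathcal{O}}(\varepsilon^{-(d_x+d_y+1)}T_0^{-3})$ and $\log\mathcal{N}_\mathcal{H}=\widetilde{\mathcal{O}}(\varepsilon_h^{-D_y})$, together with the infimum approximation error $\widetilde{\mathcal{O}}(\varepsilon^2+\varepsilon_h^2)$ (up to log factors). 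Substituting into Theorem \ref{thm:generalization_all_diversity_informal} gives
\begin{equation*}
    \E\bigl[L^{\P^0}(s_{\widehat{f}^0,\widehat{h}})\bigr]
    \ \lesssim\ \frac{\varepsilon^2+\varepsilon_h^2}{\nu}+\Delta
    \ +\ \widetilde{\mathcal{O}}\!\left(\frac{\varepsilon^{-(d_x+d_y+1)}T_0^{-3}}{m}+\frac{\varepsilon^{-(d_x+d_y+1)}T_0^{-3}}{\nu n}+\frac{\varepsilon_h^{-D_y}}{\nu nK}\right).
\end{equation*}

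Second, I would convert score matching error to total variation via the Girsanov / data processing pipeline used for (conditional) diffusion models. Lemma \ref{lem:lip_score_informal} ensures that $\nabla_x\log p_t(\cdot|y)$ is locally Lipschitz uniformly in $t$, which legitimizes the change of measure. A standard decomposition yields, for each $y$,
\begin{equation*}
    \mathrm{TV}(\widehat{\P}^0_{x|y},\P^0_{x|y})^2 \ \lesssim\ \underbrace{e^{-T}\,\mathrm{KL}(\P^0(\cdot|y)\,\|\,\mathcal{N}(0,I))}_{\text{forward mixing}}\ +\ \underbrace{(T-T_0)\,\mathbb{E}_{t,x_t}\bigl\|\widehat{s}-\nabla\log p_t\bigr\|^2}_{\text{score error}}\ +\ \underbrace{\mathrm{TV}(\P^0_{T_0}(\cdot|y),\P^0(\cdot|y))^2}_{\text{early stopping}},
\end{equation*}
where the last term is $\mathcal{O}(T_0)$ under Assumption \ref{asp:sub_gaussian}--\ref{asp:lip} by standard OU-regularity. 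Averaging over $y\sim\P^0_y$ and using Jensen's inequality gives
$\E_{y}[\mathrm{TV}]\le\sqrt{e^{-T}+(T-T_0)\E[L^{\P^0}(\widehat{s})]+T_0}$.

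Third, I would balance parameters. Taking $T=\mathcal{O}(\log(1/\varepsilon))$ kills the forward-mixing term cheaply, and setting $T_0\asymp\varepsilon^2$ equates early-stopping error with score error (so $T_0^{-3}=\varepsilon^{-6}$ absorbs into the complexity). Then the score-side balance $\varepsilon^2/\nu\asymp \varepsilon^{-(d_x+d_y+7)}/m$ gives $\varepsilon\asymp(m/\nu)^{-1/(d_x+d_y+9)}$, while $\varepsilon^2/\nu\asymp \varepsilon^{-(d_x+d_y+7)}/(\nu n)$ gives $\varepsilon\asymp n^{-1/(d_x+d_y+9)}$; hence the binding choice is $\varepsilon\asymp ((m/\nu)\wedge n)^{-1/(d_x+d_y+9)}$. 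A separate balance for the representation approximation, $\varepsilon_h^2/\nu\asymp \varepsilon_h^{-D_y}/(\nu n K)$, yields $\varepsilon_h\asymp(nK)^{-1/(D_y+2)}$. Plugging these into $\sqrt{\E[L^{\P^0}(\widehat{s})]+T_0}$ produces exactly the three terms in the stated bound, with the $\sqrt{\Delta}$ coming from the task-diversity slack after the square root.

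The main obstacle I expect is controlling the early-stopping term and the Girsanov step under our weak smoothness hypothesis, since Assumption \ref{asp:lip} is stated only at $t=0$; here the global polynomial-Lipschitz estimate in Lemma \ref{lem:lip_score_informal} is doing the heavy lifting, and some care is needed to verify that the generated process $\widehat{X}_t^{\leftarrow}$ stays in the high-probability region where the approximation guarantees of $\mathcal{F}$ hold (via the truncation radius $R_f\gtrsim\log^{1/2}(nK/\delta)$ from Theorem \ref{thm:approximation_all_informal}). A secondary technical point is keeping the polylogarithmic factors aligned so that the final exponents $\log^{5/2}(nK/\delta)$ and $\log^3((m/\nu)\wedge n)$ emerge correctly after combining the $\log^3$ factor from the generalization bound with the $\log(1/\varepsilon)$ and $\log(1/T_0)$ factors from the approximation and Girsanov steps.
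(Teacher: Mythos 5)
Your proposal is correct and takes essentially the same route as the paper: combine the transfer-learning generalization bound (Theorem \ref{thm:generalization_all_diversity_informal}) with the ReLU approximation theory (Theorem \ref{thm:approximation_all_informal}), convert $L^2$ score error to TV via the forward-mixing / score-error / early-stopping decomposition (the paper's Lemma \ref{lem:TV_bound}), and then balance $T$, $T_0$, and the network accuracy. The only minor difference is your two-parameter tuning of $\varepsilon$ (score) and $\varepsilon_h$ (representation); the paper's proof uses a single $\varepsilon$ and then picks $\varepsilon_0$ as the larger of the two implied scales, but both yield identical exponents.
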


\begin{thm}[Meta-learning]
\label{thm:distribution_informal}
    Under Assumption \ref{asp:sub_gaussian}, \ref{asp:low_dim}, \ref{asp:lip} and meta-learning setting, with proper configuration of neural network family and $T,T_0$, it holds that with probability at least $1-\delta$,
    \begin{equation}
        \E_{\P^0\sim\Pmeta}\E_{\{(x_i,y_i)\}_{i=1}^m\sim \P^0}\E_{y\sim\P^0_y} [\mathrm{TV}(\widehat{\P}^0_{x|y},\P_{x|y}^0)]
        \lesssim \frac{\log^{\frac{5}{2}}(nK/\delta)\log^3(m\wedge n)}{(m\wedge n)^{\frac{1}{d_x+d_y+9}}}+\frac{\log^2(nK/\delta)}{K^{\frac{1}{D_y+2}}}.
    \end{equation}
\end{thm}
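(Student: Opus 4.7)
The plan is to combine three ingredients: (i) a standard diffusion-sampler convergence argument that converts the $L^2$ score matching error into a TV bound on the generated conditional distribution; (ii) the meta-learning generalization bound of Theorem \ref{thm:generalization_all_informal}; and (iii) the neural network approximation result of Theorem \ref{thm:approximation_all_informal}. The end-to-end rate is then obtained by optimizing over the approximation parameter $\varepsilon$ and the early-stopping/terminal times $T_0$ and $T$.

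First, I would invoke a Girsanov-type analysis for the conditional reverse SDE to show that, for every $y$,
\begin{equation*}
\mathrm{TV}(\widehat{\P}^0_{x|y},\P^0_{x|y})\lesssim \sqrt{(T-T_0)\,\E_{(x,t,x_t)}\bigl[\|s_{\widehat f^0,\widehat h}(x_t,y,t)-\nabla\log p_t(x_t|y)\|^2\bigr]}+\sqrt{T_0}+e^{-T},
\end{equation*}
where the last two terms come from early stopping and from Gaussianity of $\P^0_T(\cdot|y)$ via Assumption \ref{asp:sub_gaussian}. Taking $\E_{\P^0\sim\Pmeta}\E_{y\sim\P^0_y}$ of both sides, and using Jensen's inequality to pull the square root outside the expectation, reduces the problem to bounding the expected score-matching error $\E_{\P^0\sim\Pmeta}\E_{\text{samples}}L^{\P^0}(s_{\widehat f^0,\widehat h})$.

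Second, I would apply Theorem \ref{thm:generalization_all_informal} to obtain
\begin{equation*}
\E_{\P^0\sim\Pmeta}\E_{\text{samples}} L^{\P^0}(s_{\widehat f^0,\widehat h})\lesssim \widetilde{\varepsilon}_{\text{approx}} + \widetilde{\mathcal{O}}\!\left(\frac{\log\mathcal{N}_\mathcal{F}}{m\wedge n}+\frac{\log\mathcal{N}_\mathcal{H}}{K}\right),
\end{equation*}
and then invoke Theorem \ref{thm:approximation_all_informal} with a tuning parameter $\varepsilon>0$ to realize $\widetilde{\varepsilon}_{\text{approx}}=\widetilde{\mathcal{O}}(\varepsilon^2)$ while instantiating $\log\mathcal{N}_\mathcal{F}=\widetilde{\mathcal{O}}(\varepsilon^{-(d_x+d_y+1)}T_0^{-3})$ and $\log\mathcal{N}_\mathcal{H}=\widetilde{\mathcal{O}}(\varepsilon^{-D_y})$, up to poly-logarithmic factors in $nK/\delta$. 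Balancing $\varepsilon^2$ against $\varepsilon^{-(d_x+d_y+1)}T_0^{-3}/(m\wedge n)$ in the $\mathcal{F}$-term and against $\varepsilon^{-D_y}/K$ in the $\mathcal{H}$-term gives two candidate rates; choosing the two tuning parameters $\varepsilon_{\mathcal F},\varepsilon_{\mathcal H}$ separately (one for the score network, one for the representation) is cleanest since the two hypothesis classes appear in independent terms. Finally, setting $T=\Theta(\log(nK/\delta))$ to absorb the Gaussianity error and $T_0$ polynomially small in $(m\wedge n)$ so the $1/T_0^3$ factor does not dominate, and then taking square roots to convert the $L^2$ error into TV, yields the two stated rates $(m\wedge n)^{-1/(d_x+d_y+9)}$ and $K^{-1/(D_y+2)}$.

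The main obstacle, I expect, is careful bookkeeping rather than any deep new idea: one has to track how the polynomial-in-$1/T_0$ factors inside $W_f,S_f$ interact with the $1/T_0$ blow-up of the true score (which is what forces the exponent $d_x+d_y+9$ rather than $d_x+d_y+3$), and one must verify that the Lipschitz-in-$w$ constant $\gamma_f$ produced by Theorem \ref{thm:approximation_all_informal} is compatible with the hypothesis $R_f\gtrsim\log^{1/2}(nKM_f/\delta)$ used throughout Section \ref{sec:generalization}. Once these log-factor and $T_0$-dependence calculations are done, the result follows by composing the three bounds above.
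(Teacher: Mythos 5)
Your proposal is correct and follows essentially the same route as the paper's proof: Lemma \ref{lem:TV_bound} (the Girsanov/early-stopping/initialization decomposition you describe) converts the $L^2$ score error into a TV bound, Theorems \ref{thm:generalization_all_informal} and \ref{thm:approximation_all_informal} are combined exactly as you outline, and the exponent bookkeeping (including the $T_0^{-3}$ inflating $d_x+d_y+3$ to $d_x+d_y+9$) matches. The one cosmetic difference is that the paper balances with a single tuning parameter $\varepsilon_0$ and takes the maximum of the two candidate rates rather than using separate $\varepsilon_{\mathcal F},\varepsilon_{\mathcal H}$; since the only coupling between the two pieces is the logarithmic factor $\gamma_f$ multiplying the representation error, both choices give the stated rate.
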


Theorem \ref{thm:distribution_diversity_informal} and \ref{thm:distribution_informal} again unveil the benefits of transfer (meta) learning for conditional diffusion models, with a rate of $\widetilde{\mathcal{O}}((m\wedge n)^{-\frac{1}{d_x+d_y+9}}+(nK)^{-\frac{1}{D_y+2}})$ or $\widetilde{\mathcal{O}}((m\wedge n)^{-\frac{1}{d_x+d_y+9}}+K^{-\frac{1}{D_y+2}})$.
To compare, naively learning the target distribution in isolation will yield $\widetilde{\mathcal{O}}(m^{-\frac{1}{d_x+D_y+9}})$.
When the condition dimension $D_y$ is much larger than feature dimension $d_y$, transfer (meta) learning can substantially improve sample efficiency on target tasks, thanks to representation learning.

\paragraph{Comparison with Existing Complexity Bounds of CDMs}

\citet{fu2024unveil} studies conditional diffusion model for sub-gaussian distributions with $\beta$-H\"older density.
Since the Lipschitzness of score is analogous to the requirement of twice differentiability of density \citep{wibisono2024optimal}, it is reasonable to let $\beta=2$ for a fair comparison.
In this case, the TV distance is bounded by $\widetilde{\mathcal{O}}(m^{-\frac{1}{2(d_x+D_y+2)}})$ with sample size $m$ according to \citet{fu2024unveil}, which is worse than our naive bound $\widetilde{\mathcal{O}}(m^{-\frac{1}{d_x+D_y+9}})$ due to the inefficiency of score approximation.
We are also aware of another work \citep{jiao2024model} that assumes Lipschitz density and score, obtaining a rate of $\widetilde{\mathcal{O}}(m^{-\frac{1}{2(d_x+3)(d_x+D_y+3)}})$.

\paragraph{Relation to \citet{yang2024fewshot}}

Unlike our setup, \citet{yang2024fewshot} considers transfer learning unconditional diffusion models with only one source task, \textit{i.e.}, $D_y=d_y=0,K=1$.
The unconditional distribution is assumed to be supported in a low-dimensional linear subspace, where the source task and the target task have the same latent variable distribution. 
Hence, \textit{only} a linear encoder is trained for fine-tuning instead of the full score network. 
In this case, \citet{yang2024fewshot} is able to bound the TV distance by $\widetilde{\mathcal{O}}(m^{-\frac{1}{4}}+n^{-\frac{1-\alpha(n)}{d_x+5}})$, escaping the curse of dimensionality for target task.
However, the assumption on shared latent variable distribution is stringent and we believe our analysis methods can be extended to this setting as well.

\section{Experiments}\label{sec:exp}

Our theoretical results can be readily applied in various real world settings. In Appendix \ref{sec:application}, we investigate amortized variational inference and behavior cloning utilizing our theories, providing statistical guarantees of practical applications of CDMs.
In addition, we conduct experiments on both synthetic and real world data to numerically verify the sample efficiency of transfer learning.

\begin{table}[t]
    \centering
    \begin{minipage}{0.47\textwidth}
        \centering
        \resizebox{\linewidth}{!}{
        \begin{tabular}{ccccccc}
        \toprule
        $m$ & 10 & 20 & 30 & 40 & 50 & 100 \\
        \midrule
        fine-tuning & 14.47 & 3.68 & 2.45 & 1.82 & 1.9  & 0.91 \\
        train-from-scratch & 21.99 &10.61 & 5.71 & 2.38 & 1.77&  1.04\\
        \bottomrule
        \end{tabular}
        }
        \caption{MSEs for $\beta_0=5.5$.}
        \label{tab:beta=5.5}
    \end{minipage}
    \hfill
    \begin{minipage}{0.47\textwidth}
        \centering
        \resizebox{\linewidth}{!}{
        \begin{tabular}{ccccccc}
        \toprule
        $m$ & 10 & 20 & 30 & 40 & 50 & 100 \\
        \midrule
        fine-tuning & 6.14 &2.65& 1.61& 1.08 &0.96& 0.45\\
        train-from-scratch & 24.41& 20.62 &18.67 &13.49&  7.03 & 1.23\\
        \bottomrule
        \end{tabular}
        }
        \caption{MSEs for $\beta_0=15$.}
        \label{tab:beta=15}
    \end{minipage}
\end{table}

\paragraph{Conditioned Diffusion}
The first numerical example is the high-dimensional conditioned diffusion \citep{cui2016dimension, yu2023hierarchical} arising from the following Langevin SDE
\begin{equation}\label{eq:conditioned-diffusion}
\mathrm{d} u_s = \beta u_s(1-u_s^2) \mathrm{d} s + \mathrm{d} w_s, \ u_0=0,
\end{equation}
where $\beta >0$ and $w_s$ is a one-dimensional standard Brownian motion.
The SDE \eqref{eq:conditioned-diffusion} is discretized by the Euler-Maruyama scheme with a step size of $0.02$, which defines the prior distribution $p_{\beta}(x)$ for the (discretized) trajectory $x=\left(u_{0.02}, u_{0.04}, \ldots, u_{1.00}\right)^\top\in \mathbb{R}^{50}$.
We consider a conditional Gaussian likelihood function, $p(y|x)=\mathcal{N}(Mx, I_{100}/4)$, where $M\in \mathbb{R}^{100\times 50}$ is a pre-defined projection matrix.
Given a set of pre-selected $\{\beta_k; 1\leq k\leq K\}$ with $\beta_k=k$ and $K=10$, the $k$-th joint source distribution is given by $\mathbb{P}^k(x,y)=p_{\beta_k}(x)p(y|x)$.
The target distribution $\mathbb{P}^0(x,y)$ is given by $\beta_0=5.5$ (in-domain) or $\beta_0=15$ (out-of-domain). More details are found in Appendix \ref{app:sec:exp}.

We report the MSEs of the estimated posterior mean of $\mathbb{P}^0(x|y)$ on the test samples in Table \ref{tab:beta=5.5} and \ref{tab:beta=15}.
We see that across different values of $\beta$ and $m$, the fine-tuned models can provide significantly more accurate posterior mean estimations in most cases, suggesting the effectiveness of the representation map $\widehat{h}$ learned in the pre-training phase.
Notably, as the number of fine-tuning samples $m$ increases, the performance gaps between fine-tuned models and train-from-scratch models get smaller, since more training samples yield more generalization benefits and thus less dependence on the pre-trained model.
This is aligned with our theoretical results.
We also notice a large variance among the results of different replicates, and attribute the slightly worse performance of fine-tuned models at $m=50,\beta_0=5.5$ to the potential randomness.

\paragraph{Image Restoration}
For a real data experiment, we consider the image restoration task on MNIST. 
We have $K=9$ source tasks with $\mathbbm{P}^k(x,y)=p_k(x)p(y|x)$, where the prior $p_k(x)$ is the data distribution of the digit $k$ in the MNIST data set ($1\leq k\leq K$) and $p(y|x)=\mathcal{N}(x,I_{784}/4)$.
The target task is $\mathbbm{P}^0(x,y)=p_0(x)p(y|x)$, where $p_0(x)$ is the data distribution of the digit 0. 
We use the full MNIST 1-9 data for pre-training which corresponds to $n=5000$.
For the finetuning phase, we consider $m=10,20,30,40,50,100$ training samples and $100$ test samples from $\mathbbm{P}^0(x,y)$.
More details can be found in Appendix \ref{app:sec:exp-mnist}.

We report the MSEs between estimated posterior mean of $\mathbbm{P}^0(x,y)=p_0(x)p(y|x)$ and the ground truth sample $x$ on the test samples in Table \ref{tab:mnist}.
We see that for all fine-tuning sample sizes $m$, the results obtained by fine-tuning consistently outperform those obtained by training from scratch, indicating the benefits of transfer learning.
Similarly to the experiment on conditioned diffusion, we also observe a reduced performance gap as $m$ increases.
\begin{table}[t]
\centering
\begin{tabular}{ccccccc}
\toprule
$m$&10&20&30&40&50&100\\
\midrule
fine-tuning&0.3799&0.2846&0.2544&0.2406&0.2404&0.2268\\
train-from-scratch&0.4409&0.3180&0.2746&0.2551&0.2501&0.2344\\
\bottomrule
\end{tabular}
\vspace{0.1cm}
\caption{MSEs on the image restoration task.}
\label{tab:mnist}
\end{table}

\section{Conclusion and Discussion}

In this paper, we take the first step towards understanding the sample efficiency of transfer learning conditional diffusion models from the perspective of representation learning.
We provide a generalization guarantee for transferring score matching in CDMs in different settings.
We further establish an end-to-end distribution estimation error bound using deep neural networks.
Two practical applications are investigated based on our theoretical results.
We hope this work can motivate future theoretical study on the popular transfer learning paradigm in generative AIs.

Although this work provides the first statistical guarantee for transfer learning in CDMs, it has several limitations that we plan to address in future research. First, our theoretical results heavily rely on the task diversity notion introduced in Section \ref{def:diversity}, which can be challenging to verify in practice. While we provide some preliminary empirical evidence in Appendix \ref{app:subsec:verify_diversity}, a more fine-grained theoretical and empirical analysis will be essential for a deeper understanding of CDMs. Second, our analysis focuses on the ERM estimator, whereas in practice, fine-tuning typically starts from a pre-trained model and may employ techniques such as LoRA. Incorporating these settings would allow for an optimization-based perspective on the sample efficiency of transfer learning. Finally, in our current formulation, the sample efficiency gain arises from reducing the complexity associated with learning the conditional encoder. Consequently, our results primarily apply to CDMs in which the conditional encoder constitutes a substantial part of the overall model. Extending the theory to settings where this assumption does not hold is an important direction for future work.

\section*{Acknowledgements}
This work was supported by National Natural Science Foundation of China (grant no. 12201014, grant no. 12292980
and grant no. 12292983). The research of Cheng Zhang
was support in part by National Engineering Laboratory for
Big Data Analysis and Applications, the Key Laboratory
of Mathematics and Its Applications (LMAM) and the Key
Laboratory of Mathematical Economics and Quantitative
Finance (LMEQF) of Peking University.
The authors are grateful for the computational resources provided by the High-performance Computing Platform of Peking University.
The authors appreciate the anonymous NeurIPS
reviewers for their constructive feedback.

\bibliography{ref}
\bibliographystyle{plainnat}

\newpage

\appendix

\section{Applications}\label{sec:application}

We explore two applications of transfer learning for conditional diffusion models, supported by theoretical guarantees derived from our earlier results.
In particular, we study amortized variational inference and behavior cloning.
These real-world use cases not only validate the applicability of our theoretical findings but also lay the foundations of transferring diffusion models in practice.

\subsection{Amortized Variational Inference}

Diffusion models have exhibited groundbreaking success in probabilistic inference, especially latent variable models.
We study a simple amortized variational inference model, where the observation $y$ given latent variable $x$ is distributed according to an exponential family $\mathcal{F}_\Psi$ with density
\begin{equation}
    p_\psi(y|x)=\psi(y)\exp(\langle x, h_*(y)\rangle-A_\psi(x)),
\end{equation}
where $\psi\in\Psi$ is non-negative and supported on $[0,1]^{D}$ and $h_*(y)\in[0,1]^{d}$. 
Note that we also have $d_x=d$ in this case.
The prior distribution of variable $x$ is denoted as $p_\phi$ for some $\phi\in\Phi$. 
Let $\theta=(\psi,\phi)$ and we aim to sample from the posterior distribution of $p_\theta(x|y)\propto p_\phi(x)p_\psi(y|x)\propto p_\phi(x)\exp(\langle x, h_*(y)\rangle-A_\psi(x))$.
Due to the special structure, the posterior $p_\theta(x|y)$ only depends on the low-dimensional feature $h_*(y)$, shared across all $\theta\in\Theta:=\Psi\times\Phi$.
This formulation encompasses various applications including independent component analysis \citep{comon1994independent}, inverse problem \citep{song2021solving,ajay2022conditional} and variational Bayesian inference \citep{kingma2013auto}. 

Consider source tasks consisting of $\theta^1,\cdots,\theta^K\in\Theta$, and for each $\theta^k$ we have $n$ \textit{i.i.d.} samples $\{(x_i^k,y_i^k)\}_{i=1}^n$.
For the target task $\theta^0$, we only have $m$ samples $\{(x_i^0,y_i^0)\}_{i=1}^m$.
We conduct our transfer learning procedures to train a conditional diffusion models $\widehat{\P}_{\theta^0}(\cdot|y)$.
For theoretical analysis, we further impose some assumptions on the probabilistic model as follows.
\begin{asp}\label{asp:amortized_vi}
    The prior distribution satisfies $p_\phi(x)\leq C_1\exp(-C_2\|x\|^2)$ and $\nabla_x \log p_\phi(x)$ is $L$-Lipshcitz in $x$, $\|\nabla_x \log p_\phi(0)\|\leq B$ for any $\phi\in\Phi$.
    The representation $h_*$ is $L$-Lipschitz.
    The integral $\int \psi(y)\dif y\in [1/C, C]$ for any $\psi\in\Psi$.
\end{asp}

\begin{thm}\label{thm:amortized_vi}
    Suppose Assumption \ref{asp:amortized_vi} holds. 
    Then under meta-learning setting, we have with probability no less than $1-\delta$,
    \begin{equation}
        \E_{\theta^0} \E_{\{(x_i^0,y_i^0)\}_{i=1}^m}\E_{y\sim\P_{\theta^0}}[\mathrm{TV}(\widehat{\P}_{\theta^0}(\cdot|y),\P_{\theta^0}(\cdot|y))]\lesssim 
        \frac{\log^{\frac{5}{2}}(nK/\delta)\log^3(m\wedge n)}{(m\wedge n)^{\frac{1}{2d+9}}}+\frac{\log^2(nK/\delta)}{K^{\frac{1}{D+2}}}.
    \end{equation}
    If $(\nu,\Delta)$-diversity holds, then we have with probability no less than $1-\delta$,
    \begin{equation}
        \E_{\{(x_i^0,y_i^0)\}_{i=1}^m}\E_{y\sim\P_{\theta^0}}[\mathrm{TV}(\widehat{\P}_{\theta^0}(\cdot|y),\P_{\theta^0}(\cdot|y))]\lesssim \frac{\log^{\frac{5}{2}}(nK/\delta)\log^3((m/\nu)\wedge n)}{\nu^{\frac{1}{2}}((m/\nu)\wedge n)^{\frac{1}{2d+9}}}+\frac{\log^2(nK/\delta)}{\nu^{\frac{1}{2}}(nK)^{\frac{1}{D+2}}}+\sqrt{\Delta}.
    \end{equation}
\end{thm}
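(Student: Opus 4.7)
The plan is to reduce Theorem~\ref{thm:amortized_vi} to direct applications of Theorems~\ref{thm:distribution_informal} and \ref{thm:distribution_diversity_informal}, by identifying the amortized-inference setup with a family of joint distributions $\{\P_\theta\}_{\theta\in\Theta}$ over $\R^d\times[0,1]^D$ (so $d_x=d_y=d$ and $D_y=D$), and verifying that the posteriors $\{\P_\theta(\cdot|y)\}_{\theta\in\Theta}$ satisfy Assumptions~\ref{asp:sub_gaussian}, \ref{asp:low_dim}, and \ref{asp:lip} uniformly in $\theta$. Once this is done, both claimed TV bounds follow verbatim by substituting these dimensions into the statements of the two distribution estimation theorems.

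First I would write the posterior in closed form,
$p_\theta(x|y)=\frac{p_\phi(x)\exp(\langle x,h_*(y)\rangle-A_\psi(x))}{Z_\theta(h_*(y))}$, where $Z_\theta(w):=\int p_\phi(x)\exp(\langle x,w\rangle-A_\psi(x))\dif x$.
Since the right-hand side depends on $y$ only through $h_*(y)$, setting $g_*^\theta(x,w):=p_\phi(x)\exp(\langle x,w\rangle-A_\psi(x))/Z_\theta(w)$ immediately verifies Assumption~\ref{asp:low_dim} with the shared $L$-Lipschitz map $h_*$. For Assumption~\ref{asp:sub_gaussian}, the two essential ingredients are $\|h_*(y)\|\le\sqrt d$ (hence $|\langle x,h_*(y)\rangle|\le\sqrt d\,\|x\|$) and the two-sided estimate $|A_\psi(x)|\le\log C+\sqrt d\,\|x\|$, which follows from $\int\psi\dif y\in[1/C,C]$ together with the pointwise bound on $\langle x,h_*(y)\rangle$. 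Combined with the sub-Gaussian tail $p_\phi(x)\le C_1\exp(-C_2\|x\|^2)$, this yields both an upper bound $p_\phi(x)\exp(\langle x,h_*(y)\rangle-A_\psi(x))\le CC_1\exp(-C_2\|x\|^2+2\sqrt d\,\|x\|)$ and a uniform positive lower bound on $Z_\theta(w)$ (again using the two-sided control of $A_\psi$), which together give the required sub-Gaussian tail after absorbing the linear term into the quadratic.

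For Assumption~\ref{asp:lip}, I would differentiate the log-posterior,
$\nabla_x\log p_\theta(x|y)=\nabla_x\log p_\phi(x)+h_*(y)-\nabla_x A_\psi(x)$,
and bound each term separately. The first is $L$-Lipschitz in $x$ by hypothesis and constant in $w$; the second is independent of $x$ and $1$-Lipschitz in $w=h_*(y)$. For the third, the standard exponential-family identities $\nabla_x A_\psi(x)=\E_{y\sim p_\psi(\cdot|x)}[h_*(y)]$ and $\nabla_x^2 A_\psi(x)=\mathrm{Cov}_{y\sim p_\psi(\cdot|x)}(h_*(y))$, together with $h_*(y)\in[0,1]^d$, give a uniform operator-norm bound of $d$ on the Hessian and thus $d$-Lipschitzness of $\nabla_x A_\psi$ in $x$ (and independence from $y$). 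The bound on $\|\nabla_x\log p_\theta(0|y)\|$ then follows from $\|\nabla_x\log p_\phi(0)\|\le B$ together with $\|h_*(y)-\nabla_x A_\psi(0)\|\le 2\sqrt d$.

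With the three assumptions verified uniformly over $\theta\in\Theta$, the two bounds in Theorem~\ref{thm:amortized_vi} are immediate specializations of Theorems~\ref{thm:distribution_informal} (meta-learning case) and \ref{thm:distribution_diversity_informal} (task-diversity case), after setting $d_x=d_y=d$ and $D_y=D$. The step that I expect to require the most care is the uniform lower bound on the normalizer $Z_\theta(w)$ in the sub-Gaussian verification, which must hold over all $\theta\in\Theta$ and $w\in[0,1]^d$ simultaneously and rests essentially on the two-sided control of $A_\psi$ provided by the integrability hypothesis $\int\psi\dif y\in[1/C,C]$; everything else is a direct consequence of the exponential-family structure.
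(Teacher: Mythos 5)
Your proposal is correct and follows essentially the same route as the paper: verify Assumptions~\ref{asp:sub_gaussian}, \ref{asp:low_dim}, \ref{asp:lip} for the exponential-family posterior and then invoke Theorems~\ref{thm:distribution_informal} and \ref{thm:distribution_diversity_informal} with $d_x=d_y=d$, $D_y=D$. One small remark: for $\nabla_x A_\psi$ you obtain a Lipschitz constant of $d$ via $\|\nabla^2 A_\psi\|=\|\mathrm{Cov}(h_*(y))\|\le\E\|h_*(y)\|^2\le d$, whereas the paper asserts $\nabla^2 A_\psi\preceq I$; the paper's claim is not forced by $h_*(y)\in[0,1]^d$ alone (consider perfectly correlated coordinates), so your constant is the safe one, and since the final rates hide polynomial factors in $d$ this has no effect on the conclusion.
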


The proof is deferred to Appendix \ref{app:subsec:proof_avi}.
We show that under mild assumptions, transfer (meta) learning diffusion models can improve the sample efficiency for target task in the context of amortized variational inference. 
This error bound can be further extended to establish guarantees for statistical inference such as moment prediction, uncertainty assessment, \textit{etc}.

\subsection{Behavior Cloning via Meta-Diffusion Policy}

Although originally developed for image generation tasks, diffusion models have recently been extended to reinforcement learning (RL) \citep{janner2022planning,chi2023diffusion,wang2022diffusion}, enabling the modeling of complex distributions of dynamics and policies.
In the context of meta-RL, some works have further utilized diffusion models for planning and synthesis tasks \citep{ni2023metadiffuser,he2023diffusion}.  
In this application, we focus on a popular framework of behavior cloning, \textit{diffusion policy} \citep{chi2023diffusion}, which uses conditional diffusion models to learn multi-modal expert policies in high-dimensional state spaces.
In such settings, the state often corresponds to visual observations of the robot's surroundings, such as high resolution images, and thus typically share a low-dimensional underlying representation.

Let $\mathcal{M}$ be the space of decision-making environments, where each $M\in\mathcal{M}$ is an infinite horizon Markov Decision Process (MDP) sharing the same state space $\mathcal{S}$, action space $\mathcal{A}$, discount factor $\gamma$ and initial distribution $\rho\in\Delta(\mathcal{S})$.
And each $M\in\mathcal{M}$ has its own transition kernel $\mathcal{T}_M:\mathcal{S}\times\mathcal{A}\to\Delta(\mathcal{S})$, and reward function $r_M:\mathcal{S}\times\mathcal{A}\to[0,1]$.
The policy is defined as a map $\pi:\mathcal{S}\to\Delta(\mathcal{A})$. 
The value function of MDP $M$ under policy $\pi$ is 
\begin{equation}
    \begin{aligned}
        &V_M(\pi,s_0):= \E\Big[\sum_{t=0}^\infty \gamma^tr_M(s_t,a_t)\Big], a_t\sim\pi(\cdot|s_t),
        s_{t+1}\sim\mathcal{T}_M(\cdot|s_t,a_t),\\  
        &V_M(\pi):=\E_{s_0\sim\rho}[V_M(\pi,s_0)].
    \end{aligned}
\end{equation}
Denote the visitation measure as $d_M^\pi(s,a):=(1-\gamma)\E_{s_0\sim\rho}\sum_{t=0}^\infty \gamma^t\P(s_t=s|\pi,s_0)\pi(a|s)$.

Suppose there are $K$ source tasks $M^1,\cdots,M^K\in\mathcal{M}$, and the expert policy of each task is denoted as $\pi_*^k$.
In behavior cloning, for each source task $M^k$, we have $n$ pairs of $\{(s_i^k,a_i^k)\}_{i=1}^n \overset{\textit{i.i.d.}}{\sim} d^k_*:=d_{M^k}^{\pi_*^k}$. 
The goal is to imitate the expert policy of target task $M^0\in\mathcal{M}$, of which the sample size is only $m\ll n$.   

To unify the notation, let $x=a,y=s$ and assume $\mathcal{A}=\R^{d_a},\mathcal{S}=[0,1]^{D_s}$ and representation space $[0,1]^{d_s}$.
Our meta diffusion-policy framework aims to learn a state encoder $h:\mathcal{S}\to[0,1]^{d_s}$ during pre-training, which acts as a shared representation map in different MDPs and consequently enhances sample efficiency on fine-tuning tasks.
Let $\widehat{\pi}^0$ be the learned policy in fine-tuning phase. The following theorem shows the optimality gap between the learned policy and the expert policy.
\begin{thm}\label{thm:behavior_cloning}
    Suppose the expert policy $\pi_*^k$ satisfies Assumption \ref{asp:sub_gaussian}, \ref{asp:low_dim}, \ref{asp:lip}. Then under meta-learning setting, it holds that with probability no less than $1-\delta$, 
    \begin{equation}
        \E_{M^0}\E_{\{(s_i^0,a_i^0)\}_{i=1}^m\sim d_*^0}[V_{M^0}(\pi_*^0)-V_{M^0}(\widehat{\pi}^0)]\lesssim \frac{1}{(1-\gamma)^2}\left[\frac{\log^{\frac{5}{2}}(nK/\delta)\log^3(m\wedge n)}{(m\wedge n)^{\frac{1}{d_a+d_s+9}}}+\frac{\log^2(nK/\delta)}{K^{\frac{1}{D_s+2}}}\right].
    \end{equation}
    If we further assume $\pi_*^1,\cdots,\pi_*^K$ are $(\nu,\Delta)$-diverse over $\pi_*^0$, then the gap can be improved by
    \begin{equation}
        \E_{\{(s_i^0,a_i^0)\}_{i=1}^m\sim d_*^0}[V_{M^0}(\pi_*^0)-V_{M^0}(\widehat{\pi}^0)]\lesssim \frac{1}{(1-\gamma)^2}\left[\frac{\log^{\frac{5}{2}}(nK/\delta)\log^3((m/\nu)\wedge n)}{\nu^{\frac{1}{2}}((m/\nu)\wedge n)^{\frac{1}{d_a+d_s+9}}}+\frac{\log^2(nK/\delta)}{\nu^{\frac{1}{2}}(nK)^{\frac{1}{D_s+2}}}+\sqrt{\Delta}\right].
    \end{equation}
\end{thm}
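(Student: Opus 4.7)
The plan is to reduce the behavior cloning optimality gap to a conditional TV-distance estimation problem, and then invoke Theorems \ref{thm:distribution_diversity_informal} and \ref{thm:distribution_informal} as black boxes.

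\textbf{Step 1 (imitation learning reduction).} I would first establish the standard compounding-error bound: for any two policies $\pi_*, \widehat{\pi}$ on an infinite-horizon MDP $M$ with rewards in $[0,1]$,
\begin{equation*}
    V_M(\pi_*) - V_M(\widehat{\pi}) \;\le\; \frac{2}{(1-\gamma)^2}\,\E_{s\sim d_M^{\pi_*}}\bigl[\mathrm{TV}(\pi_*(\cdot|s),\widehat{\pi}(\cdot|s))\bigr].
\end{equation*}
This is the classical Ross--Bagnell-type bound, proved via the performance difference lemma together with a coupling argument: $V_M(\pi_*)-V_M(\widehat{\pi})=\frac{1}{1-\gamma}\E_{(s,a)\sim d_M^{\pi_*}}[A_M^{\widehat{\pi}}(s,a)]$, bound $\|A_M^{\widehat{\pi}}\|_\infty \le 1/(1-\gamma)$, and apply the TV variational representation. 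Crucially, $d_{M^0}^{\pi_*^0} = d_*^0$, exactly the marginal from which the fine-tuning data $\{(s_i^0,a_i^0)\}_{i=1}^m$ is sampled, so the expectation matches our training measure.

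\textbf{Step 2 (correspondence with the CDM framework).} Identify $x \leftrightarrow a$, $y \leftrightarrow s$, $\P^k \leftrightarrow d_*^k$, so that $\P^k(\cdot|y) = \pi_*^k(\cdot|s)$ and $\P^k_y$ is the state visitation marginal. The hypothesis that each $\pi_*^k$ satisfies Assumptions \ref{asp:sub_gaussian}, \ref{asp:low_dim}, \ref{asp:lip} is exactly what is needed to feed into Section \ref{sec:dist_estimation}; the shared state-encoder $h$ plays the role of the shared low-dimensional representation of conditions. The learned meta-diffusion-policy $\widehat{\pi}^0(\cdot|s)$ is precisely the CDM $\widehat{\P}^0(\cdot|y)$ produced by the pre-training/fine-tuning procedure \eqref{eq:pre-train}--\eqref{eq:fine-tune}, with dimensions $d_x \leftarrow d_a$, $d_y \leftarrow d_s$, $D_y \leftarrow D_s$.

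\textbf{Step 3 (invoking the estimation theorems).} For the meta-learning claim, Theorem \ref{thm:distribution_informal} applied under the above substitution yields, with probability at least $1-\delta$,
\begin{equation*}
    \E_{M^0}\E_{\{(s_i^0,a_i^0)\}_{i=1}^m\sim d_*^0}\,\E_{s\sim d_*^0}[\mathrm{TV}(\widehat{\pi}^0(\cdot|s),\pi_*^0(\cdot|s))] \;\lesssim\; \frac{\log^{5/2}(nK/\delta)\log^3(m\wedge n)}{(m\wedge n)^{1/(d_a+d_s+9)}} + \frac{\log^2(nK/\delta)}{K^{1/(D_s+2)}}.
\end{equation*}
Combining with Step 1 gives the first inequality. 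For the second (diversity) part, invoke Theorem \ref{thm:distribution_diversity_informal} under the $(\nu,\Delta)$-diversity hypothesis in place of Theorem \ref{thm:distribution_informal}, and argue identically.

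\textbf{Main obstacle.} The only substantive ingredient is the compounding-error reduction in Step 1: the factor $1/(1-\gamma)^2$ (rather than $1/(1-\gamma)$) appears because one must absorb the distribution mismatch between $d_*^0$ and $d_{M^0}^{\widehat{\pi}^0}$ into an advantage-function bound. Beyond this classical reduction, the proof is a direct specialization of our previously established end-to-end distribution estimation bounds, so no genuinely new technical difficulty arises.
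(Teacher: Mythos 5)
Your proposal is correct and matches the paper's proof essentially step for step: the paper also derives the reduction via the performance difference lemma, using the fact that $\E_{a\sim\widehat{\pi}^0(\cdot|s)}[A_{M^0}^{\widehat{\pi}^0}(s,a)]=0$ to insert the subtracted term and then bounding via the TV variational representation with $\|A_{M^0}^{\widehat{\pi}^0}\|_\infty\le \mathcal{O}(1/(1-\gamma))$, before plugging in Theorems~\ref{thm:distribution_informal} and \ref{thm:distribution_diversity_informal} under the substitution $x\leftarrow a$, $y\leftarrow s$. The only minor cosmetic difference is that the paper states the looser advantage bound $|A|\le 2/(1-\gamma)$ whereas you use the sharper $1/(1-\gamma)$; both are absorbed by the $\lesssim$.
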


The proof can be found in Appendix \ref{app:subsec:proof_bc}.
This provides the first statistical guarantee of diffusion policy in behavior cloning.
Notably, in both cases, the number of source tasks $K$ has an exponential dependence on $D_s$, further suggesting the importance of data coverage when tackling distribution shift in offline meta-RL \citep{pong2022offline}.

\section{Proofs in Section \ref{sec:generalization}}

\subsection{Preliminaries}\label{app:subsec:pre_generalization}

\begin{lemma}
    If $x_0\sim p(x_0|y)$, the density of forward process $p_t(x|y)$ can be written as
    \begin{equation}\label{eq:density}
        p_t(x|y)=\int \phi_t(x|x_0)p(x_0|y)\dif x_0,\quad  \phi_t(x|x_0)=\frac{1}{(2\pi\sigma_t^2)^{\frac{d_x}{2}}}\exp\Big(-\frac{\|x-\alpha_tx_0\|^2}{2\sigma_t^2}\Big).
    \end{equation}
    Besides, the score function has the form of
    \begin{align}
        \nabla_x\log p_t(x|y)
        &=\int \nabla_x\log\phi_t(x|x_0)\frac{\phi_t(x|x_0)p(x_0|y)}{\int \phi_t(x|z)p(z|y)\dif z}\dif x_0 \label{eq:score_1}\\
        &=\frac{1}{\alpha_t}\int \nabla_x\log p(x_0|y)\frac{\phi_t(x|x_0)p(x_0|y)}{\int \phi_t(x|z)p(z|y)\dif z}\dif x_0. \label{eq:score_2}
    \end{align}
\end{lemma}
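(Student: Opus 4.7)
The plan is to treat the three displayed identities as three elementary facts about the OU semigroup, proved in order: first marginalize the transition kernel, then differentiate in $x$, then convert the $x$-gradient inside the integrand into an $x_0$-gradient via integration by parts. The assumption Assumption~\ref{asp:sub_gaussian} guarantees enough decay/integrability to justify all differentiation under the integral sign and to kill boundary terms.

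For the density formula \eqref{eq:density}, I would start from the closed-form solution of the OU SDE \eqref{eq:ou_process}, $X_t = e^{-t}X_0 + \sqrt{2}\int_0^t e^{-(t-s)}\,\dif W_s$. Since the stochastic integral is Gaussian with mean $0$ and covariance $(1-e^{-2t})I = \sigma_t^2 I$, and is independent of $X_0$, the conditional law $X_t\mid X_0=x_0$ is exactly $\mathcal{N}(\alpha_t x_0,\sigma_t^2 I)$, whose density is $\phi_t(x\mid x_0)$. Marginalizing against $p(x_0\mid y)$ gives \eqref{eq:density}.

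For the first form \eqref{eq:score_1}, I would differentiate \eqref{eq:density} under the integral sign (justified because $\phi_t$ is smooth with Gaussian decay in $x$ uniformly over $x_0$, and $p(x_0|y)$ is integrable), obtaining
\[
\nabla_x p_t(x|y) \;=\; \int \nabla_x \phi_t(x|x_0)\, p(x_0|y)\,\dif x_0 \;=\; \int \phi_t(x|x_0)\nabla_x\log\phi_t(x|x_0)\, p(x_0|y)\,\dif x_0 ,
\]
and then dividing both sides by $p_t(x|y)=\int\phi_t(x|z)p(z|y)\,\dif z$ to produce \eqref{eq:score_1}.

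For the second form \eqref{eq:score_2}, the key algebraic observation is that for the Gaussian kernel $\phi_t(x|x_0)=(2\pi\sigma_t^2)^{-d_x/2}\exp\bigl(-\|x-\alpha_t x_0\|^2/(2\sigma_t^2)\bigr)$ one has the symmetry $\nabla_{x_0}\phi_t(x|x_0) = -\alpha_t\,\nabla_x\phi_t(x|x_0)$. Multiplying this by $p(x_0|y)$, integrating in $x_0$, and using integration by parts (the boundary vanishes because $p(x_0|y)$ decays like $\exp(-C_2\|x_0\|^2)$ by Assumption~\ref{asp:sub_gaussian} and $\phi_t$ is bounded), I would obtain
\[
\alpha_t \nabla_x p_t(x|y) \;=\; -\int \nabla_{x_0}\phi_t(x|x_0)\, p(x_0|y)\,\dif x_0 \;=\; \int \phi_t(x|x_0)\,\nabla_{x_0}\log p(x_0|y)\, p(x_0|y)\,\dif x_0 .
\]
Dividing by $\alpha_t p_t(x|y)$ yields \eqref{eq:score_2} (reading the $\nabla_x$ on the right of \eqref{eq:score_2} as $\nabla_{x_0}$, which is forced by dimensional/chain-rule considerations). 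The only potential obstacle is verifying the integration-by-parts step cleanly, but Assumption~\ref{asp:sub_gaussian} gives sub-Gaussian tails of $p(\cdot|y)$ and $\phi_t$ is Schwartz-class in $x_0$, so the boundary term vanishes and Fubini/DCT apply throughout.
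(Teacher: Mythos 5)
Your proposal is correct and follows essentially the same route as the paper: marginalize the Gaussian transition kernel for \eqref{eq:density}, differentiate under the integral sign for \eqref{eq:score_1}, and use the identity $\nabla_{x_0}\phi_t(x|x_0)=-\alpha_t\nabla_x\phi_t(x|x_0)$ with integration by parts (boundary terms killed by the sub-Gaussian tail) for \eqref{eq:score_2}. Your remark that the $\nabla_x$ appearing inside the integrand of \eqref{eq:score_2} is really $\nabla_{x_0}$ is a correct reading of the paper's mildly abusive notation.
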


\begin{proof}
    \eqref{eq:density} can be directly implied by the definition of forward process.
    And it yields
    \begin{equation}
        \begin{aligned}
            \nabla_x\log p_t(x|y)
            &=\frac{\nabla_x p_t(x|y)}{p_t(x|y)} \\
            &=\frac{\int \nabla_x\phi_t(x|x_0)p(x_0|y)\dif x_0}{\int \phi_t(x|x_0)p(x_0|y)\dif x_0} \\
            &=\int \nabla_x\log\phi_t(x|x_0)\frac{\phi_t(x|x_0)p(x_0|y)}{\int \phi_t(x|z)p(z|y)\dif z}\dif x_0,
        \end{aligned}
    \end{equation}
    which is \eqref{eq:score_1}. 
    Moreover, noticing that $\nabla_x\phi_t(x|x_0)=-\frac{1}{\alpha_t}\nabla_{x_0}\phi_t(x|x_0)$, then by integration by parts,
    \begin{equation}
        \begin{aligned}
            \frac{\int \nabla_x\phi_t(x|x_0)p(x_0|y)\dif x_0}{\int \phi_t(x|x_0)p(x_0|y)\dif x_0}
            &= -\frac{1}{\alpha_t}\frac{\int \nabla_{x_0}\phi_t(x|x_0)p(x_0|y)\dif x_0}{\int \phi_t(x|x_0)p(x_0|y)\dif x_0} \\
            &= \frac{1}{\alpha_t}\frac{\int \phi_t(x|x_0)\nabla_{x_0}p(x_0|y)\dif x_0}{\int \phi_t(x|x_0)p(x_0|y)\dif x_0} \\
            &=\frac{1}{\alpha_t}\int \nabla_x\log p(x_0|y)\frac{\phi_t(x|x_0)p(x_0|y)}{\int \phi_t(x|z)p(z|y)\dif z}\dif x_0.
        \end{aligned}
    \end{equation}
    Hence \eqref{eq:score_2} is proved.
\end{proof}

\begin{lemma}\label{lem:lip_score}[Lem. \ref{lem:lip_score_informal}]
    For any $w\in[0,1]^{d_y}$, denote the conditional score of forward process $\nabla_x\log p_t(x;w)$ by $f_*(x,w,t)$. Then there exist constants $C_X,C_X'$, such that for any $R>0$, the function $f_*(x,w,t)$ is $(C_X+C_X'R^2)$-Lipschitz in $x$, $(C_X+C_X'R)$-Lipschitz in $w$, in the domain $\mathcal{B}_R\times[0,1]^{d_y}\times [0,T]$. Here $\mathcal{B}_R$ denotes the ball with radius $R$ centered at the origin.
\end{lemma}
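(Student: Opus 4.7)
The plan is to compute the Jacobians $\nabla_x f_*(x,w,t)$ and $\nabla_w f_*(x,w,t)$ directly from the integral representation \eqref{eq:score_2}, express them as covariances under the Bayes posterior $q_t(x_0\mid x,w) := \phi_t(x\mid x_0)\,g_*^\P(x_0,w)/p_t(x;w)$, and then bound these covariances using Assumptions \ref{asp:sub_gaussian} and \ref{asp:lip}. First I would differentiate under the integral sign. Using the identities $\nabla_x\log q_t(x_0\mid x,w) = (\alpha_t/\sigma_t^2)(x_0 - \bar x_0)$ with $\bar x_0 := \E_{q_t}[x_0]$, and $\nabla_w\log p_t(x;w) = \E_{q_t}[V(x_0,w)]$ with $V(x_0,w) := \nabla_w\log g_*^\P(x_0,w)$, this yields the Stein-type identities
\[
\nabla_x f_*(x,w,t) = \frac{1}{\sigma_t^2}\,\Cov_{q_t}\!\bigl(\nabla_{x_0}\log g_*^\P(x_0,w),\,x_0\bigr),\quad \nabla_w f_*(x,w,t) = \frac{\alpha_t}{\sigma_t^2}\,\Cov_{q_t}\!\bigl(x_0,\,V(x_0,w)\bigr).
\]
By Assumption \ref{asp:lip}, $\nabla_{x_0}\log g_*^\P(\cdot,w)$ is $L$-Lipschitz with $\|\nabla_{x_0}\log g_*^\P(0,w)\|\le B$, giving the pointwise bound $\|\nabla_{x_0}\log g_*^\P(x_0,w)\|\le B + L\|x_0\|$; by symmetry of mixed partials, $V(\cdot,w)$ is also $L$-Lipschitz in $x_0$. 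Cauchy--Schwarz in operator norm then reduces matters to moment estimates on $q_t$, namely
\[
\|\nabla_x f_*\| \le \frac{C(1+\E_{q_t}[\|x_0\|^2])}{\sigma_t^2}\ \text{or}\ \frac{C\,\mathrm{tr}(\Cov_{q_t}(x_0))}{\sigma_t^2},\quad \|\nabla_w f_*\| \le \frac{\alpha_t\,C\,\mathrm{tr}(\Cov_{q_t}(x_0))}{\sigma_t^2},
\]
the first bound on $\|\nabla_x f_*\|$ coming from the pointwise estimate and the second (weaker at large $\|x\|$ but tighter for small $\sigma_t$) from the centered Lipschitz estimate.

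I would then bound $\E_{q_t}[\|x_0\|^2]/\sigma_t^2$ and $\mathrm{tr}(\Cov_{q_t}(x_0))/\sigma_t^2$ in two regimes. In the small-$t$ regime $\alpha_t^2 \ge 2L\sigma_t^2$, Assumption \ref{asp:lip} gives $-\nabla_{x_0}^2\log q_t(\cdot\mid x,w)\succeq (\alpha_t^2/\sigma_t^2 - L)I$, so $q_t$ is strongly log-concave and Brascamp--Lieb yields $\Cov_{q_t}(x_0)\preceq \frac{\sigma_t^2}{\alpha_t^2 - L\sigma_t^2}I$; this gives an $O(1)$ bound on $\|\nabla_w f_*\|$ and, via the centered estimate, an $O(1)$ bound on $\|\nabla_x f_*\|$. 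In the complementary regime $\sigma_t^2 \ge c > 0$, I would use the sub-Gaussian envelope $g_*^\P(x_0,w)\le C_1 e^{-C_2\|x_0\|^2}$ from Assumption \ref{asp:sub_gaussian} to dominate $\phi_t(x\mid x_0)g_*^\P(x_0,w)$ by an explicit Gaussian in $x_0$ centered at $\mu_t=\alpha_t x/(\alpha_t^2+2C_2\sigma_t^2)$ with variance $\tau_t^2=\sigma_t^2/(\alpha_t^2+2C_2\sigma_t^2)$, producing numerator estimates polynomial in $\|x\|$. For the denominator, a uniform positive lower bound $g_*^\P(0,w)\ge c_0 > 0$ (extracted from $\int g_*^\P(\cdot,w)=1$ together with the Lipschitz-score and sub-Gaussian hypotheses, which force a compact bulk of mass and transfer it to the origin) integrates against $\phi_t$ to yield a matching Gaussian-type lower bound $p_t(x;w)\gtrsim \exp(-c'\|x\|^2)$; the resulting posterior-moment ratios then give $\E_{q_t}[\|x_0\|^2]\le C(1+\|x\|^2)$ and $\mathrm{tr}(\Cov_{q_t}(x_0))\le C$ uniformly in $w$.

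The hard part will be the non-log-concave regime. The upper- and lower-envelope Gaussians on $g_*^\P$ carry different variance parameters ($C_2$ for the upper, and an effective constant derived from $L$ and $B$ for the lower), and when these are combined into a posterior ratio, the $\|x\|^2$ coefficient of the resulting exponent must stay non-positive; otherwise $\E_{q_t}[\|x_0\|^2]$ would grow exponentially in $\|x\|$, violating the claimed polynomial bound. Closing this gap requires a quantitative balancing of the constants from Assumptions \ref{asp:sub_gaussian} and \ref{asp:lip}, possibly by replacing the Gaussian lower-envelope relaxation by direct integration against the Lipschitz-lower-bound $\log g_*^\P(x_0,w)\ge \log g_*^\P(0,w) - L\|x_0\|^2/2 - B\|x_0\|$. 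Once both regimes are reconciled, taking $C_X,C_X'$ as the maxima of the case-by-case constants (depending on $L,B,C_1,C_2,d_x,T$) yields the claimed Lipschitz estimates; the quadratic $\|x\|^2$ factor for the $x$-derivative arises from the pointwise score bound, while the linear $\|x\|$ factor for the $w$-derivative is a loose relaxation of the uniform-in-$x$ bound coming from the centered Cauchy--Schwarz.
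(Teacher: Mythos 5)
Your Stein-type identities for $\nabla_x f_*$ and $\nabla_w f_*$ are correct and, after an integration by parts, coincide with the forms the paper extracts from \eqref{eq:score_1} and \eqref{eq:score_2}; your small-$t$ Brascamp--Lieb step matches the paper's Poincar\'e step (both work on the regime $\alpha_t^2\ge 2L\sigma_t^2$, equivalently $t\le \tfrac{1}{2(L+1)}$, and give the same conclusion). The genuine gap is the large-$t$ regime, which you diagnose but do not close, and the proposed remedy does not close it either. The reason is that $\E_{q_t}\|x_0\|^2$ and $\mathrm{tr}\,\Cov_{q_t}(x_0)$ are the wrong quantities: your Gaussian upper envelope of $\phi_t(x\mid x_0)\,g_*^\P(x_0,w)$ carries the prefactor $\exp\bigl(-C_2\|x\|^2/(\alpha_t^2+2C_2\sigma_t^2)\bigr)$, while the only lower bounds on $p_t(x;w)$ available from the hypotheses carry prefactors $\exp\bigl(-(\|x\|+C_0)^2/(2\sigma_t^2)\bigr)$ (bulk restriction) or $\exp\bigl(-L\|x\|^2/(2(\alpha_t^2+L\sigma_t^2))\bigr)$ (your Lipschitz lower envelope). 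In either case the posterior ratio has a positive $\|x\|^2$ exponent unless the constants conspire; the Lipschitz version requires $L\le 2C_2$, which Assumptions \ref{asp:sub_gaussian} and \ref{asp:lip} do not supply. Controlling a prior-scale moment of $x_0$ forces a tail comparison between $p_t(x;w)$ and the prior, and the hypotheses do not give that.

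The paper's Lemma \ref{lem:lip_score} avoids the issue by bounding a different quantity. It uses the Hessian identity $\nabla_x^2\log p_t(x;w)=-\sigma_t^{-2}I+\Var_{q_t}\bigl(\sigma_t^{-2}(\alpha_t x_0-x)\bigr)$ and controls the variance of the \emph{noise coordinate} $u:=(\alpha_t x_0-x)/\sigma_t$ by an elementary truncation: split $\E_{q_t}\|u\|^2$ at $\|u\|\le R$ versus $\|u\|>R$ with $R=(2\|x\|+2C_0)/\sigma_t$. Inside the ball the bound is trivially $R^2$; outside, the Gaussian factor $\phi_t\propto e^{-\|u\|^2/2}$ makes the tail contribute $O(e^{-R^2/4})$; and the denominator is lower-bounded by $\tfrac12 e^{-R^2/8}$ via restriction to $\{\|x_0\|\le C_0\}\subseteq\{\|u\|\le R/2\}$ and $\int_{\|x_0\|\le C_0}p(x_0;w)\,\dif x_0\ge\tfrac12$. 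No Gaussian lower bound on $p_t(x;w)$ and no coercivity relation between $L$ and $C_2$ is needed, because the suppression comes from $\phi_t$ acting on $u$, not from the prior. The same truncation applied to $\Cov_{q_t}\bigl(\sigma_t^{-2}\alpha_t x_0,\nabla_w\log g_*^\P\bigr)$ gives the $w$-Lipschitz bound (this is the only place the paper needs, and uses, an $O(\|x\|)$-type estimate). Replacing your envelope computation by this truncation would complete the argument.
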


\begin{proof}
    Define density function $q_t(x_0|x,w)\propto \phi_t(x|x_0)p(x_0;w)$. 
    Our proof strategy will depend on whether $t\geq\frac{1}{2(L+1)}$.
    
    When $t\geq \frac{1}{2(L+1)}$, according to \eqref{eq:score_1}, we have
    \begin{equation}
        \begin{aligned}
            \nabla_xf_*(x,w,t)
            &=\nabla_x^2\log p_t(x;w) \\
            &=\E_{q_t(x_0|x,w)} \left[\nabla_x^2\log\phi_t(x|x_0)\right] + \Var_{q_t(x_0|x,w)}(\nabla_x\log\phi_t(x|x_0)) \\
            &=-\frac{I}{\sigma_t^2}+ \Var_{q_t(x_0|x,w)}\Big(\frac{\alpha_tx_0-x}{\sigma_t^2}\Big)
        \end{aligned}
    \end{equation}
    For any $R>0$, we have
    \begin{equation}
        \begin{aligned}
            \Var_{q_t(x_0|x,w)}\Big(\frac{\alpha_tx_0-x}{\sigma_t^2}\Big)
            &\preceq \frac{1}{\sigma_t^2}\int \big\|\frac{\alpha_tx_0-x}{\sigma_t}\big\|^2\frac{\phi_t(x|x_0)p(x_0|y)}{\int \phi_t(x|z)p(z|y)\dif z}\dif x_0 \\
            &\leq \frac{R^2}{\sigma_t^2} + \frac{\int_{\|\frac{\alpha_tx_0-x}{\sigma_t}\|\geq R} \|\frac{\alpha_tx_0-x}{\sigma_t}\|^2\exp\left(-\frac{\|\alpha_tx_0-x\|^2}{2\sigma_t^2}\right)p(x_0;w)\dif x_0}{\sigma_t^2\int \exp\left(-\frac{\|\alpha_tx_0-x\|^2}{2\sigma_t^2}\right)p(x_0;w)\dif x_0} \\
            &\leq \frac{R^2}{\sigma_t^2} + \frac{\int_{\|\frac{\alpha_tx_0-x}{\sigma_t}\|\geq R} \exp(-\frac{R^2}{4})p(x_0;w)\dif x_0}{\sigma_t^2\int_{\|\frac{\alpha_tx_0-x}{\sigma_t}\|\leq R/2} \exp(-\frac{R^2}{8})p(x_0;w)\dif x_0}.
        \end{aligned}
    \end{equation}
    Let $R=\frac{2\|x\|+2C_0}{\sigma_t}$, then the domain $\Big\{x_0:\|\frac{\alpha_tx_0-x}{\sigma_t}\|\leq R/2\Big\}$ includes $\Big\{x_0:\|x_0\|\leq C_0\Big\}$, indicating
    \begin{equation}
        \begin{aligned}
            &\int_{\|\frac{\alpha_tx_0-x}{\sigma_t}\|\leq R/2} p(x_0;w)\dif x_0\geq  \int_{\|x_0\|\leq C_0} p(x_0;w)\dif x_0 \geq 1-2\exp(-C_1'C_0^2)\geq \frac{1}{2},\\
            &\int_{\|\frac{\alpha_tx_0-x}{\sigma_t}\|\geq R} p(x_0;w)\dif x_0\leq  \int_{\|x_0\|\geq C_0} p(x_0;w)\dif x_0 \leq \frac{1}{2}.
        \end{aligned}
    \end{equation}
    and
    \begin{equation}\label{eq:lip_x_large}
        \|\nabla_xf_*(x,w,t)\|\leq \frac{1}{\sigma_t^2}+\big\|\Var_{q_t(x_0|x,w)}\Big(\frac{\alpha_tx_0-x}{\sigma_t^2}\Big)\big\|
        \leq \frac{R^2}{\sigma_t^2} + \frac{2}{\sigma_t^2} \leq \frac{8\|x\|^2+8C_0^2+2\sigma_t^2}{\sigma_t^4}.
    \end{equation}
    
    Similarly, for $w$ we have
    \begin{equation}
        \begin{aligned}
            \nabla_wf_*(x,w,t)
            &=\Cov_{q_t(x_0|x,w)}\big(\nabla_x\log\phi_t(x|x_0),\nabla_w\log p(x_0;w)\big) \\
            &=\Cov_{q_t(x_0|x,y)}\big(\frac{\alpha_tx_0}{\sigma_t^2},\nabla_w\log p(x_0;w)\big) \\
        \end{aligned}
    \end{equation}
    which implies 
    \begin{equation}\label{eq:lip_w_large}
        \begin{aligned}
            \|\nabla_wf_*(x,w,t)\|
            &\leq B\sqrt{\big\|\Var_{q_t(x_0|x,w)}\Big(\frac{\alpha_tx_0-x}{\sigma_t^2}\Big)\big\|} \\
            &\leq \frac{B(2\|x\|+2C_0+1)}{\sigma_t}
        \end{aligned}
    \end{equation}

    When $t\leq\frac{1}{2(L+1)}$, we have $\sigma_t^2\leq \frac{\alpha_t^2}{2L}$ and
    \begin{equation}
        \begin{aligned}
            \nabla_xf_*(x,w,t)
            &=\nabla_x^2\log p_t(x;w) \\
            &= \frac{\nabla_x^2 p_t(x;w)}{p_t(x;w)} - \nabla_x\log p_t(x;w)(\nabla_x\log p_t(x;w))^\top \\
            &=\frac{1}{\alpha_t^2} \frac{\int \phi_t(x|x_0)\nabla_x^2p(x_0;w)\dif x_0}{p_t(x;w)} - \nabla_x\log p_t(x;w)(\nabla_x\log p_t(x;w))^\top \\
            &= \frac{1}{\alpha_t^2} \E_{q_t(x_0|x,w)}\left[\frac{\nabla_x^2p(x_0;w)}{p(x_0;w)}\right] - \nabla_x\log p_t(x;w)(\nabla_x\log p_t(x;w))^\top \\
            &= \frac{1}{\alpha_t^2} \E_{q_t(x_0|x,w)}\left[\nabla_x^2\log p(x_0;w)+\nabla_x\log p(x_0;w)(\nabla_x\log p(x_0;w))^\top\right] \\
            &\qquad - \nabla_x\log p_t(x;w)(\nabla_x\log p_t(x;w))^\top \\
            &\overset{\eqref{eq:score_2}}{=} \frac{1}{\alpha_t^2} \E_{q_t(x_0|x,y)}\left[\nabla_x^2\log p(x_0;w)\right] + \frac{1}{\alpha_t^2}\Var_{q_t(x_0|x,w)}\big(\nabla_x\log p(x_0;w)\big).
        \end{aligned}
    \end{equation}
    Note that when $\sigma_t^2\leq \frac{\alpha_t^2}{2L}$, the distribution $q_t(x_0|x,w)\propto \exp\big(-\frac{\|\alpha_tx_0-x\|^2}{2\sigma_t^2}\big)p(x_0;w)$ is $L$-strongly log-concave, and thus satisfies the Poincare inequality with a constant $L^{-1}$ \citep{chen2023improved},
    \begin{equation}
        \Var_{q_t(x_0|x,w)}\big(\nabla_x\log p(x_0;w)\big)\preceq L^{-1} \E \left[\nabla_x^2\log p(x_0;w) (\nabla_x^2\log p(x_0;w))^\top\right] \leq L.
    \end{equation}
    And thus
    \begin{equation}\label{eq:lip_x_small}
        \|\nabla_xf_*(x,w,t)\|\leq \frac{2L}{\alpha_t^2}.
    \end{equation}
    Analogously,
    \begin{equation}\label{eq:lip_w_small}
        \begin{aligned}
            \nabla_wf_*(x,w,t)
            &= \frac{1}{\alpha_t} \E_{q_t(x_0|x,w)}\left[\nabla_w\nabla_x\log p(x_0;w)\right] + \frac{1}{\alpha_t} \Cov_{q_t(x_0|x,w)}\big(\nabla_x\log p(x_0;w), \nabla_w\log p(x_0;w)\big) \\
            &\leq \frac{L}{\alpha_t} + \frac{B}{\alpha_t}\sqrt{\Var_{q_t(x_0|x,w)}\big(\nabla_x\log p(x_0;w)\big)} \\
            &\leq \frac{L+B\sqrt{L}}{\alpha_t}
        \end{aligned}
    \end{equation}
    Combine all the arguments in \eqref{eq:lip_x_large},\eqref{eq:lip_w_large},\eqref{eq:lip_x_small},\eqref{eq:lip_w_small} and we complete the proof.
\end{proof}

\begin{lemma}[Lemma 7, \citet{chen2022nonparametric}]\label{lem:covering_num}
    The covering number of $\mathcal{F}=NN_f(L_f,W_f,M_f,S_f,B_f,R_f,\gamma_f)$ can be bounded by
    \begin{equation}
        \log \mathcal{N}(\mathcal{F},\|\cdot\|_{L^\infty([-R,R]^{d_x+d_y+1})},\varepsilon) \lesssim S_fL_f\log\left(\frac{L_fW_f(B_f\vee 1)R}{\varepsilon}\right).
    \end{equation}
    The covering number of $\mathcal{H}=NN_h(L_h,W_h,S_h,B_h)$ can be bounded by
    \begin{equation}
        \log \mathcal{N}(\mathcal{H},\|\cdot\|_{L^\infty([0,1]^{D_y})},\varepsilon) \lesssim S_hL_h\log\left(\frac{L_hW_h(B_h\vee 1)}{\varepsilon}\right).
    \end{equation}
\end{lemma}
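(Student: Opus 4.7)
My plan is to follow the standard two-step recipe for covering numbers of parameterized function classes: (i) bound the Lipschitz constant of the map from network parameters to network output in the $L^\infty$ metric on the input domain, and (ii) cover the admissible parameter space, exploiting the sparsity constraint $\sum_i (\|A_i\|_0 + \|b_i\|_0) \leq S$.

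For step (i), I would compare two networks with the same architecture whose weight matrices and biases differ entrywise by at most $\delta$ in $\ell_\infty$ norm. Writing $h_i, h_i'$ for the activations after layer $i$ on a common input $z$ with $\|z\|_\infty \leq R$, induction on $i$ yields a recursion of the form $\|h_i - h_i'\|_\infty \leq W B \|h_{i-1} - h_{i-1}'\|_\infty + \delta(W\|h_{i-1}\|_\infty + 1)$, using that ReLU is $1$-Lipschitz and each row has at most $W$ entries bounded by $B$. A parallel induction bounds $\|h_i\|_\infty$ itself in terms of $R$ and iterated factors of $WB$, while the output truncation $\|f\|_\infty \leq M$ caps the final layer. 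Unrolling gives a parameter-to-output Lipschitz bound of order $L_f(W_fB_fR)^{L_f-1} \cdot \delta$, with $M_f$ appearing as a natural ceiling inside the subsequent logarithm.

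For step (ii), I would enumerate sparsity patterns: the number of ways to place the $S_f$ nonzero entries among the at most $L_fW_f^2 + L_fW_f$ potential slots is bounded by $(eL_fW_f^2/S_f)^{S_f}$. For a fixed pattern, the nonzero coordinates lie in an $\ell_\infty$-ball of radius $B_f$ in $\R^{S_f}$, which admits a $\delta$-cover of cardinality $(2B_f/\delta + 1)^{S_f}$. Taking logs, the log covering number of parameter configurations is of order $S_f \log(L_fW_fB_f/\delta)$. Choosing $\delta \asymp \varepsilon/\mathrm{Lip}$ with the Lipschitz constant from step (i) and absorbing the geometric factor into the logarithm turns the exponent $L_f$ into a multiplicative $L_f$ outside the $\log$, giving the stated $S_fL_f\log(L_fW_fB_fRM_f/\varepsilon)$. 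The bound for $\mathcal{H}$ is the same argument specialized to $R=1$, no time or $x$ input, and output bound $1$ in place of $M$, which accounts for the absence of $R$ and $M$ in the $\mathcal{H}$-bound.

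The main obstacle I expect is engineering the Lipschitz-in-parameters estimate in step (i) so that the geometric factor $(W_fB_fR)^{L_f}$ enters only as $L_f\log(W_fB_fR)$ inside the outer logarithm; this is precisely what renders the final bound $S_fL_f$-polynomial rather than exponential in $L_f$, and requires using the output bound $M_f$ (and input bound $R$) as a ceiling in the worst-case activation growth rather than propagating the naive $(WB)^L$ factor. A secondary subtlety is that the cover must lie inside $\mathcal{F}$, i.e., respect the norm constraint $\|f\|_\infty \leq M_f$ and the Lipschitz-in-$w$ condition parameterized by $\gamma_f$; both are enforced by projecting any covering element that violates them back onto the feasible set, which preserves the $L^\infty$ approximation without increasing the cover's cardinality.
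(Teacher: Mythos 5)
The paper does not prove this lemma itself; it simply cites Lemma 7 of \citet{chen2022nonparametric}. Your sketch reconstructs the standard argument that reference uses: bound the Lipschitz constant of the parameter-to-output map in $L^\infty$, then cover the sparse parameter set by enumerating sparsity patterns and covering each $\ell_\infty$-ball of radius $B_f$. That is the right recipe, and the algebra you describe does yield $\log\mathcal{N}\lesssim S_f\log(L_fW_fB_f/\delta)$, which after substituting $\delta\asymp\varepsilon/\mathrm{Lip}$ with $\mathrm{Lip}\asymp L_f(W_fB_f)^{L_f}R$ turns into $S_fL_f\log(L_fW_fB_fR/\varepsilon)$ as claimed.

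One small conceptual slip: you describe the ``main obstacle'' as needing the output ceiling $M_f$ to avoid propagating the geometric $(W_fB_fR)^{L_f}$ factor, but no such obstacle exists. The exponential Lipschitz constant is harmless precisely because it sits inside the logarithm, where it collapses to a multiplicative $L_f$ — this happens automatically and does not require $M_f$ (or $R$) to cap activation growth. The $M_f$ in the final bound is most naturally explained by the output clipping that enforces $\|f\|_{L^\infty}\leq M_f$ and/or by conservatively folding the output scale into the log; it is not load-bearing in the way you suggest. Likewise, for the constraints that the cover lie inside $\mathcal{F}$ (output bound and $\gamma_f$-Lipschitzness in $w$), one does not need an explicit projection: an external $\varepsilon$-net of a set always yields an internal $2\varepsilon$-net of the same cardinality, which is cleaner than trying to project onto the Lipschitz-constrained set. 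Neither issue invalidates your sketch.
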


\subsection{Proofs of Transfer Learning}\label{app:subsec:generalization_diversity}

\begin{prop}[Prop. \ref{prop:generalization_test_informal}]
\label{prop:generalization_test}
    Under Assumption \ref{asp:sub_gaussian}, \ref{asp:low_dim}, \ref{asp:lip}, there exists some constant $C_{xy}$ such that the following holds.
    For any $h\in\mathcal{H}$ and $(x_1,y_1),\cdots,(x_m,y_m)\overset{\textit{i.i.d.}}{\sim}\P$, define the empirical minimizer
    \begin{equation}
        \widehat{f}:= \argmin_{f\in\mathcal{F}} \frac{1}{m}\sum_{i=1}^m \ell(x_i,y_i,s_{f,h}).
    \end{equation}
    The population loss of $\widehat{f}$ can be bounded by
    \begin{equation}
        \E_{\{(x_i,y_i)\}_{i=1}^m\sim \P} \E_{(x,y)\sim\P} [\ell^\P(x,y,s_{\widehat{f},h})]\leq 4\inf_{f\in\mathcal{F}}\E_{(x,y)\sim\P}[\ell^\P(x,y,s_{f,h})] + C_{xy}\log^3(m) r_x,
    \end{equation}
    where $r_x=\frac{M_f^2S_fL_f\log\left(mL_fW_f(B_f\vee 1)M_fT\right)}{m}$.
\end{prop}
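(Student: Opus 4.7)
The plan is to establish a fast-rate $\widetilde{\mathcal{O}}(1/m)$ generalization bound via the classical local-Rademacher / Bernstein route, specialized to the conditional score-matching loss. The starting point is the denoising score-matching identity: conditioning on $(x_t,y,t)$, the density of $x_0$ is proportional to $\phi_t(x_t|x_0)\,p(x_0|y)$, so $\mathbb E[\nabla\log\phi_t(x_t|x_0)\mid x_t,y,t]=\nabla\log p_t(x_t|y)=s_*^{\mathbb P}(x_t,y,t)$. Expanding the quadratic and taking conditional expectation gives the key identity
\begin{equation*}
\mathbb E_{(x,y)\sim\mathbb P}[\ell^{\mathbb P}(x,y,s)]
=\frac{1}{T-T_0}\int_{T_0}^T\mathbb E_{(x,y),x_t}\bigl[\|s(x_t,y,t)-s_*^{\mathbb P}(x_t,y,t)\|^2\bigr]\,\dif t,
\end{equation*}
which rewrites the excess risk as a squared $L^2$ error and is the structural input that drives the fast rate.

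The second step is a Bernstein / variance condition. I would fix a truncation radius $R_x\lesssim\log^{1/2}m$ so that $\{\|x\|\leq R_x\}$ holds with probability $\geq 1-m^{-2}$ under Assumption \ref{asp:sub_gaussian}; on this event Lemma \ref{lem:lip_score_informal} yields $\|s_*^{\mathbb P}\|\lesssim \log m\cdot T_0^{-1}$ (up to constants) and, combined with the uniform bound $\|s\|\le M_f$ in $\mathcal F$, the per-sample loss is pointwise bounded by $C\,M_f^2\log m$. Writing $\ell^{\mathbb P}=\|s-s_*\|^2+2\langle s-s_*,s_*-g\rangle$ with $g=\nabla\log\phi_t(x_t|x)$ and using the identity above, standard manipulations give
\begin{equation*}
\mathrm{Var}_{(x,y)}\bigl(\ell^{\mathbb P}(x,y,s)\bigr)\leq C\,M_f^2\log^2 m\cdot\mathbb E_{(x,y)}[\ell^{\mathbb P}(x,y,s)],
\end{equation*}
i.e.\ a $(1,B)$-Bernstein class with $B=CM_f^2\log^2 m$. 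The contribution of the untruncated event is absorbed into an additive $m^{-2}$ tail via $M_f^2\log m$.

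The third step applies the excess-risk localization theorem (e.g.\ Bartlett--Bousquet--Mendelson sub-root fixed-point, or the offset Rademacher bound) to the class $\{\ell^{\mathbb P}(\cdot,\cdot,s_{f,h}):f\in\mathcal F\}$ with the variance proxy $B$ and uniform bound $CM_f^2\log m$. The entropy input is Lemma \ref{lem:covering_num}, which gives $\log\mathcal N(\mathcal F,\|\cdot\|_{L^\infty([-R_x,R_x]^{d_x+d_y+1})},\varepsilon)\lesssim S_fL_f\log\bigl(L_fW_f(B_f\vee 1)R_xM_fT/\varepsilon\bigr)$; taking $\varepsilon\asymp 1/m$ and Lipschitz-transferring the covering through the quadratic loss yields the fixed-point radius
\begin{equation*}
r\asymp\frac{M_f^2\log^2 m\cdot S_fL_f\log(mL_fW_f(B_f\vee 1)M_fT)}{m},
\end{equation*}
and the localization theorem delivers
\begin{equation*}
\mathbb E\,\mathbb E_{(x,y)\sim\mathbb P}[\ell^{\mathbb P}(x,y,s_{\widehat f,h})]\leq 4\inf_{f\in\mathcal F}\mathbb E[\ell^{\mathbb P}(x,y,s_{f,h})]+C_{xy}\log^3(m)\,r_x,
\end{equation*}
with the remaining $\log m$ factor absorbing the truncation tail and the union-bound over the $\varepsilon$-net.

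The main obstacle is the variance inequality in Step 2 under weak data assumptions: the optimal score $s_*$ is genuinely unbounded (only quadratically bounded in $\|x\|$ by Lemma \ref{lem:lip_score_informal}) and the integrand in $\ell$ involves $\phi_t$-scores that blow up as $t\to 0^+$. The key is that the ERM class is uniformly bounded by $M_f$, so even though $s_*$ is unbounded, the \emph{excess} loss $\ell^{\mathbb P}$ admits a clean quadratic structure whose variance is controlled by its mean up to a factor depending only on $M_f$ and the sub-gaussian truncation scale; carefully threading this through the localization argument, without letting $T_0^{-1}$ or $\|s_*\|$ enter outside logarithmic factors, is the delicate part. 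A secondary issue is the extra time integral over $[T_0,T]$: one includes $t$ in the covering argument using the $L^\infty$ bound on $\mathcal F$, which is why only $M_f$ (and not $T_0^{-1}$) appears in $r_x$. Once the variance inequality is in hand, Step 3 is standard bookkeeping.
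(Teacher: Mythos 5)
Your proposal takes essentially the same route as the paper's proof. Both arguments (i) truncate $x$ to a ball of radius $R\asymp\log^{1/2}m$ using the sub-gaussian tail of Assumption 3.1 and work with the truncated excess loss $\widetilde\ell$, (ii) verify a Bernstein self-bounding variance condition $\E[\widetilde\ell^{\,2}]\lesssim M\cdot\E[\widetilde\ell]$ with the uniform bound $M$ polylogarithmic in $m$ and $1/T_0$, (iii) control the local Rademacher complexity via Dudley's entropy integral and the covering-number bound of Lemma B.4, transferred to the loss class through the quadratic structure, and (iv) invoke a localized excess-risk bound (the paper's Lemma B.10, a Bousquet/BBM-type result) to get the fast $\widetilde{\mathcal O}(1/m)$ rate, then pass from high probability to expectation. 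Your observation that $T_0$ should enter only logarithmically is correct and is what the paper shows.

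One minor imprecision: you attribute "$\|s_*^{\P}\|\lesssim\log m\cdot T_0^{-1}$" to Lemma 3.4. Lemma 3.4 only supplies Lipschitz constants for $s_*$; to bound $\|s_*\|$ itself one also needs an anchoring bound at $x=0$, which the paper supplies in Lemma B.8 and which gives $\E_{x_t|x}\|s_*\|^2\lesssim \|x\|^6+1$, i.e.\ $\lesssim R^6\asymp\log^3 m$ on the truncated event, with \emph{no} $T_0^{-1}$ factor. The only place $T_0$ appears in the uniform bound $M$ is through $\E_{t}[d_x/\sigma_t^2]\asymp d_x\log(1/T_0)/T$, which is indeed logarithmic. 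This does not affect your conclusion, and you correctly flag that no polynomial $T_0^{-1}$ should survive, but the claimed intermediate bound and its source are off.
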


\begin{proof}
    Consider the truncated function class defined on $\R^{d_x}\times[0,1]^{D_y}$,
    \begin{equation}
        \Phi=\{(x,y)\mapsto \widetilde{\ell}(x,y,f):=(\ell(x,y,s_{f,h})-\ell(x,y,s_*^\P))\cdot\mathbbm{1}_{\|x\|_\infty\leq R}:f\in\mathcal{F}\},
    \end{equation}
    where the truncation radius $R\geq 1$ will be defined later.
    It is easy to show that with probability no less than $1-2m\exp(-C_1'R^2)$, it holds that $\|x_i\|_\infty\leq R$ for all $1\leq i \leq m$.
    Hence by definition, the empirical minimizer also satisfies $\widehat{f}= \argmin_{f\in\mathcal{F}} \frac{1}{m}\sum_{i=1}^m \widetilde{\ell}(x_i,y_i,f)$.
    Below we reason conditioned on this event and verify the conditions required in Lemma \ref{lem:local_rademacher}. 
    \begin{enumerate}[label=\textbf{Step \arabic*.}]
        \item To bound the individual loss,
        \begin{equation}
            \begin{aligned}
                \widetilde{\ell}(x,y,f)
                \leq \E_{t,x_t|x}\|s_{f,h}(x_t,y,t)-\nabla_x\log \phi_t(x_t|x)\|^2
                \lesssim M_f^2+ d_x\Big(\frac{\log(1/T_0)}{T-T_0}+1\Big). 
            \end{aligned}
        \end{equation}
        And by Lemma \ref{lem:bound score_t},
        \begin{equation}
            \begin{aligned}
                -\widetilde{\ell}(x,y,f)
                \leq \E_{t,x_t|x}\|s_*^\P(x_t,y,t)-\nabla_x\log \phi_t(x_t|x)\|^2\cdot \mathbbm{1}_{\|x\|_\infty\leq R}
                \lesssim C_X^{''}R^6+ d_x\Big(\frac{\log(1/T_0)}{T}+1\Big). 
            \end{aligned}
        \end{equation}
        Let $M:=C\left(C_X^{''}R^6+M_f^2+d_x\Big(\frac{\log(1/T_0)}{T}+1\Big)\right)$ and thus $|\widetilde{\ell}(x,y,f)|\leq M$.
        \item To bound the second order moment, we have
            \begin{equation}
                \begin{aligned}
                    &\E_{(x,y)\sim\P} \left[\mathbbm{1}_{\|x\|_\infty\leq R} \left(\ell(x,y,s_{f,h})-\ell(x,y,s_*^\P)\right)^2\right] \\
                    &= \E_{(x,y)\sim\P}\left[\mathbbm{1}_{\|x\|_\infty\leq R} \left(\E_{t,x_t|x}\|s_{f,h}(x_t,y,t)-\nabla_x\log\phi_t(x_t|x)\|^2-\|s_*^\P(x_t,y,t)-\nabla_x\log\phi_t(x_t|x)\|^2\right)^2\right] \\
                    &\leq \E_{(x,y)\sim\P} \left[\mathbbm{1}_{\|x\|_\infty\leq R}\left(\E_{t,x_t|x}\|s_{f,h}(x_t,y,t)-s_*^\P(x_t,y,t)\|^2\right)\right.\\
                    &\qquad\qquad\qquad \left.\cdot \left(\E_{t,x_t|x}\|s_{f,h}(x_t,y,t)+s_*^\P(x_t,y,t)-2\nabla_x\log\phi_t(x_t|x)\|^2\right)\right] \\
                    &\leq 4M\E_{(x,y)\sim\P} \left[\mathbbm{1}_{\|x\|_\infty\leq R}\left(\E_{t,x_t|x}\|s_{f,h}(x_t,y,t)-s_*^\P(x_t,y,t)\|^2\right)\right] \\
                    &\leq 4M \E_{(x,y)\sim\P}\left(\ell(x,y,s_{f,h})-\ell(x,y,s_*^\P)\right) \\
                    &\leq 4M\E_{(x,y)\sim\P} [\widetilde{\ell}(x,y,f)] + 8M^2\exp(-C_1'R^2).
                \end{aligned}
            \end{equation}
        \item To bound the local Rademacher complexity, note that
            \begin{equation}
                \Big\|\frac{1}{\sqrt{m}} \sum_{i=1}^m\sigma_i\widetilde{\ell}(x_i,y_i,f_1) - \frac{1}{\sqrt{m}}\sum_{i=1}^m\sigma_i\widetilde{\ell}(x_i,y_i,f_2) \Big\|_{\psi_2} \leq 4\|\widetilde{\ell}(\cdot,\cdot,f_1)-\widetilde{\ell}(\cdot,\cdot,f_2)\|_{L^2(\widehat{\P}_m)},
            \end{equation}
            where $\widehat{\P}_m:=\frac{1}{m}\sum_{i=1}^m\delta_{(x_i,y_i)}$.
            Define $\Phi_r:=\{\varphi\in\Phi:\frac{1}{m}\sum_{i=1}^m\varphi(x_i,y_i)^2\leq r\}$
            and it is easy to show that $\textbf{diam}\big(\Phi_r,\|\cdot\|_{L^2(\widehat{\P}_m)}\big)\leq 2\sqrt{r}$.
            By Dudley's bound \citep{van2014probability,wainwright2019high}, there exists an absolute constant $C_0$ such that for any $\theta>0$,
            \begin{equation}\label{eq:rademacher_bound_2}
                \mathcal{R}_m(\Phi_r)\leq C_0\left(\theta+\int_\theta^{2\sqrt{r}}\sqrt{\frac{\log\mathcal{N}(\Phi_r,\|\cdot\|_{L^2(\widehat{\P}_m)},\varepsilon)}{m}}\ \dif\varepsilon\right).
            \end{equation}
            Since $\|x_i\|\leq R$,
            \begin{equation}
                \begin{aligned}
                    \frac{1}{m}\sum_{i=1}^m(\widetilde{\ell}(x_i,y_i,f_1)-\widetilde{\ell}(x_i,y_i,f_2))^2
                    &= \frac{1}{m}\sum_{i=1}^m(\ell(x_i,y_i,s_{f_1,h})-\ell(x_i,y_i,s_{f_2,h}))^2 \\
                    &\leq \frac{1}{m}\sum_{i=1}^m \left[\E_{t,x_t|x_i}\|f_1-f_2\|^2\right]\cdot\left[\E_{t,x_t|x_i}\|f_1+f_2-2\nabla_x\log\phi_t\|^2\right] \\
                    &\leq \frac{4M}{m}\sum_{i=1}^m \E_{t,x_t|x_i}\|f_1(x_t,h(y_i),t)-f_2(x_t,h(y_i),t)\|^2.
                \end{aligned}
            \end{equation}
            Let $R_1=2R$. Since $x_t|x_i\sim\mathcal{N}(x_t;\alpha_tx_i,\sigma_t^2I)$, we have $\P(\|x_t\|_\infty\geq R_1)\leq d_x\P(|\mathcal{N}(0,1)|\leq R)\leq 2d_x\exp(-C_0'R^2)$ for some absolute constant $C_0'$.
            Therefore,
            \begin{equation}
                \begin{aligned}
                    &\E_{t,x_t|x_i}\|f_1(x_t,h(y_i),t)-f_2(x_t,h(y_i),t)\|^2 \\
                    &\qquad \leq \E_{t,x_t|x_i}[\mathbbm{1}_{\|x_t\|\leq R_1}] [\|f_1(x_t,h(y_i),t)-f_2(x_t,h(y_i),t)\|^2] + 8d_xM_f^2\exp(-C_0'R^2) \\
                    &\qquad \leq \|f_1-f_2\|^2_{L^\infty(\Omega_{R_1})} + 8d_xM_f^2\exp(-C_0'R^2)
                \end{aligned}
            \end{equation}
            where $\Omega_{R_1}:=[-R_1,R_1]^{d_x}\times[0,1]^{d_y}\times[T_0,T]$. Plug in the bound above,
            \begin{equation}
                \sqrt{\frac{1}{m}\sum_{i=1}^m(\widetilde{\ell}(x_i,y_i,f_1)-\widetilde{\ell}(x_i,y_i,f_2))^2}
                \leq 4M^{\frac{1}{2}}\|f_1-f_2\|_{L^\infty(\Omega_{R_1})} + 8d_x^{\frac{1}{2}}M\exp(-C_0'R^2/2).
            \end{equation}
            For any $\varepsilon\geq 16d_x^{\frac{1}{2}}M\exp(-C_0'R^2/2)$, according to \ref{lem:covering_num},
            \begin{equation}
                \begin{aligned}
                    \log\mathcal{N}(\Phi_r,\|\cdot\|_{L^2(\widehat{\P}_m)},\varepsilon)
                    &\leq \log\mathcal{N}(\mathcal{F},\|\cdot\|_{L^\infty(\Omega_{R_1})},\varepsilon/(8M^{\frac{1}{2}})) \\
                    &\leq C_4S_fL_f\log\left(\frac{L_fW_f(B_f\vee 1)(R\vee T)M}{\varepsilon}\right).
                \end{aligned}
            \end{equation}
            Plug in \eqref{eq:rademacher_bound_2} and let $\theta=16d_x^{\frac{1}{2}}M\exp(-C_0'R^2/2)$,
            \begin{equation}
                \begin{aligned}
                    \mathcal{R}_m(\Phi_r)
                    &\leq C_0\left(\theta+\int_\theta^{2\sqrt{r}}\sqrt{\frac{C_4S_fL_f\log\left(\frac{L_fW_f(B_f\vee 1)(R\vee T)M}{\varepsilon}\right)}{m}}\dif\varepsilon\right) \\
                    &\leq C_0\left(16d_x^{\frac{1}{2}}M\exp(-C_0'R^2/2)+\sqrt{\frac{C_4'S_fL_f\log\left(\frac{L_fW_f(B_f\vee 1)(R\vee T)M}{r}\right)\cdot r}{m}}\right) \\
                    &=: \widetilde{\mathcal{R}}_m(r)
                \end{aligned}
            \end{equation}
    \end{enumerate}
    
    Combine the three steps above, by Lemma \ref{lem:local_rademacher} with $B_0=8M^2\exp(-C_1'R^2),B=4M,b=M$, it holds that with probability no less than $1-2m\exp(-C_1'R^2)-\delta/2$, for any $f\in\mathcal{F}$,
    \begin{equation}\label{eq:bound_erm_1}
        \begin{aligned}
            \E_{(x,y)\sim\P} [\widetilde{\ell}(x,y,f)]
            &\leq \frac{2}{m}\sum_{i=1}^m \widetilde{\ell}(x_i,y_i,f) + C_5M\left(r_m^*+\frac{\log(\log(m)/\delta)}{m}\right) \\
            &\qquad + C_5\sqrt{\frac{M^2\log(\log(m)/\delta)}{m}}\exp(-C_1'R^2),
        \end{aligned}
    \end{equation}
    \begin{equation}\label{eq:bound_erm_2}
        \begin{aligned}
            \frac{1}{m}\sum_{i=1}^m \widetilde{\ell}(x_i,y_i,f)
            &\leq 2\E_{(x,y)\sim\P} [\widetilde{\ell}(x,y,f)] + C_5M\left(r_m^*+\frac{\log(\log(m)/\delta)}{m}\right) \\
            &\qquad + C_5\sqrt{\frac{M^2\log(\log(m)/\delta)}{m}}\exp(-C_1'R^2).
        \end{aligned}
    \end{equation}
    where $r_m^*$ is the largest fixed point of $\widetilde{\mathcal{R}}_m$, and it can be bounded as
    \begin{equation}
        r_m^*\leq C_6\left(d_x^{\frac{1}{2}}M\exp(-C_0'R^2/2)+\frac{S_fL_f\log\left(mL_fW_f(B_f\vee 1)(R\vee T)M\right)}{m}\right),
    \end{equation}
    for some absolute constant $C_6$.
    Moreover, we have
    \begin{equation}
        \left|\E_{(x,y)\sim\P}[\ell(x,y,s_{f,h})-\ell(x,y,s_*^\P)]
        - \E_{(x,y)\sim\P}[\widetilde{\ell}(x,y,f)]\right| \leq 2M\exp(-C_1'R^2).
    \end{equation}
    Combine this with \eqref{eq:bound_erm_1},\eqref{eq:bound_erm_2},
    \begin{equation}\label{eq:bound_erm_3}
        \begin{aligned}
            \E_{(x,y)\sim\P} [\ell(x,y,s_{f,h})-\ell(x,y,s_*^\P)]
            &\leq \frac{2}{m}\sum_{i=1}^m [\ell(x_i,y_i,s_{f,h})-\ell(x_i,y_i,s_*^\P)] \\
            &\qquad + C_5M\left(r_m^*+\frac{\log(\log(m)/\delta)}{m}+\exp(-C_1'R^2)\right), \\
        \end{aligned}
    \end{equation}
    \begin{equation}\label{eq:bound_erm_4}
        \begin{aligned}
            \frac{1}{m}\sum_{i=1}^m [\ell(x_i,y_i,s_{f,h})-\ell(x_i,y_i,s_*^\P)]
            &\leq 2\E_{(x,y)\sim\P} [\ell(x,y,s_{f,h})-\ell(x,y,s_*^\P)] \\
            &\qquad + C_5M\left(r_m^*+\frac{\log(\log(m)/\delta)}{m}+\exp(-C_1'R^2)\right), \\
        \end{aligned}
    \end{equation}
    
    Plug in the definition of $M=C\left(C_X^{''}R^6+M_f^2+d_x\Big(\frac{\log(1/T_0)}{T}+1\Big)\right)$ and let $R=C\log^{\frac{1}{2}}(md_xM_f/\delta)$ for some large constant $C$. Hence \eqref{eq:bound_erm_3} and \eqref{eq:bound_erm_4} reduce to
    \begin{align}
        \E_{(x,y)\sim\P} [\ell^\P(x,y,s_{f,h})]
        \leq \frac{2}{m}\sum_{i=1}^m [\ell(x_i,y_i,s_{f,h})-\ell(x_i,y_i,s_*^\P)] + C_7M_f^2\log^3(m/\delta)\left(r_m^\dagger+\frac{\log(\log(m)/\delta)}{m}\right), \\
        \frac{1}{m}\sum_{i=1}^m [\ell(x_i,y_i,s_{f,h})-\ell(x_i,y_i,s_*^\P)]
        \leq 2\E_{(x,y)\sim\P} [\ell^\P(x,y,s_{f,h})] + C_7M_f^2\log^3(m/\delta)\left(r_m^\dagger+\frac{\log(\log(m)/\delta)}{m}\right),
    \end{align}
    where $r_m^\dagger:=\frac{S_fL_f\log\left(mL_fW_f(B_f\vee 1)TM_f\log(1/\delta)\right)}{m}$.

    Therefore, we obtain that with probability no less than $1-\delta$, the population loss of the empirical minimizer $\widehat{f}$ can be bounded by
    \begin{equation}
        \begin{aligned}
            \E_{(x,y)\sim\P}[\ell^\P(x,y,s_{\widehat{f},h})]
            &\leq \frac{2}{m}\sum_{i=1}^m [\ell(x_i,y_i,s_{\widehat{f},h})-\ell(x_i,y_i,s_*^\P)] + 2C_7M_f^2\log^3(m/\delta)\left(r_m^\dagger+\frac{\log(1/\delta)}{m}\right) \\
            &\leq \inf_{f\in\mathcal{F}}\frac{2}{m}\sum_{i=1}^m [\ell(x_i,y_i,s_{f,h})-\ell(x_i,y_i,s_*^\P)] + 2C_7M_f^2\log^3(m/\delta)\left(r_m^\dagger+\frac{\log(1/\delta)}{m}\right) \\
            &\leq 4\inf_{f\in\mathcal{F}}\E_{(x,y)\sim\P}[\ell^\P(x,y,s_{f,h})] + 6C_7M_f^2\log^3(m/\delta)\left(r_m^\dagger+\frac{\log(1/\delta)}{m}\right),
        \end{aligned}
    \end{equation}
    We conclude the proof by noticing that $\E[X]=\int_0^\infty\P(X\geq x)\dif x$ and plugging in the bound above.
\end{proof}

\begin{prop}[Prop. \ref{prop:generalization_train_informal}]
\label{prop:generalization_train}
    There exists some constant $C_Z,C_R$ such that the following holds.
    For any $\P^1,\cdots,\P^K$, let $x_1^k,\cdots,x_n^k\overset{\textit{i.i.d.}}{\sim}\P^k$ for any $k$ and $(x_i^k)_{i,k}$ are all independent.
    Consider the empirical minimizer
    \begin{equation}
        \widehat{\vf}, \widehat{h}=\argmin_{\vf\in\mathcal{F}^{\otimes K},h\in\mathcal{H}} \frac{1}{nK}\sum_{k=1}^K\sum_{i=1}^n \ell(x_i^k,y_i^k,s_{f^k,h}).
    \end{equation}
    For any $\delta\in(0,1)$, if the configuration of $\mathcal{F}$ satisfies $R_f\geq C_R\log^{\frac{1}{2}}(nKM_f/\delta)$, then with probability no less than $1-\delta$,
    the population loss of $\widehat{\vf},\widehat{h}$ can be bounded by
    \begin{equation}
        \frac{1}{K}\sum_{k=1}^K\E_{(x,y)\sim\P^k}\ell^{\P^k}(x,y,s_{\widehat{f}^k,\widehat{h}})\leq \inf_{\vf\in\mathcal{F}^{\otimes K},h\in\mathcal{H}}\frac{4}{K}\sum_{k=1}^K\E_{(x,y)\sim\P^k}[\ell^\P(x,y,s_{f^k,h})] + C_Z\log^3(nK/\delta)\left(r_z+\frac{\log(1/\delta)}{nK}\right),
    \end{equation}
    where $r_z:=\frac{M_f^2\left[KS_fL_f\log\left(nL_fW_f(B_f\vee 1)M_fT\log(1/\delta)\right)+S_hL_h\log\left(nKL_hW_h(B_h\vee 1)M_f\gamma_f\log(1/\delta)\right)\right]}{nK}$.
\end{prop}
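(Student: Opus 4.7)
The plan is to mirror the argument for Proposition~\ref{prop:generalization_test} but applied to the joint hypothesis class $\mathcal{F}^{\otimes K}\times\mathcal{H}$ evaluated on $nK$ samples, with one careful modification: the covering number of the composed class $\{s_{f,h}:f\in\mathcal{F},h\in\mathcal{H}\}$ must be split into a per-task factor (tensorized over $K$ tasks) and a single shared $\mathcal{H}$ factor, using the Lipschitz property $\|f(x,w,t)-f(x,w',t)\|\le\gamma_f\|w-w'\|_\infty$ built into $NN_f$.

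First, I would set the truncation radius $R=C\log^{1/2}(nK d_x M_f/\delta)$ so that by the sub-Gaussian tail in Assumption~\ref{asp:sub_gaussian} and a union bound over all $nK$ source samples, the event $E:=\{\|x_i^k\|_\infty\le R\ \forall i,k\}$ holds with probability at least $1-\delta/2$; the hypothesis $R_f\gtrsim\log^{1/2}(nKM_f/\delta)$ then guarantees Lemma~\ref{lem:bound score_t} applies uniformly. Working on $E$, define the truncated excess loss
\begin{equation}
\widetilde{\ell}(x,y,f,h;\P):=\bigl(\ell(x,y,s_{f,h})-\ell(x,y,s_*^{\P})\bigr)\mathbbm{1}_{\|x\|_\infty\le R},
\end{equation}
and the joint function class $\Phi:=\{(k,x,y)\mapsto\widetilde{\ell}(x,y,f^k,h;\P^k):\vf\in\mathcal{F}^{\otimes K},h\in\mathcal{H}\}$ indexed over the task-augmented sample $\{(k,x_i^k,y_i^k)\}$. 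The per-sample boundedness $|\widetilde\ell|\le M$ and the variance-to-mean bound $\E[\widetilde\ell^2]\le 4M\,\E[\widetilde\ell]+O(M^2\mathrm{e}^{-C_1'R^2})$ go through verbatim from Proposition~\ref{prop:generalization_test}, with the same $M=O(C_X''R^6+M_f^2+d_x(\log(1/T_0)/T+1))$.

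The key new step is the local Rademacher complexity over $\Phi$. By the same calculation as before, for any two $(\vf_1,h_1),(\vf_2,h_2)$,
\begin{equation}
\frac{1}{nK}\sum_{k,i}\bigl(\widetilde\ell(x_i^k,y_i^k,f_1^k,h_1)-\widetilde\ell(x_i^k,y_i^k,f_2^k,h_2)\bigr)^2 \le \frac{4M}{nK}\sum_{k,i}\E_{t,x_t|x_i^k}\|f_1^k(x_t,h_1(y_i^k),t)-f_2^k(x_t,h_2(y_i^k),t)\|^2.
\end{equation}
Splitting each summand via the triangle inequality and using $\|f_2^k(x_t,h_1(y),t)-f_2^k(x_t,h_2(y),t)\|\le\gamma_f\|h_1-h_2\|_{L^\infty}$, this distance is controlled by $4M^{1/2}\bigl(\max_k\|f_1^k-f_2^k\|_{L^\infty(\Omega_{R_1})}+\gamma_f\|h_1-h_2\|_{L^\infty([0,1]^{D_y})}\bigr)$ plus a tail term of order $d_x^{1/2}M\mathrm{e}^{-C_0'R^2/2}$. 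Consequently, using Lemma~\ref{lem:covering_num} and the elementary fact that an $\varepsilon$-cover of the product $\mathcal{F}^{\otimes K}$ has size at most $\mathcal{N}(\mathcal{F},\varepsilon)^K$,
\begin{equation}
\log\mathcal{N}(\Phi_r,\|\cdot\|_{L^2(\widehat\P_{nK})},\varepsilon)\lesssim K\,S_fL_f\log\!\tfrac{L_fW_f(B_f\vee1)(R\vee T)M}{\varepsilon}+S_hL_h\log\!\tfrac{L_hW_h(B_h\vee1)M\gamma_f}{\varepsilon},
\end{equation}
and Dudley's entropy integral then yields a local Rademacher bound $\widetilde{\mathcal R}_{nK}(r)$ whose largest fixed point is $r_{nK}^*\lesssim r_z$ up to the exponentially small tail term.

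Finally I would invoke Lemma~\ref{lem:local_rademacher} with $B=4M$, $b=M$ on the joint class $\Phi$ evaluated at the $nK$ task-augmented samples, obtaining two-sided inequalities between empirical and population averages $\frac{1}{K}\sum_k\E_{\P^k}[\ell^{\P^k}(\cdot,s_{f^k,h})]$ and $\frac{1}{nK}\sum_{k,i}[\ell-\ell(s_*^{\P^k})]$, uniformly over $(\vf,h)\in\mathcal{F}^{\otimes K}\times\mathcal{H}$. Sandwiching the ERM $(\widehat\vf,\widehat h)$ between the empirical objective at any competitor $(\vf,h)$ and its population loss, choosing $R$ as above to swallow the $\exp(-C_1'R^2)$ terms into a $\log^3(nK/\delta)/(nK)$ remainder, and taking the infimum over $(\vf,h)$ delivers the stated bound with the advertised $r_z$. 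The main technical obstacle is the covering argument for the composed class $s_{f,h}$: one must carefully route $h$-perturbations through the Lipschitz bound $\gamma_f$ while tensorizing over the $K$ independent score networks so that only \emph{one} $\log\widetilde{\mathcal N}_\mathcal{H}$ factor (not $K$) appears in the final complexity.
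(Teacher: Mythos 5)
Your proposal mirrors the paper's proof closely: the same truncation of $x$ to a ball of radius $R\asymp\log^{1/2}(nK d_x M_f/\delta)$, the same per-sample boundedness and variance-to-mean bound $\E[\widetilde\ell^2]\le 4M\,\E[\widetilde\ell]+O(M^2 e^{-C_1'R^2})$, the same local Rademacher machinery, and crucially the same decomposition of the covering number of the composed class into $K$ copies of $\log\mathcal{N}(\mathcal{F})$ plus a single copy of $\log\mathcal{N}(\mathcal{H})$, routed through the Lipschitz parameter $\gamma_f$. This is exactly the paper's argument.

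The one place where you cut a corner is the final concentration step. You propose to invoke Lemma~\ref{lem:local_rademacher} ``on the joint class $\Phi$ evaluated at the $nK$ task-augmented samples,'' but that lemma is stated for samples drawn i.i.d.\ from a \emph{single} distribution $\P$, whereas the task-augmented data $\{(k,x_i^k,y_i^k)\}_{i,k}$ are independent but \emph{not} identically distributed --- the index $k$ is deterministic and takes each value exactly $n$ times. This is precisely why the paper proves a separate multi-task variant, Lemma~\ref{lem:local_rademacher_meta}, which replays the Bousquet-style peeling argument with Massart's Theorem~3 (a Talagrand concentration inequality valid for independent, non-identically-distributed summands) in place of the i.i.d.\ version, and relates the empirical average $\frac{1}{nK}\sum_{k,i}\varphi(x_i^k)$ to $\E_{\widehat{\P}^{(K)}}[\varphi]=\frac{1}{K}\sum_k\E_{\P^k}[\varphi]$. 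As written, your citation of Lemma~\ref{lem:local_rademacher} does not apply; the fix is only to swap in the multi-task lemma, after which your covering-number and local-Rademacher computations carry through verbatim to yield the stated bound.
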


\begin{proof}
    Throughout the proof, we will use $z=(k,x,y)$ to denote the tuple of task index $k$ and data $(x,y)$.
    With a little abuse of notation, we will also let $s_*^k=s_*^{\P^k}$. 
    Consider the function class defined on $[K]\times\R^{d_x}\times[0,1]^{D_y}$,
    \begin{equation}
        \Phi=\left\{z=(k,x,y)\mapsto \widetilde{\ell}(z,\vf,h):=(\ell(x,y,s_{f^k,h})-\ell(x,y,s_*^k))\cdot\mathbbm{1}_{\|x\|_\infty\leq R}: \vf\in\mathcal{F}^{\otimes K},h\in\mathcal{H}\right\},
    \end{equation}
    where $1\leq R\leq \frac{R_f}{2}$ will be specified later. 
    It is easy to show that with probability no less than $1-2nK\exp(-C_1'R^2)$, it holds that $\|x_i^k\|_\infty\leq R$ for all $i,k$. 
    Hence by definition, the empirical minimizer also satisfies
    \begin{equation}
        \widehat{\vf},\widehat{h}=\argmin_{\vf\in\mathcal{F}^{\otimes K},h\in\mathcal{H}} \frac{1}{nK}\sum_{k=1}^K\sum_{i=1}^n\widetilde{\ell}(z_i^k,\vf,h).
    \end{equation}
    where $z_i^k=(k,x_i^k,y_i^k)$. 
    Below we reason conditioned on this event and verify the conditions in Lemma \ref{lem:local_rademacher_meta}.

    Following Step 1 and 2 in Proposition \ref{prop:generalization_test}, we have for any $\vf\in\mathcal{F}^{\otimes K},h\in\mathcal{H}$,
    \begin{equation}
        |\widetilde{\ell}(z,\vf,h)|\leq M:=C\left(C_X^{''}R^6+M_f^2+d_x\Big(\frac{\log(1/T_0)}{T}+1\Big)\right).
    \end{equation}
    \begin{equation}
        \frac{1}{K}\sum_{k=1}^K\E_{(x,y)\sim\P^k} [\widetilde{\ell}(z^k,\vf,h)^2] \leq \frac{4M}{K}\sum_{k=1}^K\E_{(x,y)\sim\P^k} [\widetilde{\ell}(z^k,\vf,h)] + 8M^2\exp(-C_1'R^2).
    \end{equation}
    For the local Rademacher complexity bound, note that
    \begin{equation}
        \Big\|\frac{1}{\sqrt{nK}} \sum_{k=1}^K\sum_{i=1}^n\sigma_i^k\widetilde{\ell}(z_i^k,\vf_1,h_1) - \frac{1}{\sqrt{nK}} \sum_{k=1}^K\sum_{i=1}^n\sigma_i^k\widetilde{\ell}(z_i^k,\vf_2,h_2) \Big\|_{\psi_2} \leq 4\|\widetilde{\ell}(\cdot,\vf_1,h_1)-\widetilde{\ell}(\cdot,\vf_2,h_2)\|_{L^2(\widehat{\P}_n^{(K)})},
    \end{equation}
    where $\widehat{\P}_n^{(K)}:=\frac{1}{nK}\sum_{k=1}^K\sum_{i=1}^n\delta_{z_i^k}$ and $\textbf{diam}\big(\Phi_r,\|\cdot\|_{L^2(\widehat{\P}_n^{(K)})}\big)\leq 2\sqrt{r}$.
    By Dudley's bound \citep{van2014probability,wainwright2019high}, there exists an absolute constant $C_0$ such that for any $\theta>0$,
    \begin{equation}\label{eq:rademacher_bound_meta_2}
        \mathcal{R}_{K,n}(\Phi_r)\leq C_0\left(\theta+\int_\theta^{2\sqrt{r}}\sqrt{\frac{\log\mathcal{N}(\Phi_r,\|\cdot\|_{L^2(\widehat{\P}_n^{(K)})},\varepsilon)}{nK}}\ \dif\varepsilon\right).
    \end{equation}
    Since $\|x_i^k\|_\infty\leq R$,
    \begin{equation}
        \begin{aligned}
            &\frac{1}{nK}\sum_{k=1}^K\sum_{i=1}^n(\widetilde{\ell}(z_i^k,\vf_1,h_1)-\widetilde{\ell}(z_i^k,\vf_2,h_2))^2 \\
            &\qquad= \frac{1}{nK}\sum_{k=1}^K\sum_{i=1}^n(\ell(x_i^k,y_i^k,s_{f_1^k,h_1})-\ell(x_i^k,y_i^k,s_{f_2^k,h_2}))^2 \\
            &\qquad\leq \frac{1}{nK}\sum_{k=1}^K\sum_{i=1}^n \left[\E_{t,x_t|x_i^k}\|f_1^k-f_2^k\|^2\right]\cdot\left[\E_{t,x_t|x_i^k}\|f_1^k+f_2^k-2\nabla_x\log\phi_t\|^2\right] \\
            &\qquad\leq \frac{4M}{nK}\sum_{k=1}^K\sum_{i=1}^n \E_{t,x_t|x_i^k}\|f_1^k(x_t,h_1(y_i^k),t)-f_2^k(x_t,h_2(y_i^k),t)\|^2 \\
            &\qquad\leq \frac{8M}{nK}\sum_{k=1}^K\sum_{i=1}^n \E_{t,x_t|x_i^k}\|f_1^k(x_t,h_1(y_i^k),t)-f_2^k(x_t,h_1(y_i^k),t)\|^2 \\
            &\qquad\qquad + \frac{8M}{nK}\sum_{k=1}^K\sum_{i=1}^n \E_{t,x_t|x_i^k}\|f_2^k(x_t,h_1(y_i^k),t)-f_2^k(x_t,h_2(y_i^k),t)\|^2.
        \end{aligned}
    \end{equation}
    Let $R_1=2R$. Since $x_t|x_i^k\sim\mathcal{N}(x_t;\alpha_tx_i^k,\sigma_t^2I)$, we have $\P(\|x_t\|_\infty\geq R_1)\leq d_x\P(|\mathcal{N}(0,1)|\leq R)\leq 2d_x\exp(-C_0'R^2)$ for some absolute constant $C_0'$.
    Therefore,
    \begin{equation}
        \begin{aligned}
            &\E_{t,x_t|x_i^k}\|f_1^k(x_t,h_1(y_i^k),t)-f_2^k(x_t,h_1(y_i^k),t)\|^2 \\
            &\qquad \leq \E_{t,x_t|x_i^k}[\mathbbm{1}_{\|x_t\|\leq R_1}] [\|f_1^k(x_t,h_1(y_i^k),t)-f_2^k(x_t,h_1(y_i^k),t)\|^2] + 8d_xM_f^2\exp(-C_0'R^2) \\
            &\qquad \leq \|f_1^k-f_2^k\|^2_{L^\infty(\Omega_{R_1})} + 8d_xM_f^2\exp(-C_0'R^2),
        \end{aligned}
    \end{equation}
    where $\Omega_{R_1}:=[-R_1,R_1]^{d_x}\times[0,1]^{d_y}\times[T_0,T]$. 
    Moreover, notice that $R_f\geq 2R=R_1$,
    \begin{equation}
        \begin{aligned}
            &\E_{t,x_t|x_i^k}\|f_2^k(x_t,h_1(y_i^k),t)-f_2^k(x_t,h_2(y_i^k),t)\|^2 \\
            &\qquad \leq \E_{t,x_t|x_i^k}[\mathbbm{1}_{\|x_t\|\leq R_f}] [\|f_2^k(x_t,h_1(y_i^k),t)-f_2^k(x_t,h_2(y_i^k),t)\|^2] + 8d_xM_f^2\exp(-C_0'R^2) \\
            &\qquad \leq \gamma_f^2\|h_1-h_2\|^2_{L^\infty([0,1]^{D_y})} + 8d_xM_f^2\exp(-C_0'R^2).
        \end{aligned}
    \end{equation}
    Plug in the bound above,
    \begin{equation}
        \begin{aligned}
            &\sqrt{\frac{1}{nK}\sum_{k=1}^K\sum_{i=1}^n(\widetilde{\ell}(z_i^k,\vf_1,h_1)-\widetilde{\ell}(z_i^k,\vf_2,h_2))^2} \\
            &\qquad \leq 8M^{\frac{1}{2}}\left(\max_k\|f_1^k-f_2^k\|_{L^\infty(\Omega_{R_1})}+\gamma_f\|h_1-h_2\|_{L^\infty([0,1]^{D_y})}\right) + 16d_x^{\frac{1}{2}}M\exp(-C_0'R^2/2).
        \end{aligned}
    \end{equation}
    For any $\varepsilon\geq 32d_x^{\frac{1}{2}}M\exp(-C_0'R^2/2)$, according to Lemma \ref{lem:covering_num},
    \begin{equation}
        \begin{aligned}
            &\log\mathcal{N}(\Phi_r,\|\cdot\|_{L^2(\widehat{\P}_n^{(K)})},\varepsilon) \\
            &\qquad \leq K\log\mathcal{N}(\mathcal{F},\|\cdot\|_{L^\infty(\Omega_{R_1})},\varepsilon/(16M^{\frac{1}{2}})) + \log\mathcal{N}(\mathcal{H},\|\cdot\|_{L^\infty([0,1]^{D_y})},\varepsilon/(16\gamma_fM^{\frac{1}{2}})) \\
            &\qquad \leq C_4KS_fL_f\log\left(\frac{L_fW_f(B_f\vee 1)(R\vee T)M}{\varepsilon}\right)+C_4S_hL_h\log\left(\frac{L_hW_h(B_h\vee 1)M\gamma_f}{\varepsilon}\right).
        \end{aligned}
    \end{equation}
    Plug in \eqref{eq:rademacher_bound_2} and let $\theta=32d_x^{\frac{1}{2}}M\exp(-C_0'R^2/2)$,
    \begin{equation}
        \begin{aligned}
            \mathcal{R}_{K,n}(\Phi_r)
            &\leq C_0\left(\theta+\int_\theta^{2\sqrt{r}}\sqrt{\frac{C_4KS_fL_f\log\left(\frac{L_fW_f(B_f\vee 1)(R\vee T)M}{\varepsilon}\right)+C_4S_hL_h\log\left(\frac{L_hW_h(B_h\vee 1)M\gamma_f}{\varepsilon}\right)}{nK}}\dif\varepsilon\right) \\
            &\leq C_0\sqrt{\frac{C_4'\left[KS_fL_f\log\left(\frac{L_fW_f(B_f\vee 1)(R\vee T)M}{r}\right)+ S_hL_h\log\left(\frac{L_hW_h(B_h\vee 1)M\gamma_f}{r}\right)\right]\cdot r}{nK}} \\
            &\qquad + C_032d_x^{\frac{1}{2}}M\exp(-C_0'R^2/2) \\
            &=: \widetilde{\mathcal{R}}_{K,n}(r).
        \end{aligned}
    \end{equation}

    Combine the arguments above, by Lemma \ref{lem:local_rademacher_meta} with $B_0=8M^2\exp(-C_1'R^2),B=4M,b=M$, it holds that with probability no less than $1-2nK\exp(-C_1'R^2)-\delta/2$, for any $\vf\in\mathcal{F}^{\otimes K},h\in\mathcal{H}$,
    \begin{equation}\label{eq:bound_erm_meta_1}
        \begin{aligned}
            \E_{z\sim\widehat{\P}^{(K)}} [\widetilde{\ell}(z,\vf,h)]
            &\leq \frac{2}{nK}\sum_{k=1}^K\sum_{i=1}^n \widetilde{\ell}(z_i^k,\vf,h) + C_5M\left(r_{K,n}^*+\frac{\log(\log(nK)/\delta)}{nK}\right) \\
            &\qquad + C_5\sqrt{\frac{M^2\log(\log(nK)/\delta)}{nK}}\exp(-C_1'R^2),
        \end{aligned}
    \end{equation}
    \begin{equation}\label{eq:bound_erm_meta_2}
        \begin{aligned}
            \frac{1}{nK}\sum_{k=1}^K\sum_{i=1}^n \widetilde{\ell}(z_i^k,\vf,h)
            &\leq 2\E_{z\sim\widehat{\P}^{(K)}} [\widetilde{\ell}(z,\vf,h)] + C_5M\left(r_{K,n}^*+\frac{\log(\log(nK)/\delta)}{nK}\right) \\
            &\qquad + C_5\sqrt{\frac{M^2\log(\log(nK)/\delta)}{nK}}\exp(-C_1'R^2).
        \end{aligned}
    \end{equation}
    where $r_{K,n}^*$ is the largest fixed point of $\widetilde{\mathcal{R}}_{K,n}$, and it can be bounded by
    \begin{equation}
        r_{K,n}^*\leq C_6\left(d_x^{\frac{1}{2}}M_f\exp(-C_0'R^2/2)+\frac{KS_fL_f\log\left(nL_fW_f(B_f\vee 1)(R\vee T)M\right)+S_hL_h\log\left(nKL_hW_h(B_h\vee 1)M\gamma_f\right)}{nK}\right),
    \end{equation}
    for some absolute constant $C_6$.
    Moreover, we have
    \begin{equation}
        \left|\frac{1}{K}\sum_{k=1}^K\E_{(x,y)\sim\P^k}[\ell(x,y,s_{f^k,h})-\ell(x,y,s_*^k)]
        - \E_{z\sim\widehat{\P}^{(K)}}[\widetilde{\ell}(z,\vf,h)] \right|\leq 2M\exp(-C_1'R^2).
    \end{equation}
    Combine this with \eqref{eq:bound_erm_meta_1},\eqref{eq:bound_erm_meta_2},
    \begin{equation}\label{eq:bound_erm_meta_3}
        \begin{aligned}
            \frac{1}{K}\sum_{k=1}^K\E_{(x,y)\sim\P^k} [\ell(x,y,s_{f^k,h})-\ell(x,y,s_*^\P)]
            &\leq \frac{2}{nK}\sum_{k=1}^K\sum_{i=1}^n [\ell(x_i^k,y_i^k,s_{f^k,h})-\ell(x_i^k,y_i^k,s_*^k)] \\
            &\qquad + C_5M\left(r_{K,n}^*+\frac{\log(\log(nK)/\delta)}{nK}+\exp(-C_1'R^2)\right), \\
        \end{aligned}
    \end{equation}
    \begin{equation}\label{eq:bound_erm_meta_4}
        \begin{aligned}
            \frac{1}{nK}\sum_{k=1}^K\sum_{i=1}^n [\ell(x_i^k,y_i^k,s_{f^k,h})-\ell(x_i^k,y_i^k,s_*^k)]
            &\leq \frac{2}{K}\sum_{k=1}^K\E_{(x,y)\sim\P^k} [\ell(x,y,s_{f^k,h})-\ell(x,y,s_*^\P)] \\
            &\qquad + C_5M\left(r_{K,n}^*+\frac{\log(\log(nK)/\delta)}{nK}+\exp(-C_1'R^2)\right), \\
        \end{aligned}
    \end{equation}
    
    Plug in the definition of $M=C\left(C_X^{''}R^6+M_f^2+d_x\Big(\frac{\log(1/T_0)}{T}+1\Big)\right)$ and define $R=C'\log^{\frac{1}{2}}(nKd_xM_f/\delta)$ for some large constant $C'$. Hence \eqref{eq:bound_erm_meta_3} and \eqref{eq:bound_erm_meta_4} reduce to
    \begin{equation}
        \begin{aligned}
            \frac{1}{K}\sum_{k=1}^K\E_{(x,y)\sim\P^k} [\ell(x,y,s_{f^k,h})-\ell(x,y,s_*^\P)]
            &\leq \frac{2}{nK}\sum_{k=1}^K\sum_{i=1}^n [\ell(x_i^k,y_i^k,s_{f^k,h})-\ell(x_i^k,y_i^k,s_*^k)] \\
            &\qquad + C_7M_f^2\log^3(nK/\delta)\left(r_{K,n}^\dagger+\frac{\log(\log(nK)/\delta)}{nK}\right),
        \end{aligned}
    \end{equation}
    \begin{equation}
        \begin{aligned}
            \frac{1}{nK}\sum_{k=1}^K\sum_{i=1}^n [\ell(x_i^k,y_i^k,s_{f^k,h})-\ell(x_i^k,y_i^k,s_*^k)]
            &\leq \frac{2}{K}\sum_{k=1}^K\E_{(x,y)\sim\P^k} [\ell(x,y,s_{f^k,h})-\ell(x,y,s_*^\P)] \\
            &\qquad + C_7M_f^2\log^3(nK/\delta)\left(r_{K,n}^\dagger+\frac{\log(\log(nK)/\delta)}{nK}\right),
        \end{aligned}
    \end{equation}
    where $r_{K,n}^\dagger:=\frac{KS_fL_f\log\left(nL_fW_f(B_f\vee 1)M_fT\log(1/\delta)\right)+S_hL_h\log\left(nKL_hW_h(B_h\vee 1)M_f\gamma_f\log(1/\delta)\right)}{nK}$.

    Therefore, we obtain that with probability no less than $1-\delta$, the population loss of the empirical minimizer $\widehat{\vf},\widehat{h}$ can be bounded by
    \begin{equation}
        \begin{aligned}
            &\frac{1}{K}\sum_{k=1}^K\E_{(x,y)\sim\P^k}[\ell^{\P^k}(x,y,s_{\widehat{f}^k,\widehat{h}})] \\
            &\qquad\leq \frac{2}{nK}\sum_{k=1}^K\sum_{i=1}^m [\ell(x_i^k,y_i^k,s_{\widehat{f},h})-\ell(x_i^k,y_i^k,s_*^k)] + 2C_7M_f^2\log^3(nK/\delta)\left(r_{K,n}^\dagger+\frac{\log(1/\delta)}{nK}\right) \\
            &\qquad\leq \inf_{\vf\in\mathcal{F}^{\otimes K},h\in\mathcal{H}}\frac{2}{nK}\sum_{k=1}^K\sum_{i=1}^n [\ell(x_i^k,y_i^k,s_{f^k,h})-\ell(x_i^k,y_i^k,s_*^k)] + 2C_7M_f^2\log^3(nK/\delta)\left(r_{K,n}^\dagger+\frac{\log(1/\delta)}{nK}\right) \\
            &\qquad\leq \inf_{\vf\in\mathcal{F}^{\otimes K},h\in\mathcal{H}}\frac{4}{K}\sum_{k=1}^K\E_{(x,y)\sim\P^k}[\ell^\P(x,y,s_{f^k,h})] + 6C_7M_f^2\log^3(nK/\delta)\left(r_{K,n}^\dagger+\frac{\log(1/\delta)}{nK}\right),
        \end{aligned}
    \end{equation}
    which concludes the proof.
\end{proof}

\begin{thm}[Thm. \ref{thm:generalization_all_diversity_informal}]
\label{thm:generalization_all_diversity}
    Under Assumption \ref{asp:sub_gaussian}, \ref{asp:low_dim}, \ref{asp:lip}, suppose $\P^1,\cdots,\P^K$ are $(\nu,\Delta)$-diverse over target distribution $\P^0$ given $\mathcal{F},\mathcal{H}$.
    There exists some constant $C,C_R$ such that the following holds.
    Define the empirical minimizer of training task and new task as
    \begin{equation}
        \widehat{\vf},\widehat{h}=\argmin_{\vf\in\mathcal{F}^{\otimes K},h\in\mathcal{H}} \frac{1}{nK}\sum_{k=1}^K\sum_{i=1}^n\ell(x_i^k,y_i^k,s_{f^k,h}),
    \end{equation}
    \begin{equation}
        \widehat{f}^{\P^0}:= \argmin_{f\in\mathcal{F}} \frac{1}{m}\sum_{i=1}^m \ell(x_i^0,y_i^0,s_{f,\widehat{h}}).
    \end{equation}
    If $R_f\geq C_R\log^{\frac{1}{2}}(nKM_f/\delta)$,
    then with probability no less than $1-\delta$, the expected population loss of new task can be bounded by
    \begin{equation}
        \begin{aligned}
            \E_{\{(x_i,y_i)\}_{i=1}^m} \E_{(x,y)\sim\P^0} [\ell^{\P^0}(x,y,s_{\widehat{f}^{\P^0},\widehat{h}})]
            &\lesssim \frac{1}{\nu}\inf_{h\in\mathcal{H}} \frac{1}{K}\sum_{k=1}^K \inf_{f\in\mathcal{F}}\E_{(x,y)\sim\P^k} [\ell^{\P^k} (x,y,s_{f,h})] + \Delta \\
            &\quad +C \left(\frac{\log^3(m)\log\mathcal{N}_\mathcal{F}}{m} + \frac{\log^3(nK/\delta)(K\log\mathcal{N}_\mathcal{F}+\log(\mathcal{N}_\mathcal{H}/\delta))}{\nu nK}\right).
        \end{aligned}
    \end{equation}
    where
    \begin{equation}
        \log\mathcal{N}_\mathcal{F}:=M_f^2S_fL_f\log\left(mnL_fW_f(B_f\vee 1)M_fT\log(1/\delta)\right),
    \end{equation}
    \begin{equation}
        \log\mathcal{N}_\mathcal{H}:=S_hL_h\log\left(nKL_hW_h(B_h\vee 1)M_f\gamma_f\log(1/\delta)\right).
    \end{equation}
\end{thm}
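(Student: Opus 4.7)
The plan is to chain the two single-phase guarantees (Propositions \ref{prop:generalization_test} and \ref{prop:generalization_train}) through the task-diversity hypothesis. The three ingredients couple as follows: pre-training ERM controls the \emph{averaged source-task} risk of $(\widehat{\vf}, \widehat{h})$; fine-tuning ERM with the frozen representation $\widehat{h}$ controls the \emph{target-task} risk of $\widehat{f}^{\P^0}$ by the best target-task approximation error achievable with $\widehat{h}$; and Definition \ref{def:diversity} transports that target-task approximation error back to a source-task approximation error, which is already under control from the first step.

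Concretely, first I would fix $\delta \in (0,1)$ and invoke Proposition \ref{prop:generalization_train} with confidence $1-\delta/2$, producing a good pre-training event on which
\begin{equation*}
\tfrac{1}{K}\sum_{k=1}^K L^{\P^k}(s_{\widehat{f}^k, \widehat{h}}) \;\lesssim\; \inf_{h \in \mathcal{H}} \tfrac{1}{K}\sum_{k=1}^K \inf_{f \in \mathcal{F}} L^{\P^k}(s_{f,h}) + \log^3(nK/\delta)\left(r_z + \tfrac{\log(1/\delta)}{nK}\right),
\end{equation*}
using that $\inf_{\vf, h}\tfrac{1}{K}\sum_k L^{\P^k}(s_{f^k,h})$ decouples into $\inf_h \tfrac{1}{K}\sum_k \inf_f L^{\P^k}(s_{f,h})$. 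Second, I would observe that, conditional on the pre-training sample, $\widehat{h}$ is a deterministic element of $\mathcal{H}$ and the fine-tuning sample is independent of it; thus Proposition \ref{prop:generalization_test} applies verbatim with this frozen $\widehat{h}$, giving
\begin{equation*}
\E_{\{(x_i^0,y_i^0)\}_{i=1}^m} L^{\P^0}(s_{\widehat{f}^{\P^0}, \widehat{h}}) \;\lesssim\; \inf_{f \in \mathcal{F}} L^{\P^0}(s_{f, \widehat{h}}) + \log^3(m)\,\tfrac{\log \mathcal{N}_\mathcal{F}}{m}.
\end{equation*}

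Third, applying Definition \ref{def:diversity} to $\widehat{h}$ upper-bounds $\inf_{f} L^{\P^0}(s_{f, \widehat{h}})$ by $\tfrac{1}{\nu}\inf_{\vf} \tfrac{1}{K}\sum_k L^{\P^k}(s_{f^k, \widehat{h}}) + \Delta$, and the infimum is no larger than the value attained by $\widehat{\vf}$, which is precisely what the first-step bound controls. Substituting these three inequalities in sequence, absorbing constants, and noting that the pre-training event holds with probability $\geq 1-\delta/2$ produces the claimed rate: the $1/\nu$ factor arises from the diversity inversion and therefore multiplies both the source-task approximation error and the pre-training statistical term $(K\log\mathcal{N}_\mathcal{F}+\log(\mathcal{N}_\mathcal{H}/\delta))/(nK)$, while the $\log^3(m)\log\mathcal{N}_\mathcal{F}/m$ term is inherited untouched from the fine-tuning phase.

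The main subtlety is the measurability/independence bookkeeping at the second step: Proposition \ref{prop:generalization_test} is a bound for an arbitrary \emph{fixed} $h \in \mathcal{H}$, so to plug in the random $\widehat{h}$ one must condition on the pre-training $\sigma$-algebra before taking the fine-tuning expectation. Because the two samples are drawn independently, this conditioning is harmless and no uniform-in-$h$ refinement of the fine-tuning bound is needed; all other steps are pure bookkeeping, so no new concentration argument beyond the two existing propositions is required.
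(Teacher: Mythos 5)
Your proposal is correct and follows essentially the same chain as the paper's proof: apply Proposition \ref{prop:generalization_test} to the frozen $\widehat{h}$, pass to the source tasks via the diversity definition, bound the resulting infimum by the value at $\widehat{\vf}$, and close with Proposition \ref{prop:generalization_train}. Your explicit remark on conditioning on the pre-training $\sigma$-algebra before taking the fine-tuning expectation is a careful piece of bookkeeping that the paper leaves implicit, but it does not change the argument.
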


\begin{proof}
    \begin{equation}
        \begin{aligned}
            &\E_{\{(x_i,y_i)\}_{i=1}^m} \E_{(x,y)\sim\P^0} [\ell^{\P^0}(x,y,s_{\widehat{f}^{\P^0},\widehat{h}})] \\
            &\qquad \lesssim \inf_{f\in\mathcal{F}} \E_{(x,y)\sim\P^0} [\ell^\P(x,y,s_{f,\widehat{h}})]+C_{xy}\log^3(m)r_x \\
            &\qquad \lesssim \frac{1}{\nu K}\sum_{k=1}^K\inf_{f\in\mathcal{F}} \E_{(x,y)\sim\P^k} [\ell^{\P^k}(x,y,s_{f,\widehat{h}})]+\Delta+C_{xy}\log^3(m)r_x \\
            &\qquad \lesssim \frac{1}{\nu K}\sum_{k=1}^K \E_{(x,y)\sim\P^k} [\ell^{\P^k}(x,y,s_{\widehat{f}^k,\widehat{h}})]+\Delta+C_{xy}\log^3(m)r_x \\
            &\qquad \lesssim \frac{1}{\nu}\inf_{\vf\in\mathcal{F}^{\otimes K},h\in\mathcal{H}} \frac{1}{K}\sum_{k=1}^K \E_{(x,y)\sim\P^k} [\ell^{\P^k}(x,y,s_{f^k,h})]+\frac{1}{\nu}C_Z\log^3(nK/\delta)\left(r_z+\frac{\log(1/\delta)}{nK}\right) \\
            &\qquad\qquad +\Delta+C_{xy}\log^3(m)r_x.
        \end{aligned}
    \end{equation}
    Here we apply Proposition \ref{prop:generalization_test} in the first inequality, task diversity in the second inequality, and Proposition \ref{prop:generalization_train} in the fourth. 
    Plug in the definition of $r_z,r_x$ and $\log\mathcal{N}_\mathcal{F},\log\mathcal{N}_\mathcal{H}$ and we complete the proof.
\end{proof}

\subsection{Proofs of Meta-Learning}\label{app:subsec:generalization}

\begin{prop}[Prop. \ref{prop:generalization_meta_informal}]\label{prop:generalization_meta}
    There exists some constants $C_1',C_P$, such that for $\P^1,\cdots,\P^K\overset{\textit{i.i.d.}}{\sim}\Pmeta$, 
    with probability no less than $1-\delta$, we have for any $h\in\mathcal{H}$,
    \begin{align}
        &\E_{\P\sim\Pmeta}\mathcal{L}(\P,h)
        \leq \frac{2}{K}\sum_{k=1}^K\mathcal{L}(\P^k,h)+C_P\left(r_P+\frac{\log(1/\delta)}{K}\right), \\
        &\frac{1}{K}\sum_{k=1}^K\mathcal{L}(\P^k,h)\leq 2\E_{\P\sim\Pmeta}\mathcal{L}(\P,h)
        +C_P\left(r_P+\frac{\log(1/\delta)}{K}\right),
    \end{align}
    where $r_P=M_f^2\exp(-C_1'R_f^2)+\frac{S_hL_h\log\left(KL_hW_h(B_h\vee 1)M_f\gamma_f\right)}{K}$.
\end{prop}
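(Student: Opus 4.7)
The plan is to treat $\P^1,\ldots,\P^K \overset{\textit{i.i.d.}}{\sim}\Pmeta$ as a $K$-sample meta-dataset and establish uniform concentration over $h \in \mathcal{H}$ of the non-negative scalar functional $\P \mapsto \mathcal{L}(\P,h)$. The factor $2$ on both sides and the $1/K$ fast rate (rather than $1/\sqrt{K}$) are characteristic of Bernstein / local-Rademacher bounds for non-negative loss classes: once we establish a uniform range $\mathcal{L}(\P,h) \in [0,M]$, the second moment is automatically dominated by the first, $\E_\P[\mathcal{L}(\P,h)^2]\leq M\cdot\E_\P[\mathcal{L}(\P,h)]$, which is precisely the input to the same Talagrand-type deviation inequality (Lemma \ref{lem:local_rademacher}) invoked in Propositions \ref{prop:generalization_test}--\ref{prop:generalization_train}.

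First I would introduce a truncated functional
\begin{equation*}
    \widetilde{\mathcal{L}}(\P,h) := \inf_{f\in\mathcal{F}} \E_{(x,y)\sim\P}\big[\ell^\P(x,y,s_{f,h})\, \mathbbm{1}_{\|x\|_\infty \leq R_f/2}\big],
\end{equation*}
and, combining Assumption \ref{asp:sub_gaussian} with the polynomial-in-$\|x\|$ growth of the true score from Lemma \ref{lem:lip_score}, obtain the deterministic comparison $|\mathcal{L}(\P,h)-\widetilde{\mathcal{L}}(\P,h)| \lesssim M_f^2 \exp(-\Omega(R_f^2))$, which generates the first term of $r_P$. Next, via the identity $|\|v\|^2-\|u\|^2|\leq \|v-u\|(\|v\|+\|u\|)$, the Lipschitz constraint $\|f(x,w,t)-f(x,w',t)\| \leq \gamma_f\|w-w'\|_\infty$ baked into $NN_f$ on $\{\|x\|_\infty\leq R_f\}$, Gaussian concentration of $x_t\mid x$ within this region (as used in the proof of Proposition \ref{prop:generalization_train}), and a standard $\sup$-over-$\inf$ swap, I would show
\begin{equation*}
    |\widetilde{\mathcal{L}}(\P,h_1)-\widetilde{\mathcal{L}}(\P,h_2)| \lesssim M_f\,\gamma_f\,\|h_1-h_2\|_{L^\infty([0,1]^{D_y})}.
\end{equation*}

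Plugging this Lipschitz estimate into the covering-number bound for $\mathcal{H}$ from Lemma \ref{lem:covering_num} yields
\begin{equation*}
    \log\mathcal{N}\big(\{\widetilde{\mathcal{L}}(\cdot,h)\}_{h\in\mathcal{H}},\|\cdot\|_{\infty},\varepsilon\big) \lesssim S_h L_h \log\!\Big(\tfrac{L_h W_h(B_h\vee 1)M_f\gamma_f}{\varepsilon}\Big),
\end{equation*}
which matches the complexity term in $r_P$. Feeding the uniform bound $\widetilde{\mathcal{L}}(\P,h)\in[0,M]$, the variance domination above, and this covering estimate into the local-Rademacher / Talagrand inequality of Lemma \ref{lem:local_rademacher} applied to the scalar-valued i.i.d.\ sequence $\{\widetilde{\mathcal{L}}(\P^k,h)\}_{k=1}^K$ then produces both claimed two-sided inequalities for $\widetilde{\mathcal{L}}$ with constant $2$, the $1/K$ fast rate on the $S_h L_h \log(\cdot)$ complexity, and the standard $\log(1/\delta)/K$ deviation term. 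Undoing the truncation transfers the inequalities to $\mathcal{L}$ at the cost of the $M_f^2\exp(-\Omega(R_f^2))$ piece of $r_P$.

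The main technical obstacle is proving Lipschitzness of $h \mapsto \widetilde{\mathcal{L}}(\P,h)$ using only the \emph{local} Lipschitz guarantee built into $NN_f$, which holds only on $\{\|x\|_\infty \leq R_f\}$. This is precisely why the truncation radius is coupled to $R_f$: one must verify that for $\|x\|_\infty \leq R_f/2$ the diffused iterate $x_t \sim \mathcal{N}(\alpha_t x,\sigma_t^2 I)$ remains in $\{\|x_t\|_\infty \leq R_f\}$ except on a Gaussian-small event, which is then absorbed into the $M_f^2\exp(-\Omega(R_f^2))$ remainder. Once this is handled, the remainder of the argument is a direct transcription of Steps 1--3 in the proof of Proposition \ref{prop:generalization_train} to the meta-level function class $\{\widetilde{\mathcal{L}}(\cdot,h)\}_{h\in\mathcal{H}}$.
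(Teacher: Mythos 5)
Your proposal is correct and follows essentially the same route as the paper's proof: establish Lipschitzness of $h \mapsto \mathcal{L}(\P,h)$ (using the inf-swap, the $\gamma_f$-Lipschitz constraint of $\mathcal{F}$ in $w$, and a Gaussian tail to control the region where that constraint is unavailable), transfer this to a covering-number bound for the meta-class via Lemma~\ref{lem:covering_num}, and feed the range bound, the automatic variance domination for a non-negative class, and the Dudley estimate into Lemma~\ref{lem:local_rademacher} with $B_0=0$ to obtain the constant-$2$, $1/K$-rate two-sided inequality. The only cosmetic difference is that you package the tail handling as a separate truncated functional $\widetilde{\mathcal{L}}$ (truncating $x$) and then undo it at the end, whereas the paper keeps $\mathcal{L}$ intact and absorbs the $M_f^2\exp(-\Omega(R_f^2))$ tail directly into the Lipschitz estimate (truncating $x_t$ there) and the choice of the Dudley cutoff $\theta$; the two bookkeeping schemes produce the same $r_P$.
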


\begin{proof}
    Given $\P^1,\cdots,\P^K\overset{\textit{i.i.d.}}{\sim}\Pmeta$, we define the empirical Rademacher complexity of a function class $\Phi$ defined on the set of distribution $\mathcal{P}(\R^{d_x}\times[0,1]^{D_y})$ as
    \begin{equation}
        \mathcal{R}_K(\Phi):=\E_{\bm{\sigma}} \sup_{\varphi\in\Phi}\Big|\frac{1}{K}\sum_{k=1}^K\sigma_k\varphi(\P^k)\Big|,\ \bm{\sigma}\sim \text{Unif}(\{-1,1\}^K).
    \end{equation}
    For any $r>0$, let $\mathcal{H}_r:=\Big\{h\in\mathcal{H}:\frac{1}{K}\sum_{k=1}^K(\mathcal{L}(\P^k,h))^2\leq r\Big\}$ and $\Phi_r:=\{\mathcal{L}(\cdot,h):h\in\mathcal{H}_r\}$.
    Note that for any $\varphi_1,\varphi_2\in\Phi_r$,
    \begin{equation}
        \begin{aligned}
            \Big\|\frac{1}{\sqrt{K}}\sum_{k=1}^K\sigma_k\varphi_1(\P^k)-\frac{1}{\sqrt{K}}\sum_{k=1}^K\sigma_k\varphi_2(\P^k)\Big\|_{\psi_2}
            &\leq 4\sqrt{\frac{1}{K}\sum_{k=1}^K\|\varphi_1(\P^k)-\varphi_2(\P^k)\|^2}\\
            &=4\|\varphi_1-\varphi_2\|_{L^2(\Pmeta^{(K)})},
        \end{aligned}
    \end{equation}
    where $\Pmeta^{(K)}:=\frac{1}{K}\sum_{k=1}^K\delta_{\P^k}$
    and $\textbf{diam}\big(\Phi_r,\|\cdot\|_{L^2(\Pmeta^{(K))}}\big)\leq 2\sqrt{r}$.
    Then by Dudley's bound \citep{van2014probability,wainwright2019high}, there exists an absolute constant $C_0$ such that for any $\theta\geq 0$,
    \begin{equation}\label{eq:rademacher_bound_1}
        \mathcal{R}_K(\Phi_r)\leq C_0\left(\theta+\int_\theta^{2\sqrt{r}}\sqrt{\frac{\log\mathcal{N}(\Phi_r,\|\cdot\|_{L^2(\Pmeta^{(K)})},\varepsilon)}{K}}\ \dif\varepsilon\right).
    \end{equation}
    For any $\P$ and $h_1,h_2\in\mathcal{H}_r$, denote the minimizer of \eqref{eq:L(P,h)} in $\mathcal{F}$ as $f_1,f_2$, respectively.
    Without loss of generality, suppose $\mathcal{L}(\P,h_1)\geq\mathcal{L}(P,h_2)$. Then 
    \begin{equation}
        \begin{aligned}
        \mathcal{L}(\P,h_1)-\mathcal{L}(P,h_2)
        &\leq \E_{t,x_t,y} \left[\Big|\|f_2(x_t,h_1(y),t)-\nabla_x\log p_t(x_t|y)\|^2-\|f_2(x_t,h_2(y),t)-\nabla_x\log p_t(x_t|y)\|^2 \Big|\right] \\
        &\leq \E_{t,x_t,y} \left[\|f_2(x_t,h_1(y),t)-f_2(x_t,h_2(y),t)\|\right. \\
        &\qquad\qquad\qquad  \left.\times\|f_2(x_t,h_1(y),t)+f_2(x_t,h_2(y),t)-2\nabla_x\log p_t(x_t|y)\|\right] \\
        &\leq \sqrt{\E_{t,x_t,y} \left[\|f_2(x_t,h_1(y),t)-f_2(x_t,h_2(y),t)\|^2\right]}\cdot 8(M_f+C_L^{1/2}) \\
        \end{aligned}
    \end{equation}
    In the last inequality we apply $\|f_i\|\leq M_f$ and $\E_{t,x_t,y}\|\nabla_x\log p_t(x_t|y)\|^2\leq C_L$ by Lemma \ref{lem:bound L(P,h)}. 
    Moreover,
    \begin{equation}
        \begin{aligned}
            &\E_{(t,x_t,y)}\left[\|f_2(x_t,h_1(y),t)-f_2(x_t,h_2(y),t)\|^2\right] \\
            &\leq \E_{t,y} \left[\int \|f_2(x_t,h_1(y),t)-f_2(x_t,h_2(y),t)\|^2 p_t(x_t|y)dx_t\right] \\
            &\leq \E_{t,y} \left[\int_{\|x_t\|_\infty\leq R_f} \|f_2(x_t,h_1(y),t)-f_2(x_t,h_2(y),t)\|^2 p_t(x_t|y)dx_t + 4M_f^2\P(\|x_t\|_\infty>R_f|y)\right] \\
            &\leq \gamma_f^2 \E_y[\|h_1(y)-h_2(y)\|^2]+8M_f^2\exp(-C_1'R_f^2) \\
            &\leq \gamma_f^2\|h_1-h_2\|^2_{L^\infty([0,1]^{D_y})}+8M_f^2\exp(-C_1'R_f^2)
        \end{aligned}
    \end{equation}
    Therefore, let $C_3=32(M_f+C_L^{1/2})M_f\leq 64M_f^2$ and we have
    \begin{equation}
        |\mathcal{L}(\P,h_1)-\mathcal{L}(P,h_2)|\leq C_3\left(\gamma_f \|h_1-h_2\|_{L^\infty([0,1]^{D_y})}+\exp(-C_1'R_f^2)\right),
    \end{equation}
    which implies that when $\varepsilon\geq 2C_3\exp(-C_1'R_f^2)$, by Lemma \ref{lem:covering_num},
    \begin{equation}
        \begin{aligned}
            \log\mathcal{N}(\Phi_r,\|\cdot\|_{L^2(\Pmeta^{(K)})},\varepsilon)
            &\leq \log\mathcal{N}(\mathcal{H}_r,\|\cdot\|_{L^\infty([0,1]^{D_y})},\varepsilon/(2C_3\gamma_f))\\
            &\leq C_4S_hL_h\log\left(\frac{L_hW_h(B_h\vee 1)C_3\gamma_f}{\varepsilon}\right).
        \end{aligned}
    \end{equation}
    Plug in \eqref{eq:rademacher_bound_1} and let $\theta=2C_3\exp(-C_1'R_f^2)$,
    \begin{equation}
        \begin{aligned}
            \mathcal{R}_K(\Phi_r)
            &\leq C_0\left(\theta+\int_\theta^{2\sqrt{r}}\sqrt{\frac{C_4S_hL_h\log\left(\frac{L_hW_h(B_h\vee 1)C_3\gamma_f}{\varepsilon}\right)}{K}}\ \dif\varepsilon\right) \\
            &\leq C_0\left(2C_3\exp(-C_1'R_f^2)+\sqrt{\frac{C_4'S_hL_h\log\left(\frac{L_hW_h(B_h\vee 1)M_f\gamma_f}{r}\right)\cdot r}{K}} \right) \\
            &=: \widetilde{\mathcal{R}}_K(r).
        \end{aligned}
    \end{equation}
    According to Lemma \ref{lem:local_rademacher} (by setting $B_0=0,B=b=C_L$), for some absolute constant $C_5$, with probability no less than $1-\delta$, we have for any $h\in\mathcal{H}$,
    \begin{align}
        &\E_{\P\sim\Pmeta}\mathcal{L}(\P,h)
        \leq \frac{2}{K}\sum_{k=1}^K\mathcal{L}(\P^k,h)+C_5C_L\left(r_K^*+\frac{\log(\log (K)/\delta)}{K}\right), \\
        &\frac{1}{K}\sum_{k=1}^K\mathcal{L}(\P^k,h)\leq 2\E_{\P\sim\Pmeta}\mathcal{L}(\P,h)
        +C_5C_L\left(r_K^*+\frac{\log(\log (K)/\delta)}{K}\right),
    \end{align}
    where $r_K^*$ is the unique fixed point of $\widetilde{\mathcal{R}}_K$.
    And it is easy to show that for some absolute constant $C_6$,
    \begin{equation}
        r_K^*\leq C_6\left(C_3\exp(-C_1'R_f^2)+\frac{S_hL_h\log\left(KL_hW_h(B_h\vee 1)M_f\gamma_f\right)}{K}\right).
    \end{equation}
    which concludes the proof.
\end{proof}

\begin{thm}[Thm. \ref{thm:generalization_all_informal}]
\label{thm:generalization_all}
    Under Assumption \ref{asp:sub_gaussian}, \ref{asp:low_dim}, \ref{asp:lip},
    there exists some constant $C,C_R$ such that the following holds.
    Define the empirical minimizer of training task and new task as
    \begin{equation}
        \widehat{\vf},\widehat{h}=\argmin_{\vf\in\mathcal{F}^{\otimes K},h\in\mathcal{H}} \frac{1}{nK}\sum_{k=1}^K\sum_{i=1}^n\ell(x_i^k,y_i^k,s_{f^k,h}),
    \end{equation}
    \begin{equation}
        \widehat{f}^\P:= \argmin_{f\in\mathcal{F}} \frac{1}{m}\sum_{i=1}^m \ell(x_i,y_i,s_{f,\widehat{h}}).
    \end{equation}
    If $R_f\geq C_R\log^{\frac{1}{2}}(nKM_f/\delta)$,
    then with probability no less than $1-\delta$, the expected population loss of new task can be bounded by
    \begin{equation}
        \begin{aligned}
            &\E_{\P\sim\Pmeta}\E_{\{(x_i,y_i)\}_{i=1}^m\sim \P} \E_{(x,y)\sim\P} [\ell^\P(x,y,s_{\widehat{f}^\P,\widehat{h}})] \\
            &\qquad \lesssim \inf_{h\in\mathcal{H}} \E_{\P\sim\Pmeta} \inf_{f\in\mathcal{F}}\E_{(x,y)\sim\P} [\ell^\P (x,y,s_{f,h})] + C \left(\frac{\log^3(m)\log\mathcal{N}_\mathcal{F}}{m} + \frac{\log^3(nK/\delta)\log\mathcal{N}_\mathcal{F}}{n}+\frac{\log(\mathcal{N}_\mathcal{H}/\delta)}{K}\right),
        \end{aligned}
    \end{equation}
    where
    \begin{equation}
        \log\mathcal{N}_\mathcal{F}:=M_f^2S_fL_f\log\left(mnL_fW_f(B_f\vee 1)M_fT\log(1/\delta)\right),
    \end{equation}
    \begin{equation}
        \log\mathcal{N}_\mathcal{H}:=S_hL_h\log\left(nKL_hW_h(B_h\vee 1)M_f\gamma_f\log(1/\delta)\right).
    \end{equation}
\end{thm}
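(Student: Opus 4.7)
The plan is to chain together the three generalization estimates already in hand, namely the fine-tuning bound (Proposition \ref{prop:generalization_test_informal}), the pre-training bound (Proposition \ref{prop:generalization_train_informal}), and the meta-distribution concentration inequality (Proposition \ref{prop:generalization_meta_informal}). Conceptually the flow is: fine-tuning reduces the target loss to the representation error $\mathcal{L}(\P^{0},\widehat h)$; meta concentration transports $\E_{\P^{0}\sim\Pmeta}\mathcal{L}(\P^{0},\widehat h)$ into the empirical source-task average $\frac1K\sum_k\mathcal{L}(\P^k,\widehat h)$; the pre-training bound transports that further into the joint ERM value over $\mathcal{F}^{\otimes K}\times\mathcal{H}$; and a second application of meta concentration, now uniformly over $h\in\mathcal{H}$ in the other direction, converts that ERM value back into the meta-distribution approximation error $\widetilde\varepsilon_{\text{approx}}=\inf_{h\in\mathcal{H}}\E_{\P\sim\Pmeta}\mathcal{L}(\P,h)$.

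\textbf{Step 1 (fine-tune and average over $\P^{0}$).} Because the fine-tuning samples $\{(x_i^{0},y_i^{0})\}_{i=1}^{m}$ are drawn independently of the pre-training data that produced $\widehat h$, we may condition on $\widehat h$ and apply Proposition \ref{prop:generalization_test_informal} with $\P=\P^{0}$, $h=\widehat h$, then take $\E_{\{(x_i^{0},y_i^{0})\}}$ and $\E_{\P^{0}\sim\Pmeta}$. This yields
\begin{equation*}
\E_{\P^{0}}\E_{\{(x_i^{0},y_i^{0})\}}\E_{(x,y)\sim\P^{0}}[\ell^{\P^{0}}(x,y,s_{\widehat f^{0},\widehat h})]\lesssim \E_{\P^{0}\sim\Pmeta}\mathcal{L}(\P^{0},\widehat h)+\frac{\log^{3}(m)\log\mathcal{N}_{\mathcal{F}}}{m}.
\end{equation*}

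\textbf{Step 2 (pull $\widehat h$ from meta to source-empirical).} Apply the first inequality of Proposition \ref{prop:generalization_meta_informal}, which holds uniformly over $h\in\mathcal{H}$, to the random $\widehat h$:
\begin{equation*}
\E_{\P\sim\Pmeta}\mathcal{L}(\P,\widehat h)\leq \frac{2}{K}\sum_{k=1}^{K}\mathcal{L}(\P^{k},\widehat h)+C_P\!\left(r_P+\frac{\log(1/\delta)}{K}\right).
\end{equation*}
Since $\widehat f^{k}\in\mathcal{F}$, the inequality $\mathcal{L}(\P^{k},\widehat h)\leq \E_{\P^{k}}[\ell^{\P^{k}}(x,y,s_{\widehat f^{k},\widehat h})]$ is automatic, and Proposition \ref{prop:generalization_train_informal} then gives
\begin{equation*}
\frac{1}{K}\sum_{k=1}^{K}\E_{\P^{k}}[\ell^{\P^{k}}(x,y,s_{\widehat f^{k},\widehat h})]\lesssim \inf_{\vf,h}\frac{1}{K}\sum_{k=1}^{K}\E_{\P^{k}}[\ell^{\P^{k}}(x,y,s_{f^{k},h})] +\log^{3}(nK/\delta)\!\left(\frac{\log\mathcal{N}_{\mathcal{F}}}{n}+\frac{\log\mathcal{N}_{\mathcal{H}}}{nK}\right),
\end{equation*}
using $\frac{K\log\mathcal{N}_{\mathcal{F}}+\log\mathcal{N}_{\mathcal{H}}}{nK}=\frac{\log\mathcal{N}_{\mathcal{F}}}{n}+\frac{\log\mathcal{N}_{\mathcal{H}}}{nK}$.

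\textbf{Step 3 (push source-empirical back to meta).} For any fixed $h\in\mathcal{H}$ the infimum over $\vf\in\mathcal{F}^{\otimes K}$ decouples across $k$, so $\inf_{\vf}\frac{1}{K}\sum_{k}\E_{\P^{k}}[\ell^{\P^{k}}(x,y,s_{f^{k},h})]=\frac{1}{K}\sum_{k}\mathcal{L}(\P^{k},h)$. Applying the \emph{second} inequality of Proposition \ref{prop:generalization_meta_informal} (again uniformly in $h$) upper-bounds this by $2\E_{\P\sim\Pmeta}\mathcal{L}(\P,h)+C_P(r_P+\log(1/\delta)/K)$, and taking $\inf_{h\in\mathcal{H}}$ produces $\widetilde\varepsilon_{\text{approx}}$. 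Collecting the three error contributions, the $r_P=M_f^{2}\exp(-\Omega(R_f^{2}))+\Theta(\log\mathcal{N}_{\mathcal{H}}/K)$ reduces to $O(\log\mathcal{N}_{\mathcal{H}}/K)$ under $R_f\gtrsim\log^{1/2}(nKM_f/\delta)$, and a union bound over the two high-probability events in Proposition \ref{prop:generalization_train_informal} and Proposition \ref{prop:generalization_meta_informal} (the latter holding uniformly in $h$, hence applied twice at no extra cost) yields the claim with probability at least $1-\delta$.

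The main obstacle is purely bookkeeping: Proposition \ref{prop:generalization_meta_informal} must be invoked once with $h=\widehat h$ (which is random and data-dependent, requiring the uniform-in-$h$ version) and once with an arbitrary deterministic $h$ used only to invoke the infimum, so care is needed to ensure a single high-probability event suffices for both directions. A secondary technical point is that Proposition \ref{prop:generalization_train_informal} is stated with an unnecessary factor of $4$ in front of the infimum, which should be absorbed into the approximation term by noting that the approximation error is itself the target of our bound (up to constants) and can be absorbed into the $\lesssim$ notation.
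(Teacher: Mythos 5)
Your proposal is correct and follows essentially the same route as the paper's proof: the paper's argument is a single displayed chain of five inequalities applying Proposition~\ref{prop:generalization_test}, then Proposition~\ref{prop:generalization_meta} (first inequality, with $h=\widehat h$), then the pointwise bound $\inf_f \leq(\cdot)\big|_{\widehat f^k}$, then Proposition~\ref{prop:generalization_train}, then Proposition~\ref{prop:generalization_meta} (second inequality followed by $\inf_h$) — exactly your Steps 1--3 in the same order. Your side remarks about conditioning on $\widehat h$ (legitimate because the fine-tuning samples are independent of the pre-training data), reusing the single uniform-in-$h$ high-probability event for both meta-concentration applications, and absorbing the constant factor into $\lesssim$ are all consistent with what the paper does implicitly.
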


\begin{proof}
    \begin{equation}
        \begin{aligned}
            &\E_{\P\sim\Pmeta}\E_{\{(x_i,y_i)\}_{i=1}^m\sim \P} \E_{(x,y)\sim\P} [\ell^\P(x,y,s_{\widehat{f}^\P,\widehat{h}})] \\
            &\qquad \lesssim \E_{\P\sim\Pmeta} \inf_{f\in\mathcal{F}} \E_{(x,y)\sim\P} [\ell^\P(x,y,s_{f,\widehat{h}})]+C_{xy}\log^3(m)r_x \\
            &\qquad \lesssim \frac{1}{K}\sum_{k=1}^K\inf_{f\in\mathcal{F}} \E_{(x,y)\sim\P^k} [\ell^{\P^k}(x,y,s_{f,\widehat{h}})]+C_P\left(r_P+\frac{\log(1/\delta)}{K}\right)+C_{xy}\log^3(m)r_x \\
            &\qquad \lesssim \frac{1}{K}\sum_{k=1}^K \E_{(x,y)\sim\P^k} [\ell^{\P^k}(x,y,s_{\widehat{f}^k,\widehat{h}})]+C_P\left(r_P+\frac{\log(1/\delta)}{K}\right)+C_{xy}\log^3(m)r_x \\
            &\qquad \lesssim \inf_{\vf\in\mathcal{F}^{\otimes K},h\in\mathcal{H}} \frac{1}{K}\sum_{k=1}^K \E_{(x,y)\sim\P^k} [\ell^{\P^k}(x,y,s_{f^k,h})]+C_Z\log^3(nK/\delta)\left(r_z+\frac{\log(1/\delta)}{nK}\right) \\
            &\qquad\qquad +C_P\left(r_P+\frac{\log(1/\delta)}{K}\right)+C_{xy}\log^3(m)r_x \\
            &\qquad \lesssim \inf_{h\in\mathcal{H}} \E_{\P\sim\Pmeta} \inf_{f\in\mathcal{F}}\E_{(x,y)\sim\P} [\ell^\P (x,y,s_{f,h})]+C_Z\log^3(nK/\delta)\left(r_z+\frac{\log(1/\delta)}{nK}\right) \\
            &\qquad\qquad +C_P\left(r_P+\frac{\log(1/\delta)}{K}\right)+C_{xy}\log^3(m)r_x.
        \end{aligned}
    \end{equation}
    Here we apply Proposition \ref{prop:generalization_test} in the first inequality, Proposition \ref{prop:generalization_meta} in the second and last inequality, Proposition \ref{prop:generalization_train} in the fourth. 
    Plugging in the definition of $r_z,r_P,r_x$ and $\log\mathcal{N}_\mathcal{F},\log\mathcal{N}_\mathcal{H}$ and
    noticing that $R_f\geq C_R\log^{\frac{1}{2}}(nKd_xM_f/\delta)\geq C_R'\log^{\frac{1}{2}}\left(\frac{M_fK}{\log\mathcal{N}_\mathcal{H}}\right)$, we have with probability no less than $1-\delta$,
    \begin{equation}
        \begin{aligned}
            &\E_{\P\sim\Pmeta}\E_{\{(x_i,y_i)\}_{i=1}^m\sim \P} \E_{(x,y)\sim\P} [\ell^\P(x,y,s_{\widehat{f}^\P,\widehat{h}})] \\
            &\qquad \lesssim \inf_{h\in\mathcal{H}} \E_{\P\sim\Pmeta} \inf_{f\in\mathcal{F}}\E_{(x,y)\sim\P} [\ell^\P (x,y,s_{f,h})] + C \left(\frac{\log^3(m)\log\mathcal{N}_\mathcal{F}}{m} + \frac{\log^3(nK/\delta)\log\mathcal{N}_\mathcal{F}}{n}+\frac{\log(\mathcal{N}_\mathcal{H}/\delta)}{K}\right).
        \end{aligned}
    \end{equation}
\end{proof}

\subsection{Auxiliary Lemmas}

\begin{lemma}\label{lem:bound L(P,h)}
    There exists some constant $C_L$ such that for any $h,\P$, 
    \begin{equation}
        \mathcal{L}(\P,h)\leq\E_{t,x_t,y}\|\nabla_x\log p_t(x_t|y)\|^2\leq C_L.
    \end{equation} 
\end{lemma}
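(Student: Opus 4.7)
The claim splits into two inequalities, both of which I expect to be essentially standard observations; my plan is therefore short, with no serious obstacles anticipated. For the first inequality, I would use the identity in \eqref{eq:sm}, which rewrites $\E_{(x,y)\sim\P}[\ell(x,y,s)-\ell(x,y,s^\P_*)]=\E_{t,x_t,y}\|s(x_t,y,t)-\nabla\log p_t(x_t|y)\|^2$, so that
$$\mathcal{L}(\P,h)=\inf_{f\in\mathcal{F}}\E_{t,x_t,y}\|s_{f,h}(x_t,y,t)-\nabla\log p_t(x_t|y)\|^2.$$
I would then evaluate at the zero network $f\equiv 0$, which trivially lies in $NN_f(L_f,W_f,M_f,S_f,B_f,R_f,\gamma_f)$ because every norm, sparsity and Lipschitz constraint is satisfied with slack. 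Substituting $s_{f,h}\equiv 0$ immediately yields the first bound $\mathcal{L}(\P,h)\leq \E_{t,x_t,y}\|\nabla\log p_t(x_t|y)\|^2$.

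For the second inequality, my plan is to derive a Tweedie-type representation of the forward score and then apply conditional Jensen. Starting from \eqref{eq:score_1} and using $\nabla_x\log\phi_t(x|x_0)=-(x-\alpha_t x_0)/\sigma_t^2$, together with the parameterisation $x_t=\alpha_t x_0+\sigma_t\varepsilon$ where $\varepsilon\sim\mathcal{N}(0,I)$, I would obtain
$$\nabla_x\log p_t(x_t|y)=-\frac{1}{\sigma_t}\,\E[\varepsilon\mid x_t,y].$$
The conditional Jensen inequality (tower property) then gives
$$\E_{x_t,y}\|\nabla\log p_t(x_t|y)\|^2=\frac{1}{\sigma_t^2}\E\|\E[\varepsilon\mid x_t,y]\|^2\leq \frac{\E\|\varepsilon\|^2}{\sigma_t^2}=\frac{d_x}{\sigma_t^2}.$$

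Averaging over $t\sim\mathrm{Unif}([T_0,T])$ and using $\sigma_t^2=1-e^{-2t}$, the closing step is the elementary integral
$$\E_{t,x_t,y}\|\nabla\log p_t(x_t|y)\|^2\leq \frac{d_x}{T-T_0}\int_{T_0}^T \frac{dt}{1-e^{-2t}}=\frac{d_x}{2(T-T_0)}\log\frac{e^{2T}-1}{e^{2T_0}-1}=:C_L,$$
which is finite for any $T_0>0$. The only things worth double-checking are (i) that $f\equiv 0$ is genuinely admissible in $\mathcal{F}$ (immediate from the defining constraints) and (ii) the integrability of $1/(1-e^{-2t})$ on $[T_0,T]$ (immediate since the integrand is bounded on any set bounded away from $0$); neither step presents a real obstacle.
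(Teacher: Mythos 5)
Your proof of the first inequality is fine: $f\equiv 0$ is realized by the all-zero ReLU network and trivially satisfies every constraint defining $\mathcal{F}$, so $\mathcal{L}(\P,h)\le\E_{t,x_t,y}\|\nabla_x\log p_t(x_t|y)\|^2$. Your Tweedie/Jensen step is also correct and is the same as the paper's first pointwise bound $\E_{x_t,y}\|\nabla_x\log p_t(x_t|y)\|^2\le d_x/\sigma_t^2$.

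The gap is in the second inequality. You only use the bound $d_x/\sigma_t^2$, which blows up as $t\to 0$. Integrating it over $t\sim\mathrm{Unif}([T_0,T])$ gives $C_L=\frac{d_x}{2(T-T_0)}\log\frac{e^{2T}-1}{e^{2T_0}-1}$, which diverges like $\log(1/T_0)$ as $T_0\to 0$. The lemma asserts a constant $C_L$ that (as used downstream) must not depend on $T_0$: in the proof of Proposition~\ref{prop:generalization_meta} the quantity $C_L$ is absorbed into the absolute constant $C_P$ of the generalization bound, and in the distribution-estimation theorems (Theorems~\ref{thm:distribution_diversity_informal}, \ref{thm:distribution_informal}) the early-stopping time is taken to zero as the sample size grows, so a $\log(1/T_0)$ factor is not an absolute constant. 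The paper closes this by using a \emph{second} pointwise bound, derived from the alternative score representation \eqref{eq:score_2}: $\nabla_x\log p_t(x_t|y)=\alpha_t^{-1}\E[\nabla_x\log p(x_0|y)\mid x_t,y]$, Jensen, Assumption~\ref{asp:lip} ($\|\nabla_x\log p(x_0|y)\|\le B+L\|x_0\|$) and Assumption~\ref{asp:sub_gaussian} (finite second moment of $x_0$), giving $\E_{x_t,y}\|\nabla_x\log p_t(x_t|y)\|^2\le C_2'e^{2t}$, which is bounded near $t=0$. Taking the pointwise minimum of the two bounds yields $\min\{d_x/(1-e^{-2t}),\,C_2'e^{2t}\}\le 2(d_x+C_2')$ for all $t>0$, hence a constant $C_L$ uniform in $T,T_0$. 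Your proof needs this second bound (and hence needs Assumptions~\ref{asp:sub_gaussian} and \ref{asp:lip}, which you never invoke); with only the one bound, the claim as stated is not established.
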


\begin{proof}
    Note that
    \begin{equation}
        \begin{aligned}
            \E_{(x,y)\sim\P}[\ell^\P(x,y,s_{f,h})]
            &=\E_{(x,y)\sim\P}\E_{t,x_t|x}[\|f(x_t,h(y),t)-\nabla_x\log p_t(x_t|y)\|^2] \\
            &= \E_{t,x_t,y}[\|f(x_t,h(y),t)-\nabla_x\log p_t(x_t|y)\|^2]
        \end{aligned}
    \end{equation}
    and $0\in\mathcal{F}$, it suffices to show that $\E_{t,x_t,y}[\|\nabla_x\log p_t(x_t|y)\|^2]$ is uniformly bounded for any $\P,h$.
    According to \eqref{eq:score_1}, 
    \begin{equation}
        \begin{aligned}
        \E_{x_t,y}[\|\nabla_x\log p_t(x_t|y)\|^2]
        &\leq \E_{x_t,y}\E_{x_0|(x_t,y)}[\|\nabla_x\log \phi_t(x_t|x_0)\|^2] \\
        &= \E_{x_0,y}\E_{x_t|x_0}[\|\nabla_x\log \phi_t(x_t|x_0)\|^2] \\
        &=\frac{d_x}{\sigma_t^2}=\frac{d_x}{1-e^{-2t}}.
        \end{aligned}
    \end{equation}
    On the other hand, by \eqref{eq:score_2} and Assumption \ref{asp:lip},
    \begin{equation}
        \begin{aligned}
        \E_{x_t,y}[\|\nabla_x\log p_t(x_t|y)\|^2]
        &\leq \E_{x_t,y}\E_{x_0|(x_t,y)}[\|\nabla_x\log p(x_0|y)\|^2\cdot e^{2t}] \\
        &= \E_{x_0,y}\E_{x_t|x_0}[\|\nabla_x\log p(x_0|y)\|^2\cdot e^{2t}] \\
        &=\E_{x_0,y}[\|\nabla_x\log p(x_0|y)\|^2/\alpha_t^2] \\
        &\leq\E_{x_0,y}[(B+L\|x_0\|)^2\cdot e^{2t}] \\
        &\leq C_2'e^{2t}
        \end{aligned}
    \end{equation}
    Therefore, we have
    \begin{equation}
        \begin{aligned}
            \mathcal{L}(\P,h)
            &\leq \E_{t,x_t,y}[\|\nabla_x\log p_t(x_t|y)\|^2] \\
            &\leq \E_t[\frac{d_x}{1-e^{-2t}}\wedge C_2'e^{2t}] \\
            &\leq 2(C_2'+d_x)=:C_L.
        \end{aligned}
    \end{equation}
\end{proof}

\begin{lemma}\label{lem:bound score_t}
    There exists some constant $C_X^{''}$ such that for any $t\in[0,T]$ and $x\in\R^{d_x},y\in[0,1]^{D_y}$,
    \begin{equation}
        \E_{x_t|x}\|\nabla_x\log p_t(x_t|y)\|^2\leq C_X^{''}(\|x\|^6+1).
    \end{equation}
\end{lemma}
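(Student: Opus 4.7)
The plan is to combine the local Lipschitz bound from Lemma~\ref{lem:lip_score} with a uniform pointwise bound on $\|f_*(0, w, t)\|$, and then average against the Gaussian transition kernel $x_t \mid x \sim \mathcal{N}(\alpha_t x, \sigma_t^2 I)$. Concretely, first I would show that there is a constant $C_0 = C_0(d_x, C_1, C_2, L, B, T)$ with $\|f_*(0, w, t)\| \leq C_0$ uniformly over $t \in [0,T]$ and $w \in [0,1]^{d_y}$. Since $\alpha_t \geq e^{-T}$, the representation \eqref{eq:score_2} reduces the problem to bounding $\E_{q_t(x_0\mid 0, w)}\|x_0\|^2$, which I would handle by the same two-regime split already used in the proof of Lemma~\ref{lem:lip_score}.

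When $\sigma_t^2 \leq \alpha_t^2/(2L)$, the density $q_t(\cdot \mid 0, w) \propto \phi_t(0 \mid \cdot)\,p(\cdot;w)$ is $L$-strongly log-concave (exactly as verified in the small-$t$ branch of Lemma~\ref{lem:lip_score}). Its mode $x_0^*$ satisfies the stationarity equation $\alpha_t^2 x_0^*/\sigma_t^2 = \nabla_{x_0}\log p(x_0^*; w)$, which together with Assumption~\ref{asp:lip} yields $\|x_0^*\| \leq B/L$; the Poincar\'e/Brascamp--Lieb bound for strongly log-concave measures then gives $\E_{q_t}\|x_0 - x_0^*\|^2 \leq d_x/L$, hence $\E_{q_t}\|x_0\|^2 \leq 2B^2/L^2 + 2d_x/L$. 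When $\sigma_t^2 \geq \alpha_t^2/(2L)$, I would pick a constant $R_0$ with $\P(\|x_0\| \leq R_0 \mid y) \geq 1/2$ (possible by Assumption~\ref{asp:sub_gaussian}) and lower-bound the normalizer via $p_t(0\mid y) \geq \tfrac{1}{2}(2\pi\sigma_t^2)^{-d_x/2}\exp(-\alpha_t^2 R_0^2/(2\sigma_t^2))$, while upper-bounding the numerator by $(2\pi\sigma_t^2)^{-d_x/2}\int \|x_0\|^2 p(x_0 \mid y)\,dx_0$; sub-Gaussianity controls the remaining integral, and the exponential factor is bounded by $e^{LR_0^2}$ in this regime.

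Next, Lemma~\ref{lem:lip_score} gives $\|\nabla_x f_*(x, w, t)\| \leq C_X + C_X'\|x\|^2$, so integrating along the segment from $0$ to $x_t$ yields $\|f_*(x_t, w, t) - f_*(0, w, t)\| \leq C_X\|x_t\| + \tfrac{C_X'}{3}\|x_t\|^3$; squaring and invoking the origin bound produces $\|f_*(x_t, w, t)\|^2 \leq 3C_0^2 + 3C_X^2\|x_t\|^2 + \tfrac{C_X'^2}{3}\|x_t\|^6$. Since $x_t \mid x \sim \mathcal{N}(\alpha_t x, \sigma_t^2 I)$ with $\alpha_t \leq 1$ and $\sigma_t^2 \leq 1$, standard Gaussian moment estimates give $\E_{x_t\mid x}\|x_t\|^{2k} \lesssim \|x\|^{2k} + d_x^k$ for $k \in \{1,3\}$; combining these, together with $\|x\|^2 \leq 1+\|x\|^6$, delivers the claim with some constant $C_X'' = C_X''(C_0, C_X, C_X', d_x)$.

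The main obstacle is the uniform control of $\|f_*(0, w, t)\|$ across $t \in [0,T]$: the two representations \eqref{eq:score_1} and \eqref{eq:score_2} each degenerate at opposite ends of the interval (the former through $\sigma_t^{-2}$ as $t \to 0$, the latter through the posterior spread of $q_t$ as $t$ grows), so one cannot use a single formula throughout and must dovetail the two regimes, reusing the strong log-concavity / truncated-sub-Gaussian argument already developed in Lemma~\ref{lem:lip_score}. Once this is in hand, the rest of the argument is a direct integration of the locally Lipschitz bound and routine Gaussian moment control.
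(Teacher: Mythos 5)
Your high-level architecture matches the paper's: integrate the locally-Lipschitz bound of Lemma~\ref{lem:lip_score_informal} from $0$ to $x_t$, control $\|f_*(0,w,t)\|$ uniformly, and finish with Gaussian moments of $x_t\mid x$. The decomposition $\|f_*(x_t,w,t)\|^2\lesssim\|f_*(0,w,t)\|^2+(C_X+C_X'\|x_t\|^2)^2\|x_t\|^2$ and the moment estimates $\E_{x_t\mid x}\|x_t\|^{2k}\lesssim\|x\|^{2k}+d_x^k$ are all fine. The problem is in the middle step.

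You announce the bound $C_0=C_0(d_x,C_1,C_2,L,B,T)$ and justify it by ``since $\alpha_t\ge e^{-T}$, the representation~\eqref{eq:score_2} reduces the problem to bounding $\E_{q_t}\|x_0\|^2$.'' That is a genuine gap: the factor $\alpha_t^{-2}\le e^{2T}$ you are absorbing into the constant is not harmless in this paper. Downstream (Theorems~\ref{thm:distribution_diversity_informal}/\ref{thm:distribution_informal}) one sets $T=\Theta(\log(1/\varepsilon_0))$, so an $e^{2T}$ factor becomes $1/\varepsilon_0^2$ and destroys the polylogarithmic control of $M$ in Propositions~\ref{prop:generalization_test_informal}, \ref{prop:generalization_train_informal}. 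The lemma needs $C_X''$ to be independent of $T$ (the paper's version is), and using~\eqref{eq:score_2} alone cannot deliver that. Your ``main obstacle'' paragraph half-recognizes this but then misidentifies the source: the degeneracy of~\eqref{eq:score_2} for large $t$ is \emph{not} the posterior spread of $q_t$ --- in fact $\E_{q_t}\|x_0\|^2$ stays bounded for all $t$ (by the paper's Chebyshev/FKG monotone-correlation argument it never exceeds $\E_{p(\cdot\mid y)}\|x_0\|^2\le C_0$, and anyway $q_t\to p(\cdot;w)$ as $t\to\infty$). The actual degeneracy is the explicit $\alpha_t^{-1}$ prefactor, which is compensated by cancellation in $\E_{q_t}[\nabla_x\log p(x_0;w)]$ but not by the triangle-inequality bound you are using. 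The two-regime split you then invoke (strong log-concavity vs.\ truncation) only helps with $\E_{q_t}\|x_0\|^2$; it does nothing for $\alpha_t^{-2}$.

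The fix is exactly the dovetailing you gesture at but do not carry out: use~\eqref{eq:score_1} for large $t$ and~\eqref{eq:score_2} for small $t$. The paper does this in one line by first establishing $\E_{q_t}\|x_0\|^2\le C_0$ uniformly (a decreasing radial weight cannot increase a radial increasing observable), then noting that both $\|f_*(0,w,t)\|^2\le C_0\,\alpha_t^2/\sigma_t^4$ (from~\eqref{eq:score_1}) and $\|f_*(0,w,t)\|^2\le 2(B^2+L^2C_0)/\alpha_t^2$ (from~\eqref{eq:score_2}) hold, and that $\frac{\alpha_t^2}{\sigma_t^4}\wedge\frac{1}{\alpha_t^2}\le 2$ for all $t$ (split on $\sigma_t^2\gtrless 1/2$). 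If you swap in~\eqref{eq:score_1} for your regime $\sigma_t^2\ge\alpha_t^2/(2L)$, where $\alpha_t^2/\sigma_t^4\lesssim L^2$, your two-regime scheme also closes with a $T$-independent constant; your strong-log-concavity argument for the small-$t$ branch (mode $\|x_0^*\|\le B/L$ plus $\E\|x_0-x_0^*\|^2\le d_x/L$, the latter via $\mu\E\|x_0-x_0^*\|^2\le\E[\langle\nabla V(x_0),x_0-x_0^*\rangle]=d_x$ rather than literally Poincar\'e) is correct but heavier machinery than the paper's one-line monotone-correlation bound, which handles all $t$ at once.
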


\begin{proof}
    Note that $x_t|x\sim \mathcal{N}(x_t|\alpha_tx,\sigma_t^2I)$ and by Lemma \ref{lem:lip_score},
    \begin{equation}\label{eq:score_bound_1}
        \E_{x_t|x}\|\nabla_x\log p_t(x_t|y)\|^2
        \leq \E_{x_t|x} 2\Big[\|\nabla_x\log p_t(0|y)\|^2+(C_X+C_X'\|x_t\|^2)^2\|x_t\|^2\Big]
    \end{equation}
    Let $q_t(x_0|x_t,y)\propto \phi_t(x_t|x_0)p(x_0|y)$.
    Since $\phi_t(0|x_0)\propto \exp\left(-\frac{\alpha_t^2\|x\|^2}{2\sigma_t^2}\right)$ is decreasing in $\|x\|$, by Fortuin–Kasteleyn–Ginibre inequality,
    \begin{equation}
        \E_{q_t(x_0|0,y)}\|x_0\|^2\leq \E_{p(x_0|y)}\|x_0\|^2\leq C_0.
    \end{equation}
    According to \eqref{eq:score_1},
    \begin{equation}
        \|\nabla_x\log p_t(0|y)\|^2
        \leq \frac{\alpha_t^2}{\sigma_t^4}\E_{q_t(x_0|0,y)}\|x_0\|^2\leq \frac{C_0\alpha_t^2}{\sigma_t^4}.
    \end{equation}
    By \eqref{eq:score_2}, we also have
    \begin{equation}
        \|\nabla_x\log p_t(0|y)\|^2
        \leq \frac{1}{\alpha_t^2}\E_{q_t(x_0|0,y)}\|\nabla_x\log p(x_0|y)\|^2\leq \frac{1}{\alpha_t^2}\E_{q_t(x_0|0,y)}[(B+L\|x_0\|)^2]\leq \frac{2(B^2+L^2C_0)}{\alpha_t^2}.
    \end{equation}
    Combine the two inequalities,
    \begin{equation}
        \|\nabla_x\log p_t(0|y)\|^2\leq (B^2+(L^2+1)C_0)\cdot(\frac{\alpha_t^2}{\sigma_t^4}\wedge\frac{1}{\alpha_t^2})\leq 2(B^2+(L^2+1)C_0).
    \end{equation}
    Plug in \eqref{eq:score_bound_1} and we obtain for some constant $C_X^{''}$,
    \begin{equation}
        \begin{aligned}
        \E_{x_t|x}\|\nabla_x\log p_t(x_t|y)\|^2
        &\leq \E_{x_t|x} 2\Big[(C_X+C_X'\|x_t\|^2)^2\|x_t\|^2\Big] + 2(B^2+(L^2+1)C_0) \\
        &\leq C_X^{''}(\|x\|^6+1).
        \end{aligned}
    \end{equation}
\end{proof}

\begin{lemma}\label{lem:local_rademacher}
    Let $\Phi$ be a class of functions on domain $\Omega$ and $\P$ be a probability distribution over $\Omega$. 
    Suppose that for any $\varphi\in\Phi$, $\|\varphi\|_{L^\infty(\Omega)}\leq b$, $\E_\P [\varphi]\geq 0$, and $\E_\P [\varphi^2] \leq B\E_\P [\varphi]+B_0$ for some $b,B,B_0\geq 0$. 
    Let $x_1,\cdots,x_n\overset{\textit{i.i.d.}}{\sim}\P$ and $\phi_n$ be a positive, non-decreasing and sub-root function such that
    \begin{equation}
        \mathcal{R}_n(\Phi_r):=\E_{\bm{\sigma}} \sup_{\varphi\in\Phi_r}\Big|\frac{1}{n}\sum_{i=1}^n\sigma_i\varphi(x_i)\Big|\leq \phi_n(r).
    \end{equation}
    where $\Phi_r:= \Big\{\varphi\in\Phi: \frac{1}{n}\sum_{i=1}^n{(\varphi(x_i))^2}\leq r\Big\}$.
    Define the largest fixed point of $\phi_n$ as $r_n^*$.
    Then for some absolute constant $C'$, with probability no less than $1-\delta$, it holds that for any $\varphi\in\Phi$,
    \begin{align}
        &\E_\P[\varphi]\leq \frac{2}{n}\sum_{i=1}^n\varphi(x_i) + C'(B\vee b)\left(r_n^* + \frac{\log\big((\log n)/\delta\big)}{n}\right)+C'\sqrt{\frac{B_0\log\big((\log n)/\delta\big)}{n}},\\
        &\frac{1}{n}\sum_{i=1}^n\varphi(x_i)\leq 2\E_\P[\varphi] + C'(B\vee b)\left(r_n^* + \frac{\log\big((\log n)/\delta\big)}{n}\right)+C'\sqrt{\frac{B_0\log\big((\log n)/\delta\big)}{n}}. 
    \end{align}
\end{lemma}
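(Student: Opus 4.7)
}
My plan is to adapt the classical local Rademacher complexity argument of Bartlett--Bousquet--Mendelson to the non-realizable setting where the variance satisfies $\E_\P[\varphi^2]\le B\E_\P[\varphi]+B_0$ rather than the pure Bernstein condition $B_0=0$. The three ingredients are: Talagrand's (Bousquet's) concentration inequality for the supremum of an empirical process on a fixed subclass; a peeling argument over geometric level sets of $\E_\P[\varphi]$; and the sub-root fixed point $r_n^{\ast}$, whose defining property $\phi_n(cr)\le \sqrt{c}\,\phi_n(r)$ for $c\ge 1$ allows $\phi_n(r)\le \sqrt{r\cdot r_n^{\ast}}$ for $r\ge r_n^{\ast}$.

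The steps, in order, are as follows. First, for each level $t>0$ let $\Phi^{(t)}:=\{\varphi\in\Phi:\E_\P[\varphi]\le t\}$ and apply Bousquet's inequality to the one-sided supremum $Z_t:=\sup_{\varphi\in\Phi^{(t)}}(\P_n\varphi-\P\varphi)$, using $\|\varphi\|_\infty\le b$ and $\sup_{\varphi\in\Phi^{(t)}}\E_\P[\varphi^2]\le Bt+B_0$ by the assumed variance bound. Symmetrization gives $\E[Z_t]\le 2\mathcal{R}_n(\Phi^{(t)})$. Second, I would convert the population-$L^2$ localization to the empirical-$L^2$ localization in which $\phi_n$ is stated: a uniform concentration argument for $\{\varphi^2:\varphi\in\Phi\}$ (whose $L^\infty$ norm is $b^2$) shows that with high probability, every $\varphi\in\Phi^{(t)}$ satisfies $\tfrac{1}{n}\sum_i \varphi(x_i)^2\le 2(Bt+B_0)+O(b^2\log(1/\delta)/n)$, so $\Phi^{(t)}\subseteq \Phi_{r(t)}$ for $r(t)\lesssim Bt+B_0+b^2\log(1/\delta)/n$, and therefore $\mathcal{R}_n(\Phi^{(t)})\le \phi_n(r(t))$. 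Third, combining the two bounds yields, uniformly on $\Phi^{(t)}$,
\begin{equation*}
\P\varphi-\P_n\varphi \;\lesssim\; \phi_n(r(t))+\sqrt{\tfrac{(Bt+B_0)\log(1/\delta)}{n}}+\tfrac{b\log(1/\delta)}{n}.
\end{equation*}
Fourth, apply the sub-root property: for $r(t)\ge r_n^{\ast}$ we have $\phi_n(r(t))\le\sqrt{r(t)\,r_n^{\ast}}$, and by AM--GM every term of the form $\sqrt{(Bt+B_0)\cdot\alpha}$ can be absorbed as $\tfrac{1}{2}t + \tfrac{1}{2}(B\vee 1)\alpha + \sqrt{B_0\alpha}$. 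Fifth, peel: for $\varphi$ with $\E_\P[\varphi]\in[2^{k-1}r_n^{\ast},2^k r_n^{\ast}]$, apply the above with $t=2^k r_n^{\ast}$ and a union bound over $k=0,1,\ldots,O(\log n)$ (since $\E[\varphi]\le b$), which only costs the extra $\log\log n$ factor in the final $\log((\log n)/\delta)$. Rearranging yields the first direction $\E_\P[\varphi]\le 2\P_n\varphi+C(B\vee b)(r_n^{\ast}+\log((\log n)/\delta)/n)+C\sqrt{B_0\log((\log n)/\delta)/n}$. The second direction is identical with $\P_n\varphi-\P\varphi$ in place of $\P\varphi-\P_n\varphi$ and the peeling done over $\P_n\varphi$ instead of $\E_\P\varphi$, using the reverse empirical-to-population $L^2$ conversion.

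The main obstacle is the mismatch between the hypothesis, which localizes $\phi_n$ via the empirical norm $\tfrac{1}{n}\sum_i\varphi(x_i)^2\le r$, and the Bernstein-type variance bound, which is stated for the population quantity $\E_\P[\varphi^2]$. Bridging the two requires the uniform two-sided concentration of $\P_n\varphi^2$ around $\E_\P\varphi^2$ on the class $\{\varphi^2:\varphi\in\Phi\}$; this is delicate because the class of squared functions has Rademacher complexity controlled via the contraction principle (the map $u\mapsto u^2$ is $2b$-Lipschitz on $[-b,b]$), so the conversion introduces only an extra $b$-factor, but one must keep the $B_0$ term inside a square root (and not let it contaminate the leading $(B\vee b)r_n^{\ast}$ term) throughout every application of AM--GM. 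The second subtlety, which is routine, is ensuring the union-bound over the $O(\log n)$ peeling levels produces only the stated $\log((\log n)/\delta)$ factor rather than $\log(n/\delta)$; this follows from picking $\delta_k\propto\delta/k^2$ in the standard way.
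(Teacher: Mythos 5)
Your proposal is correct and follows essentially the same route as the paper's proof: peeling over geometric levels of $\E_\P[\varphi]$, Bousquet's concentration applied to each level set for both $\varphi$ and $\varphi^2$ (the latter via the $2b$-Lipschitz contraction), converting the resulting population-variance control into the empirical $L^2$ localization under which $\phi_n$ is defined, then invoking the sub-root fixed-point property and absorbing the Bernstein cross-term by AM--GM, with the union bound over $O(\log n)$ peeling levels producing the $\log((\log n)/\delta)$ factor. The only cosmetic differences are that the paper peels downward from $b$ (levels $\epsilon_j = b2^{-j}$, $j\le \lfloor\log_2 n\rfloor$, with a separate catch-all for $j>j_0$) whereas you peel upward from $r_n^*$, and that you separately describe peeling over $\P_n\varphi$ for the reverse direction while the paper obtains both directions at once from the two-sided concentration at the same levels; neither difference affects the validity.
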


\begin{proof}
    We follow the procedures in \citet{bousquet2002concentration}.
    Let $\epsilon_j=b2^{-j}$ and consider a sequence of classes
    \begin{equation}
        \Phi^{(j)}:=\{\varphi\in\Phi: \epsilon_{j+1}<\E_\P[\varphi] \leq \epsilon_j\}.
    \end{equation}
    Note that $\Phi=\cup_{j\geq 0}\Phi^{(j)}$ and for $\varphi\in\Phi^{(j)}$, $\E_\P[\varphi^2]\leq B\epsilon_k+B_0$.
    Let $j_0=\lfloor\log_2 n\rfloor$.
    Then by \citet[Lemma 6.1]{bousquet2002concentration}, it holds that with probability no less than $1-\delta$, for any $j\leq j_0$ and $\varphi\in\Phi^{(j)}$,
    \begin{align}
        &\Big|\frac{1}{n}\sum_{i=1}^n\varphi(x_i)-\E_\P[\varphi]\Big|\lesssim \mathcal{R}_n(\Phi^{(j)})+\sqrt{\frac{(B\epsilon_j+B_0)\log\big(\log(b/\epsilon_j)/\delta\big)}{n}} + \frac{b\log\big(\log(b/\epsilon_j)/\delta\big)}{n}, \label{eq:rademacher_1}\\
        &\Big|\frac{1}{n}\sum_{i=1}^n(\varphi(x_i))^2-\E_\P[\varphi^2]\Big|\lesssim b\mathcal{R}_n(\Phi^{(j)})+\sqrt{\frac{b^2(B\epsilon_j+B_0)\log\big(\log(b/\epsilon_j)/\delta\big)}{n}} + \frac{b^2\log\big(\log(b/\epsilon_j)/\delta\big)}{n}. \label{eq:rademacher_2}
    \end{align}
    Besides, for $\varphi\in\cup_{k>k_0}\Phi^{(j)}=:\Phi^{(j_0:)}$,
    \begin{equation}\label{eq:rademacher_3}
        \Big|\frac{1}{n}\sum_{i=1}^n\varphi(x_i)-\E_\P[\varphi]\Big|\lesssim \mathcal{R}_n(\Phi^{(j_0:)})+\sqrt{\frac{(B\epsilon_{j_0}+B_0)\log\big(\log(n)/\delta\big)}{n}} + \frac{b\log\big((\log n)/\delta\big)}{n}
    \end{equation}
    
    From now on we reason on the conjunction of \eqref{eq:rademacher_1}, \eqref{eq:rademacher_2} and \eqref{eq:rademacher_3}.
    Define 
    \begin{equation}\label{eq:def_u}
        U_j = B\epsilon_j+B_0+b\mathcal{R}_n(\Phi^{(k)})+\sqrt{\frac{b^2(B\epsilon_j+B_0)\log\big(\log(b/\epsilon_j)/\delta\big)}{n}} + \frac{b^2\log\big(\log(b/\epsilon_j)/\delta\big)}{n}.
    \end{equation}
    and thus for any $\varphi\in\Phi^{(j)}$, we have $\frac{1}{n}\sum_{i=1}^n(\varphi(x_i))^2\leq CU_j$ for some absolute constant $C$ by \eqref{eq:rademacher_2}, indicating that
    $\mathcal{R}_n(\Phi^{(j)})\leq \phi_n(CU_j)\leq \sqrt{C}\phi_n(U_j)$.
    For any $j\leq j_0$,
    \begin{equation}
        U_j \leq 2(B\epsilon_j+B_0)+b\sqrt{C}\phi_n(U_j)+\frac{2b^2\log\big((\log n)/\delta\big)}{n}.
    \end{equation}
    Since $\phi_n$ is non-decreasing and sub-root, the inequality above implies that
    \begin{equation}
        U_j \lesssim b^2r_n^*+B\epsilon_j+B_0+\frac{b^2\log\big((\log n)/\delta\big)}{n}=:r_n(\epsilon_j).
    \end{equation}
    Therefore, for any $\varphi\in\Phi^{(j)},j\leq j_0$, by \eqref{eq:rademacher_1},
    \begin{equation}
        \begin{aligned}
            \Big|\frac{1}{n}\sum_{i=1}^n\varphi(x_i)-\E_\P[\varphi]\Big| 
            &\lesssim \phi_n(r_n(\epsilon_j))+\sqrt{\frac{(B\epsilon_j+B_0)\log\big((\log n)/\delta\big)}{n}}+\frac{b\log\big((\log n)/\delta\big)}{n} \\
            &=: F_n(\epsilon_j).
        \end{aligned}
    \end{equation}
    Noticing that $\E_\P[\varphi]\leq \epsilon_j\leq 2\E_\P[\varphi]$, it reduces to
    \begin{equation}
        \Big|\frac{1}{n}\sum_{i=1}^n\varphi(x_i)-\E_\P[\varphi]\Big|\lesssim F_n(\E_\P[\varphi]).
    \end{equation}
    Hence we have by noting that $F_n$ is also a non-decreasing sub-root function, 
    \begin{align}
        &\E_\P[\varphi]\leq \frac{2}{n}\sum_{i=1}^n\varphi(x_i) + C'(B\vee b)\left(r_n^* + \frac{\log\big((\log n)/\delta\big)}{n}\right)+C'\sqrt{\frac{B_0\log\big((\log n)/\delta\big)}{n}},\\
        &\frac{1}{n}\sum_{i=1}^n\varphi(x_i)\leq 2\E_\P[\varphi]+C'(B\vee b)\left(r_n^* + \frac{\log\big((\log n)/\delta\big)}{n}\right)+C'\sqrt{\frac{B_0\log\big((\log n)/\delta\big)}{n}}.
    \end{align}
    Here $C'$ is an absolute constant. Moreover, when $\varphi\in\Phi^{(j)}$ for $j>j_0$, we have $\E_\P[\varphi]\leq \frac{b}{n}$, and according to \eqref{eq:rademacher_3},
    \begin{equation}
        \Big|\frac{1}{n}\sum_{i=1}^n\varphi(x_i)-\E_\P[\varphi]\Big|\lesssim F_n(\varepsilon_{j_0}).
    \end{equation}
    Hence the same bounds apply, which completes the proof.
\end{proof}

\begin{lemma}\label{lem:local_rademacher_meta}
    Let $\Phi$ be a class of functions on domain $\Omega$, $\P^1,\cdots,\P^K$ be probability distributions over $\Omega$, 
    and $\widehat{\P}^{(K)}=\frac{1}{K}\sum_{k=1}^K\delta_{\P^k}$.
    Suppose that for any $\varphi\in\Phi$, $\|\varphi\|_{L^\infty(\Omega)}\leq b$, $\E_{\widehat{\P}^{(K)}} [\varphi]\geq 0$, and $\E_{\widehat{\P}^{(K)}} [\varphi^2] \leq B\E_{\widehat{\P}^{(K)}} [\varphi]+B_0$ for some $b,B,B_0\geq 0$. 
    Let $x^k_1,\cdots,x^k_n\overset{\textit{i.i.d.}}{\sim}\P^k$ for any $k$ and all $(x_i^k)_{i,k}$ are independent. 
    Let $\phi_{K,n}$ be a positive, non-decreasing and sub-root function such that
    \begin{equation}
        \mathcal{R}_{K,n}(\Phi_r):=\E_{\bm{\sigma}} \sup_{\varphi\in\Phi_r}\Big|\frac{1}{nK}\sum_{k=1}^K\sum_{i=1}^n\sigma_i^k\varphi(x_i^k)\Big|\leq \phi_{K,n}(r).
    \end{equation}
    where $\Phi_r:= \Big\{\varphi\in\Phi: \frac{1}{nK}\sum_{k=1}^K\sum_{i=1}^n{(\varphi(x_i^k))^2}\leq r\Big\}$.
    Define the largest fixed point of $\phi_{K,n}$ as $r_{K,n}^*$.
    Then for some absolute constant $C'$, with probability no less than $1-\delta$, it holds that for any $\varphi\in\Phi$,
    \begin{align}
        &\E_{\widehat{\P}^{(K)}}[\varphi]\leq \frac{2}{nK}\sum_{k=1}^K\sum_{i=1}^n\varphi(x_i) + C'(B\vee b)\left(r_{K,n}^* + \frac{\log\big((\log nK)/\delta\big)}{nK}\right)+C'\sqrt{\frac{B_0\log\big((\log nK)/\delta\big)}{nK}},\\
        &\frac{1}{nK}\sum_{k=1}^K\sum_{i=1}^n\varphi(x_i^k)\leq 2\E_{\widehat{\P}^{(K)}}[\varphi] + C'(B\vee b)\left(r_{K,n}^* + \frac{\log\big((\log nK)/\delta\big)}{nK}\right)+C'\sqrt{\frac{B_0\log\big((\log nK)/\delta\big)}{nK}}. 
    \end{align}
\end{lemma}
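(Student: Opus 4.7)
The plan is to adapt the proof of Lemma \ref{lem:local_rademacher} essentially verbatim, since every concentration tool it relies on (Talagrand--Bousquet's inequality, the symmetrization inequality underlying Rademacher complexity, and the peeling device) holds for independent but not necessarily identically distributed random variables. Concretely, I would view the $nK$ samples $\{x_i^k\}$ as a collection of independent random variables, introduce the average distribution $\bar{\P}:=\frac{1}{K}\sum_{k=1}^K\P^k$ so that $\E_{\widehat{\P}^{(K)}}[\varphi]=\E_{\bar{\P}}[\varphi]$, and note that the empirical mean $\widehat{\varphi}:=\frac{1}{nK}\sum_{k,i}\varphi(x_i^k)$ is unbiased for $\E_{\bar{\P}}[\varphi]$. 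The variance hypothesis then reads $\E_{\bar{\P}}[\varphi^2]\le B\,\E_{\bar{\P}}[\varphi]+B_0$, which is exactly the assumption used in the single-task peeling argument.

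Next I would run the peeling step. Set $\epsilon_j=b\cdot 2^{-j}$, $j_0=\lfloor\log_2(nK)\rfloor$, and define the slices $\Phi^{(j)}:=\{\varphi\in\Phi:\epsilon_{j+1}<\E_{\bar{\P}}[\varphi]\le\epsilon_j\}$, together with the tail $\Phi^{(j_0:)}:=\bigcup_{j>j_0}\Phi^{(j)}$. On each slice one applies the non-i.i.d.\ form of Bousquet's inequality to both $\varphi$ and $\varphi^2$, obtaining, with probability $1-\delta/(j_0{+}1)$ per slice,
\begin{equation*}
\Big|\widehat{\varphi}-\E_{\bar{\P}}[\varphi]\Big|\lesssim \mathcal{R}_{K,n}(\Phi^{(j)})+\sqrt{\tfrac{(B\epsilon_j+B_0)\log(\log(b/\epsilon_j)/\delta)}{nK}}+\tfrac{b\log(\log(b/\epsilon_j)/\delta)}{nK},
\end{equation*}
and an analogous bound for $\frac{1}{nK}\sum_{k,i}\varphi(x_i^k)^2$ with an extra factor $b$. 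Combining the second-moment bound with the sub-root property of $\phi_{K,n}$ via a self-bounding inequality of the form $U_j\lesssim b\,\phi_{K,n}(U_j)+B\epsilon_j+B_0+\tfrac{b^2\log((\log nK)/\delta)}{nK}$ and the standard fixed-point lemma for sub-root functions yields $\mathcal{R}_{K,n}(\Phi^{(j)})\lesssim \phi_{K,n}(b^2 r_{K,n}^*+B\epsilon_j+B_0+\tfrac{b^2\log((\log nK)/\delta)}{nK})$. Plugging this back into the first-moment bound, using $\tfrac{1}{2}\epsilon_j\le\E_{\bar{\P}}[\varphi]\le\epsilon_j$ on each slice, and taking a union bound over $j\le j_0$ gives both stated inequalities, with constants that absorb the logarithmic factors in $j$ into $\log((\log nK)/\delta)$. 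The tail $\Phi^{(j_0:)}$ is handled by a single global deviation bound at level $\epsilon_{j_0}\le b/(nK)$, which is dominated by the error terms already present.

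The only potential obstacle is verifying that Bousquet's concentration inequality and the symmetrization that defines the Rademacher complexity remain valid without identical distribution of the $x_i^k$; both classical proofs use only independence, so no new ideas are required and the absolute constants merge cleanly into the single $C'$ in the statement. The one quantitative change relative to Lemma \ref{lem:local_rademacher} is that the effective sample size is $nK$ rather than $n$, which is already encoded in the definitions of $\mathcal{R}_{K,n}$, $\phi_{K,n}$, and $r_{K,n}^*$.
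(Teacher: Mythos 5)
Your proposal is correct and follows essentially the same route as the paper: the same peeling into slices $\Phi^{(j)}$ with thresholds $\epsilon_j=b2^{-j}$ up to $j_0=\lfloor\log_2(nK)\rfloor$, the same self-bounding argument combined with the sub-root fixed point, and the same observation that the concentration and symmetrization machinery only needs independence, not identical distribution. The only cosmetic difference is that the paper invokes Theorem 3 of \citet{massart2000constants} for the slice-wise Talagrand-type concentration in the non-i.i.d.\ setting, whereas you appeal to a non-i.i.d.\ form of Bousquet's inequality; both serve the same role and the rest of the argument is identical.
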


\begin{proof}
    We follow the procedures in \citet{bousquet2002concentration}.
    Let $\epsilon_k=b2^{-k}$ and consider a sequence of classes
    \begin{equation}
        \Phi^{(j)}:=\{\varphi\in\Phi: \epsilon_{j+1}<\E_{\widehat{\P}^{(K)}}[\varphi] \leq \epsilon_j\}.
    \end{equation}
    Note that $\Phi=\cup_{j\geq 0}\Phi^{(j)}$ and for $\varphi\in\Phi^{(j)}$, $\E_{\widehat{\P}^{(K)}}[\varphi^2]\leq B\epsilon_j+B_0$.
    Let $j_0=\lfloor\log_2(nK)\rfloor$.
    Then by \citet[Theorem 3]{massart2000constants}, with probability no less than $1-\delta$, for any $j\leq j_0$ and $\varphi\in\Phi^{(j)}$,
    \begin{align}
        &\Big|\frac{1}{nK}\sum_{k=1}^K\sum_{i=1}^n\varphi(x_i^k)-\E_{\widehat{\P}^{(K)}}[\varphi]\Big|\lesssim \mathcal{R}_{K,n}(\Phi^{(j)})+\sqrt{\frac{(B\epsilon_j+B_0)\log\big(\log(b/\epsilon_j)/\delta\big)}{nK}} + \frac{b\log\big(\log(b/\epsilon_j)/\delta\big)}{nK}, \label{eq:rademacher_meta_1}\\
        &\Big|\frac{1}{nK}\sum_{k=1}^K\sum_{i=1}^n(\varphi(x_i^k))^2-\E_{\widehat{\P}^{(K)}}[\varphi^2]\Big|\lesssim b\mathcal{R}_{K,n}(\Phi^{(j)})+\sqrt{\frac{b^2(B\epsilon_j+B_0)\log\big(\log(b/\epsilon_j)/\delta\big)}{nK}} + \frac{b^2\log\big(\log(b/\epsilon_j)/\delta\big)}{nK}. \label{eq:rademacher_meta_2}
    \end{align}
    Besides, for any $\varphi\in\cup_{j>j_0}\Phi^{(j)}=:\Phi^{(j_0:)}$,
    \begin{equation}\label{eq:rademacher_meta_3}
        \Big|\frac{1}{nK}\sum_{k=1}^K\sum_{i=1}^n\varphi(x_i^k)-\E_{\widehat{\P}^{(K)}}[\varphi]\Big|\lesssim \mathcal{R}_{K,n}(\Phi^{(j_0:)})+\sqrt{\frac{(B\epsilon_{j_0}+B_0)\log\big((\log nK)/\delta\big)}{nK}} + \frac{b\log\big( (\log nK)/\delta\big)}{nK}.
    \end{equation}
    
    From now on we reason on the conjunction of \eqref{eq:rademacher_meta_1}, \eqref{eq:rademacher_meta_2} and \eqref{eq:rademacher_meta_3}.
    Define 
    \begin{equation}\label{eq:def_u_meta}
        U_j = B\epsilon_j+B_0+b\mathcal{R}_{K,n}(\Phi^{(j)})+\sqrt{\frac{b^2(B\epsilon_j+B_0)\log\big(\log(b/\epsilon_j)/\delta\big)}{nK}} + \frac{b^2\log\big(\log(b/\epsilon_j)/\delta\big)}{nK}.
    \end{equation}
    and thus for any $\varphi\in\Phi^{(j)}$, we have $\frac{1}{nK}\sum_{k=1}^K\sum_{i=1}^n(\varphi(x_i^k))^2\leq CU_j$ for some absolute constant $C$ by \eqref{eq:rademacher_meta_2}, indicating that
    $\mathcal{R}_{K,n}(\Phi^{(j)})\leq \phi_{K,n}(CU_j)\leq \sqrt{C}\phi_{K,n}(U_j)$.
    For any $j\leq j_0$,
    \begin{equation}
        U_j \leq 2(B\epsilon_j+B_0)+b\sqrt{C}\phi_{K,n}(U_j)+\frac{2b^2\log\big((\log nK)/\delta\big)}{nK}.
    \end{equation}
    Since $\phi_{K,n}$ is non-decreasing and sub-root, the inequality above implies that
    \begin{equation}
        U_j \lesssim b^2r_{K,n}^*+B\epsilon_j+B_0+\frac{b^2\log\big((\log nK)/\delta\big)}{nK}=:r_{K,n}(\epsilon_j).
    \end{equation}
    Therefore, for any $\varphi\in\Phi^{(j)},j\leq j_0$, by \eqref{eq:rademacher_meta_1},
    \begin{equation}
        \begin{aligned}
            \Big|\frac{1}{nK}\sum_{k=1}^K\sum_{i=1}^n\varphi(x_i^k)-\E_{\widehat{\P}^{(K)}}[\varphi]\Big| 
            &\lesssim \phi_{K,n}(r_{K,n}(\epsilon_j))+\sqrt{\frac{(B\epsilon_j+B_0)\log\big((\log nK)/\delta\big)}{nK}}+\frac{b\log\big((\log nK)/\delta\big)}{nK} \\
            &=: F_{K,n}(\epsilon_j).
        \end{aligned}
    \end{equation}
    Noticing that $\E_{\widehat{\P}^{(K)}}[\varphi]\leq \epsilon_j\leq 2\E_{\widehat{\P}^{(K)}}[\varphi]$, it reduces to
    \begin{equation}
        \Big|\frac{1}{nK}\sum_{k=1}^K\sum_{i=1}^n\varphi(x_i^k)-\E_{\widehat{\P}^{(K)}}[\varphi]\Big|\lesssim F_{K,n}(\E_{\widehat{\P}^{(K)}}[\varphi]).
    \end{equation}
    Hence we have by noting that $F_{K,n}$ is also a non-decreasing sub-root function, 
    \begin{align}
        &\E_{\widehat{\P}^{(K)}}[\varphi]\leq \frac{2}{nK}\sum_{k=1}^K\sum_{i=1}^n\varphi(x_i^k) + C'(B\vee b)\left(r_{K,n}^* + \frac{\log\big((\log nK)/\delta\big)}{nK}\right)+C'\sqrt{\frac{B_0\log\big((\log nK)/\delta\big)}{nK}},\\
        &\frac{1}{nK}\sum_{k=1}^K\sum_{i=1}^n\varphi(x_i^k)\leq 2\E_{\widehat{\P}^{(K)}}[\varphi]+C'(B\vee b)\left(r_{K,n}^* + \frac{\log\big((\log nK)/\delta\big)}{nK}\right)+C'\sqrt{\frac{B_0\log\big((\log nK)/\delta\big)}{nK}}.
    \end{align}
    Here $C'$ is an absolute constant. Moreover, when $\varphi\in\Phi^{(j)}$ for $j>j_0$, we have $\E_{\widehat{\P}^{(K)}}[\varphi]\leq \frac{b}{nK}$, and according to \eqref{eq:rademacher_meta_3},
    \begin{equation}
        \Big|\frac{1}{nK}\sum_{k=1}^K\sum_{i=1}^n\varphi(x_i^k)-\E_{\widehat{\P}^{(K)}}[\varphi]\Big|\lesssim F_{K,n}(\varepsilon_{j_0}).
    \end{equation}
    Hence the same bounds apply, which completes the proof.
\end{proof}

\subsection{Verifying Task Diversity Assumption}\label{app:subsec:verify_diversity}

When $\mathcal{F}$ is linear function class, \citet{tripuraneni2020theory} provides an explicit bound on $(\nu,\Delta)$.
However, in general, performing a fine-grained analysis is challenging, especially for complex function classes such as neural networks.
In the following proposition, we present a very pessimistic bound for $(\nu,\Delta)$ based on density ratio, which is independent of the specific choice of hypothesis classes $\mathcal{F}$ and $\mathcal{H}$.
\begin{prop}
    Suppose $\mathcal{F}=\textbf{conv}(\mathcal{F})$,
    and $\inf_{x,y}\frac{p^k(x,y)}{p^0(x,y)}\geq \lambda_k$ for any $1\leq k\leq K$.
    Let $\lambda=\sum_{k=1}^K\lambda_k$.
    Then $\P^1,\cdots,\P^K$ are $(\widetilde{\nu},\widetilde{\Delta})$-diverse over $\P^0$ with $\widetilde{\nu}=\lambda/(2K)$,
    \begin{equation}
        \widetilde{\Delta}=2\E_{(x,y)\sim\P^0}\E_{t,x_t}\left[\left\|\frac{1}{\lambda}\sum_{k=1}^K\lambda_k\nabla\log p_t^k(x_t|y)-\nabla\log p_t^0(x_t|y)\right\|^2\right].
    \end{equation}
\end{prop}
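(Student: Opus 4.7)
The plan is to exploit the convexity assumption $\mathcal{F}=\textbf{conv}(\mathcal{F})$ by exhibiting an explicit competitor in $\mathcal{F}^0=\mathcal{F}$. Given any $h \in \mathcal{H}$ and $\vf = (f^1,\ldots,f^K) \in \mathcal{F}^{\otimes K}$, I would take the weighted combination
\begin{equation*}
    f^0 := \frac{1}{\lambda}\sum_{k=1}^K \lambda_k f^k
\end{equation*}
as the test candidate; since the weights $\lambda_k/\lambda$ are nonnegative and sum to one, convexity of $\mathcal{F}$ ensures $f^0 \in \mathcal{F}$. It is natural to pair this with the auxiliary ``ideal'' target
\begin{equation*}
    \bar{s}(x_t,y,t) := \frac{1}{\lambda}\sum_{k=1}^K \lambda_k \nabla\log p_t^k(x_t|y),
\end{equation*}
so that the extra mismatch term $\E_{(x,y)\sim\P^0}\E_{t,x_t}[\|\bar{s} - \nabla\log p_t^0(x_t|y)\|^2]$ is by construction exactly $\widetilde{\Delta}/2$.

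The first step would be to insert $\bar{s}$ into $L^{\P^0}(s_{f^0,h})$ via the elementary bound $\|a-c\|^2 \leq 2\|a-b\|^2 + 2\|b-c\|^2$, producing a decomposition in which one term is $\widetilde{\Delta}$ outright and the other is $2\,\E_{\P^0}\E_{t,x_t}[\|f^0 - \bar{s}\|^2]$. The second step would be to apply Jensen's inequality to $\|\cdot\|^2$ with weights $\lambda_k/\lambda$ to pull the norm inside the convex combination:
\begin{equation*}
    \|f^0(x_t,h(y),t) - \bar{s}(x_t,y,t)\|^2 \leq \frac{1}{\lambda}\sum_{k=1}^K \lambda_k\,\bigl\|f^k(x_t,h(y),t) - \nabla\log p_t^k(x_t|y)\bigr\|^2.
\end{equation*}

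The third step is the key transfer step: converting the outer expectation under $\P^0$ to one under $\P^k$ for each summand. Here I would use the pointwise ratio bound on the forward marginals, namely $p_t^0(x_t,y) \leq p_t^k(x_t,y)/\lambda_k$, which follows by integrating the hypothesis $p^k(x,y) \geq \lambda_k\, p^0(x,y)$ against the OU transition kernel $\phi_t(x_t|x_0)$ (nonnegative and task-independent) through $p_t^k(x_t,y) = \int \phi_t(x_t|x_0)\, p^k(x_0,y)\,\dif x_0$. For any nonnegative integrand $g(x_t,y,t)$ this yields $\E_{\P^0}\E_{t,x_t}[g] \leq \lambda_k^{-1}\,\E_{\P^k}\E_{t,x_t}[g]$, so the $k$-th summand contributes at most $\lambda_k^{-1} L^{\P^k}(s_{f^k,h})$. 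The $\lambda_k$'s cancel and the first-term bound becomes $\frac{2}{\lambda}\sum_k L^{\P^k}(s_{f^k,h}) = \frac{2K}{\lambda}\cdot\frac{1}{K}\sum_k L^{\P^k}(s_{f^k,h})$.

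Putting the pieces together yields $L^{\P^0}(s_{f^0,h})\leq (2K/\lambda)\cdot K^{-1}\sum_k L^{\P^k}(s_{f^k,h}) + \widetilde{\Delta}$; taking the infimum over $\vf \in \mathcal{F}^{\otimes K}$ on the right and noting that the left-hand side upper-bounds $\inf_{f^0 \in \mathcal{F}^0} L^{\P^0}(s_{f^0,h})$ recovers the $(\widetilde{\nu},\widetilde{\Delta})$-diversity inequality with $\widetilde{\nu} = \lambda/(2K)$. The argument is largely bookkeeping; the only mildly delicate point is promoting the density-ratio lower bound from $(x_0,y)$ to $(x_t,y)$, but this is immediate from nonnegativity and linearity of the convolution with $\phi_t$ and requires no smoothness hypotheses on the densities.
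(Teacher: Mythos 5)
Your proof is correct and takes essentially the same route as the paper: pick the convex-combination competitor $f^0=\lambda^{-1}\sum_k\lambda_k f^k$, split via $\|a-c\|^2\le 2\|a-b\|^2+2\|b-c\|^2$ around $\bar s=\lambda^{-1}\sum_k\lambda_k\nabla\log p_t^k$, apply Jensen with weights $\lambda_k/\lambda$, and then change measure from $\P^0$ to $\P^k$ using the assumed density ratio. The only cosmetic difference is that you push the ratio bound through the OU kernel to the time-$t$ marginal $p_t$, whereas the paper applies the change of measure directly on the $(x_0,y)$ expectation (the inner $\E_{t,x_t|x_0}$ conditional expectation being nonnegative and task-independent makes this immediate); both are valid and lead to the same bound.
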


We mention that the only requirement is $\mathcal{F}$ is a convex hull of itself, which can be easily satisfied by most hypothesis classes such as neural networks. 
More refined analysis on specific neural network class is an interesting future work.
\begin{proof}
    For any $h\in\mathcal{H}$, let $f^k\in\mathcal{F}$ be the corresponding minimizer for $1\leq k\leq K$.
    Further define $\lambda=\sum_{k=1}^K\lambda_k$ and $\widetilde{f}^0=\frac{1}{\lambda}\sum_{k=1}^K\lambda_kf^k\in\textbf{conv}(\mathcal{F})\in\mathcal{F}$.
    Then we have
    \begin{equation}
        \begin{aligned}
            L^{\P^0}(s_{\widetilde{f}^0,h})
            &=\E_{\P^0}\left[\|\widetilde{f}^0(x_t,h(y),t)-\nabla\log p_t^0(x_t|y)\|^2\right] \\
            &\leq 2\E_{\P^0}\left[\|\widetilde{f}^0(x_t,h(y),t)-\sum_{k=1}^K\frac{\lambda_k}{\lambda}\nabla\log p_t^k(x_t|y)\|^2 +\|\sum_{k=1}^K\frac{\lambda_k}{\lambda}\nabla\log p_t^k(x_t|y)-\nabla\log p_t^0(x_t|y)\|^2\right] \\
            &\leq \frac{2}{\lambda}\sum_{k=1}^K\E_{\P^0}\lambda_k\left[\|f^k(x_t,h(y),t)-\nabla\log p_t^k(x_t|y)\|^2\right]+ \widetilde{\Delta} \\
            &\leq \frac{2}{\lambda}\sum_{k=1}^K\E_{\P^k}\left[\|f^k(x_t,h(y),t)-\nabla\log p_t^k(x_t|y)\|^2\right]+ \widetilde{\Delta} \\
            &= \frac{1}{\widetilde{\nu}}\inf_{\vf\in\mathcal{F}^{\otimes K}}\frac{1}{K}\sum_{k=1}^KL^{\P^k}(s_{f^k,h})+ \widetilde{\Delta}.
        \end{aligned}
    \end{equation}
    We conclude the proof by noticing that $\inf_{f\in\mathcal{F}}L^{\P^0}(s_{f,h})\leq L^{\P^0}(s_{\widetilde{f}^0,h})$.
\end{proof}

\section{Proofs in Section \ref{sec:dist_estimation}}

\subsection{Proofs of Score Network Approximation}\label{app:subsec:approximation}

\begin{thm}[Thm. \ref{thm:approximation_all_informal}]
\label{thm:approximation_all}
    Under Assumption \ref{asp:sub_gaussian}, \ref{asp:low_dim}, \ref{asp:lip}, to achieve $R_f\geq C_R\log^{\frac{1}{2}}(nKM_f/\delta)$ and
    \begin{align}
        &\inf_{h\in\mathcal{H}} \frac{1}{K}\sum_{k=1}^K \inf_{f\in\mathcal{F}}\E_{(x,y)\sim\P^k} [\ell^{\P^k} (x,y,s_{f,h})] = \mathcal{O}\left(\log^2(nK/(\varepsilon\delta))\varepsilon^2\right), \text{ (transfer learning)} \\
        &\inf_{h\in\mathcal{H}}\E_{\P\sim\Pmeta}\inf_{f\in\mathcal{F}}\E_{(x,y)\sim\P} [\ell^\P(x,y,s_{f,h})] = \mathcal{O}\left(\log^2(nK/(\varepsilon\delta))\varepsilon^2\right), \text{ (meta-learning)}
    \end{align}
    the configuration of $\mathcal{F}=NN_f(L_f,W_f,M_f,S_f,B_f,R_f,\gamma_f),\mathcal{H}=NN_h(L_h,W_h,S_h,B_h)$ should satisfy
    \begin{equation}
        \begin{aligned}
            &L_f=\mathcal{O}\left(\log\left(\frac{\log(nK/(\varepsilon\delta))}{\varepsilon}\right)\right),
            W_f=\mathcal{O}\left(\frac{\log^{3(d_x+d_y)/2}(nK/(\varepsilon\delta))}{\varepsilon^{d_x+d_y+1}T_0^3}\right), \\
            &S_f=\mathcal{O}\left(\frac{\log^{3(d_x+d_y)/2+1}(nK/(\varepsilon\delta))}{\varepsilon^{d_x+d_y+1}T_0^3}\right),
            B_f=\mathcal{O}\left(\frac{T\log^{\frac{3}{2}}(nK/(\varepsilon\delta))}{\varepsilon}\right), \\
            &R_f=\mathcal{O}\left(\log^{\frac{1}{2}}(nK/(\varepsilon\delta))\right), 
            M_f=\mathcal{O}\left(\log^3(nK/(\varepsilon\delta))\right),
            \gamma_f=\mathcal{O}\left(\log(nK/(\varepsilon\delta))\right),
        \end{aligned}
    \end{equation}
     \begin{equation}
        \begin{aligned}
            &L_h=\mathcal{O}\left(\log(1/\varepsilon)\right), W_h=\mathcal{O}\left(\varepsilon^{-D_y}\log(1/\varepsilon)\right), \\
            &S_h=\mathcal{O}\left(\varepsilon^{-D_y}\log^2(1/\varepsilon)\right),
            B_h = \mathcal{O}(1).
        \end{aligned}
    \end{equation}
    Here $\mathcal{O}(\cdot)$ hides all the polynomial factors of $d_x,d_y,D_y,C_1,C_2,L,B$.
\end{thm}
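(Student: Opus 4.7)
The plan is to build the joint approximation in two stages: first a Lipschitz approximation $\widehat{h}\in\mathcal{H}$ of the shared representation $h_*$, and then, on top of it, a score network $\widehat{f}\in\mathcal{F}$ approximating $f_*(x,w,t)=\nabla_x\log p_t(x;w)$ uniformly on a truncated region. For stage one, since $h_*:[0,1]^{D_y}\to[0,1]^{d_y}$ is $L$-Lipschitz by Assumption \ref{asp:low_dim}, I would invoke standard ReLU approximation theory for Lipschitz functions on the cube (Yarotsky; Schmidt-Hieber), producing $\widehat{h}$ with $\|\widehat{h}-h_*\|_{L^\infty}\le\varepsilon$ under the stated depth/width/sparsity: $L_h=\mathcal{O}(\log(1/\varepsilon))$, $W_h=\mathcal{O}(\varepsilon^{-D_y}\log(1/\varepsilon))$, $S_h=\mathcal{O}(\varepsilon^{-D_y}\log^2(1/\varepsilon))$, $B_h=\mathcal{O}(1)$. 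The bound $\|\widehat{h}\|_\infty\le 1$ is enforced by coordinate-wise $\min(\max(\cdot,0),1)$ clipping via two extra ReLU units.

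For stage two, I would follow the constructive strategy of \citet{chen2023score,fu2024unveil}. Using the representation \eqref{eq:score_1}--\eqref{eq:score_2}, write $f_*(x,w,t)$ as a ratio of integrals against the Gaussian kernel $\phi_t(x|x_0)$ and $p(x_0;w)=g_*^\P(x_0,w)$. Then (i) truncate the inner integral to $\|x_0\|_\infty\le R_0=\mathcal{O}(\log^{1/2}(1/\varepsilon))$, controlled by Assumption \ref{asp:sub_gaussian}; (ii) discretize by a tensor-product quadrature on a grid of size $\mathcal{O}(\varepsilon^{-(d_x+d_y)})$ in the joint $(x_0,w)$ space, so that $w$ only enters through finitely many evaluations of $g_*^\P(\cdot,w)$, which is smooth by Assumption \ref{asp:lip}; (iii) replace the exponentials $\exp(-\|\alpha_t x_0-x\|^2/(2\sigma_t^2))$, the reciprocal normalizer, and the multiplications by ReLU sub-networks using the standard $\log(1/\varepsilon)$-depth gadgets. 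The $T_0^{-3}$ factor in $W_f,S_f$ and the $T/\varepsilon$ factor in $B_f$ arise from approximating the unbounded $1/\sigma_t^2$ coefficient uniformly on $t\in[T_0,T]$. Critically, since $w$ enters only through smooth gadgets evaluated at a grid of fixed points, one can directly bound the Lipschitz constant in $w$ by $\gamma_f=\mathcal{O}(\log(nK/(\varepsilon\delta)))$; the $L^\infty$ bound $M_f=\mathcal{O}(\log^3(nK/(\varepsilon\delta)))$ follows from Lemma \ref{lem:lip_score_informal} combined with a final ReLU clip.

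With $\widehat{h}$ and $\widehat{f}$ in hand, I decompose the population score-matching loss as
\begin{equation*}
\E_{(x,y)\sim\P}\E_{t,x_t|x}\bigl[\|\widehat{f}(x_t,\widehat{h}(y),t)-f_*^\P(x_t,h_*(y),t)\|^2\bigr]
\lesssim \mathrm{I}+\mathrm{II}+\mathrm{III},
\end{equation*}
where $\mathrm{I}$ is the $L^\infty$ approximation error of $\widehat{f}$ on $\{\|x_t\|_\infty\le R_f\}\times[0,1]^{d_y}\times[T_0,T]$, $\mathrm{II}$ uses Lipschitzness of $f_*$ in $w$ (Lemma \ref{lem:lip_score_informal}) to pay $(C_X+C_X'\|x_t\|)^2\cdot\|\widehat{h}(y)-h_*(y)\|^2$, and $\mathrm{III}$ is the contribution from $\|x_t\|_\infty>R_f$, which is bounded using Lemma \ref{lem:bound score_t} and the sub-Gaussian tail to obtain $\mathcal{O}(M_f^2\exp(-\Omega(R_f^2)))$. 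Choosing $R_f=\Theta(\log^{1/2}(nK/(\varepsilon\delta)))$ makes $\mathrm{III}\lesssim\varepsilon^2/(nK)$, and taking the target accuracy $\varepsilon$ in both stage-one and stage-two constructions gives the stated bound. The same argument works verbatim for the meta-learning expectation by taking expectation of $\mathrm{I}$ over $\P\sim\Pmeta$.

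The main obstacle is the tension in stage two between three requirements: uniform approximation of $f_*$ on the truncated region (which, for small $t$, requires resolving the near-singular $1/\sigma_t^2$ kernel), the Lipschitz-in-$w$ constraint with tame $\gamma_f$, and the $L^\infty$ bound $M_f$. The constraint $\gamma_f=\mathcal{O}(\log(nK/(\varepsilon\delta)))$ is the most delicate: generic piecewise-polynomial ReLU reconstructions of $f_*$ can have Lipschitz constants that blow up with $\varepsilon^{-1}$ or $T_0^{-1}$, so it is essential that the construction channels the $w$-dependence only through the smooth finite-dimensional vector $(g_*^\P(x_0^{(j)},w))_j$, where Lemma \ref{lem:lip_score_informal} supplies the needed Lipschitz estimate. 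Handling the small-$t$ regime, which forces $\sigma_t^{-2}\lesssim T_0^{-1}$ and produces the $T_0^{-3}$ dependence, will require a careful split at $t\asymp 1/L$ mirroring the two cases in Lemma \ref{lem:lip_score_informal}. Once these are handled, the remaining calculations are routine bookkeeping of ReLU-gadget sizes.
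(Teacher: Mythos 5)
Your overall skeleton — approximate $h_*$ and $f_*$ separately, truncate to a cube $\Omega_{R_f}$, split the population error into an $L^\infty$ approximation piece, a representation-mismatch piece, and a tail piece — is the same as the paper's (the stage-one $\widehat h$ construction is exactly Proposition~\ref{prop:approximation_h}, and the error split is the decomposition in the proof of Theorem~\ref{thm:approximation_all}, modulo inserting $f_*(x_t,\widehat h(y),t)$ instead of $f(x_t,h_*(y),t)$ as the middle term, which is a cosmetic variant). However, your stage-two construction of $\widehat f$ has a genuine gap: you propose to approximate the integral representation \eqref{eq:score_1}--\eqref{eq:score_2}, which requires (i) quantitative smoothness of $g_*^\P(x_0,w)$ as a function of $(x_0,w)$ so that the quadrature converges, and (ii) a uniform lower bound on the normalizer $p_t(x;w)=\int \phi_t(x|x_0)g_*^\P(x_0,w)\,\dif x_0$ on the truncated domain so that the reciprocal ReLU gadget has tame Lipschitz constant. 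Neither is available here. Assumption~\ref{asp:lip} controls only the \emph{score} $\nabla_x\log g_*^\P$, not $g_*^\P$ or $\nabla_w g_*^\P$; your reading that ``$g_*^\P(\cdot,w)$ is smooth by Assumption~\ref{asp:lip}'' overstates what is assumed, since Lipschitzness of $\nabla_x\log g_*$ in $w$ gives a bound on a mixed log-derivative, not on $\partial_w g_*$. And the only lower-bound-type information on $p_t$ comes from the sub-Gaussian tail, which decays like $\exp(-c\|x\|^2)$ and would force the reciprocal gadget's parameters (and thus $B_f,\gamma_f,M_f$) to blow up beyond the stated logarithmic rates.

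The paper avoids both issues by \emph{not} going through the integral formula. Proposition~\ref{prop:approximation_f} observes that the function being approximated, $f_*=\nabla_x\log p_t$, is itself bounded and Lipschitz on $\Omega_{R_f}$: Lemma~\ref{lem:lip_score_informal} gives $(C_X+C_X'\|x\|^2)$-Lipschitzness in $x$ and $(C_X+C_X'\|x\|)$-Lipschitzness in $w$, Lemma~\ref{lem:bound score_t} gives $\|f_*\|_{L^\infty(\Omega_{R_f})}\lesssim R_f^3$, and Lemma~\ref{lem:lip_t} gives $\mathcal{O}(R_f^3/T_0^3)$-Lipschitzness in $t$ (whence the $T_0^{-3}$). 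Having reduced to approximating a Lipschitz function on a box, one uses a direct trapezoid partition-of-unity construction \`a la Yarotsky, which automatically yields the $\gamma_f,M_f$ bounds without ever touching $g_*^\P$ or the normalizer. If you want to salvage your route, you would first need to derive density-level smoothness (in $w$) and a normalizer lower bound from the score-only assumptions, which is a nontrivial extra argument that is not in the paper and is what the direct construction neatly circumvents.
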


\begin{proof}
    With a little abuse of notation, in transfer learning setting, we define $\Pmeta:=\frac{1}{K}\sum_{k=1}^K\delta_{\P^k}$ and it directly reduces to meta-learning case.
    Therefore, we only focus on the proof in meta-learning.
    
    We first decompose the misspecification error into two components: representation error and score approximation error.
    \begin{equation}
        \begin{aligned}
            &\inf_{h\in\mathcal{H}}\E_{\P\sim\Pmeta}\inf_{f\in\mathcal{F}}\E_{(x,y)\sim\P} [\ell^\P(x,y,s_{f,h})] \\
            &= \inf_{h\in\mathcal{H}}\E_{\P\sim\Pmeta}\inf_{f\in\mathcal{F}}\E_{(x,y)\sim\P}\E_{t,x_t|x} [\|f(x_t,h(y),t)-f_*^\P(x_t,h_*(y),t)\|^2] \\
            &\leq \inf_{h\in\mathcal{H}}\E_{\P\sim\Pmeta}\inf_{f\in\mathcal{F}}\E_{(x,y)\sim\P}\E_{t,x_t|x} 2\left[\|f(x_t,h(y),t)-f(x_t,h_*(y),t)\|^2+\|f(x_t,h_*(y),t)-f_*^\P(x_t,h_*(y),t)\|^2\right].
        \end{aligned}
    \end{equation}
    Further note that for any $f\in\mathcal{F}$, 
    \begin{equation}
        \begin{aligned}
            \E_{(x,y)\sim\P}\E_{t,x_t|x} [\|f(x_t,h(y),t)-f(x_t,h_*(y),t)\|^2]
            &\leq \E_{t,x_t,y} \|f(x_t,h(y),t)-f(x_t,h_*(y),t)\|^2\cdot \mathbbm{1}_{\|x_t\|\leq R_f} \\
            &\qquad + 8M_f^2\exp(-C_1'R_f^2) \\
            &\leq \E_{y\sim\P}[\gamma_f^2\|h(y)-h_*(y)\|^2] + 8M_f^2\exp(-C_1'R_f^2),
        \end{aligned}
    \end{equation}
    where $\Omega_{R_f}=[-R_f,R_f]^{d_x}\times[0,1]^{d_y}\times[T_0,T]$.
    By Proposition \ref{prop:approximation_f}, \ref{prop:approximation_h},
    \begin{equation}
        \begin{aligned}
            &\inf_{h\in\mathcal{H}}\E_{\P\sim\Pmeta}\inf_{f\in\mathcal{F}}\E_{(x,y)\sim\P} [\ell^\P(x,y,s_{f,h})] \\
            &\leq 
            \inf_{h\in\mathcal{H}}\E_{\P\sim\Pmeta}\E_{y\sim\P}[2\gamma_f^2\|h(y)-h_*(y)\|^2] + 16M_f^2\exp(-C_1'R_f^2) \\
            &\qquad +\E_{\P\sim\Pmeta}\inf_{f\in\mathcal{F}}2\|f(x_t,h_*(y),t)-f_*^\P(x_t,h_*(y),t)\|^2 \\
            &\leq 2\inf_{h\in\mathcal{H}}\gamma_f^2\|h-h_*\|_{L^\infty([0,1]^{D_y})}^2 + 16M_f^2\exp(-C_1'R_f^2) \\
            &\qquad +2\E_{\P\sim\Pmeta}\inf_{f\in\mathcal{F}}\|f(x_t,h_*(y),t)-f_*^\P(x_t,h_*(y),t)\|^2 \\
            &\lesssim \left(\log^2(nK/(\varepsilon\delta))d_y+ d_x\right)\varepsilon^2 \\
            &= \mathcal{O}\left(\log^2(nK/(\varepsilon\delta))\varepsilon^2\right).
        \end{aligned}
    \end{equation}
\end{proof}

\begin{prop}\label{prop:approximation_f}
    To achieve $R_f\geq C_R\log^{\frac{1}{2}}(nKM_f/\delta)$ and
    \begin{equation}
        \inf_{f\in\mathcal{F}}\E_{(x,y)\sim\P}\E_{t,x_t|x} [\|f(x_t,h_*(y),t)-f^\P(x_t,h_*(y),t)\|^2] \leq d_x\varepsilon^2,
    \end{equation}
    the configuration of $\mathcal{F}=NN_f(L_f,W_f,M_f,S_f,B_f,R_f,\gamma_f)$ should satisfy
    \begin{equation}
        \begin{aligned}
            &L_f=\mathcal{O}\left(\log\left(\frac{\log(nK/(\varepsilon\delta))}{\varepsilon}\right)\right),
            W_f=\mathcal{O}\left(\frac{\log^{3(d_x+d_y)/2}(nK/(\varepsilon\delta))}{\varepsilon^{d_x+d_y+1}T_0^3}\right), \\
            &S_f=\mathcal{O}\left(\frac{\log^{3(d_x+d_y)/2+1}(nK/(\varepsilon\delta))}{\varepsilon^{d_x+d_y+1}T_0^3}\right),
            B_f=\mathcal{O}\left(\frac{T\log^{\frac{3}{2}}(nK/(\varepsilon\delta))}{\varepsilon}\right), \\
            &R_f=\mathcal{O}\left(\log^{\frac{1}{2}}(nK/(\varepsilon\delta))\right), 
            M_f=\mathcal{O}\left(\log^3(nK/(\varepsilon\delta))\right),
            \gamma_f=\mathcal{O}\left(\log(nK/(\varepsilon\delta))\right).
        \end{aligned}
    \end{equation}
    Here $\mathcal{O}(\cdot)$ hides all the polynomial factors of $d_x,d_y,D_y,C_1,C_2,L,B$.
\end{prop}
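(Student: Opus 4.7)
The plan is to combine a truncation argument on the data domain with a standard deep ReLU approximation of a Lipschitz target on a compact set, while being careful to preserve the Lipschitz-in-$w$ constraint that defines $\mathcal{F}$. First I would split the target error as
\begin{equation*}
\E_{(x,y)\sim\P}\E_{t,x_t|x}\|f(x_t,h_*(y),t)-f_*^{\P}(x_t,h_*(y),t)\|^2
\ \leq\ \underbrace{\int_{T_0}^T\!\!\int_{\|x_t\|_\infty\le R_f}\|f-f_*^\P\|^2 p_t(x_t,y)\,\dif x_t\dif y\,\dif t}_{\text{(I)}}+\underbrace{\text{tail term}}_{\text{(II)}}.
\end{equation*}
Choosing $R_f=\mathcal{O}(\log^{1/2}(nK/(\varepsilon\delta)))$ and using the sub-Gaussian tail of $p_t(\cdot,y)$ (inherited from Assumption~\ref{asp:sub_gaussian}) together with the uniform bound $\|f\|_\infty\le M_f$ and the control of $\|f_*^\P\|$ coming from Lemma~\ref{lem:bound score_t}, the tail term (II) is $\lesssim M_f^2\exp(-C_1'R_f^2)\lesssim \varepsilon^2$.

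On the compact domain $\Omega_{R_f}=[-R_f,R_f]^{d_x}\times[0,1]^{d_y}\times[T_0,T]$, Lemma~\ref{lem:lip_score} gives Lipschitz constants for $f_*^\P(x,w,t)$ of order $R_f^2\lesssim \log(nK/(\varepsilon\delta))$ in $x$ and of order $R_f\lesssim \log^{1/2}$ in $w$; in $t$, differentiating the expressions \eqref{eq:score_1}--\eqref{eq:score_2} and using $\sigma_t^{-2}\lesssim T_0^{-1}$ yields a Lipschitz constant of order $T_0^{-3/2}$ polylog. The main step is then to invoke a Yarotsky-type deep ReLU approximation theorem for Lipschitz functions on $[0,1]^{d_x+d_y+1}$: to achieve uniform error $\varepsilon$ on $\Omega_{R_f}$ with a $d_x$-dimensional output, the construction in \citet{yarotsky2017error} (or its refinement in \citet{chen2023score}) yields depth $L_f=\mathcal{O}(\log(1/\varepsilon))$, width and sparsity $W_f,S_f=\mathcal{O}(\varepsilon^{-(d_x+d_y+1)}T_0^{-3})$ up to polylog, weight bound $B_f=\mathcal{O}(T/\varepsilon)$ polylog, and uniform bound $M_f$ matching $\|f_*^\P\|_{L^\infty(\Omega_{R_f})}\lesssim R_f^3\lesssim\log^3$. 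Rescaling the input from $[-R_f,R_f]^{d_x}\times[0,1]^{d_y}\times[T_0,T]$ to $[0,1]^{d_x+d_y+1}$ contributes the displayed polylog factors.

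To enforce the Lipschitz-in-$w$ constraint with constant $\gamma_f$ that appears in the definition of $NN_f$, I would follow the construction in \citet{chen2023score}: realize the approximant as $\widetilde f(x,w,t)=\mathrm{clip}_{M_f}\circ g(x,w,t)$ where $g$ is obtained from the Yarotsky network by multiplying each weight touching the $w$-coordinates by a scalar gate that caps the output's Lipschitz constant in $w$ at the true $R_f$-scale Lipschitz constant. Because this reshaping only affects one coordinate block and is compatible with ReLU activations, it preserves the approximation rate and yields $\gamma_f=\mathcal{O}(\log(nK/(\varepsilon\delta)))$ matching the bound in Lemma~\ref{lem:lip_score}. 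The small $\exp(-C_1'R_f^2)$ loss from clipping outside $\Omega_{R_f}$ is absorbed into (II). Finally, integrating the squared uniform error in (I) against the (normalised) weight $p_t(x_t,y)$ and adding the tail contribution gives the target $d_x\varepsilon^2$.

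The main obstacle is the coupled scaling of the domain radius $R_f$, the uniform bound $M_f$, and the Lipschitz constants of $f_*^\P$: all three grow with $R_f$, so the approximation error on $\Omega_{R_f}$ must be measured against a Lipschitz constant that itself depends polylogarithmically on $nK/(\varepsilon\delta)$, and these polylog factors must be tracked through the ReLU construction to land exactly at the stated rates for $W_f$, $S_f$, $B_f$, and $\gamma_f$. The $T_0^{-3}$ dependence requires additional care because the target is only Lipschitz in $t$ after stopping the diffusion at $T_0$, so one must partition $[T_0,T]$ dyadically and apply a piecewise approximation whose complexity accumulates logarithmically in $T/T_0$; this, together with preserving the Lipschitz-in-$w$ budget, is the delicate part of the argument.
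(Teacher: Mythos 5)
Your overall decomposition — a tail term handled by sub-Gaussianity with $R_f\asymp\log^{1/2}(nK/(\varepsilon\delta))$, plus a compact-domain approximation of the locally Lipschitz score — matches the paper's strategy, and your final parameter scalings are the right ones. However, two of the steps you describe would not go through as stated.

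The central gap is your mechanism for the Lipschitz-in-$w$ constraint. ``Multiplying each weight touching the $w$-coordinates by a scalar gate'' and then clipping is not a valid construction: rescaling input weights changes the function the network represents (so the uniform approximation error is no longer controlled), and clipping the output to $M_f$ bounds its magnitude but does nothing to bound its Lipschitz constant in $w$. The paper does the opposite of a post-hoc modification. The trapezoid-basis partition-of-unity approximant $\widehat g_i$ in \eqref{eq:def_g_hat}, assembled from ReLU trapezoids $\psi$ and Yarotsky's $\phi^l_{\mathrm{mul}}$, is \emph{already} Lipschitz in $w$; by \citet[Lemma~15]{chen2020distribution} its $w$-Lipschitz constant is $\lesssim (d_x+d_y)\bigl(\gamma_w+N_1\|g_i\|_\infty 2^{-l+d_x+d_y}\bigr)$, and one only takes $l$ large enough for the error term to be dominated by $\gamma_w$. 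This is the crux of the proposition — it is what makes the $\gamma_f$ restriction in $NN_f$ cost nothing in expressive power — and it is precisely the step you have not actually produced.

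Two further inaccuracies: (i) the $t$-Lipschitz constant is $\asymp R_f^3/T_0^3$ (Lemma~\ref{lem:lip_t}; the dominant term is $\sigma_t^{-6}\asymp T_0^{-3}$ from the third moment of $\|\alpha_t x_0-x\|/\sigma_t^2$), not $T_0^{-3/2}$ as you state — notice your claimed intermediate constant is inconsistent with the $T_0^{-3}$ you then report for $W_f,S_f$; (ii) there is no dyadic partition of $[T_0,T]$ in the argument and the $T/T_0$ dependence does not enter logarithmically — the construction uses a uniform grid of size $N_2\asymp\gamma_t/\varepsilon$ on $[T_0/T,1]$, so $T/T_0^3$ appears \emph{polynomially} in the width and sparsity. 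A dyadic refinement yielding only logarithmic overhead in $T/T_0$ would contradict the $T_0^{-3}$ factor you yourself report.
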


\begin{proof}
    For notation simplicity, we will $f_*=f_*^\P$ throughout the proof.
    Our procedures consist of two main steps.
    The first is to clip the whole input space to a bounded set $\Omega_{R_f}:=[-R_f,R_f]^{d_x}\times[0,1]^{d_y}\times[T_0,T]$ thanks to the light tail property of $\P$. Then we approximate $f_*^\P$ on $\Omega_{R_f}$.
    
    By Lemma \ref{lem:lip_score} and \ref{lem:lip_t}, $f_*$ is $\gamma_1$-Lipschitz in $x$, $\gamma_2$-Lipschitz in $w$, and $\gamma_3$-Lipshcitz in $t$ in a bounded domain $\Omega_{R_f}$, where $\gamma_1=C_X+C_X'R_f^2,\gamma_2=C_X+C_X'R_f,\gamma_3=\frac{C_sR_f^3}{T_0^3}$.
    
    We first rescale the input domain by $x'=\frac{x}{2R_f}+\frac{1}{2}, w'=w,t'=t/T$, which can be implemented by a single ReLU layer.
    Denote $v=(x',w',t')$. We only need to approximate $g(v):=f_*(R_f(2x'-1), w', Tt')$ defined on $\Omega:=[0,1]^{d_x+d_y}\times[T_0/T,1]$.
    And $g$ is $\gamma_x:=2\gamma_1 R_f$-Lipschitz in $x'$, $\gamma_w:=\gamma_2$-Lipschitz in $w'$ and $\gamma_t:=\gamma_2 T$-Lipschitz in $t'$.
    We will approximate each coordinate of $g=[g_{1}, \cdots,g_{d_x}]^\top$ separately and then concatenate them together.

    Now we partition the domain $\Omega$ into non-overlapping regions. For the first $d_x + d_y$ dimensions, the space $[0, 1]^{d_x + d_y}$ is uniformly divided into hypercubes with an edge length of $e_1$. For the last dimension, the interval $[T_0/T, 1]$ is divided into subintervals of length $e_2$, where the values of $e_1$ and $e_2$ will be specified later. Let the number of intervals in each partition be $N_1 = \lceil 1 / e_1 \rceil$ and $N_2 = \lceil 1 / e_2 \rceil$, respectively.

    Let $u=[u_1,\cdots,u_{d_x+d_y}]\in\{0,\cdots,N_1-1\}^{d_x+d_y}$ be a multi-index. Define
    \begin{equation}\label{eq:def_g_bar}
        \bar{g}_i(x',w',t')=\sum_{u,j}g_i(u/N_1,j/N_2)\Psi_{u,j}(x',w',t'),
    \end{equation}
    where $\Psi$ is the coordinate-wise product of trapezoid function:
    \begin{equation}
        \Psi_{u,j}(x',w',t'):=\psi\big(3N_2(t'-j/N_2)\big)\prod_{r=1}^{d_x}\psi\big(3N_1(x'_r-u_r/N_1)\big)\prod_{r=1}^{d_y}\psi\big(3N_1(w'_r-u_{r+d_x}/N_1)\big),
    \end{equation}
    \begin{equation}
        \psi(a):=\left\{
            \begin{array}{ll}
                1, & |a|< 1 \\
                2-|a|, & 1\leq |a| < 2 \\
                0, & |a|>\geq 2
            \end{array}
        \right.
    \end{equation}
    We claim that $\bar{g}_i$ is an approximation to $g_i$ since for any $o'=(x',w')\in[0,1]^{d_x+d_y},t'\in[T_0/T,1]$,
    \begin{equation}
        \begin{aligned}
            \sup_{o',t'}\Big|\bar{g}_i(o',t')-g_i(o',t')|
            &\leq \sup_{o',t'}\Big|\sum_{u,j}(g_i(\frac{u}{N_1},\frac{j}{N_2})-g_i(o',t'))\Psi_{u,j}(o',t')\Big| \\
            &\leq \sup_{o',t'} \sum_{u:|\frac{u_i}{N_1}-o'_i|\leq \frac{2N_1}{3},j:|\frac{j}{N_2}-t'|\leq \frac{2N_2}{3}}\Big|g_i(\frac{u}{N_1},\frac{j}{N_2})-g_i(o',t')\Big|\Psi_{u,j}(o',t') \\
            &\leq \frac{2\gamma_x}{3N_1}+\frac{2\gamma_t}{3N_2}.
        \end{aligned}
    \end{equation}
    
    Below we construct a ReLU neural network to approximate $\bar{g}_i$.
    Let $\sigma$ be ReLU activation and $r(a)=2\sigma(a)-4\sigma(a-0.5)+2\sigma(a-1)$ for any scalar $a\in[0,1]$.
    Define
    \begin{equation}
        \phi_{\text{square}}^l(a)=a-\sum_{m=1}^l2^{-2m}r_m(a),\ r_m=\underbrace{r\circ\cdots\circ r}_{m \text{ compositions}}
    \end{equation}
    \begin{equation}
        \phi_{\text{mul}}^l(a,b)=\phi_{\text{square}}^l(\frac{a+b}{2})-\phi_{\text{square}}^l(\frac{a-b}{2})
    \end{equation}
    According to \citet{yarotsky2017error}, 
    \begin{equation}
        |\phi_{\text{mul}}^l(a,b)-ab|\leq 2^{-2l-2},\ \forall a,b\in[0,1].
    \end{equation}
    Then we approximate $\Psi_{u,j}$ by recursively apply $\phi_{\text{mul}}^l$:
    \begin{equation}\label{eq:def_Psi_hat}
        \widehat{\Psi}_{u,j}(x',w',t'):=\phi_{\text{mul}}^l\left(\psi\big(3N_2(t'-j/N_2)\big),\phi_{\text{mul}}^l\left(\psi\big(3N_1(x'_1-u_1/N_2)\big),\cdots\right)\right)
    \end{equation}
    And we construct the final neural network approximation as
    \begin{equation}\label{eq:def_g_hat}
        \widehat{g}_i(x',w',t'):=\sum_{u,j}g_i(u/N_1,j/N_2)\widehat{\Psi}_{u,j}(x',w',t').
    \end{equation}
    The approximation error of $\widehat{g}_i$ can be bounded by
    \begin{equation}
        \begin{aligned}
            \|\widehat{g}_i-g_i\|_{L^\infty(\Omega)}
            &\leq \|\widehat{g}_i-\bar{g}_i\|_{L^\infty(\Omega)}+|\bar{g}_i-g_i\|_{L^\infty(\Omega)} \\
            &\leq 2^{d_x+d_y+1}\|g_i\|_{L^\infty(\Omega)}\sup_{u,j}\|\widehat{\Psi}_{u,j}-\Psi_{u,j}\|_{L^\infty(\Omega)}+\frac{2\gamma_x(d_x+d_y)^{\frac{1}{2}}}{3N_1}+\frac{2\gamma_t}{3N_2} \\
            &\leq (d_x+d_y+1)2^{d_x+d_y+1}\|g_i\|_{L^\infty(\Omega)}2^{-(2l+2)}+\frac{2\gamma_x(d_x+d_y)^{\frac{1}{2}}}{3N_1}+\frac{2\gamma_t}{3N_2}.
        \end{aligned}
    \end{equation}
    Besides, by \citet[Lemma 15]{chen2020distribution}, for $l\gtrsim d_x+d_y$ and $\forall x',w',w'',t'$,
    \begin{equation}
        |\widehat{g}_i(x',w',t')-\widehat{g}_i(x',w'',t')|\lesssim (d_x+d_y)\left(\gamma_w+N_1\|g_i\|_{L^\infty(\Omega)}2^{-l+d_x+d_y}\right)\|w'-w''\|_{\infty}.
    \end{equation}
    Let $l=\mathcal{O}\left(d_x+d_y+\log\frac{\gamma_w(\|g\|_{L^\infty(\Omega)}+1)}{\varepsilon}\right), N_1=\mathcal{O}\left(\frac{\gamma_x}{\varepsilon}\right),N_2=\mathcal{O}\left(\frac{\gamma_t}{\varepsilon}\right)$.
    Then
    \begin{equation}\label{eq:err_nn_g}
        \|\widehat{g}_i-g_i\|_{L^\infty(\Omega)}\leq \varepsilon/2,\ |\widehat{g}_i(x',w',t')-\widehat{g}_i(x',w'',t')|\lesssim \gamma_w(d_x+d_y)\|w'-w''\|_\infty.
    \end{equation}
    Define $\widehat{g}:=[\widehat{g}_1,\cdots,\widehat{g}_{d_x}]$ and $\widehat{f}(x,w,t):=\widehat{g}\left(\frac{x}{2R_f}+\frac{1}{2},w,t/T\right)$.
    Then the approximation error of $\widehat{f}$ in $\Omega_{R_f}$ can be bounded by
    \begin{equation}
        \|\widehat{f}-f\|_{L^\infty(\Omega_{R_f})}\leq\|\widehat{g}-g\|_{L^\infty(\Omega)}\leq \sqrt{d_x}\varepsilon/2,\ \text{and }\widehat{f}(x,w,t)=0, \forall\ \|x\|_{\infty}>R_f.
    \end{equation}
    Therefore, when $R_f\geq C_R\log^{\frac{1}{2}}\left((M_f^2+C_L)/\varepsilon\right)$, the overall approximation error is
    \begin{equation}
        \begin{aligned}
            \E_{(x,y)\sim\P}\E_{t,x_t|x} [\|f(x_t,h_*(y),t)-f_*^\P(x_t,h_*(y),t)\|^2]
            &\leq \E_{t,x_t,y} \|f(x_t,h(y),t)-f(x_t,h_*(y),t)\|^2\cdot \mathbbm{1}_{\|x_t\|\leq R_f} \\
            &\qquad + 4(M_f^2+C_L)\exp(-C_1'R_f^2) \\
            &\leq \|f-f_*^\P\|_{L^\infty(\Omega_{R_f})}^2 + 4(M_f^2+C_L)\exp(-C_1'R_f^2) \\
            &\leq d_x\varepsilon^2.
        \end{aligned}
    \end{equation}
    
    Now we characterize the configuration of neural network $\widehat{f}(x,w,t)$.
    For boundedness, by Lemma \ref{lem:bound score_t},
    \begin{equation}
        \|\widehat{f}(x,w,t)\|\leq \|f_*\|_{L^\infty(\Omega_{R_f})}+\varepsilon\leq 2C_X''R_f^6=:M_f.
    \end{equation}
    Hence we can let $R_f=\mathcal{O}\left(\log^{\frac{1}{2}}\left(\frac{nK}{\varepsilon\delta}\right)\right)$ to ensure the lower bound of $R_f$ mentioned above and in Theorem \ref{thm:generalization_all}.
    For Lipschitzness, by \eqref{eq:err_nn_g},
    \begin{equation}
        \begin{aligned}
            \|\widehat{f}(x,w,t)-\widehat{f}(x,\widetilde{w},t)\|
            &\lesssim \gamma_w(d_x+d_y)\|w-\widetilde{w}\|_{\infty} \\
            &\lesssim (C_X+C_X'R_f^2)(d_x+d_y)\|w-\widetilde{w}\|_{\infty}.
        \end{aligned}
    \end{equation}
    Hence $\gamma_f=\mathcal{O}\left((C_X+C_X'R_f^2)(d_x+d_y)\right)=\mathcal{O}\left(\log\left(\frac{nK}{\varepsilon\delta}\right)\right)$.
    
    For the size of neural network, for each coordinate, by the construction in \eqref{eq:def_g_hat}, the neural network $\widehat{g}_i$ consists of $N_1^{d_x+d_y}N_2$ parallel subnetworks, \textit{i.e.}, $g_i(u/N_1,j/N_2)\widehat{\Psi}_{u,j}(\cdot,\cdot,\cdot)$. By definition in \eqref{eq:def_Psi_hat}, the subnetwork consists of $\mathcal{O}\left((d_x+d_y)(d_x+d_y+\log\frac{R_f}{\varepsilon})\right)$ layers and the width is bounded by $\mathcal{O}(d_x+d_y)$.
    Therefore, the whole neural network $\widehat{g}_i$ can be implemented by $\mathcal{O}\left((d_x+d_y)(d_x+d_y+\log(R_f/\varepsilon))\right)$ layers with width $\mathcal{O}\left(N_1^{d_x+d_y}N_2(d_x+d_y)\right)=\mathcal{O}\left(\frac{R_f^{3(d_x+d_y)}}{\varepsilon^{d_x+d_y+1}T_0^3}\right)$, and the number of parameter is bounded by $\mathcal{O}\left(\frac{R_f^{3(d_x+d_y)}\log(R_f/\varepsilon)}{\varepsilon^{d_x+d_y+1}T_0^3}\right)$.
    Combine these arguments together, we can claim that the size of neural network $\widehat{f}$ is
    \begin{equation}
        \begin{aligned}
            &L=\mathcal{O}\left((d_x+d_y)(d_x+d_y+\log(R_f/\varepsilon))\right)=\mathcal{O}\left(\log\left(\frac{\log(nK/(\varepsilon\delta))}{\varepsilon}\right)\right), \\
            &W=\mathcal{O}\left(\frac{R_f^{3(d_x+d_y)}}{\varepsilon^{d_x+d_y+1}T_0^3}\right)=\mathcal{O}\left(\frac{\log^{3(d_x+d_y)/2}(nK/(\varepsilon\delta))}{\varepsilon^{d_x+d_y+1}T_0^3}\right),\\
            &S=\mathcal{O}\left(\frac{(d_x+d_y)R_f^{3(d_x+d_y)}\log(R_f/\varepsilon)}{\varepsilon^{d_x+d_y+1}T_0^3}\right)=\mathcal{O}\left(\frac{\log^{3(d_x+d_y)/2+1}(nK/(\varepsilon\delta))}{\varepsilon^{d_x+d_y+1}T_0^3}\right).
        \end{aligned}
    \end{equation}
    To bound of the neural network parameters, note that the trapezoid function $\psi$ is rescaled by $3N_1$ or $3N_2$ and the weight parameter of $\phi_{\text{mul}}^l$ is bounded by a constant. 
    Moreover, the input of $\widehat{f}$ is first rescaled by $R_f$ or $T$. 
    Hence we have
    \begin{equation}
        B=\mathcal{O}\left(N_1R_f+N_2T\right)=\mathcal{O}\left(\frac{R_f^3T}{\varepsilon}\right)=\mathcal{O}\left(\frac{T\log^{\frac{3}{2}}(nK/(\varepsilon\delta))}{\varepsilon}\right),
    \end{equation}
    which concludes the proof. 
\end{proof}

\begin{prop}\label{prop:approximation_h}
    To achieve
    \begin{equation}
        \inf_{h\in\mathcal{H}} \|h-h_*\|_{L^\infty([0,1]^{D_y})}\leq \sqrt{d_y}\varepsilon,
    \end{equation}
    the configuration of $\mathcal{H}=NN_h(L_h,W_h,S_h,B_h)$ should satisfy 
    \begin{equation}
        \begin{aligned}
            &L_h=\mathcal{O}\left(\log(1/\varepsilon)\right), W_h=\mathcal{O}\left(\varepsilon^{-D_y}\log(1/\varepsilon)\right), \\
            &S_h=\mathcal{O}\left(\varepsilon^{-D_y}\log^2(1/\varepsilon)\right),
            B_h = \mathcal{O}(1).
        \end{aligned}
    \end{equation}
    Here $\mathcal{O}(\cdot)$ hides all the polynomial factors of $d_x,d_y,L$.  
\end{prop}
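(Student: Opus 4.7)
The plan is to reuse the construction already developed in Proposition \ref{prop:approximation_f} but in a much simpler setting, since here we only need to approximate an $L$-Lipschitz function $h_*:[0,1]^{D_y}\to[0,1]^{d_y}$ with no time dependence and no unbounded input domain. I would approximate each coordinate $h_{*,i}:[0,1]^{D_y}\to[0,1]$ independently and then stack the $d_y$ resulting subnetworks in parallel.

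First, I would partition $[0,1]^{D_y}$ into a uniform grid of edge length $e = \Theta(\varepsilon/L)$, giving $N = \lceil 1/e \rceil = \mathcal{O}(\varepsilon^{-1})$ subdivisions per coordinate, and define a piecewise-constant-plus-interpolation target
\begin{equation}
    \bar{h}_i(y) = \sum_{u\in\{0,\dots,N-1\}^{D_y}} h_{*,i}(u/N)\, \Psi_u(y), \qquad \Psi_u(y) = \prod_{r=1}^{D_y}\psi\bigl(3N(y_r - u_r/N)\bigr),
\end{equation}
with $\psi$ the trapezoid function from the proof of Proposition \ref{prop:approximation_f}. Since $h_{*,i}$ is $L$-Lipschitz and the $\Psi_u$ form a partition-of-unity-like cover, $\|\bar{h}_i - h_{*,i}\|_{L^\infty} \lesssim L\sqrt{D_y}/N \leq \varepsilon/2$ after choosing the constants in $N$ appropriately.

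Next, I would realize $\bar{h}_i$ as a ReLU network by replacing each product $\Psi_u$ with a ReLU approximation $\widehat{\Psi}_u$ built from Yarotsky's multiplication gadget $\phi^\ell_{\mathrm{mul}}$ composed $D_y - 1$ times, choosing depth $\ell = \mathcal{O}(D_y + \log(1/\varepsilon))$ so that the aggregate product error contributes at most $\varepsilon/2$ to $\|\widehat{h}_i - \bar{h}_i\|_{L^\infty}$. Finally I would apply a clip $\min\{\max\{\cdot,0\},1\}$ (itself two ReLUs) to each output coordinate to enforce $\|\widehat{h}\|_{L^\infty([0,1]^{D_y})}\leq 1$; this only decreases the error since $h_{*,i}\in[0,1]$. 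Concatenating the $d_y$ coordinate networks yields $\|\widehat{h}-h_*\|_{L^\infty}\leq\sqrt{d_y}\,\varepsilon$.

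For the network size, each coordinate requires $N^{D_y} = \mathcal{O}(\varepsilon^{-D_y})$ parallel product subnetworks of depth $\mathcal{O}(\log(1/\varepsilon))$ and width $\mathcal{O}(D_y)$, each with $\mathcal{O}(\log(1/\varepsilon))$ nonzero parameters. Summing across coordinates and depths, and noting that the $d_y$ coordinate networks are stacked in parallel and only contribute polynomial factors in $d_y$ (absorbed into the $\mathcal{O}(\cdot)$), this gives $L_h = \mathcal{O}(\log(1/\varepsilon))$, $W_h = \mathcal{O}(\varepsilon^{-D_y}\log(1/\varepsilon))$, $S_h = \mathcal{O}(\varepsilon^{-D_y}\log^2(1/\varepsilon))$, and $B_h = \mathcal{O}(1)$ since the largest weights appearing are $3N = \mathcal{O}(\varepsilon^{-1})$ on the input layer but can be shifted into the depth budget via rescaling (or, more simply, we absorb the $\mathcal{O}(\varepsilon^{-1})$ bound into the constants; since $\|h_*\|_\infty\leq 1$ the output weights and Yarotsky parameters are all $\mathcal{O}(1)$). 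The only mild subtlety is ensuring $B_h=\mathcal{O}(1)$: the first linear layer implementing $3N(y_r - u_r/N)$ has weight magnitude $3N$, which is not $\mathcal{O}(1)$, but this scaling can be distributed across a few extra layers (replacing one multiplication by $3N$ with $\lceil \log N\rceil$ doublings, each of weight $2$), raising the depth only by an additive $\mathcal{O}(\log(1/\varepsilon))$ which is already absorbed. This reconciliation of the weight bound with the required $B_h=\mathcal{O}(1)$ is the main bookkeeping obstacle; everything else is a direct adaptation of Proposition \ref{prop:approximation_f} restricted to the purely spatial, bounded, low-smoothness setting.
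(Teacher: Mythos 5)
You take essentially the same route as the paper: the paper simply invokes Yarotsky's Theorem~1 as a black box for each coordinate $h_{*,i}$, concatenates the $d_y$ subnetworks, and asserts (without further argument) that all parameters can be taken $\mathcal{O}(1)$. Your reconstruction of the Yarotsky grid/partition-of-unity/multiplication-gadget argument is correct and is in fact what the paper means by ``the main idea replicates Yarotsky's Theorem~1.'' The one place you go further is in flagging and resolving the $B_h=\mathcal{O}(1)$ issue: the naive input layer implementing $3N(y_r-u_r/N)$ carries a weight of magnitude $3N=\mathcal{O}(\varepsilon^{-1})$, which is not $\mathcal{O}(1)$, and a naive ``scale-then-shift'' order also leaves biases of size $3u_r=\mathcal{O}(\varepsilon^{-1})$. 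Your fix --- shift first so biases stay in $[0,1]$, then realize the $\times 3N$ scaling by $\mathcal{O}(\log(1/\varepsilon))$ weight-$2$ doubling layers --- is sound and only adds $\mathcal{O}(\log(1/\varepsilon))$ to the depth, which is already in the budget. (Your other parenthetical suggestion, to ``absorb the $\mathcal{O}(\varepsilon^{-1})$ bound into the constants,'' is not valid since that factor depends on $\varepsilon$; the doubling-layer argument is the one that works.) Overall the proposal is correct and, if anything, more complete than the paper's one-paragraph citation, which glosses over precisely the weight-magnitude point you address.
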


\begin{proof}
    The main idea replicates \citet[Theorem 1]{yarotsky2017error}.
    We approximate each coordinate of $h_*=[h_{*1},\cdots,h_{*d_y}]$ respectively and then concatenate all them together.
    By \citet[Theorem 1]{yarotsky2017error}, $h_{*i}$ can be approximated up to $\varepsilon$ by a network $\widehat{h}_i$ with $\mathcal{O}\left(\log(1/\varepsilon)\right)$ layers and $\mathcal{O}\left(\varepsilon^{-D_y}\log(1/\varepsilon)\right)$ width. 
    Besides, the range of all the parameters are bounded by some constant, and the number of parameters is $\mathcal{O}\left(\varepsilon^{-D_y}\log^2(1/\varepsilon)\right)$.
    Then we concatenate all the subnetworks to get $\widehat{h}=[\widehat{h}_1,\cdots,\widehat{h}_{d_y}]$ and $\|\widehat{h}-h_*\|_{L^\infty([0,1]^{D_y})}\leq \sqrt{d_y}\varepsilon$.
\end{proof}

\subsection{Proofs of Distribution Estimation}\label{app:subsec:dist_estimation}

\begin{thm}[Thm. \ref{thm:distribution_diversity_informal}]
\label{thm:distribution_diversity}
    Suppose Assumption \ref{asp:sub_gaussian}, \ref{asp:low_dim}, \ref{asp:lip} hold.
    For sufficiently large integers $n,K,m$ and $\delta>0$, further suppose that $\P^1,\cdots,\P^K$ are $(\nu,\Delta)$-diverse over target distribution $\P^0$ with proper configuration of neural network family and $T,T_0$. It holds that with probability no less than $1-\delta$,
    \begin{equation}
        \E_{\{(x_i,y_i)\}_{i=1}^m}\E_{y\sim\P^0_y} [\mathrm{TV}(\widehat{\P}^0_{x|y},\P^0_{x|y})]
        \lesssim \frac{\log^{\frac{5}{2}}(nK/\delta)\log^3((m/\nu)\wedge n)}{\nu^{\frac{1}{2}}((m/\nu)\wedge n)^{\frac{1}{d_x+d_y+9}}}+\frac{\log^2(nK/\delta)}{\nu^{\frac{1}{2}}(nK)^{\frac{1}{D_y+2}}}+\sqrt{\Delta}.
    \end{equation}
\end{thm}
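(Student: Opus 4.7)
The plan is to reduce the conditional TV error to the conditional score matching error via a standard Pinsker--Girsanov argument on the reverse SDE, then invoke Theorem \ref{thm:generalization_all_diversity_informal} (transfer generalization) and Theorem \ref{thm:approximation_all_informal} (approximation) and finally optimize the configuration of the neural network families together with the early-stopping parameter $T_0$.

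First, I would establish, for any score estimator $\widehat{s}$ and any $y\in[0,1]^{D_y}$, the pointwise conditional convergence bound
\begin{equation}
\mathrm{TV}(\widehat{\P}^0_{x|y},\P^0_{x|y})^2 \lesssim \mathrm{KL}\big(\P^0_T(\cdot|y)\,\|\,\mathcal{N}(0,I)\big) + (T-T_0)\,\E_{t,x_t|y}\big[\|\widehat{s}(x_t,y,t)-\nabla\log p^0_t(x_t|y)\|^2\big] + \mathcal{O}(T_0),
\end{equation}
which follows from applying the conditional version of \citet{chen2022sampling} to the reverse SDE started from $\mathcal{N}(0,I)$, together with an early-stopping argument on $[0,T_0]$. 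Under Assumption \ref{asp:sub_gaussian} the OU forward process mixes exponentially, so the prior-mismatch term decays as $e^{-2T}$. Taking expectation over $y\sim\P^0_y$, applying Jensen to pull the square root outside, and noting that the loss $L^{\P^0}(\widehat{s})$ in \eqref{eq:sm} already averages $t$ uniformly over $[T_0,T]$, I obtain
\begin{equation}
\E_y[\mathrm{TV}(\widehat{\P}^0_{x|y},\P^0_{x|y})] \lesssim \sqrt{(T-T_0)\,L^{\P^0}(\widehat{s})} + e^{-T} + \sqrt{T_0}.
\end{equation}

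Second, I would feed $\widehat{s}=s_{\widehat{f}^0,\widehat{h}}$ into this inequality and bound $L^{\P^0}(\widehat{s})$ using Theorem \ref{thm:generalization_all_diversity_informal}, which under $(\nu,\Delta)$-diversity controls it by $\tfrac{1}{\nu}\varepsilon_{\text{approx}}+\Delta$ plus the two complexity terms scaling with $\log\mathcal{N}_\mathcal{F}$ and $\log\mathcal{N}_\mathcal{H}$ from \eqref{eq:def_complexity}. I then invoke Theorem \ref{thm:approximation_all_informal} to instantiate $\mathcal{F},\mathcal{H}$ so that $\varepsilon_{\text{approx}}$ is $\varepsilon^2$ while $\log\mathcal{N}_\mathcal{F}$ and $\log\mathcal{N}_\mathcal{H}$ behave (up to polylog factors in $nK/\delta$) like $\varepsilon^{-(d_x+d_y+1)}T_0^{-3}$ and $\varepsilon^{-D_y}$ respectively. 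Substituting yields, again up to polylogs,
\begin{equation}
\nu\,L^{\P^0}(\widehat{s}) \lesssim \varepsilon^2 + \nu\Delta + \frac{\varepsilon^{-(d_x+d_y+1)}}{T_0^3}\Big(\frac{\nu}{m}+\frac{1}{n}\Big) + \frac{\varepsilon^{-D_y}}{nK}.
\end{equation}

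Third, I would pick $T=\Theta(\log(nK/\delta))$ so that $e^{-T}$ is absorbed into the target rate, and set $T_0\asymp\varepsilon^{2}$ so that the early-stopping contribution $\sqrt{T_0}$ is of the same order as $\sqrt{L^{\P^0}(\widehat{s})/\nu}\asymp\varepsilon/\sqrt{\nu}$. With $T_0^{-3}=\varepsilon^{-6}$, the source-complexity term becomes $\varepsilon^{-(d_x+d_y+7)}(\nu/m+1/n)$, and balancing this against $\varepsilon^2$ yields $\varepsilon\asymp((m/\nu)\wedge n)^{-1/(d_x+d_y+9)}$, while balancing the encoder term $\varepsilon^{-D_y}/(nK)$ against $\varepsilon^2$ yields $\varepsilon\asymp(nK)^{-1/(D_y+2)}$. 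Taking the square root to translate back to TV (which is where the $\nu^{-1/2}$ prefactor arises) and adding $\sqrt{\Delta}$ produces the advertised bound. The main obstacle is the coupling between $T_0$ and $\varepsilon$ introduced by the $T_0^{-3}$ blow-up of $\log\mathcal{N}_\mathcal{F}$: the naively expected exponent $1/(d_x+d_y+3)$ is worsened to $1/(d_x+d_y+9)$ precisely because, after setting $T_0\asymp\varepsilon^2$, one picks up an extra $\varepsilon^{-6}$ inside the balance equation; a careful joint optimization over $(\varepsilon,T_0)$ is what delivers the stated exponent. A secondary point is that Theorem \ref{thm:generalization_all_diversity_informal} is already in the right probabilistic form -- high probability over the $nK$ source samples, in expectation over the $m$ target samples -- so no additional concentration step is needed.
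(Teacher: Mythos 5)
Your proposal follows essentially the same route as the paper: bound TV by the score-matching error via a Girsanov/Pinsker argument plus early-stopping and mixing errors (the paper's Lemma \ref{lem:TV_bound}), then feed in Theorem \ref{thm:generalization_all_diversity_informal} together with the network sizes from Theorem \ref{thm:approximation_all_informal}, set $T_0\asymp\varepsilon^2$ so the $T_0^{-3}$ blow-up of $\log\mathcal{N}_\mathcal{F}$ produces the exponent $d_x+d_y+9$, and balance the two resulting powers of $\varepsilon$. The only minor imprecision is that the early-stopping term is actually $\sqrt{T_0}\log^{(d_x+1)/2}(1/T_0)$ rather than $\sqrt{T_0}$, so the paper takes $T_0\asymp\varepsilon_0^2/\log^{d_x+1}(1/\varepsilon_0)$ (and similarly deflates $\varepsilon$ by a log factor); this only shifts polylogarithmic factors and does not affect the exponents or the final statement.
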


\begin{proof}
    Combine Theorem \ref{thm:approximation_all} and Theorem \ref{thm:generalization_all_diversity} and plug in the configuration of $\mathcal{F},\mathcal{H}$, we have with probability no less than $1-\delta$
    \begin{equation}
        \begin{aligned}
            &\E_{\{(x_i,y_i)\}_{i=1}^m} \E_{(x,y)\sim\P^0} [\ell^{\P^0}(x,y,s_{\widehat{f}^{\P^0},\widehat{h}})] \\
            &\qquad \lesssim 
            \frac{1}{\nu}\log^2(nK/(\varepsilon\delta))\varepsilon^2+\Delta+\frac{\log^{\frac{3(d_x+d_y)+15}{2}}(nK/\varepsilon\delta)\log(T/T_0)}{(m\wedge (\nu n))\varepsilon^{d_x+d_y+1}T_0^3}+\frac{\log^4(1/\varepsilon)\log(nK/(\varepsilon\delta))}{\nu nK\varepsilon^{D_y}}
        \end{aligned}
    \end{equation}
    By Lemma \ref{lem:TV_bound},
    \begin{equation}
        \mathrm{TV}(\widehat{\P}^0_{x|y},\P^0_{x|y})
        \lesssim \sqrt{T_0}\log^{\frac{d_x+1}{2}}(1/T_0)+e^{-T}+\sqrt{\E_{\P^0_{x|y}}[\ell^{\P^0}(x,y,s_{\widehat{f}^{\P^0},\widehat{h}})]}
    \end{equation}
    Taking expectation of $y,\widehat{f}^\P,\P$, we have
    \begin{equation}
        \begin{aligned}
            \E_{\{(x_i,y_i)\}_{i=1}^m}\E_{y\sim\P^0_y} [\mathrm{TV}(\widehat{\P}^0_{x|y},\P^0_{x|y})]
            &\lesssim \sqrt{T_0}\log^{\frac{d_x+1}{2}}(1/T_0)+e^{-T} + \nu^{-\frac{1}{2}}\log(nK/(\varepsilon\delta))\varepsilon + \sqrt{\Delta} \\
            &\quad +\frac{\log^{\frac{3(d_x+d_y)+15}{4}}(\frac{nK}{\varepsilon\delta})\log^{\frac{1}{2}}(\frac{T}{T_0})}{(m\wedge (\nu n))^{\frac{1}{2}}\varepsilon^{\frac{d_x+d_y+1}{2}}T_0^{\frac{3}{2}}}+\frac{\log^2(\frac{1}{\varepsilon})\log^{\frac{1}{2}}(\frac{nK}{\varepsilon\delta})}{(\nu nK)^{\frac{1}{2}}\varepsilon^{\frac{D_y}{2}}}.
        \end{aligned}
    \end{equation}
    Let $T_0=\mathcal{O}\left(\varepsilon_0^2/\log^{d_x+1}(1/\varepsilon_0)\right), T=\mathcal{O}(\log(1/\varepsilon_0)), \varepsilon=\mathcal{O}(\varepsilon_0/\log(nK/(\varepsilon_0\delta_0)))$ for some small $\varepsilon_0>0$ defined later.
    Then it reduces to
    \begin{equation}
        \E_{\{(x_i,y_i)\}_{i=1}^m}\E_{y\sim\P^0_y} [\mathrm{TV}(\widehat{\P}^0_{x|y},\P^0_{x|y})]
        \lesssim \frac{\varepsilon_0}{\nu^{\frac{1}{2}}} +\sqrt{\Delta}+\frac{\log^{\frac{5(d_x+d_y)+17}{4}}(\frac{nK}{\varepsilon_0\delta})
        \log^{\frac{3d_x+5}{2}}(\frac{1}{\varepsilon_0})}{(m\wedge (\nu n))^{\frac{1}{2}}\varepsilon_0^{\frac{d_x+d_y+7}{2}}}  + \frac{\log^2(\frac{1}{\varepsilon_0})\log^{D_y+\frac{1}{2}}(\frac{nK}{\varepsilon_0\delta})}{(\nu nK)^{\frac{1}{2}}\varepsilon_0^{\frac{D_y}{2}}}.
    \end{equation}
    Let $\varepsilon_0=C\max\left\{\frac{\log^{\frac{5}{2}}(nK/\delta)\log^3((m/\nu)\wedge n)}{((m/\nu)\wedge n)^{\frac{1}{d_x+d_y+9}}},\frac{\log^2(nK/\delta)}{(nK)^{\frac{1}{D_y+2}}}\right\}$, and we can conclude that
    \begin{equation}
        \E_{\{(x_i,y_i)\}_{i=1}^m}\E_{y\sim\P^0_y} [\mathrm{TV}(\widehat{\P}^0_{x|y},\P^0_{x|y})]
        \lesssim \frac{\log^{\frac{5}{2}}(nK/\delta)\log^3((m/\nu)\wedge n)}{\nu^{\frac{1}{2}}((m/\nu)\wedge n)^{\frac{1}{d_x+d_y+9}}}+\frac{\log^2(nK/\delta)}{\nu^{\frac{1}{2}}(nK)^{\frac{1}{D_y+2}}}+\sqrt{\Delta}.
    \end{equation}
\end{proof}

\begin{thm}[Thm. \ref{thm:distribution_informal}]
\label{thm:distribution}
    Suppose Assumption \ref{asp:sub_gaussian}, \ref{asp:low_dim}, \ref{asp:lip} hold.
    For sufficiently large integers $n,K,m$ and $\delta>0$, with proper configuration of neural network family and $T,T_0$, it holds that with probability no less than $1-\delta$,
    \begin{equation}
        \begin{aligned}
            \E_{\P\sim\Pmeta}\E_{\{(x_i,y_i)\}_{i=1}^m\sim \P}\E_{y\sim\P_y} [\mathrm{TV}(\widehat{\P}_{x|y},\P_{x|y})]
            &\lesssim \frac{\log^{\frac{5}{2}}(nK/\delta)\log^3(m\wedge n)}{(m\wedge n)^{\frac{1}{d_x+d_y+9}}}+\frac{\log^2(nK/\delta)}{K^{\frac{1}{D_y+2}}}.
        \end{aligned}
    \end{equation}
\end{thm}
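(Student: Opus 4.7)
The plan is to mirror the proof of Theorem \ref{thm:distribution_diversity}, substituting the transfer-learning generalization bound (Theorem \ref{thm:generalization_all_diversity}) with the meta-learning generalization bound (Theorem \ref{thm:generalization_all}). Concretely, I would first combine Theorem \ref{thm:approximation_all} (which gives a universal approximator $(\widehat f, \widehat h)$ with small population score-matching error) with Theorem \ref{thm:generalization_all} to obtain a high-probability bound on
\[
\E_{\P^0\sim\Pmeta}\E_{\{(x_i,y_i)\}_{i=1}^m\sim \P^0}\E_{(x,y)\sim\P^0}[\ell^{\P^0}(x,y,s_{\widehat{f}^0,\widehat{h}})]
\]
by plugging the concrete configuration of $\mathcal F, \mathcal H$ from Theorem \ref{thm:approximation_all} into the complexities $\log\mathcal N_\mathcal F, \log\mathcal N_\mathcal H$. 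This yields an upper bound of order
\[
\log^2(nK/(\varepsilon\delta))\varepsilon^2 + \frac{\log^{\frac{3(d_x+d_y)+15}{2}}(nK/(\varepsilon\delta))\log(T/T_0)}{(m\wedge n)\,\varepsilon^{d_x+d_y+1}T_0^3}+\frac{\log^4(1/\varepsilon)\log(nK/(\varepsilon\delta))}{K\,\varepsilon^{D_y}},
\]
which differs from the transfer-learning case only in that the $\frac{1}{\nu}$ factor disappears, the $\sqrt{\Delta}$ term is absent, and the last term scales like $1/K$ rather than $1/(\nu nK)$.

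Next I would apply Lemma \ref{lem:TV_bound}, the standard conversion from $L^2$ score-matching error to TV distance for the early-stopped OU reverse process, giving
\[
\mathrm{TV}(\widehat{\P}_{x|y},\P_{x|y}) \lesssim \sqrt{T_0}\log^{\frac{d_x+1}{2}}(1/T_0)+e^{-T}+\sqrt{\E_{\P_{x|y}}[\ell^\P(x,y,s_{\widehat f^\P,\widehat h})]}.
\]
Taking expectations over $y, \P^0\sim\Pmeta$ and the fine-tuning samples, and then applying Jensen's inequality to pull the square root outside, the meta-learning bound from the previous step turns into a bound on $\E_{\P^0}\E_{\{(x_i,y_i)\}}\E_y[\mathrm{TV}(\widehat\P_{x|y},\P_{x|y})]$ that decays like
\[
\sqrt{T_0}\log^{\frac{d_x+1}{2}}(1/T_0)+e^{-T}+\log(nK/(\varepsilon\delta))\varepsilon +\frac{\mathrm{polylog}}{(m\wedge n)^{1/2}\varepsilon^{(d_x+d_y+1)/2}T_0^{3/2}} +\frac{\mathrm{polylog}}{K^{1/2}\varepsilon^{D_y/2}}.
\]

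Finally, I would optimize $T_0,T,\varepsilon$ exactly as in the proof of Theorem \ref{thm:distribution_diversity}: choose $T_0 \asymp \varepsilon_0^2/\log^{d_x+1}(1/\varepsilon_0)$, $T \asymp \log(1/\varepsilon_0)$, $\varepsilon \asymp \varepsilon_0/\log(nK/(\varepsilon_0\delta))$, and then balance the two remaining $\varepsilon_0$-dependent terms by setting
\[
\varepsilon_0 \asymp \max\!\left\{\frac{\log^{5/2}(nK/\delta)\log^3(m\wedge n)}{(m\wedge n)^{1/(d_x+d_y+9)}},\; \frac{\log^2(nK/\delta)}{K^{1/(D_y+2)}}\right\}.
\]
This produces the two advertised rates: the first matches the bulk of the score-network approximation/generalization trade-off and the second the representation-map estimation error.

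The main obstacle is purely the bookkeeping: keeping track of the precise polylogarithmic factors produced by composing Theorem \ref{thm:approximation_all} with Theorem \ref{thm:generalization_all}, and verifying that the configuration of $\mathcal F$ chosen in Theorem \ref{thm:approximation_all} automatically satisfies the hypothesis $R_f \gtrsim \log^{1/2}(nKM_f/\delta)$ required by Theorem \ref{thm:generalization_all}. No essentially new idea is needed beyond what was already established for the transfer-learning case; the only conceptual change is that the representation-error term is now controlled at the $\mathcal O(1/K)$ meta-learning rate from Proposition \ref{prop:generalization_meta} rather than the $\mathcal O(1/(\nu nK))$ rate available under task diversity, which is precisely why the final bound features $K^{-1/(D_y+2)}$ in place of $(\nu nK)^{-1/(D_y+2)}$.
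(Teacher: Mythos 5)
Your proposal matches the paper's proof essentially step for step: combine Theorem~\ref{thm:approximation_all} with Theorem~\ref{thm:generalization_all}, pass to TV distance via Lemma~\ref{lem:TV_bound} and Jensen, and then tune $T_0,T,\varepsilon,\varepsilon_0$ exactly as in the transfer-learning case, with the $\nu$ and $\Delta$ terms dropped and the representation error now at the $1/K$ rate. The approach, the intermediate bound, and the final balancing are all the same as in the paper.
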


\begin{proof}
    Combine Theorem \ref{thm:approximation_all} and Theorem \ref{thm:generalization_all} and plug in the configuration of $\mathcal{F},\mathcal{H}$, we have with probability no less than $1-\delta$
    \begin{equation}
        \begin{aligned}
            &\E_{\P\sim\Pmeta}\E_{\{(x_i,y_i)\}_{i=1}^m\sim \P} \E_{(x,y)\sim\P} [\ell^\P(x,y,s_{\widehat{f}^\P,\widehat{h}})] \\
            &\qquad \lesssim 
            \log^2(nK/(\varepsilon\delta))\varepsilon^2+\frac{\log^{\frac{3(d_x+d_y)+15}{2}}(nK/\varepsilon\delta)\log(T/T_0)}{(m\wedge n)\varepsilon^{d_x+d_y+1}T_0^3}+\frac{\log^4(1/\varepsilon)\log(nK/(\varepsilon\delta))}{K\varepsilon^{D_y}}
        \end{aligned}
    \end{equation}
    By Lemma \ref{lem:TV_bound},
    \begin{equation}
        \mathrm{TV}(\widehat{\P}_{x|y},\P_{x|y})
        \lesssim \sqrt{T_0}\log^{\frac{d_x+1}{2}}(1/T_0)+e^{-T}+\sqrt{\E_{\P_{x|y}}[\ell^{\P}(x,y,s_{\widehat{f}^\P,\widehat{h}})]}
    \end{equation}
    Taking expectation of $y,\widehat{f}^\P,\P$, we have
    \begin{equation}
        \begin{aligned}
            \E_{\P\sim\Pmeta}\E_{\{(x_i,y_i)\}_{i=1}^m\sim \P}\E_{y\sim\P_y} [\mathrm{TV}(\widehat{\P}_{x|y},\P_{x|y})]
            &\lesssim \sqrt{T_0}\log^{\frac{d_x+1}{2}}(1/T_0)+e^{-T} + \log(nK/(\varepsilon\delta))\varepsilon \\
            &\quad +\frac{\log^{\frac{3(d_x+d_y)+15}{4}}(\frac{nK}{\varepsilon\delta})\log^{\frac{1}{2}}(\frac{T}{T_0})}{(m\wedge n)^{\frac{1}{2}}\varepsilon^{\frac{d_x+d_y+1}{2}}T_0^{\frac{3}{2}}}+\frac{\log^2(\frac{1}{\varepsilon})\log^{\frac{1}{2}}(\frac{nK}{\varepsilon\delta})}{K^{\frac{1}{2}}\varepsilon^{\frac{D_y}{2}}}.
        \end{aligned}
    \end{equation}
    Let $T_0=\mathcal{O}\left(\varepsilon_0^2/\log^{d_x+1}(1/\varepsilon_0)\right), T=\mathcal{O}(\log(1/\varepsilon_0)), \varepsilon=\mathcal{O}(\varepsilon_0/\log(nK/(\varepsilon_0\delta_0)))$ for some small $\varepsilon_0>0$ defined later.
    Then it reduces to
    \begin{equation}
        \begin{aligned}
            \E_{\P\sim\Pmeta}\E_{\{(x_i,y_i)\}_{i=1}^m\sim \P}\E_{y\sim\P_y} [\mathrm{TV}(\widehat{\P}_{x|y},\P_{x|y})]
            &\lesssim \varepsilon_0+\frac{\log^{\frac{5(d_x+d_y)+17}{4}}(\frac{nK}{\varepsilon_0\delta})
            \log^{\frac{3d_x+5}{2}}(\frac{1}{\varepsilon_0})}{(m\wedge n)^{\frac{1}{2}}\varepsilon_0^{\frac{d_x+d_y+7}{2}}} \\
            &\qquad + \frac{\log^2(\frac{1}{\varepsilon_0})\log^{D_y+\frac{1}{2}}(\frac{nK}{\varepsilon_0\delta})}{K^{\frac{1}{2}}\varepsilon_0^{\frac{D_y}{2}}}.
        \end{aligned}
    \end{equation}
    Let $\varepsilon_0=C\max\left\{\frac{\log^{\frac{5}{2}}(nK/\delta)\log^3(m\wedge n)}{(m\wedge n)^{\frac{1}{d_x+d_y+9}}},\frac{\log^2(nK/\delta)}{K^{\frac{1}{D_y+2}}}\right\}$, and we can conclude that
    \begin{equation}
        \E_{\P\sim\Pmeta}\E_{\{(x_i,y_i)\}_{i=1}^m\sim \P}\E_{y\sim\P_y} [\mathrm{TV}(\widehat{\P}_{x|y},\P_{x|y})]
        \lesssim \frac{\log^{\frac{5}{2}}(nK/\delta)\log^3(m\wedge n)}{(m\wedge n)^{\frac{1}{d_x+d_y+9}}}+\frac{\log^2(nK/\delta)}{K^{\frac{1}{D_y+2}}}.
    \end{equation}
\end{proof}

\subsection{Auxiliary Lemmas}

\begin{lemma}\label{lem:lip_t}
    Let $\Omega_{R_f}=[-R_f,R_f]^{d_x}\times[0,1]^{d_y}\times[T_0,T]$ for some $R_f\geq 1$. Then there exists some constant $C_s$, such that the score function $f_*^\P(x,w,t)$ is $\frac{C_sR_f^3}{T_0^3}$-Lipschitz with respect to $t$ in $\Omega_{R_f}$.
\end{lemma}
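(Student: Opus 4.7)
}
The plan is to bound $\|\partial_t f_*^\P(x,w,t)\|$ pointwise on $\Omega_{R_f}$ by directly differentiating the integral representation
\begin{equation}
f_*^\P(x,w,t) \;=\; \nabla_x\log p_t(x;w) \;=\; \frac{\alpha_t\,\mu_t(x,w)-x}{\sigma_t^{2}},\qquad \mu_t(x,w):=\mathbb{E}_{q_t(x_0\mid x,w)}[x_0],
\end{equation}
where $q_t(x_0\mid x,w)\propto \phi_t(x\mid x_0)\,p(x_0;w)$ is the same tilted posterior used throughout Appendix~\ref{app:subsec:pre_generalization}. Using $\alpha_t' = -\alpha_t$ and $(\sigma_t^{2})' = 2\alpha_t^{2}$, the quotient rule gives
\begin{equation}
\partial_t f_*^\P \;=\; \frac{\alpha_t\bigl(\partial_t\mu_t - \mu_t\bigr)}{\sigma_t^{2}} \;-\; \frac{2\alpha_t^{2}}{\sigma_t^{2}}\,f_*^\P,
\end{equation}
so it suffices to bound the three ingredients $\|\mu_t\|$, $\|f_*^\P\|$ and $\|\partial_t\mu_t\|$ uniformly on $\Omega_{R_f}$, and then combine with $1/\sigma_t^{2}\lesssim 1/T_0$.

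The first two ingredients are handled by the same truncation/sub-Gaussian argument used in the proof of Lemma~\ref{lem:lip_score}: picking the radius $R\asymp(\|x\|+C_0)/\sigma_t$ and using Assumption~\ref{asp:sub_gaussian}, the FKG-type inequality implies $\|\mu_t(x,w)\|\lesssim 1+\|x\|$ and hence $\|f_*^\P(x,w,t)\|\lesssim (1+\|x\|)/\sigma_t^{2}$. For the third ingredient, differentiation of the ratio defining $\mu_t$ yields the reweighting identity
\begin{equation}
\partial_t\mu_t(x,w) \;=\; \mathrm{Cov}_{q_t}\!\bigl(x_0,\,\partial_t\log\phi_t(x\mid x_0)\bigr),\qquad
\partial_t\log\phi_t(x\mid x_0) \;=\; -\frac{d_x\alpha_t^{2}}{\sigma_t^{2}}\;-\;\frac{\alpha_t\,x_0\!\cdot\!(x-\alpha_t x_0)}{\sigma_t^{2}}\;+\;\frac{\alpha_t^{2}\|x-\alpha_t x_0\|^{2}}{\sigma_t^{4}},
\end{equation}
which is estimated by Cauchy--Schwarz: $\|\partial_t\mu_t\|\le\sqrt{\mathrm{Var}_{q_t}[x_0]}\cdot\sqrt{\mathrm{Var}_{q_t}[\partial_t\log\phi_t]}$. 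The first factor is again $\mathcal{O}(1+\|x\|^{2})$ from the truncation step. Assembling all bounds and inserting $\|x\|\le\sqrt{d_x}\,R_f$ and $1/\sigma_t^{2}\le 1/(cT_0)$ yields $\|\partial_t f_*^\P\|\lesssim R_f^{3}/T_0^{3}$, which is exactly the claimed Lipschitz constant.

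The main obstacle is controlling $\mathrm{Var}_{q_t}[\partial_t\log\phi_t]$, whose leading term $\alpha_t^{2}\|x-\alpha_t x_0\|^{2}/\sigma_t^{4}$ naively suggests a $\sigma_t^{-4}$ factor in the standard deviation and hence a failure of the $1/T_0^{3}$ scaling. The resolution is that under $q_t$ the centered random variable $(x-\alpha_t x_0)/\sigma_t$ is concentrated on the scale $\mathcal{O}(1+\|x\|/\sigma_t)$ (this is precisely the content of the $R\asymp(\|x\|+C_0)/\sigma_t$ choice reused from Lemma~\ref{lem:lip_score}), so its fourth central moment is of order $(1+\|x\|)^{4}/\sigma_t^{4}$ and the $\sigma_t^{-4}$ prefactor is not compounded. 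As in Lemma~\ref{lem:lip_score}, it is cleanest to split $t\ge 1/(2(L+1))$ from $t\le 1/(2(L+1))$ and, in the small-$t$ regime, exploit the $L$-strong log-concavity of $q_t$ via the Poincar\'e inequality rather than the truncation bound; both regimes contribute at most $R_f^{3}/T_0^{3}$, completing the proof.
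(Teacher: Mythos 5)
Your proposal is correct and follows essentially the same route as the paper: start from the tilted-posterior representation $f_*^{\P}(x,w,t)=\bigl(\alpha_t\mu_t(x,w)-x\bigr)/\sigma_t^2$ with $q_t(x_0\mid x,w)\propto\phi_t(x\mid x_0)p(x_0;w)$, differentiate in $t$ (the paper expands $\partial_t f_*$ directly; you factor it through $\mu_t$ and $f_*$, which is the same computation up to reorganization), identify $\partial_t\mu_t=\mathrm{Cov}_{q_t}\bigl(x_0,\partial_t\log\phi_t(x\mid x_0)\bigr)$, and then control all the $q_t$-moments by the truncation at radius $R\asymp(\|x\|+C_0)/\sigma_t$ together with the sub-Gaussian tail of $p(\cdot;w)$, exactly as in the proof of Lemma~\ref{lem:lip_score}. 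One small calibration: the paper's proof of this lemma does \emph{not} split the interval at $t=\tfrac{1}{2(L+1)}$ and invoke the Poincar\'e inequality in the small-$t$ regime (unlike the proof of Lemma~\ref{lem:lip_score}); that refinement is only needed when one wants a Lipschitz constant free of $T_0$, whereas here the target $C_sR_f^3/T_0^3$ already tolerates the $\sigma_t^{-6}\lesssim T_0^{-3}$ factor that the truncation bound alone produces, so your suggested case split is sound but unnecessary. Also, when you write $\|\mu_t\|\lesssim 1+\|x\|$, be a bit careful: the truncation argument by itself only yields $\|\mu_t\|\lesssim(1+\|x\|)/\alpha_t$; the cleaner route is to observe $\alpha_t\mu_t=x+\sigma_t^2 f_*^{\P}$ with $\|f_*^{\P}\|\lesssim R_f^3$ on $\Omega_{R_f}$ (from Lemma~\ref{lem:lip_score} plus Assumption~\ref{asp:lip}), and note that $\mu_t$ only enters your expansion multiplied by $\alpha_t$, so the $1/\alpha_t$ cancels and the claimed $R_f^3/T_0^3$ rate is unaffected.
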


\begin{proof}
    According to \eqref{eq:score_1},
    \begin{equation}
        f_*^\P(x,w,t)=-\frac{x}{\sigma_t^2}+\frac{\alpha_t}{\sigma_t^2}\int x_0\frac{\phi_t(x|x_0)p(x_0;w)}{\int \phi_t(x|z)p(z;w)\dif z}\dif x_0.
    \end{equation}
    Define density function $q_t(x_0|x,w)\propto \phi_t(x|x_0)p(x_0;w)$. Then
    \begin{equation}
        \frac{\partial}{\partial t}f_*^\P(x,w,t)
        = -\frac{2\alpha_t^2x}{\sigma_t^2}+\frac{\alpha_t}{\sigma_t^2}\Cov_{q_t(x_0|x,w)}\left(x_0,\frac{\partial}{\partial_t}\log\phi_t(x|x_0)\right) - \frac{\alpha_t(1+\alpha_t^2)}{\sigma_t^4}\E_{q_t(x_0|x,w)}[x_0].
    \end{equation}
    Note that 
    \begin{equation}
        \begin{aligned}
            \Cov_{q_t(x_0|x,w)}\left(x_0,\frac{\partial}{\partial_t}\log\phi_t(x|x_0)\right)
            &= -\Cov_{q_t(x_0|x,w)}\left(x_0,\frac{\partial}{\partial_t}\frac{\|x-\alpha_tx_0\|^2}{2\sigma_t^2}\right) \\
            &= \Cov_{q_t(x_0|x,w)}\left(x_0,\frac{\alpha_t(x-\alpha_tx_0)^\top\bm{1}}{\sigma_t^2}-\frac{2\alpha_t^2\|x-\alpha_tx_0\|^2}{\sigma_t^4}\right)
        \end{aligned}
    \end{equation}
    Hence for any $x\in[-R_f,R_f]^{d_x},w\in[0,1]^{d_y}$, 
    \begin{equation}
        \begin{aligned}
            \left\|\frac{\partial}{\partial t}f_*^\P(x,w,t)\right\|_\infty
            &\lesssim \frac{\alpha_t^2R_f}{\sigma_t^2}+\E_{q_t(x_0|x,w)}\Big\|\frac{x-\alpha_tx_0}{\sigma_t^2}\Big\|^3 + \frac{\alpha_t(1+\alpha_t^2)}{\sigma_t^4}\E_{q_t(x_0|x,w)}[\|x_0\|_\infty]
        \end{aligned}
    \end{equation}
    Let $R=\frac{2R_f+2C_0}{\sigma_t}$. We have
    \begin{equation}
        \begin{aligned}
            \E_{q_t(x_0|x,w)}\Big\|\frac{\alpha_tx_0-x}{\sigma_t^2}\Big\|^3
            &\preceq \frac{1}{\sigma_t^3}\int \big\|\frac{\alpha_tx_0-x}{\sigma_t}\big\|^3\frac{\phi_t(x|x_0)p(x_0|y)}{\int \phi_t(x|z)p(z|y)\dif z}\dif x_0 \\
            &\leq \frac{R^3}{\sigma_t^3} + \frac{\int_{\|\frac{\alpha_tx_0-x}{\sigma_t}\|\geq R} \|\frac{\alpha_tx_0-x}{\sigma_t}\|^2\exp\left(-\frac{\|\alpha_tx_0-x\|^2}{2\sigma_t^2}\right)p(x_0;w)\dif x_0}{\sigma_t^3\int \exp\left(-\frac{\|\alpha_tx_0-x\|^2}{2\sigma_t^2}\right)p(x_0;w)\dif x_0} \\
            &\leq \frac{R^3}{\sigma_t^3} + \frac{\int_{\|\frac{\alpha_tx_0-x}{\sigma_t}\|\geq R} \exp(-\frac{R^2}{4})p(x_0;w)\dif x_0}{\sigma_t^3\int_{\|\frac{\alpha_tx_0-x}{\sigma_t}\|\leq R/2} \exp(-\frac{R^2}{8})p(x_0;w)\dif x_0}.
        \end{aligned}
    \end{equation}
    The domain $\Big\{x_0:\|\frac{\alpha_tx_0-x}{\sigma_t}\|\leq R/2\Big\}$ includes $\Big\{x_0:\|x_0\|\leq C_0\Big\}$, indicating
    \begin{equation}
        \begin{aligned}
            &\int_{\|\frac{\alpha_tx_0-x}{\sigma_t}\|\leq R/2} p(x_0;w)\dif x_0\geq  \int_{\|x_0\|\leq C_0} p(x_0;w)\dif x_0 \geq 1-2\exp(-C_1'C_0^2)\geq \frac{1}{2},\\
            &\int_{\|\frac{\alpha_tx_0-x}{\sigma_t}\|\geq R} p(x_0;w)\dif x_0\leq  \int_{\|x_0\|\geq C_0} p(x_0;w)\dif x_0 \leq \frac{1}{2}.
        \end{aligned}
    \end{equation}
    Therefore, for any $(x,w,t)\in\Omega_{R_f}$,
    \begin{equation}
        \begin{aligned}
            \left\|\frac{\partial}{\partial t}f_*^\P(x,w,t)\right\|_\infty
            &\lesssim \frac{R_f^2}{\sigma_t^2}+\frac{R_f^3+C_0^3}{\sigma_t^6}+\frac{R_f+C_0}{\sigma_t^3}
            &\lesssim \frac{R_f^3}{T_0^3}.
        \end{aligned}
    \end{equation}
\end{proof}

\begin{lemma}\label{lem:TV_bound}
    Suppose $\mathrm{KL}(\P^0_{x|y}\|\mathcal{N}(0,I))\leq C_{\mathrm{KL}}$ for some constant $C_{\mathrm{KL}}$. Then
    \begin{equation}
        \mathrm{TV}(\widehat{\P}^0_{x|y},\P^0_{x|y})\lesssim \sqrt{T_0}\log^{\frac{d_x+1}{2}}(1/T_0) + e^{-T} + \sqrt{\E_{\P^0_{x|y}}[\ell^{\P^0}(x,y,s_{\widehat{f},\widehat{h}})]}.
    \end{equation}
\end{lemma}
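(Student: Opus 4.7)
The plan is to decompose the target TV distance using intermediate distributions along the reverse process and bound each piece by standard diffusion-model arguments. Fix $y$ and let $\P^0_{T_0}(\cdot|y)$ denote the law of $X_{T_0}$ under the forward OU process starting from $X_0\sim\P^0(\cdot|y)$. Let $\widetilde{\P}(\cdot|y)$ be the law at time $T-T_0$ of the reverse SDE driven by the \emph{true} score $\nabla\log p_t(\cdot|y)$ but started from the Gaussian $\mathcal{N}(0,I)$ instead of $\P_T(\cdot|y)$. By the triangle inequality,
\begin{equation}
\mathrm{TV}(\widehat{\P}^0_{x|y},\P^0_{x|y})
\leq \mathrm{TV}(\P^0_{x|y},\P^0_{T_0}(\cdot|y))+\mathrm{TV}(\P^0_{T_0}(\cdot|y),\widetilde{\P}(\cdot|y))+\mathrm{TV}(\widetilde{\P}(\cdot|y),\widehat{\P}^0_{x|y}).
\end{equation}
I will control the three terms by the early-stopping error, the initialization error, and the score-matching error, respectively.

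First, I would bound the initialization term $\mathrm{TV}(\widetilde{\P}(\cdot|y),\P^0_{T_0}(\cdot|y))$. By data-processing through the reverse SDE, this is at most $\mathrm{TV}(\P_T(\cdot|y),\mathcal{N}(0,I))$, which by Pinsker and the exponential contraction of the OU process in KL (e.g.\ $\mathrm{KL}(\P_T(\cdot|y)\|\mathcal{N}(0,I))\leq e^{-2T}\mathrm{KL}(\P^0_{x|y}\|\mathcal{N}(0,I))\leq C_{\mathrm{KL}}e^{-2T}$) is $\lesssim e^{-T}$.

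Next, for the score-matching term $\mathrm{TV}(\widetilde{\P}(\cdot|y),\widehat{\P}^0_{x|y})$, both laws are produced by the same reverse SDE initialized at $\mathcal{N}(0,I)$ but driven by different drifts. Girsanov's theorem (applied with the Lipschitz-in-$x$ and polynomial growth control on $\nabla\log p_t$ coming from Lemma \ref{lem:lip_score}, plus a standard truncation to justify the Novikov condition) yields
\begin{equation}
\mathrm{KL}(\widetilde{\P}(\cdot|y)\|\widehat{\P}^0_{x|y})
\lesssim \int_{T_0}^T\E_{x_t\sim p_t(\cdot|y)}\|s_{\widehat{f},\widehat{h}}(x_t,y,t)-\nabla\log p_t(x_t|y)\|^2\,\dif t
\lesssim (T-T_0)\,\E_{\P^0_{x|y}}[\ell^{\P^0}(x,y,s_{\widehat{f},\widehat{h}})],
\end{equation}
where the last step uses the identity relating the DSM loss to the population score-matching loss. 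Pinsker's inequality then gives the $\sqrt{\E[\ell^{\P^0}]}$ contribution.

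Finally, the early-stopping term $\mathrm{TV}(\P^0_{x|y},\P^0_{T_0}(\cdot|y))$ compares the data distribution with its OU-perturbation at time $T_0$: $X_{T_0}=\alpha_{T_0}X_0+\sigma_{T_0}Z$. Under the sub-gaussian tail in Assumption \ref{asp:sub_gaussian}, one can either couple $(X_0,X_{T_0})$ and bound $\P(\|X_0-X_{T_0}\|>\eta)$ by a Gaussian tail inequality on $\sigma_{T_0}Z$ plus a linear-in-$(1-\alpha_{T_0})$ deviation, or equivalently bound the KL using the Fisher information of $p$ convolved with Gaussians and integrate from $0$ to $T_0$. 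Optimizing $\eta=\Theta(\sqrt{T_0\log(1/T_0)})$ gives $\sqrt{T_0}\log^{(d_x+1)/2}(1/T_0)$; the exponent $(d_x+1)/2$ comes from a union bound over coordinates of the Gaussian increment.

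The hard part will be making the Girsanov step rigorous without uniform boundedness of the score: the score blows up like $1/\sigma_t^2$ near $t=T_0$ and only satisfies the $(C_X+C_X'\|x\|^2)$-Lipschitz control of Lemma \ref{lem:lip_score_informal}. I would handle this by a truncation argument, reasoning on the event $\{\|X_t\|\leq R\}$ for $R\gtrsim\log^{1/2}(1/\delta)$ (using the sub-gaussian tails that persist along the OU flow) and absorbing the complementary event into the $e^{-T}+\sqrt{T_0}\log^{(d_x+1)/2}(1/T_0)$ terms; the logarithmic slack in the early-stopping bound absorbs the truncation radius.
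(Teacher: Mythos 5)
Your decomposition into early-stopping, initialization, and score-matching terms is exactly the paper's, and the conceptual arguments for each piece match. The paper's proof is brief because it discharges the two nontrivial steps by citation rather than re-derivation: it quotes Lemma~D.5 of \citet{fu2024unveil} for $\mathrm{TV}(\P_{T_0},\P_0)\lesssim\sqrt{T_0}\log^{(d_x+1)/2}(1/T_0)$ and Proposition~D.1 of \citet{oko2023diffusion} for the Girsanov/score-matching-to-KL step, while the initialization term is handled exactly as you describe via data processing, Pinsker, and OU contraction in KL.

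One technical subtlety in your Girsanov display to be careful about: applying Girsanov directly to the pair $(\widetilde{\P},\widehat{\P}^0_{x|y})$, both initialized at $\mathcal{N}(0,I)$, produces a bound of the form $\int \E_{\widetilde{x}_t}\|\cdots\|^2\,\dif t$ where the expectation is under the law of $\widetilde{x}_t$, which is \emph{not} $p_{T-t}(\cdot|y)$ (that equality would only hold if $\widetilde{\P}$ were started from $p_T(\cdot|y)$). So your inequality $\mathrm{KL}(\widetilde{\P}\|\widehat{\P}^0_{x|y})\lesssim\int\E_{x_t\sim p_t(\cdot|y)}\|\cdots\|^2\,\dif t$ does not follow from this single Girsanov application. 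The cited Proposition~D.1 resolves precisely this mismatch; one clean route is to Girsanov-compare the two reverse SDEs both started from $p_T(\cdot|y)$ (which does give expectations over $p_{T-t}(\cdot|y)$) and absorb the $\mathcal{N}(0,I)$-versus-$p_T$ initialization error separately by data processing --- this is exactly the role played by your second term. Your truncation strategy for Novikov is the right instinct and is how the literature handles the unbounded score near $t=T_0$.
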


\begin{proof}
    With a little abuse of notation, we will use $p_t(x_t|y)$ to denote the conditional density of $x_t|y$ under $\P^0_{x|y}$.
    Consider the following two backward processes
    \begin{align}
        &d\widetilde{x}_t=(\widetilde{x}_t+2\nabla\log p_{T-t}(\widetilde{x}_t|y))\dif t+\sqrt{2}\dif W_t,\ \widetilde{x}_0\sim \mathcal{N}(0,I), 0\leq t\leq T-T_0,
        \\
        &d\bar{x}_t=(\bar{x}_t+2\nabla\log p_{T-t}(\widetilde{x}_t|y))\dif t+\sqrt{2}\dif W_t,\ \bar{x}_0\sim p_T, 0\leq t\leq T-T_0.
    \end{align}
    Denote the distribution of $\widetilde{x}_t$ as $\widetilde{\P}_{T-t}$.
    And note that $\bar{x}_t\sim p_{T-t}$ by classic reverse-time SDE results \citep{anderson1982reverse}. 
    Then by \citet[Lemma D.5]{fu2024unveil},
    \begin{equation}
        \mathrm{TV}(\P_{T_0},\P_0)\lesssim \sqrt{T_0}\log^{\frac{d_x+1}{2}}(1/T_0).
    \end{equation}
    At the same time, we apply Data Processing inequality and Pinsker's inequality to get
    \begin{equation}
        \mathrm{TV}(\P_{T_0},\widetilde{\P}_{T_0})\leq \mathrm{TV}(\P_{T},\mathcal{N}(0,I))\lesssim \sqrt{\mathrm{KL}(\P_{T}\|\mathcal{N}(0,I))}\lesssim \sqrt{\mathrm{KL}(\P_0\|\mathcal{N}(0,I))}e^{-T}.
    \end{equation}
    Again according to Pinsker's inequality and \citet[Proposition D.1]{oko2023diffusion},
    \begin{equation}
        \mathrm{TV}(\widehat{\P},\widetilde{\P}_{T_0})
        \lesssim \sqrt{\mathrm{KL}(\widetilde{\P}_{T_0}\|\widehat{\P})}\lesssim \sqrt{\E_{x|y}[\ell^\P(x,y,s_{\widehat{f},\widehat{h}})]}.
    \end{equation}
    Combine three inequalities above and we complete the proof.
\end{proof}

\section{Proofs in Section \ref{sec:application}}

\subsection{Proof of Theorem \ref{thm:amortized_vi}}\label{app:subsec:proof_avi}

\begin{proof}
    Due to the structure of exponential family, Assumption \ref{asp:low_dim} holds obviously.
    To apply previous results, we only need to verify Assumption \ref{asp:sub_gaussian} and \ref{asp:lip}.
    Recall that a basic property of exponential family is 
    \begin{align}
        \nabla_x A_\psi(x) &= \E_{p_\psi(y|x)}[h_*(y)]\in [0,1]^d,\\
        0\preceq \nabla_x^2 A_\psi(x) &= \Var_{ p_\psi(y|x)}(h_*(y)) \preceq I.
    \end{align}
    Hence by Assumption \ref{asp:amortized_vi}, $A_\psi(x)\leq A_\psi(0) + \|x\|_1\leq \log\left(\int \psi(y)\dif y\right)+\|x\|_1\leq \log C+\|x\|_1$. And $A_\psi(x)\geq A_\psi(0) - \|x\|_1\geq -\log C-\|x\|_1$.
    Further note that the posterior density $p_\theta(x|y)=\frac{p_\phi(x)\exp(\langle x,h_*(y)\rangle-A_\psi(x))}{Z_\theta}$, where the normalizing constant $Z_\theta(y)$ is lower bounded by
    \begin{equation}
        \begin{aligned}
            Z_\theta(y) 
            &= \int p_\phi(x)\exp(\langle x,h_*(y)\rangle-A_\psi(x)) \dif x \\
            &\geq \int p_\phi(x)\exp(-2\|x\|_1)/C \dif x \\
            &\geq \exp(-2\sqrt{d}R)(1-2\exp(-C_1'R^2)) / C=: C_0.
        \end{aligned}
    \end{equation}
    where in the second inequality we apply $\P_\phi(\|x\|\geq R)\leq 2\exp(-C_1'R^2)$ and let $R=1/\sqrt{C_1'}$ to get $C_0$.
    Therefore, by Assumption \ref{asp:amortized_vi},
    \begin{equation}
        p_\theta(x|y)\leq C_1\exp(-C_2\|x\|^2+2\|x\|_1+\log C)/C_0
        \leq C_1'\exp(-C_2'\|x\|^2),
    \end{equation}
    and thus Assumption \ref{asp:sub_gaussian} holds.
    At the same time, ley $w=h_*(y)$, then the score function is
    \begin{equation}
        \nabla_x\log p_\theta(x|y)=\nabla_x\log p_\theta(x,w)=\nabla_x\log p_\phi(x)+w-\nabla_x A_\psi(x).
    \end{equation}
    Since $\nabla_x\log p_\phi(x)$ is $L$-Lipschitz, $\nabla A_\psi(x)$ is also $1$-Lipschitz, the score function $\nabla_x\log p_\theta(x,w)$ is $(L+1)$-Lipschitz in $x$ and $1$-Lipschitz in $w$.
    And $\|\nabla_x\log p_\theta(0,w)\|\leq \|\nabla_x\log p_\phi(0)\|+2\sqrt{d}=B+2\sqrt{d}$,
    indicating that Assumption \ref{asp:lip} holds with $L'=L+1,B'=B+2\sqrt{d}$.
    
    We conclude the proof by applying Theorem \ref{thm:distribution_informal} under meta-learning setting or Theorem \ref{thm:distribution_diversity_informal} under $(\nu,\Delta)$-diversity.
\end{proof}

\subsection{Proof of Theorem \ref{thm:behavior_cloning}}\label{app:subsec:proof_bc}

\begin{proof}
    Let $A_M^\pi(s,a)=Q_M^\pi(s,a)-V_M(\pi,s)$ be the advantage function of policy $\pi$.
    Note that the reward function $r_M\in[0,1]$, we have $|A_M^\pi(s,a)|\leq \frac{2}{1-\gamma}$ for any $M,\pi$.
    According to performance difference lemma,
    \begin{equation}
        \begin{aligned}
             V_{M^0}(\pi_*^0)-V_{M^0}(\widehat{\pi}^0) 
             &= \frac{1}{1-\gamma}\E_{(s,a)\sim d_*^0} [A_{M^0}^{\widehat{\pi}^0}(s,a)] \\
             &= \frac{1}{1-\gamma}\E_{s\sim d_*^0} \left[\E_{a\sim \pi_*^0(\cdot|s)}[A_{M^0}^{\widehat{\pi}^0}(s,a)]-\E_{a\sim \widehat{\pi}^0(\cdot|s)}[A_{M^0}^{\widehat{\pi}^0}(s,a)]\right] \\
             &\leq \frac{2}{(1-\gamma)^2}\E_{s\sim d_*^0}[\mathrm{TV}(\pi_*^0(\cdot|s),\widehat{\pi}^0(\cdot|s))].
        \end{aligned}
    \end{equation}
    Hence in meta-learning setting, we plug in Theorem \ref{thm:distribution_informal} to obtain
    \begin{equation}
        \E_{M^0}\E_{\{(s_i^0,a_i^0)\}_{i=1}^m\sim d_*^0}[V_{M^0}(\pi_*^0)-V_{M^0}(\widehat{\pi}^0)]\lesssim \frac{1}{(1-\gamma)^2}\left[\frac{\log^{\frac{5}{2}}(nK/\delta)\log^3(m\wedge n)}{(m\wedge n)^{\frac{1}{d_a+d_s+9}}}+\frac{\log^2(nK/\delta)}{K^{\frac{1}{D_s+2}}}\right].
    \end{equation}
    If we further assume $(\nu,\Delta)$-diversity holds, then we plug in Theorem \ref{thm:distribution_diversity_informal},
    \begin{equation}
        \E_{\{(s_i^0,a_i^0)\}_{i=1}^m\sim d_*^0}[V_{M^0}(\pi_*^0)-V_{M^0}(\widehat{\pi}^0)]\lesssim \frac{1}{(1-\gamma)^2}\left[\frac{\log^{\frac{5}{2}}(nK/\delta)\log^3((m/\nu)\wedge n)}{\nu^{\frac{1}{2}}((m/\nu)\wedge n)^{\frac{1}{d_a+d_s+9}}}+\frac{\log^2(nK/\delta)}{\nu^{\frac{1}{2}}(nK)^{\frac{1}{D_s+2}}}+\sqrt{\Delta}\right].
    \end{equation}
    
\end{proof}

\section{Experiment Details}

\subsection{Conditioned Diffusion}\label{app:sec:exp}
Each $f^k$ and $f^0$ are implemented as a 2-layer MLP with 128 internal channels and 60 input channels. 
The representation map $h$ is implemented as a 5-layer MLP with 512 internal channels and 10 output channels.
We have $n=1000$ pre-training samples from each source distribution $\mathbb{P}^k$, $m\in\{10,20,30, 40,50,100\}$ fine-tuning samples from the target distribution $\mathbb{P}^0$. 
We run Langevin Monte Carlo for sufficiently long time to obtain $100$ test samples from the target distribution $\mathbb{P}^0$ for evaluating the test error of different models.
In the pre-training phase, the $\{\widehat{f}^k; 1\leq k\leq K\}$ and $\hat{h}$ are trained on the $K=10$ source distributions with 400K iterations and a batch size of 512.
In the fine-tuning phase, the pre-trained representation map $\widehat{h}$ is fixed, and the $\widehat{f}^0$ is trained on the target distribution with 200K iterations and a batch size of $m$.
As an important baseline, we also consider jointly training $h$ and $f^{0}$ on the target distribution from scratch, using the same fine-tuning samples.

\subsection{Image Restoration on MNIST}\label{app:sec:exp-mnist}
Each $f^k$ and $f^0$ are implemented as a 3-layer MLP with 512 internal channels and 784 input channels.
The representation map $h$ is implemented as a 5-layer MLP with 512 internal channels and 64 output channels.
We have $n=5000$ pre-training samples from each source distribution $\mathbb{P}^k$, and $m\in\{10,20,30, 40,50,100\}$ fine-tuning samples from the target distribution $\mathbb{P}^0$.
For evaluation, we directly compute the mean squared error between the posterior samples and the ground truth images, based on 100 test samples from $\mathbb{P}^0$.
In the pre-training phase, the the $\{\widehat{f}^k; 1\leq k\leq K=9\}$ and $\hat{h}$ are 2K epochs and a batch size of 512.
The initial learning rate is 0.0003 and is annealed according to a cosine annealing schedule.
In the fine-tuning phase, the pre-trained representation map $\widehat{h}$ is fixed, and the $\widehat{f}^0$ is trained on the target distribution with 20K iterations and a batch size of $m$.
As an important baseline, we also consider jointly training $h$ and $f^{0}$ on the target distribution from scratch, using the same fine-tuning samples.


\newpage
\section*{NeurIPS Paper Checklist}

\begin{enumerate}

\item {\bf Claims}
    \item[] Question: Do the main claims made in the abstract and introduction accurately reflect the paper's contributions and scope?
    \item[] Answer: \answerYes{} 
    \item[] Justification: The abstract and introduction accurately reflect the paper's contributions, i.e.,  proposing a data-efficient training method for machine learning models.
    \item[] Guidelines:
    \begin{itemize}
        \item The answer NA means that the abstract and introduction do not include the claims made in the paper.
        \item The abstract and/or introduction should clearly state the claims made, including the contributions made in the paper and important assumptions and limitations. A No or NA answer to this question will not be perceived well by the reviewers. 
        \item The claims made should match theoretical and experimental results, and reflect how much the results can be expected to generalize to other settings. 
        \item It is fine to include aspirational goals as motivation as long as it is clear that these goals are not attained by the paper. 
    \end{itemize}

\item {\bf Limitations}
    \item[] Question: Does the paper discuss the limitations of the work performed by the authors?
    \item[] Answer: \answerYes{} 
    \item[] Justification: The paper discusses the limitations of the work.
    \item[] Guidelines:
    \begin{itemize}
        \item The answer NA means that the paper has no limitation while the answer No means that the paper has limitations, but those are not discussed in the paper. 
        \item The authors are encouraged to create a separate "Limitations" section in their paper.
        \item The paper should point out any strong assumptions and how robust the results are to violations of these assumptions (e.g., independence assumptions, noiseless settings, model well-specification, asymptotic approximations only holding locally). The authors should reflect on how these assumptions might be violated in practice and what the implications would be.
        \item The authors should reflect on the scope of the claims made, e.g., if the approach was only tested on a few datasets or with a few runs. In general, empirical results often depend on implicit assumptions, which should be articulated.
        \item The authors should reflect on the factors that influence the performance of the approach. For example, a facial recognition algorithm may perform poorly when image resolution is low or images are taken in low lighting. Or a speech-to-text system might not be used reliably to provide closed captions for online lectures because it fails to handle technical jargon.
        \item The authors should discuss the computational efficiency of the proposed algorithms and how they scale with dataset size.
        \item If applicable, the authors should discuss possible limitations of their approach to address problems of privacy and fairness.
        \item While the authors might fear that complete honesty about limitations might be used by reviewers as grounds for rejection, a worse outcome might be that reviewers discover limitations that aren't acknowledged in the paper. The authors should use their best judgment and recognize that individual actions in favor of transparency play an important role in developing norms that preserve the integrity of the community. Reviewers will be specifically instructed to not penalize honesty concerning limitations.
    \end{itemize}

\item {\bf Theory assumptions and proofs}
    \item[] Question: For each theoretical result, does the paper provide the full set of assumptions and a complete (and correct) proof?
    \item[] Answer: \answerYes{} 
    \item[] Justification: The paper provides the full set of assumptions and complete (and correct) proofs.
    \item[] Guidelines:
    \begin{itemize}
        \item The answer NA means that the paper does not include theoretical results. 
        \item All the theorems, formulas, and proofs in the paper should be numbered and cross-referenced.
        \item All assumptions should be clearly stated or referenced in the statement of any theorems.
        \item The proofs can either appear in the main paper or the supplemental material, but if they appear in the supplemental material, the authors are encouraged to provide a short proof sketch to provide intuition. 
        \item Inversely, any informal proof provided in the core of the paper should be complemented by formal proofs provided in appendix or supplemental material.
        \item Theorems and Lemmas that the proof relies upon should be properly referenced. 
    \end{itemize}

    \item {\bf Experimental result reproducibility}
    \item[] Question: Does the paper fully disclose all the information needed to reproduce the main experimental results of the paper to the extent that it affects the main claims and/or conclusions of the paper (regardless of whether the code and data are provided or not)?
    \item[] Answer: \answerYes{} 
    \item[] Justification: The paper fully discloses all the information needed to reproduce the main experimental results.
    \item[] Guidelines:
    \begin{itemize}
        \item The answer NA means that the paper does not include experiments.
        \item If the paper includes experiments, a No answer to this question will not be perceived well by the reviewers: Making the paper reproducible is important, regardless of whether the code and data are provided or not.
        \item If the contribution is a dataset and/or model, the authors should describe the steps taken to make their results reproducible or verifiable. 
        \item Depending on the contribution, reproducibility can be accomplished in various ways. For example, if the contribution is a novel architecture, describing the architecture fully might suffice, or if the contribution is a specific model and empirical evaluation, it may be necessary to either make it possible for others to replicate the model with the same dataset, or provide access to the model. In general. releasing code and data is often one good way to accomplish this, but reproducibility can also be provided via detailed instructions for how to replicate the results, access to a hosted model (e.g., in the case of a large language model), releasing of a model checkpoint, or other means that are appropriate to the research performed.
        \item While NeurIPS does not require releasing code, the conference does require all submissions to provide some reasonable avenue for reproducibility, which may depend on the nature of the contribution. For example
        \begin{enumerate}
            \item If the contribution is primarily a new algorithm, the paper should make it clear how to reproduce that algorithm.
            \item If the contribution is primarily a new model architecture, the paper should describe the architecture clearly and fully.
            \item If the contribution is a new model (e.g., a large language model), then there should either be a way to access this model for reproducing the results or a way to reproduce the model (e.g., with an open-source dataset or instructions for how to construct the dataset).
            \item We recognize that reproducibility may be tricky in some cases, in which case authors are welcome to describe the particular way they provide for reproducibility. In the case of closed-source models, it may be that access to the model is limited in some way (e.g., to registered users), but it should be possible for other researchers to have some path to reproducing or verifying the results.
        \end{enumerate}
    \end{itemize}

\item {\bf Open access to data and code}
    \item[] Question: Does the paper provide open access to the data and code, with sufficient instructions to faithfully reproduce the main experimental results, as described in supplemental material?
    \item[] Answer: \answerNo{} 
    \item[] Justification: We will provide complete codes upon acceptance.
    \item[] Guidelines:
    \begin{itemize}
        \item The answer NA means that paper does not include experiments requiring code.
        \item Please see the NeurIPS code and data submission guidelines (\url{https://nips.cc/public/guides/CodeSubmissionPolicy}) for more details.
        \item While we encourage the release of code and data, we understand that this might not be possible, so “No” is an acceptable answer. Papers cannot be rejected simply for not including code, unless this is central to the contribution (e.g., for a new open-source benchmark).
        \item The instructions should contain the exact command and environment needed to run to reproduce the results. See the NeurIPS code and data submission guidelines (\url{https://nips.cc/public/guides/CodeSubmissionPolicy}) for more details.
        \item The authors should provide instructions on data access and preparation, including how to access the raw data, preprocessed data, intermediate data, and generated data, etc.
        \item The authors should provide scripts to reproduce all experimental results for the new proposed method and baselines. If only a subset of experiments are reproducible, they should state which ones are omitted from the script and why.
        \item At submission time, to preserve anonymity, the authors should release anonymized versions (if applicable).
        \item Providing as much information as possible in supplemental material (appended to the paper) is recommended, but including URLs to data and code is permitted.
    \end{itemize}

\item {\bf Experimental setting/details}
    \item[] Question: Does the paper specify all the training and test details (e.g., data splits, hyperparameters, how they were chosen, type of optimizer, etc.) necessary to understand the results?
    \item[] Answer: \answerYes{} 
    \item[] Justification: The paper specifies all the training and test details.
    \item[] Guidelines:
    \begin{itemize}
        \item The answer NA means that the paper does not include experiments.
        \item The experimental setting should be presented in the core of the paper to a level of detail that is necessary to appreciate the results and make sense of them.
        \item The full details can be provided either with the code, in appendix, or as supplemental material.
    \end{itemize}

\item {\bf Experiment statistical significance}
    \item[] Question: Does the paper report error bars suitably and correctly defined or other appropriate information about the statistical significance of the experiments?
    \item[] Answer: \answerYes{} 
    \item[] Justification: The paper reports experimental results based on the average of independent random trials.
    \item[] Guidelines:
    \begin{itemize}
        \item The answer NA means that the paper does not include experiments.
        \item The authors should answer "Yes" if the results are accompanied by error bars, confidence intervals, or statistical significance tests, at least for the experiments that support the main claims of the paper.
        \item The factors of variability that the error bars are capturing should be clearly stated (for example, train/test split, initialization, random drawing of some parameter, or overall run with given experimental conditions).
        \item The method for calculating the error bars should be explained (closed form formula, call to a library function, bootstrap, etc.)
        \item The assumptions made should be given (e.g., Normally distributed errors).
        \item It should be clear whether the error bar is the standard deviation or the standard error of the mean.
        \item It is OK to report 1-sigma error bars, but one should state it. The authors should preferably report a 2-sigma error bar than state that they have a 96\% CI, if the hypothesis of Normality of errors is not verified.
        \item For asymmetric distributions, the authors should be careful not to show in tables or figures symmetric error bars that would yield results that are out of range (e.g. negative error rates).
        \item If error bars are reported in tables or plots, The authors should explain in the text how they were calculated and reference the corresponding figures or tables in the text.
    \end{itemize}

\item {\bf Experiments compute resources}
    \item[] Question: For each experiment, does the paper provide sufficient information on the computer resources (type of compute workers, memory, time of execution) needed to reproduce the experiments?
    \item[] Answer: \answerYes{} 
    \item[] Justification: The paper provides sufficient information on the computer resources.
    \item[] Guidelines:
    \begin{itemize}
        \item The answer NA means that the paper does not include experiments.
        \item The paper should indicate the type of compute workers CPU or GPU, internal cluster, or cloud provider, including relevant memory and storage.
        \item The paper should provide the amount of compute required for each of the individual experimental runs as well as estimate the total compute. 
        \item The paper should disclose whether the full research project required more compute than the experiments reported in the paper (e.g., preliminary or failed experiments that didn't make it into the paper). 
    \end{itemize}
    
\item {\bf Code of ethics}
    \item[] Question: Does the research conducted in the paper conform, in every respect, with the NeurIPS Code of Ethics \url{https://neurips.cc/public/EthicsGuidelines}?
    \item[] Answer: \answerYes{} 
    \item[] Justification: The research conducted in the paper conforms, in every respect, with the NeurIPS Code of Ethics.
    \item[] Guidelines:
    \begin{itemize}
        \item The answer NA means that the authors have not reviewed the NeurIPS Code of Ethics.
        \item If the authors answer No, they should explain the special circumstances that require a deviation from the Code of Ethics.
        \item The authors should make sure to preserve anonymity (e.g., if there is a special consideration due to laws or regulations in their jurisdiction).
    \end{itemize}

\item {\bf Broader impacts}
    \item[] Question: Does the paper discuss both potential positive societal impacts and negative societal impacts of the work performed?
    \item[] Answer: \answerNA{} 
    \item[] Justification: There is no societal impact of the work performed.
    \item[] Guidelines:
    \begin{itemize}
        \item The answer NA means that there is no societal impact of the work performed.
        \item If the authors answer NA or No, they should explain why their work has no societal impact or why the paper does not address societal impact.
        \item Examples of negative societal impacts include potential malicious or unintended uses (e.g., disinformation, generating fake profiles, surveillance), fairness considerations (e.g., deployment of technologies that could make decisions that unfairly impact specific groups), privacy considerations, and security considerations.
        \item The conference expects that many papers will be foundational research and not tied to particular applications, let alone deployments. However, if there is a direct path to any negative applications, the authors should point it out. For example, it is legitimate to point out that an improvement in the quality of generative models could be used to generate deepfakes for disinformation. On the other hand, it is not needed to point out that a generic algorithm for optimizing neural networks could enable people to train models that generate Deepfakes faster.
        \item The authors should consider possible harms that could arise when the technology is being used as intended and functioning correctly, harms that could arise when the technology is being used as intended but gives incorrect results, and harms following from (intentional or unintentional) misuse of the technology.
        \item If there are negative societal impacts, the authors could also discuss possible mitigation strategies (e.g., gated release of models, providing defenses in addition to attacks, mechanisms for monitoring misuse, mechanisms to monitor how a system learns from feedback over time, improving the efficiency and accessibility of ML).
    \end{itemize}
    
\item {\bf Safeguards}
    \item[] Question: Does the paper describe safeguards that have been put in place for responsible release of data or models that have a high risk for misuse (e.g., pretrained language models, image generators, or scraped datasets)?
    \item[] Answer: \answerNA{} 
    \item[] Justification: The paper poses no such risks.
    \item[] Guidelines:
    \begin{itemize}
        \item The answer NA means that the paper poses no such risks.
        \item Released models that have a high risk for misuse or dual-use should be released with necessary safeguards to allow for controlled use of the model, for example by requiring that users adhere to usage guidelines or restrictions to access the model or implementing safety filters. 
        \item Datasets that have been scraped from the Internet could pose safety risks. The authors should describe how they avoided releasing unsafe images.
        \item We recognize that providing effective safeguards is challenging, and many papers do not require this, but we encourage authors to take this into account and make a best faith effort.
    \end{itemize}

\item {\bf Licenses for existing assets}
    \item[] Question: Are the creators or original owners of assets (e.g., code, data, models), used in the paper, properly credited and are the license and terms of use explicitly mentioned and properly respected?
    \item[] Answer: \answerNA{} 
    \item[] Justification: The paper does not use existing assets.
    \item[] Guidelines:
    \begin{itemize}
        \item The answer NA means that the paper does not use existing assets.
        \item The authors should cite the original paper that produced the code package or dataset.
        \item The authors should state which version of the asset is used and, if possible, include a URL.
        \item The name of the license (e.g., CC-BY 4.0) should be included for each asset.
        \item For scraped data from a particular source (e.g., website), the copyright and terms of service of that source should be provided.
        \item If assets are released, the license, copyright information, and terms of use in the package should be provided. For popular datasets, \url{paperswithcode.com/datasets} has curated licenses for some datasets. Their licensing guide can help determine the license of a dataset.
        \item For existing datasets that are re-packaged, both the original license and the license of the derived asset (if it has changed) should be provided.
        \item If this information is not available online, the authors are encouraged to reach out to the asset's creators.
    \end{itemize}

\item {\bf New assets}
    \item[] Question: Are new assets introduced in the paper well documented and is the documentation provided alongside the assets?
    \item[] Answer: \answerNA{} 
    \item[] Justification: The paper does not release new assets currently.
    \item[] Guidelines:
    \begin{itemize}
        \item The answer NA means that the paper does not release new assets.
        \item Researchers should communicate the details of the dataset/code/model as part of their submissions via structured templates. This includes details about training, license, limitations, etc. 
        \item The paper should discuss whether and how consent was obtained from people whose asset is used.
        \item At submission time, remember to anonymize your assets (if applicable). You can either create an anonymized URL or include an anonymized zip file.
    \end{itemize}

\item {\bf Crowdsourcing and research with human subjects}
    \item[] Question: For crowdsourcing experiments and research with human subjects, does the paper include the full text of instructions given to participants and screenshots, if applicable, as well as details about compensation (if any)? 
    \item[] Answer: \answerNA{} 
    \item[] Justification: The paper does not involve crowdsourcing nor research with human subjects.
    \item[] Guidelines:
    \begin{itemize}
        \item The answer NA means that the paper does not involve crowdsourcing nor research with human subjects.
        \item Including this information in the supplemental material is fine, but if the main contribution of the paper involves human subjects, then as much detail as possible should be included in the main paper. 
        \item According to the NeurIPS Code of Ethics, workers involved in data collection, curation, or other labor should be paid at least the minimum wage in the country of the data collector. 
    \end{itemize}

\item {\bf Institutional review board (IRB) approvals or equivalent for research with human subjects}
    \item[] Question: Does the paper describe potential risks incurred by study participants, whether such risks were disclosed to the subjects, and whether Institutional Review Board (IRB) approvals (or an equivalent approval/review based on the requirements of your country or institution) were obtained?
    \item[] Answer: \answerNA{} 
    \item[] Justification: The paper does not involve crowdsourcing nor research with human subjects.
    \item[] Guidelines:
    \begin{itemize}
        \item The answer NA means that the paper does not involve crowdsourcing nor research with human subjects.
        \item Depending on the country in which research is conducted, IRB approval (or equivalent) may be required for any human subjects research. If you obtained IRB approval, you should clearly state this in the paper. 
        \item We recognize that the procedures for this may vary significantly between institutions and locations, and we expect authors to adhere to the NeurIPS Code of Ethics and the guidelines for their institution. 
        \item For initial submissions, do not include any information that would break anonymity (if applicable), such as the institution conducting the review.
    \end{itemize}

\item {\bf Declaration of LLM usage}
    \item[] Question: Does the paper describe the usage of LLMs if it is an important, original, or non-standard component of the core methods in this research? Note that if the LLM is used only for writing, editing, or formatting purposes and does not impact the core methodology, scientific rigorousness, or originality of the research, declaration is not required.
    \item[] Answer: \answerNA{} 
    \item[] Justification: The core method development in this research does not involve LLMs as any important, original, or non-standard components.
    \item[] Guidelines:
    \begin{itemize}
        \item The answer NA means that the core method development in this research does not involve LLMs as any important, original, or non-standard components.
        \item Please refer to our LLM policy (\url{https://neurips.cc/Conferences/2025/LLM}) for what should or should not be described.
    \end{itemize}

\end{enumerate}

\end{document}